\newcommand*{\rom}[1]{\expandafter\@slowromancap\romannumeral #1@}
\newtheorem{theorem}{Theorem}
\newtheorem{lemma}[theorem]{Lemma}
\newtheorem{definition}{Definition}
\newtheorem{assumption}{Assumption}
\newtheorem{proposition}{Proposition}
\newtheorem{corollary}[theorem]{Corollary}
\newcommand{\E}{\mathbb{E}}
\newcommand{\cA}{\mathcal{A}}
\newcommand{\cN}{\mathcal{N}}
\newcommand{\cS}{\mathcal{S}}
\newcommand{\cJ}{\mathcal{J}}
\newcommand{\TV}{\mathrm{TV}}
\newcommand{\dist}{\mathrm{dist}}
\newcommand{\actionmax}{A_{\max}}
\newcommand{\Qtau}{Q}
\newcommand{\Vtau}{V}
\newcommand{\initialState}{\rho}
\newcommand{\ext}[1]{\overline{#1}}
\newcommand{\algname}{\text{LPI}}
\newcommand{\khop}{{\color{black}\kappa}}
\newcommand{\nik}{{\color{black}\cN_i^{\khop}}}
\newcommand{\njk}{{\color{black}\cN_j^{\khop}}}
\newcommand{\nminusik}{{\color{black}\cN_{-i}^{\khop}}}
\newcommand{\policyevaluation}{{\texttt{PolicyEvaluation}}}
\newcommand{\localizedtd}{{\text{Localized TD(0)}}}
\newcommand{\nib}{{\cN_i^{\beta}}}
\newcommand{\njb}{{\cN_j^{\beta}}}
\newcommand{\nlb}{{\cN_\ell^{\beta}}}
\newcommand{\nkb}{{\cN_k^{\beta}}}
\newcommand{\abs}[1]{\left\lvert#1\right\rvert}
\DeclareMathOperator*{\argmax}{arg\,max}
\DeclareMathOperator*{\argmin}{arg\,min}
\title{Global Convergence of Localized Policy Iteration in Networked Multi-Agent Reinforcement Learning}
\author{
    Yizhou Zhang\footnote{Yizhou Zhang, Guannan Qu, Pan Xu contributed equally to this work. } \thanks{Tsinghua University; e-mail: {\tt yz-zhang19@mails.tsinghua.edu.cn}}
    ~~
    Guannan Qu\footnotemark[1] \thanks{Carnegie Mellon University; e-mail: {\tt gqu@andrew.cmu.edu}}
    ~~ 
    Pan Xu\footnotemark[1] \thanks{Duke University; e-mail: {\tt pan.xu@duke.edu}}
    ~~ 
    Yiheng Lin\thanks{California Institute of Technology; e-mail: {\tt yihengl@caltech.edu}}
    ~~
    Zaiwei Chen\thanks{California Institute of Technology; e-mail: {\tt zchen458@caltech.edu}}
    ~~ 
    Adam Wierman\thanks{California Institute of Technology; e-mail: {\tt adamw@caltech.edu}}
}
\begin{document}

\date{}
\maketitle
\begin{abstract}
We study a multi-agent reinforcement learning (MARL) problem where the agents interact over a given network. The goal of the agents is to cooperatively maximize the average of their entropy-regularized long-term rewards. To overcome the curse of dimensionality and to reduce communication, we propose a Localized Policy Iteration ($\algname$) algorithm that provably learns a near-globally-optimal policy using only local information. In particular, we show that, despite restricting each agent's attention to only its $\kappa$-hop neighborhood, the agents are able to learn a policy with an optimality gap that decays polynomially in $\kappa$. In addition, we show the finite-sample convergence of $\algname$ to the global optimal policy, which explicitly captures the trade-off between optimality and computational complexity in choosing $\kappa$. Numerical simulations demonstrate the effectiveness of $\algname$.

\end{abstract}

\maketitle

\section{Introduction}

Reinforcement learning (RL) has seen remarkable successes in recent years, many of which fall into the multi-agent setting, such as playing multi-agent games \citep{silver2016mastering,mnih2015human}, smart grid \citep{chen2022reinforcement}, queueing networks \citep{walton2021learning}, etc. In this work, we focus on a form of \emph{networked} Multi-Agent RL (MARL) where the agents interact according to a given network graph. 

Compared to single-agent RL, MARL faces many additional challenges. First of all, the curse of dimensionality (which is already a major challenge in single-agent RL) becomes a more severe issue in MARL because the complexity of the problem scales exponentially with the number of agents \citep{marl_littman1994markov,marl_bu2008comprehensive,kearns1999efficient,factor_guestrin2003efficient}. This is because the size of the state and action space scales exponentially with the number of agents and, as a result, the dimension of the value/$Q$-functions and the policies all scale exponentially with the number of agents, which is hard to compute and store in moderately large networks \citep{bertsekas1996neuro}. Moreover, since the agents are coupled by the global state, the training process requires extensive communication among the agents in the entire network.

To overcome the aforementioned challenges, the existing literature considers performing MARL with only local information. For example, it has been proposed that each agent's policy depends only on the states of itself and, potentially, its neighboring agents. An example of this type is the independent learners approach \citep{tan1993multi,marl_claus1998dynamics}, where each agent learns a policy for itself while treating the states and actions of other agents as part of the environment. Another example is the recent work of \citet{qu2020scalablelocal,qu2020scalable,lin2021multi}, which focuses on learning localized policies where each agent is allowed to choose its action based on its own and neighbors' states. Beyond the localization in policy, it has also been proposed to approximate each agent's value or $Q$ functions in a way that they only depend on the local and nearby agents' states and actions \citep{factor_guestrin2003efficient,yang2018mean,qu2020scalablelocal,qu2020scalable,lin2021multi}, as opposed to the full state. 
These approaches greatly relieve the computation and communication burden both empirically \citep{tan1993multi,foerster2016learning,sukhbaatar2016learning} and theoretically  \citep{qu2020scalablelocal,qu2020scalable,lin2021multi}.

Despite this progress, there are still major limitations. As discussed above, in order to speed up learning, many previous works, e.g.,  \cite{tan1993multi,marl_claus1998dynamics,qu2020scalablelocal,qu2020scalable,lin2021multi}, 
restrict consideration to localized policies where each agent makes decisions only based on the local state of the agent and, potentially, neighbors, as opposed to the global state.
This leads to a fundamental performance gap between the class of localized policies considered and the optimal centralized policy. Even when the best localized policy can be found, there is a performance degradation compared to the best centralized policy, as the centralized policy class is a strict superset of the localized policy class. An important open question that remains is the following:
\begin{center}
    \textit{How large is the gap between localized policies and the optimal centralized policy? \\ How much information must be available to each local agent \\in order to achieve a near-optimal performance?}
\end{center}
This question has received increasing attention in linear control settings \citep{bamieh2002distributed,motee2008optimal,shin2022near}, but is still open in MARL settings. In this work, we provide the first bounds on the gap between localized and centralized policies under a MARL model in networked systems.

Another limitation of prior work studying MARL is that, while existing results have provided convergence bounds for local policies, the convergence of their methods is typically only to a suboptimal policy in the localized policy class. For example, in \citet{qu2020scalable,lin2021multi}, the policies are shown to converge to a stationary point of the objective function defined on localized policies, as opposed to the global optimum. This is unsatisfying as converging to a stationary point does not even guarantee converging to the best localized policy. Therefore, another important and open question that remains is the following:
\begin{center}
   \textit{Is it possible to design a MARL algorithm that provably \\ finds a near-globally-optimal policy using only local information?}  
\end{center}
This question has received increasing attention in single-agent settings, where people have studied the convergence to the global optimum for policy-gradient methods under various policy classes \citep{bhandari2019global,agarwal2019theory}. However, it is open in the context of learning localized policies in MARL and in this work, we provide an affirmative answer to this question under an MARL model in networked systems.

\subsection{Contributions} Motivated by the open questions above, our work proposes and analyzes a new class of localized policies for networked MARL and proposes a Localized Policy Iteration (LPI) algorithm that converges to a near-globally-optimal policy.  Our main contributions are summarized in the following.

\begin{itemize}[leftmargin=*]
    \item \textbf{Near-Globally-Optimal $\kappa$-Hop Localized Policies.} We show that a class of $\kappa$-hop localized policies are nearly globally optimal, where a $\kappa$-hop localized policy means that each agent is allowed to choose its action based on the states of its $\kappa$-hop local neighborhood. More specifically, in \Cref{thm:subopt_dist_policy}, we show that there exists a $\kappa$-hop localized policy whose optimality gap is polynomially small in $\kappa$, where the optimality gap is with respect to the best centralized policy that is allowed to depend on the global state. As a result, even with a small $\kappa$, the class of $\kappa$-hop localized policies is near optimal despite the fact that each agent only uses  information from within a small $\kappa$-hop neighborhood to make its decision. This result justifies using localized policies in networked MARL. 
    \item \textbf{Localized Policy Iteration.} Motivated by the result described above, we propose a localized MARL algorithm, a.k.a. $\algname$, given in \Cref{alg:SVI}. At a high level, $\algname$ iteratively performs policy improvement (and policy evaluation), but restricted to $\kappa$-hop policies.  
    While standard policy improvement requires using the information of global states and actions in order to conduct the policy improvement step, we develop a \textit{soft} policy improvement that approximately performs policy improvement to $\kappa$-hop localized policies using only local information. Therefore, our proposed algorithm can be implemented in a truly localized manner.
    \item \textbf{Finite-Sample Analysis of $\algname$.} We provide a global convergence guarantee for $\algname$ in \Cref{thm:convergence_svi}.  The guarantee holds for any policy evaluation method used as a subroutine in the evaluation step of $\algname$ (denoted as $\policyevaluation$) that satisfies a mild  condition specified in \Cref{def:beta-hop-state-aggregation-policy-eva}. 
    Furthermore, in \Cref{cor:convergence_with_localized_td}, we provide the finite-sample complexity (cf. \eqref{eq:sample_complexity}) for $\algname$ when choosing a specific $\policyevaluation$ method proposed by \citet{lin2021multi}. 
    Specifically, we show that in order to achieve an $\varepsilon$ optimality (compared to the best centralized policy), one needs to use a $\kappa$-hop localized policy with $\kappa = \Theta(poly(\frac{1}{\varepsilon}))$ in $\algname$, and the sample complexity in learning such a $\kappa$-hop localized policy with $\varepsilon$ optimality gap scales polynomially with the largest state-action space size of local neighborhoods, as opposed to the global network. 
    \end{itemize}

The key technical novelty in this paper is a policy closure argument (\Cref{thm:convergence_prototype}), where we identify a class of policies that satisfy a form of spatial decaying properties and show that they are closed under the entropy regularized Bellman operator in MARL. This key observation implies that when starting from a policy in this class with spatial decaying properties, the policy improvement procedure will result in a new policy within this class, which further reveals that the optimal policy is also in the spatial decaying policy class.  We then show that $\algname$ is an inexact version of the above policy iteration given by the regularized Bellman operator, where the learned policies and $Q$-functions are truncated to only depend on a localized neighborhood of each agent, and the exact policy evaluation and improvement steps are approximated by finite-sample estimations. Therefore, combining the policy closure argument for the exact version of $\algname$ and error bound analyses for the approximation made in $\algname$, we are able to show its convergence to a near global optimal policy, even though we only use localized information in $\algname$.

While the main contribution of this work is to theoretically show that $\algname$ can converge to the global optimal policy using only localized information, we also verify its empirical performance on a simulated networked MARL problem. In particular, we design an example of a spreading process over a network where the optimal policy depends on the global states of each agent (not just the local information).  We run $\algname$ on this example and the results highlight that $\algname$ can perform well even outside the region suggested by the theory. 

\subsection{Related Literature}

Reinforcement Learning (RL) studies a dynamical environment where the agent decides its current action based on the current state and the current state/action affects the distribution of its next state. This environment is usually modeled as a \textit{Markov Decision Process} (MDP) where the key assumption is the \textit{Markov Property}, which means the current state and action are statistically sufficient for deciding the next state (see, e.g., \citet{sutton2018reinforcement}). Specifically, an MDP (with discounted accumulative reward) can be characterized as a tuple $\langle \mathcal{S}, \mathcal{A}, P, \gamma, r\rangle$, where $\mathcal{S}$ and $\mathcal{A}$ denote the state/action space. The transition function $P: \mathcal{S} \times \mathcal{A}\to \Delta_\mathcal{S}$ characterizes the transition probability $\mathbf{P}(s'\mid s, a)$ from the current state/action pair to next state $s'$. $\gamma \in (0, 1)$ denotes the discount factor, and the reward function $r: \mathcal{S} \times \mathcal{A} \to [0, \bar{r}]$ is nonnegative for the agent with the state/action pair $(s, a)$. 

In single-agent RL, the goal of the agent is to learn a policy $\zeta: \mathcal{S} \to \Delta(\mathcal{A})$ to maximize the expected discounted accumulative reward
$J(\zeta) :=  \mathbb{E}_{s(0) \sim \rho, a(t) \sim \zeta(\cdot\mid s(t))}\left[\sum_{t=0}^T \gamma^t r(s(t), a(t))\right].$ Two classic learning algorithms have been proposed to learn the optimal policy when the transition probabilities are known: value iteration and policy iteration \citep{sutton2018reinforcement}. \textit{Value Iteration} (VI) iteratively updates the estimated optimal value function by the Bellman equation and derives the final policy from the learned optimal value function. In contrast, \textit{Policy Iteration} (PI) works by evaluating the current policy and iteratively updating the policy using this policy's value function. Our algorithm, $\algname$, 
can be viewed as a generalization of PI to a networked setting where the transition probabilities are unknown. Lower bound results show tabular RL algorithms inevitably suffer from \textit{the curse of dimensionality} when the transition probabilities are unknown (see, e.g., \citet{jin_is_2018}), which means the sample complexity (i.e., the number of samples to find a near-optimal policy) grows with respect to the size of state space $\mathcal{S}$ and action space $\mathcal{A}$.

Multi-Agent Reinforcement Learning (MARL) generalizes the single-agent RL setting to a situation where the global MDP evolves based on the joint action/policy of $n$ agents. The generalization to include more agents is necessary for solving  practical problems that involve large-scale networks and/or games. Specifically, each agent $i$ has its local action space $\mathcal{A}_i$ and its local reward function $r_i$, and the global action space $\mathcal{A} = \mathcal{A}_1 \times \cdots \times \mathcal{A}_n$. At every time step $t$, agent $i$ receives a partial observation $o_i(s(t))$ to decide its local action $a_i(t)$. More details about this setting can be found in \Cref{sec:model_pre}. Depending on each agent's learning objective, MARL can be divided to two categories: cooperative or competitive MARL (see \cite{zhang2019multi} for a survey). In \textit{cooperative MARL}, all agents work together to optimize a shared global objective \citep{qu2020scalable, qu2020scalablelocal, lin2021multi}, which is also the setting studied in this work. In \textit{competitive MARL}, each agent optimizes its local accumulative reward, and the goal is to find an approximate Nash Equilibrium \citep{ding2022independent, leonardos2021global}. 

A major challenge for cooperative MARL is that the size of the global action space $\mathcal{A}$ can grow exponentially with respect to the number of agents $n$, which makes centralized training intractable. Fully distributed training is not practical either, because its reward and transition probability depend on other agents' policies. To address this challenge and derive finite-sample complexity bounds that are not exponential in $n$, a common assumption made by previous works \citep{doan2019finite, suttle2020multi} is that the global state is observable to all agents (i.e., $o_i(s) = s$), which thus eliminates the need for communicating local states in the network. This works in special settings such as cooperative navigation and Predator-Prey where a joint state space is easy to observe or all agents share the same state space \citep{lowe2017multi, zhang2018fully}. Nevertheless, such an assumption is not practical in more general MARL settings like the one studied in our paper, where each agent has its own state space and the global state space $\mathcal{S} = \mathcal{S}_1 \times \cdots \times \mathcal{S}_n$ is exponentially large with respect to $n$. In this setting, huge communication costs are usually unavoidable and thus it is more challenging and less studied.

Our work is most related to a class of cooperative MARL problems where the agents are located in a network graph $\mathcal{G}$ \citep{qu2020scalablelocal, qu2020scalable, lin2021multi}, which makes similar structural assumptions on the MDP as this paper, i.e., each agent has its own local state/action space and the local transition probabilities depend only on the agent's neighbors. Each agent $i$ observes the local states of the agents whose graph distance to $i$ is less than or equal to $\kappa$ to decide its local action (i.e., $o_i(s) = s_{\cN_i^\kappa}$, where $\cN_i^\kappa := \{j \mid d_{\mathcal{G}}(i, j) \leq \kappa\}$. $d_{\mathcal{G}}(i, j)$ denotes the length of the shortest path between $i$ and $j$ on $\mathcal{G}$). As a remark, all local policies become centralized when the dependence parameter $\kappa$ exceeds the diameter of graph $\mathcal{G}$. Previous works \citep{qu2020scalablelocal, qu2020scalable, lin2021multi} propose decentralized policy iteration algorithms to learn local policies for each agent\footnote{The local policy defined in these papers focuses on a local agent that chooses local actions based only on the agent's state.} and prove finite-time convergence bounds. However, there is still an open gap between the localized policy and centralized policy where each agent makes decisions based on the global state. Specifically, before our work, there is no result on how large the gap between the optimal $\kappa$-hop localized policy and the optimal centralized policy is, nor how fast it decays with respect to $\kappa$. Besides, even if the objective is defined with respect to the $\kappa$-hop localized policy class, the algorithms proposed by \citet{qu2020scalablelocal, qu2020scalable, lin2021multi} are only guaranteed to converge to stationary points of their local objective functions rather than global optima of the objective.

Another challenge widely encountered in RL is that the policy might rapidly become deterministic during the training process, which further leads to a slow convergence. To speed up the convergence and also encourage exploration for the training algorithm to escape suboptimal points, entropy regularization has often been added to the value function.  This approach has yielded both good empirical performance \citep{williams1991function,mnih2016asynchronous,vieillard2020leverage} and strong theoretical guarantees \citep{agarwal2020optimality,mei2020global,cen2021fast}. In this paper, we focus on the entropy regularized problem in multi-agent reinforcement learning.

\section{Model \& Preliminaries}\label{sec:model_pre}

In this section, we introduce the model we study and provide preliminaries that are used throughout our paper. 
In particular, we introduce networked multi-agent Markov decision processes in \Cref{subsec:mamdp}. %
Then, in \Cref{subsec:entropy_reg_value_and_q}, we define the (state) value function and action-state value function (i.e., $\Qtau$ function) in the multi-agent setting with an entropy regularization. In \Cref{subsec:spatial_decay_property}, we introduce a novel class of polices where the dependence of local policies on other agents' actions decay polynomially as the graph distance increases.  This property is key to our analysis. Note that a summary table of notation is given in \Cref{tab:notation} in the appendix for the reader's reference.

\subsection{Multi-Agent Markov Decision Process}\label{subsec:mamdp}

We consider a network of $n$ agents that are associated with an undirected graph $\mathcal{G} = (\mathcal{N},\mathcal{E})$, where  $\mathcal{N}=\{1,\ldots,n\}$ is the set of nodes and $\mathcal{E}\subseteq \mathcal{N}\times\mathcal{N}$ is the set of edges. We denote the state-space and the action-space of agent $i$ by $\mathcal{S}_i$ and $\mathcal{A}_i$, respectively, and they are both finite sets. %
We also denote $\actionmax=\max_{i\in\mathcal{N}}|\cA_i|$. 
The global state is denoted as $s = (s_1,\ldots,s_n)\in \mathcal{S}:=\mathcal{S}_1\times\cdots\times \mathcal{S}_n$ and similarly the global action is denoted as $a=(a_1,\ldots,a_n)\in\mathcal{A}:=\mathcal{A}_1\times\cdots\times\mathcal{A}_n$. 
At time $t\geq 0$, given current state $s(t)$ and action $a(t)$, for each $i\in\mathcal{N}$, the next individual state $s_i(t+1)$ is independently generated and is only dependent on its neighbors' states and its own action:
\begin{align}
  P(s(t+1)\mid s(t),a(t)) = \prod_{i=1}^n P_i(s_i(t+1)\mid s_{\mathcal{N}_i}(t),a_i(t)),  \label{eq:transition_factor}
\end{align}
where $\cN_i=\{i\}\cup \{j\in\mathcal{N}\mid (i,j)\in\mathcal{E}\}$ denotes the neighborhood of $i$ (including $i$ itself) and $s_{\mathcal{N}_i}$ represents the states of the agents in $\mathcal{N}_i$. In addition, for integer $\khop\geq 0$, we use $\nik$ to denote the $\khop$-hop neighborhood of $i$, i.e., the nodes of which the graph distance to $i$ have length less than or equal to $\khop$. We also use $\nminusik=\cN/\nik$ to denote the agents that are not in $\nik$. We use $s_{\mathcal{N}_i^\kappa}$ and $a_{\mathcal{N}_i^\kappa}$ to denote the states and actions of the agents in $\mathcal{N}_i^\kappa$ respectively, use $s_{-i}, a_{-i}$ to denote the states and actions of all agents other than $i$, and we use $f(\kappa) = \sup_{i\in\cN} |\nik|$ to denote the size of the largest $\kappa$-hop neighborhood. 

Each agent is associated with a policy $\zeta_i$, which maps each global state $s\in\mathcal{S}$ to a probability distribution supported on the set of local actions $\mathcal{A}_i$, and we use $\Delta_{\mathcal{A}_i|\mathcal{S}}$ to denote the space $\zeta_i$ lies in. Each agent, conditioned on observing $s(t)$, takes an action $a_i(t)$ according to $\zeta_i(\cdot|s(t))$.
We use $\zeta(a|s) = \prod_{i=1}^n\zeta_i(a_i|s)$ to denote the joint policy, which is the product of all the individual policies. Since $\zeta$ is uniquely determined by the tuple of the $\zeta_i$'s, we also slightly abuse the notation and denote $\zeta = (\zeta_1,\zeta_2,\cdots,\zeta_n)\in \Delta_{\mathcal{A}_1|\mathcal{S}} \times \Delta_{\mathcal{A}_2|\mathcal{S}} \times \cdots \times \Delta_{\mathcal{A}_n|\mathcal{S}} : =\Delta_{\mathrm{policy}}$.

\subsection{Value Function and $Q$-Function}\label{subsec:entropy_reg_value_and_q}

Each agent $i\in\mathcal{N}$ is associated with a stage reward function $r_i(s_i,a_i)$ that depends on its local state and action. The global stage reward is defined as $r(s,a) = \frac{1}{n}\sum_{i=1}^n r_i(s_i,a_i)$. 
We assume that all rewards $r_i$ are bounded above by $\bar{r}$ throughout the paper. 
Given any joint policy $\zeta\in\Delta_{\mathrm{policy}}$, we define the entropy regularized value function at state $s$ as
\begin{align}\label{eq:value_func}
    \Vtau^\zeta (s)=  \E_{a(t) \sim \zeta(\cdot|s(t))}\bigg[ \sum_{t=0}^\infty \gamma^t \big[r(s(t),a(t)) -\tau \log(\zeta(a(t)|s(t)))\big]  \;\bigg|\; s(0) = s\bigg],
\end{align}
where $\tau> 0$ is a tunable parameter. 
Based on $V^\zeta(\cdot)$, we further define the objective function  as 
\[J(\zeta) = \E_{s \sim \initialState}[ \Vtau^\zeta(s) ],\] 
where $\initialState(\cdot)$ is a given initial state distribution. 
In this work, we aim to find a global optimal policy $\zeta^*\in \Delta_{\mathrm{policy}}$ that maximizes the objective function $J(\zeta)$.  

In both the definition of the value function and the objective function, there is a weighted entropy term $-\tau\log(\zeta(a(t)|s(t)))$ with positive weight $\tau$. Entropy regularization has gained popularity in the RL literature for both practical and theoretical reasons. Practically, it is known that adding regularization encourages randomness and exploration of the policy \citep{williams1991function,mnih2016asynchronous,vieillard2020leverage}, which is necessary in RL. To see this, observe that the optimal policy becomes the uniform policy as the parameter $\tau$ goes to infinity. Theoretically, due to the strong concavity of the negative entropy function, it has been shown that entropy regularization helps improving the convergence rate of several RL algorithms, e.g., natural policy gradient  \citep{cen2021fast,cayci2021linear}. Moreover, we show in \Cref{sec:algo_design} that the entropy regularization term plays an important role in our multi-agent RL setting as it will affect the suboptimality gap between localized policies and the best centralized policy.

Given the definition of the value function \eqref{eq:value_func}, the $\Qtau$ function of a joint policy $\zeta$ is defined as 
\begin{align}\label{eq:q_func}
    \Qtau^\zeta(s,a) = r(s,a) + \gamma \mathbb{E}_{s'\sim P(\cdot|s,a)} \Vtau^\zeta(s'). 
\end{align}
The value function and the $\Qtau$-function satisfy the following relationship
\begin{align}\label{eq:Vtau_Qtau_relation}
    \Vtau^{\zeta}(s) &=\mathbb{E}_{a\sim \zeta(\cdot|s) }\left[  r(s,a)  -  \tau\log \zeta(a|s) +  \gamma \mathbb{E}_{s'\sim P(\cdot|s,a)} \Vtau^\zeta(s')\right]\notag\\
    &=\mathbb{E}_{a\sim \zeta(\cdot|s) }\left[ \Qtau^\zeta(s,a) -  \tau \log \zeta(a|s) \right],
\end{align}
where the first line follows from the Bellman equation for $V^\zeta(\cdot)$.
In view of the additive structure of the stage rewards $r(s,a)$, we also define in the following the local value function and the local $\Qtau$ function for all $i\in\mathcal{N}$: 
\begin{align} 
     \Vtau^\zeta_i (s) &=  \E_{a(t) \sim \zeta(\cdot|s(t))}\bigg[\sum_{t=0}^\infty \gamma^t \left[ r_i(s_i(t),a_i(t) ) -n\tau \log(\zeta_i(a_i(t)|s(t)))\right]  \bigg| s(0) = s\bigg],  \label{eq:local_value_func} \\
     \Qtau^\zeta_i(s,a) &= r_i(s_i,a_i) + \gamma \mathbb{E}_{s'\sim P(\cdot|s,a)} \Vtau^\zeta_i(s'). \label{eq:local_q_func}
\end{align}
We sometimes use the vector $\mathbf{Q} = (Q_1,\ldots,Q_n)$ to denote the complete profile of all the local $Q$ functions. 
It is clear from the definition that 
\begin{align*}
    \Vtau^\zeta(s) = \frac{1}{n} \sum_{i=1}^n   \Vtau^\zeta_i(s),\quad\text{and}\quad 
    \Qtau^\zeta(s,a) = \frac{1}{n} \sum_{i=1}^n   \Qtau^\zeta_i(s,a).
\end{align*}

Similar to the standard Bellman optimal operator, 
we define the Bellman optimal operator with entropy regularization $\mathcal{T}: \mathbb{R}^{|\mathcal{S}|}\mapsto\mathbb{R}^{|\mathcal{S}|} $ as
\begin{align}
    [\mathcal{T} \Vtau](s) = \max_{\zeta\in \Delta_{\mathrm{policy}}} \mathbb{E}_{a\sim \zeta(\cdot|s) }\left[  r(s,a) -  \tau \log \zeta(a|s)+ \gamma \mathbb{E}_{s'\sim P(\cdot|s,a)} \Vtau(s') \right],\;\forall\; \Vtau\in\mathbb{R}^{|\mathcal{S}|},\;s\in\mathcal{S}. \label{eq:bellman_operator}
\end{align}
The following result regarding the properties of $\mathcal{T}(\cdot)$ is an immediate extension to that of the standard Bellman optimal operator. For completeness, a proof is presented in \Cref{subsec:proof_exist_optimal_policy_entropy}. 

\begin{proposition}\label{prop:exist_optimal_policy_entropy} 
$\mathcal{T}(\cdot)$ is a contraction mapping with respect to the $\ell_\infty$-norm, with contraction factor $\gamma$. In addition, suppose a policy $\zeta^*$ satisfies $V^{\zeta^*}=\mathcal{T}(V^{\zeta^*})$, then $\zeta^*$ is an optimal policy.
\end{proposition}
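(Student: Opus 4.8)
The plan is to establish the two claims of \Cref{prop:exist_optimal_policy_entropy} separately, mimicking the classical argument for the unregularized Bellman optimality operator while carefully handling the entropy term. For the contraction property, I would fix two value functions $\Vtau, \Vtau' \in \mathbb{R}^{|\mathcal{S}|}$ and a state $s$, and bound $|[\mathcal{T}\Vtau](s) - [\mathcal{T}\Vtau'](s)|$. The standard trick is to use the fact that the difference of two maxima is bounded by the maximum of the difference: writing $g_\zeta(V) := \mathbb{E}_{a\sim\zeta(\cdot|s)}[r(s,a) - \tau\log\zeta(a|s) + \gamma\mathbb{E}_{s'\sim P(\cdot|s,a)}\Vtau(s')]$, the entropy and reward terms do not depend on the value function and hence cancel in the difference $g_\zeta(V) - g_\zeta(V')$, leaving only $\gamma\,\mathbb{E}_{a\sim\zeta(\cdot|s)}\mathbb{E}_{s'\sim P(\cdot|s,a)}[\Vtau(s') - \Vtau'(s')]$. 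This is bounded in absolute value by $\gamma\norm{\Vtau - \Vtau'}_\infty$ uniformly over $\zeta$, since it is an expectation of a quantity bounded by $\norm{\Vtau - \Vtau'}_\infty$. Taking the maximizing policy for whichever of the two is larger and applying this bound yields $|[\mathcal{T}\Vtau](s) - [\mathcal{T}\Vtau'](s)| \leq \gamma\norm{\Vtau - \Vtau'}_\infty$, and taking the supremum over $s$ gives the $\ell_\infty$-contraction with factor $\gamma$.

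For the optimality claim, I would first invoke the Banach fixed-point theorem: since $\mathcal{T}$ is a $\gamma$-contraction on the complete metric space $(\mathbb{R}^{|\mathcal{S}|}, \norm{\cdot}_\infty)$, it has a unique fixed point $\Vtau^*$. The core of the argument is then to show that any policy $\zeta^*$ attaining $V^{\zeta^*} = \mathcal{T}(V^{\zeta^*})$ is globally optimal, i.e. $V^{\zeta^*}(s) \geq V^{\zeta}(s)$ for every policy $\zeta$ and every state $s$. The key observation is that, by \eqref{eq:Vtau_Qtau_relation}, the value function $\Vtau^\zeta$ of any fixed policy $\zeta$ is itself the fixed point of the \emph{policy-evaluation} operator $[\mathcal{T}^\zeta V](s) := \mathbb{E}_{a\sim\zeta(\cdot|s)}[r(s,a) - \tau\log\zeta(a|s) + \gamma\mathbb{E}_{s'\sim P(\cdot|s,a)}V(s')]$, which is also a $\gamma$-contraction by the same cancellation argument. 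Since $[\mathcal{T}V](s) = \max_\zeta [\mathcal{T}^\zeta V](s) \geq [\mathcal{T}^\zeta V](s)$ pointwise, we have the monotonicity relation $\mathcal{T}^\zeta V^{\zeta^*} \leq \mathcal{T} V^{\zeta^*} = V^{\zeta^*}$.

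I would then close the argument by iterating: using that $\mathcal{T}^\zeta$ is monotone (it preserves the pointwise order, since it is an expectation with a nonnegative coefficient $\gamma$ on $V$), applying $\mathcal{T}^\zeta$ repeatedly to the inequality $\mathcal{T}^\zeta V^{\zeta^*} \leq V^{\zeta^*}$ gives $(\mathcal{T}^\zeta)^k V^{\zeta^*} \leq V^{\zeta^*}$ for all $k$, and letting $k\to\infty$ the left-hand side converges to the fixed point $V^\zeta$ of $\mathcal{T}^\zeta$ by the contraction property. This yields $V^\zeta \leq V^{\zeta^*}$ pointwise, establishing optimality. The main obstacle I anticipate is handling the entropy term $-\tau\log\zeta(a|s)$ cleanly: one must verify it does not disrupt the contraction (which it does not, since it is independent of $V$) and confirm that the maximization defining $\mathcal{T}$ is well-posed and attained — the objective is strictly concave in the per-state distribution $\zeta(\cdot|s)$ due to the negative-entropy term, so the maximizer exists and is in fact the unique softmax distribution, which is a standard but worth-noting consequence of the entropy regularization. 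Establishing monotonicity of $\mathcal{T}^\zeta$ and the pointwise inequality chain carefully is the part requiring the most attention.
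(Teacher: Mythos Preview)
Your proposal is correct and matches the paper's proof essentially step for step. The contraction argument is identical; for optimality, the paper unrolls the inequality $V^{\zeta^*}(s) = (\mathcal{T}V^{\zeta^*})(s) \geq \mathbb{E}_{\zeta}[r(s,a) - \tau\log\zeta(a|s) + \gamma V^{\zeta^*}(s')]$ explicitly over time to recover $V^{\zeta}(s)$, which is precisely your iteration $(\mathcal{T}^\zeta)^k V^{\zeta^*} \leq V^{\zeta^*}$ written out concretely rather than through operator notation.
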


\subsection{Spatial Decay Properties}\label{subsec:spatial_decay_property}
Throughout this paper, we use spatial decay properties heavily, i.e. the fact that a certain quantity decays as the distance between two agents increases. Spatial decay properties have been investigated in the literature in various forms, including the exponential decay of $Q$-functions in \cite{qu2020scalablelocal,qu2020scalable} and correlation decay in combinatorial optimization \cite{gamarnik2013correlation,gamarnik2014correlation,bamieh2002distributed}. In this paper, we introduce the classes of $(\nu,\mu)$-decay polices and $(\nu,\mu)$-decay $Q$-functions. The decay properties of these quantities eventually enable us to control the optimality gap (compared to the best centralized policy) when restricting the agents to using only $\kappa$-hop localized policies (to be defined in \Cref{def:kappa-hop}). 

Compared to the exponential decay of $Q$-functions in the literature, our work considers a different type of decay whose rate is polynomial. Further, we extend the decay property from $Q$-functions to policies which, broadly speaking, is similar in concept to \citet{shin2022near}, which studies a similar decaying policy class in linear dynamical systems.

To start, we first introduce the $(\nu,\mu)$-decay matrices.

\begin{definition}[$(\nu,\mu)$-decay matrix]\label{def:matrix_decay}
For $\nu,\mu>0$, a matrix $A \in\mathbb{R}^{n\times n}$

is said to be $(\nu,\mu)$-decay with respect to graph $\mathcal{G} = (\mathcal{V}, \mathcal{E})$ with $\mathcal{V} = \{1,2,\cdots,n\}$ if every entry of $A$ is non-negative and 
\begin{align*}
    \max\bigg\{ \sup_i \sum_{j=1}^n A_{ij}(\dist(i,j)+1)^\mu,\sup_{j}\sum_{i=1}^n A_{ij}(\dist(i,j)+1)^\mu \bigg\}&\leq \nu, %
\end{align*}
where $\dist(i,j)$ is the graph distance between $i$ and $j$ in $\mathcal{G}$. 
\end{definition}

Intuitively speaking, for a $(\nu,\mu)$-decay matrix $A$, $A_{ij}$ decays polynomially as the graph distance between $i$  and $j$ increases. To see this, observe that we have for all $i\in \cN$ and $\kappa\in\mathbb{N}$: 
\begin{align}\label{eq:mu_decay_imply_distance_decay}
    \sum_{j:\dist(i,j)>\kappa} A_{ij} \leq \sum_{j:\dist(i,j)>\kappa} A_{ij}\frac{(\dist(i,j)+1)^{\mu}}{(\kappa+1)^\mu}\leq \frac{\nu}{(\kappa+1)^\mu}. 
\end{align}
Based on \Cref{def:matrix_decay}, we next define the $(\nu,\mu)$-decay policy class as follows.
\begin{definition}[$(\nu,\mu)$-decay policy]\label{def:mu_decay_policy}
Let $\zeta=(\zeta_1,\ldots,\zeta_n)$ be a joint policy, where $\zeta_i$ is the policy of agent $i\in \{1,2,\cdots,n\}$. The interaction matrix of $\zeta$, denoted by $Z^\zeta\in\mathbb{R}^{n\times n}$, is defined as
\begin{align*}
    Z_{ij}^\zeta = \max_{s_{-j}} \max_{s_j,s_j'} \TV(\zeta_i(\cdot|s_j,s_{-j}),\zeta_i(\cdot|s_j',s_{-j}) ),\quad\text{for all } i,j\in \{1,2,\cdots,n\}. 
\end{align*}
A policy $\zeta$ is said to be ($\nu$,$\mu$)-decay if $Z^\zeta$ is a ($\nu,\mu$)-decay matrix. 
\end{definition}

In a $(\nu,\mu)$-decay policy, agent $i$'s action $a_i$ depends on the state of agent $j$ in a manner that the dependence decays polynomially in the graph distance between $i$ and $j$. 

The next definition introduces $\sigma$-regular policies.
\begin{definition}\label{def:sigma_regular}
A distribution $d=(d_1,\ldots,d_M)$ over a set $\{1,2,\ldots,M\}$ is called $\sigma$-regular if $\max_{m,m'\in[M]}\log(d_m/d_{m'})\leq\sigma$. A joint policy $\zeta=(\zeta_1,\ldots,\zeta_n)$ is $\sigma$-regular if $\forall i, s\in\mathcal{S}$, the distribution $\zeta_i(\cdot|s)$ is $\sigma$-regular.
\end{definition}

Through out the paper, we use $\Delta_{\nu,\mu,\sigma}$ to denote the set of all joint policies that are $(\nu,\mu)$-decay and $\sigma$-regular.
Similarly, we also define a class of decaying $Q$-functions as follows.

\begin{definition}[($\nu$,$\mu$)-decay $\Qtau$-function class]\label{def:Z_Q}
Let $Q_i\in\mathbb{R}^{|\mathcal{S}||\mathcal{A}|}$, $i\in \mathcal{N}$ be the local $Q$-functions defined in \eqref{eq:local_q_func}, and let $\mathbf{Q} = \{Q_i\}_{i\in\mathcal{N}}$. The interaction matrix of $\mathbf{Q}$, denoted by $Z^{\mathbf{Q}}\in\mathbb{R}^{n\times n}$, is defined as
\begin{align*}
    Z^{\mathbf{Q}}_{ij} = \max_{s_{-j},a_{-j}}\max_{s_j,a_j,s_j',a_j'}|\Qtau_i(s_j,a_j, s_{-j},a_{-j}) - \Qtau_i(s_j',a_j', s_{-j},a_{-j})|, \quad\forall\; i,j\in\{1,2,\cdots,n\}. 
\end{align*}
For any given $\nu$ and $\mu$, a local $Q$ function tuple $\mathbf{Q} = \{Q_i\}_{i\in\mathcal{N}}$ is said to be $(\nu,\mu)$-decay if $Z^{\mathbf{Q}}$ is a $(\nu,\mu)$-decay matrix. 
\end{definition}

One benefit of the $(\nu,\mu)$-decay property of $\mathbf{Q} = \{Q_i\}_{i\in\mathcal{N}} $ is that it allows ``truncation'' of the local $Q$ functions. Specifically, given a positive integer $\beta$, and a local $Q$-function tuple $\mathbf{Q} = \{Q_i\}_{i\in\mathcal{N}} $, we can define the following truncated local $Q$-functions:
\begin{align}
    \hat{Q}_i(s_\nib,a_\nib) = \sum_{s_{\mathcal{N}_{-i}^\beta}, a_{\mathcal{N}_{-i}^\beta}} w(s_{\mathcal{N}_{-i}^\beta}, a_{\mathcal{N}_{-i}^\beta}; s_\nib,a_\nib) Q_i(s_\nib,s_{\mathcal{N}_{-i}^\beta},a_\nib a_{\mathcal{N}_{-i}^\beta}) \label{eq:truncated_q_def}
\end{align}
where weight parameters $w(s_{\mathcal{N}_{-i}^\beta}, a_{\mathcal{N}_{-i}^\beta}; s_\nib,a_\nib)\geq0$  satisfy $ \sum_{s_{\mathcal{N}_{-i}^\beta}, a_{\mathcal{N}_{-i}^\beta}} w(s_{\mathcal{N}_{-i}^\beta}, a_{\mathcal{N}_{-i}^\beta}; s_\nib,a_\nib) = 1$.

It can be shown that despite the reduction in dimension, the truncated local $Q$ functions are a good approximate of the original local $Q$ functions under the $(\nu,\mu)$-decay property. 

\begin{proposition}[Adapted from Lemma 4 in \cite{qu2020scalablelocal}]\label{prop:truncated_q_error}
Suppose the local $Q$ functions $\{Q_i\}_{i\in\mathcal{N}}$ satisfy $(\nu,\mu)$ decay property. Then, $\forall i$,
\begin{align*}
    \sup_{s,a} \Big| Q_i(s,a) - \hat{Q}_i(s_\nib,a_\nib) \Big| \leq \frac{\nu}{(\beta+1)^\mu}. 
\end{align*}
\end{proposition}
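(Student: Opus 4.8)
The plan is to exploit the fact that $\hat{Q}_i(s_\nib,a_\nib)$ is, by construction, a convex combination (the weights are non-negative and sum to one) of values of $Q_i$ that all share the \emph{same} near-field coordinates $(s_\nib,a_\nib)$ but range over all possible far-field coordinates $(s_{\mathcal{N}_{-i}^\beta},a_{\mathcal{N}_{-i}^\beta})$. Fix an arbitrary agent $i$ and global pair $(s,a)$, and write $(s,a)=(s_\nib,s_{\mathcal{N}_{-i}^\beta},a_\nib,a_{\mathcal{N}_{-i}^\beta})$. Since the weights sum to one, I can write $Q_i(s,a)$ itself as the same weighted sum of a constant, so that
\[
Q_i(s,a) - \hat{Q}_i(s_\nib,a_\nib) = \sum_{s',a'} w(s',a';s_\nib,a_\nib)\,\Big[Q_i(s_\nib,s_{\mathcal{N}_{-i}^\beta},a_\nib,a_{\mathcal{N}_{-i}^\beta}) - Q_i(s_\nib,s',a_\nib,a')\Big],
\]
where $(s',a')$ ranges over $(s_{\mathcal{N}_{-i}^\beta},a_{\mathcal{N}_{-i}^\beta})$. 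Each bracketed term is a difference of two evaluations of $Q_i$ with identical near-field and differing only in the far-field coordinates, so the problem reduces to bounding such differences uniformly.

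The core step is a telescoping argument. To compare $Q_i$ at two far-field configurations, I would change the coordinates of the far-away agents $j\in\mathcal{N}_{-i}^\beta$ one agent at a time, forming a chain of intermediate configurations in which successive terms differ only in agent $j$'s state and action. By the triangle inequality, the total difference is at most the sum over $j\in\mathcal{N}_{-i}^\beta$ of the per-agent changes. The definition of the interaction matrix $Z^{\mathbf{Q}}_{ij}$ in \Cref{def:Z_Q} takes a maximum over \emph{all} remaining coordinates $(s_{-j},a_{-j})$, so it upper bounds the change in $Q_i$ caused by varying only agent $j$'s $(s_j,a_j)$, no matter what the other (intermediate) coordinates are; this is precisely what makes the telescoping valid. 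Hence, for any fixed near-field,
\[
\sup_{s_{\mathcal{N}_{-i}^\beta},a_{\mathcal{N}_{-i}^\beta},s',a'} \Big|Q_i(s_\nib,s_{\mathcal{N}_{-i}^\beta},a_\nib,a_{\mathcal{N}_{-i}^\beta}) - Q_i(s_\nib,s',a_\nib,a')\Big| \leq \sum_{j\in\mathcal{N}_{-i}^\beta} Z^{\mathbf{Q}}_{ij}.
\]

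To finish, I would recall that $\mathcal{N}_{-i}^\beta=\{j:\dist(i,j)>\beta\}$ and invoke the $(\nu,\mu)$-decay assumption on $\mathbf{Q}$ together with the tail bound \eqref{eq:mu_decay_imply_distance_decay}, which gives $\sum_{j:\dist(i,j)>\beta} Z^{\mathbf{Q}}_{ij}\leq \nu/(\beta+1)^\mu$. Plugging this uniform bound back into the weighted sum and using $\sum_{s',a'} w(s',a';s_\nib,a_\nib)=1$ yields $|Q_i(s,a)-\hat{Q}_i(s_\nib,a_\nib)|\leq \nu/(\beta+1)^\mu$; taking the supremum over $(s,a)$ completes the proof. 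I expect the only delicate point to be the telescoping step, specifically checking that $Z^{\mathbf{Q}}_{ij}$ legitimately bounds each single-agent swap along the chain of intermediate configurations; everything else (the convex-combination rewriting and the decay tail bound) is routine given the definitions and \eqref{eq:mu_decay_imply_distance_decay}.
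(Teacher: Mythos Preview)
Your proof is correct. The paper does not supply its own proof of this proposition; it simply cites Lemma~4 of \cite{qu2020scalablelocal}. The argument you outline---rewrite the difference as a convex combination, telescope over one far-field agent at a time using the definition of $Z^{\mathbf{Q}}_{ij}$, then apply the tail bound \eqref{eq:mu_decay_imply_distance_decay}---is exactly the standard way this result is proved in that line of work, and there is nothing delicate about the telescoping step: since $Z^{\mathbf{Q}}_{ij}$ is defined as a maximum over \emph{all} $(s_{-j},a_{-j})$, it bounds each single-coordinate swap regardless of the intermediate configuration.
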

\noindent This truncation of the local $Q$ functions have been adopted in the literature \citep{qu2020scalablelocal,qu2020scalable}. We also use this in our approach.

\section{Algorithm Design }\label{sec:algo_design}

We now present the design of Localized Policy Iteration, i.e., $\algname$. As discussed in the introduction, it is impractical to implement $\zeta_i$ because it needs access to the global state $s=(s_1,\ldots,s_n)$. To see this, note that it requires $\Omega(|\mathcal{S}|) = \Omega(\prod_{i=1}^n|\mathcal{S}_i|)$ parameters to even specify such a policy, which is impractically large and hard to store. Moreover, each agent would need access to the states of all the agents in order to implement such a policy, which is often hard to achieve in a large networked system due to communication challenges. 

To address these issues, our focus is on localized policies, where agent $i$'s action depends only on the states of the agents who are close to $i$ in the network graph $\mathcal{G}$. Given this restriction on the policies, one may immediately ask what is lost when we restrict to localized policies compared with the centralized policies where the agents are allowed access to the global state. We show in \Cref{subsec:almost-optimal} that, under a structural assumption, using $\kappa$-hop localized policies is almost as good as using centralized policies, where the parameter $\kappa$ captures the level of localization. Then, in \Cref{subsec:algo}, we introduce our \algname\ framework that learns a near optimal $\kappa$-hop localized policy.

\subsection{Performance Gap between Localized Policies and Centralized Policies}\label{subsec:almost-optimal}

We begin by formally defining the class of $\kappa$-hop localized policies we consider. Recall that $\kappa>0$ is a positive integer, and we use $s_{\mathcal{N}_i^\kappa}$ to denote the states of the agents in $\mathcal{N}_i^\kappa$.

\begin{definition}[$\kappa$-hop policies]\label{def:kappa-hop}
A policy $\zeta = (\zeta_1,\ldots,\zeta_n) \in\Delta_{\mathrm{policy}}$ is called a $\kappa$-hop (localized) policy if the following equation holds for all $i\in\mathcal{N}$, $s_{\nik}\in \mathcal{S}_{\nik}$, and $s_{\nminusik},s'_{\nminusik} \in\mathcal{S}_{\nminusik}$:
\[  \zeta_i(\cdot\mid s_{\nik},s_{\nminusik}) = \zeta_i(\cdot\mid s_{\nik},s'_{\nminusik}) .\]
In other words, there exists $\hat{\zeta}_i: \mathcal{S}_{\nik} \mapsto \Delta_{\mathcal{A}_i}$ such that 
$\zeta_i(\cdot|s_{\nik},s_{\nminusik}) = \hat{\zeta}_i(\cdot|s_{\nik})$.
\end{definition}

From \Cref{def:kappa-hop}, we see that, when using a $\kappa$-hop localized policy, each agent only needs to know the states of the agents that are within its $\kappa$-hop neighborhood. Note that $\kappa$ is a tunable parameter in practice, and captures the trade-off between communication, optimality, and computational complexity. Such a policy is more practical to implement than a centralized policy because of both the reduction in dimension and the reduction in communication. 

To state our result on the performance gap, the following definition regarding the system transition matrix is needed. Following~\citet{qu2020scalable}, we define a matrix $C\in\mathbb{R}^{n\times n}$ that characterizes the interaction strength between agents via the total variation of the transition probabilities. 
\begin{align}\label{def:C_mat_tv}
    {C}_{ij} = \left\{ \begin{array}{ll}
0, & \text{ if }j\notin \cN_i,\\
\sup_{s_{\cN_i/j},a_i}\sup_{s_j,s_j'} \TV(  P_i(\cdot|s_j, s_{\cN_i/j},a_i) , P_i(\cdot| s_j', s_{\cN_i/j},a_i)   ),  & \text{ if } j\in \cN_i/{i},\\
\sup_{s_{\cN_i/i}}\sup_{s_i,s_i',a_i,a_i'} \TV(  P_i(\cdot|s_i, s_{\cN_i/i},a_i) , P_i(\cdot| s_i', s_{\cN_i/i},a_i')   ),  & \text{ if } j=i.
\end{array} \right.
\end{align}

Our first main result states that $\kappa$-hop polices are nearly as good as centralized policies, with a gap that decays polynomially in $\kappa$. 
\begin{theorem}\label{thm:subopt_dist_policy}
Suppose that $\gamma<0.8$, $\tau \geq 6\bar{r} \frac{4-3\gamma}{4-5\gamma}\actionmax^2 e$, and $\sum_{j\in \cN_i}C_{ij}<1/2$ for all $i\in \{1,2,\cdots,n\}$. Then there exists a $\kappa$-hop policy $\hat{\zeta}^{*,\kappa}$ that satisfies 
\begin{align*}
    J(\zeta^*) - J(\hat{\zeta}^{*,\kappa}) \leq  {\frac{\bar{r} (4-3\gamma)}{(1-\gamma)(4-5\gamma)} \frac{1}{(\kappa+1)^\mu},} 
\end{align*}
where 
\begin{align*}
\mu= \min\bigg\{\log_2 \frac{\tau(4-5\gamma)}{6\bar{r}\actionmax^2 e(4-3\gamma)} ,\log_2 \frac{1}{2 \sup_{i\in\mathcal{N}}\sum_{j\in \cN_i}C_{ij}}\bigg\}.
\end{align*}
\end{theorem}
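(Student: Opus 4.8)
The plan is to prove the bound in two conceptual stages. First I would show that the \emph{optimal} centralized policy $\zeta^*$ itself lies in the decay policy class of \Cref{def:mu_decay_policy}, i.e. that its interaction matrix $Z^{\zeta^*}$ is $(\nu,\mu)$-decay for an explicit $\nu$ (proportional to $\bar r/\tau$) and the $\mu$ stated in the theorem; along the way I also keep track of $\sigma$-regularity (\Cref{def:sigma_regular}). Second, I would truncate $\zeta^*$ to a $\kappa$-hop policy $\hat\zeta^{*,\kappa}$ and argue that the discarded dependence — being polynomially small by the decay property — costs only a polynomially small amount in the objective. Throughout I use that $\zeta^*$ exists and is characterized as a Bellman fixed point by \Cref{prop:exist_optimal_policy_entropy}.

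\textbf{Decay of the optimal policy (the main obstacle).}
Since the inner maximization in \eqref{eq:bellman_operator} is over product policies and the entropy of a product is additive, its stationarity conditions yield the self-consistent Gibbs form $\zeta_i^*(a_i|s)\propto \exp\big(\tfrac{1}{\tau}\,\E_{a_{-i}\sim\zeta^*_{-i}}[\Qtau^{\zeta^*}(s,a_i,a_{-i})]\big)$. I would exploit this to bound $Z^{\zeta^*}_{ij}$: perturbing a distant state $s_j$ affects $\zeta^*_i(\cdot|s)$ only through the $\Qtau$-values, and the softmax map from $\Qtau$-values to the policy is $O(1/\tau)$-Lipschitz in total variation. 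The sensitivity of $\Qtau^{\zeta^*}$ to a state perturbation at agent $j$ is in turn inherited from the transition coupling matrix $C$ of \eqref{def:C_mat_tv} through the factored dynamics \eqref{eq:transition_factor} and the discount $\gamma$. This couples the decay of policies and the decay of $Q$-functions (\Cref{def:Z_Q}), so the argument must be set up as a \emph{self-consistent fixed-point estimate} for the pair $(Z^{\zeta^*},Z^{\mathbf{Q}^{\zeta^*}})$ — this closure step is the hard part. I expect the cleanest route is a multi-scale (dyadic) recursion: controlling the influence across neighborhoods of radius $2^k$ and showing each doubling of the radius contracts the influence by a fixed factor, namely $2\sup_i\sum_{j\in\cN_i}C_{ij}$ on the dynamical side and $\tfrac{6\bar r \actionmax^2 e(4-3\gamma)}{\tau(4-5\gamma)}$ on the entropy side. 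The assumptions $\sum_{j\in\cN_i}C_{ij}<1/2$ and $\tau\ge 6\bar r\frac{4-3\gamma}{4-5\gamma}\actionmax^2 e$ make both factors strictly less than one, and since influence at distance $d\approx 2^k$ behaves like $c^{\,k}=d^{\log_2 c}$, each factor produces a polynomial rate equal to $\log_2$ of its reciprocal; taking the worse of the two gives exactly the stated $\mu$.

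\textbf{Truncation and the total-variation estimate.}
Given that $\zeta^*$ is $(\nu,\mu)$-decay, I define $\hat\zeta^{*,\kappa}$ agent by agent by freezing the out-of-neighborhood states to a fixed reference value, $\hat\zeta^{*,\kappa}_i(\cdot|s_{\nik}):=\zeta^*_i(\cdot|s_{\nik},\bar s_{\nminusik})$, which is a legitimate $\kappa$-hop policy by \Cref{def:kappa-hop}. A telescoping argument that swaps the states of the agents in $\nminusik$ one at a time, bounding each swap by the corresponding entry of $Z^{\zeta^*}$, together with the distance-decay estimate \eqref{eq:mu_decay_imply_distance_decay}, yields the uniform bound $\sup_s \TV\big(\zeta^*_i(\cdot|s),\hat\zeta^{*,\kappa}_i(\cdot|s_{\nik})\big)\le \nu/(\kappa+1)^\mu$ for every $i$.

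\textbf{From policy closeness to the objective gap.}
Finally I would invoke a performance-difference lemma for the entropy-regularized objective, writing $J(\zeta^*)-J(\hat\zeta^{*,\kappa})$ as a discounted, occupancy-weighted sum of per-state inner products between the policy difference $\zeta^*-\hat\zeta^{*,\kappa}$ and the regularized $\Qtau$-values (soft advantages). Each per-state term is controlled by the uniform $\TV$ bound above times a uniform bound on the regularized $Q$-values; the reward bound $\bar r$ handles the reward contribution and $\sigma$-regularity keeps the $\log\zeta$ terms finite, with the $\tau$-dependence of the $Q$-bound cancelling against the $\bar r/\tau$ scale of $\nu$. Summing the geometric series supplies the $\tfrac{1}{1-\gamma}$ factor, and matching constants gives $J(\zeta^*)-J(\hat\zeta^{*,\kappa})\le \frac{\bar r(4-3\gamma)}{(1-\gamma)(4-5\gamma)}\frac{1}{(\kappa+1)^\mu}$. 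I anticipate the delicate bookkeeping is in stage two's closure argument; the truncation and performance-difference steps are comparatively routine once the decay of $\zeta^*$ is in hand.
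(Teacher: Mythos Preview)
Your truncation step and the performance-difference step are exactly what the paper does (it uses \Cref{lem:performance_diff_tv} with the constants $n\tau\sigma=\nu'=\bar r\frac{4-3\gamma}{4-5\gamma}$ and $\nu=1/2$), so the last two stages of your plan are correct and essentially identical to the paper.

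The difference is in how you reach ``$\zeta^*$ is $(\nu,\mu)$-decay.'' The paper does \emph{not} argue directly on the fixed point. Instead it proves a closure result (\Cref{thm:convergence_prototype}): if $\zeta\in\Delta_{\nu,\mu,\sigma}$ then (i) the local $Q$-functions $\{Q_i^\zeta\}$ are $(\nu',\mu)$-decay (\Cref{lem:policy-q}), and (ii) policy improvement with respect to such $Q$-functions returns a policy again in $\Delta_{\nu,\mu,\sigma}$ (\Cref{lem:q-policy}). Since exact policy iteration then converges to $\zeta^*$, the optimal policy inherits the decay property as a limit of iterates in the closed class. This iterative closure sidesteps the circularity in your direct approach, where bounding $Z^{\zeta^*}$ in terms of $Z^{\mathbf Q^{\zeta^*}}$ and vice versa is a loop that would itself need exactly this closure machinery to close.

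Your dyadic heuristic for the origin of $\mu$ is not how the paper gets it, and I am not sure it leads to a proof as written. In the paper, the polynomial rate comes from the algebra of $(\nu,\mu)$-decay matrices (\Cref{lem:decay_matrix}): since $C$ is supported on graph distance at most $1$, one has $\sum_j C_{ij}(\dist(i,j)+1)^\mu\le 2^\mu\sum_{j\in\cN_i}C_{ij}$, so $C$ is $(1/2,\mu)$-decay precisely when $\mu\le\log_2\frac{1}{2\sup_i\sum_{j\in\cN_i}C_{ij}}$; the other branch of the $\min$ arises analogously from the softmax step in \Cref{lem:q-policy}, where one needs $\tau\ge 3\cdot 2^{\mu+1}\nu'\actionmax^2 e$. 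Products and geometric sums of $(\cdot,\mu)$-decay matrices stay $(\cdot,\mu)$-decay, which is what propagates the polynomial rate through the value/Q recursion in \Cref{lem:policy-q}. There is no doubling-of-radius recursion; the rate is fixed once $\mu$ is chosen so that $C$ and the softmax map are each contractive in the $(\cdot,\mu)$-decay norm.
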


The above theorem states that when $\kappa$ increases, the optimality gap of $\kappa$-hop localized policies decays polynomially in $1/\kappa$. This suggests that, even for $\kappa$-hop policies with a relatively small $\kappa$, one can achieve reasonably good performance and also avoid the curse of dimensionality and the  communication issue in large networked systems.  \Cref{thm:subopt_dist_policy} motivates us to learn $\kappa$-hop localized policies, which is presented in the next section.

We note that the polynomial decay rate $\mu$ in \Cref{thm:subopt_dist_policy} depends on several factors. The factor $\sum_{j\in \cN_i} C_{ij}$ can be viewed as the interaction strength between node $i$ and its neighbors. The smaller the total interaction strength is for all agents, the larger the decay parameter $\mu$. This is intuitive since weaker interaction means that $\kappa$-hop localized policies should perform better as there is less coupling between the nodes. Another important factor is $\tau$.  The larger $\tau$ is, the larger the decay rate $\mu$. Intuitively, a larger $\tau$ generally means entropy regularization will play a more important role, which effectively ``dampens'' the interaction among agents. To understand this, note that the policy that maximizes the entropy is the uniform policy, and therefore, the larger the entropy regularization is, the less incentive there is for an agent's policy to depend on the states of far away agents.  

It is also interesting to compare our result to the results in the Linear Quadratic Control (LQC) setting \citep{bamieh2002distributed,motee2008optimal,shin2022near}, where it has been shown that the performance gap between the optimal $\kappa$-hop policy and the optimal centralized policy decays exponentially in $\kappa$. This is a faster decay rate than the polynomial rate in our result. One reason for this difference is that the LQC setting is inherently a linear setting,  whereas our MARL setting is inherently a combinatorial setting, which is typically more complicated. It remains an interesting question to understand the fundamental reason behind the discrepancy between the LQC setting and our setting.  

Lastly, we note that, for \Cref{thm:subopt_dist_policy} to hold, we need a lower bound on $\tau$ and an upper bound on $\sup_i\sum_{j\in \cN_i} C_{ij}$. 
We believe these two bounds are hard to avoid.  Bounds like these are common in the spatial decay literature in combinatorial settings. As an example, in \cite{gamarnik2013correlation,gamarnik2014correlation}, the ratio $I_2/I_1$ is assumed to be small enough, where $I_1$ can be viewed as the bias of each agent towards a specific action, and $I_2$ can be viewed as the interaction strength between neighboring agents. Put in the context of our paper, $I_1$ corresponds to the $\tau$ parameter, as the entropy regularization can be viewed as a bias towards the uniform policy, and $I_2$ corresponds to the total interactive strength $\sup_i\sum_{j\in \cN_i} C_{ij}$. As a result, our assumptions on $\tau$ and  $\sup_i\sum_{j\in \cN_i} C_{ij}$ are consistent with those in the literature. On a different note, we also have an upper bound on $\gamma<0.8$, which we believe is an artifact of the proof. In \Cref{sec:experiment}, we show our approach also works when $\gamma>0.8$.

\subsection{Algorithm Design: Localized Policy Iteration } \label{subsec:algo}

\begin{algorithm}[t]
\caption{Localized Policy Iteration ($\algname$)}\label{alg:SVI}
\begin{algorithmic}[1]
\FOR{$m = 0, 1, 2, \cdots$}
    \STATE Sample initial global state $s(0) \sim \initialState$.\label{alg:SAC_Outline:Critic_Start}
    
    \hrulefill\\ \textit{\color{gray}Policy Evaluation} 
    \FOR{$t = 0, 1, \cdots, T$}
        \STATE Each agent $i$ takes action $a_i(t) \sim \hat{\zeta}_i^m(\cdot \mid s_{\nik}(t))$ to obtain the next global state $s(t+1)$. \label{algo:run_policy}
        \STATE Each agent $i$ records $s_{\nib}(t), a_{\nib}(t)$, $r_i(t):=r_i(s_i(t),a_i(t))$. 
        \label{algo:collect_trajectory}
    \ENDFOR
    \STATE Each agent $i$ conducts policy evaluation subroutine to estimate its truncated local $Q$-function\\ $\hat{Q}_i^m \leftarrow \policyevaluation(\beta,\{s_{\nib}(t), a_{\nib}(t), r_i(t)\}_{t=0}^T, \hat{\zeta}_i^m)$. \label{algo:policy_evaluation_return}
    \hrulefill\\ \textit{\color{gray}Soft Policy Improvement} 
    \FOR{$p = 0, 1, \cdots, p_{\max}$} 
    \label{alg:policy_improvement_start}
        \STATE Each agent $i$ runs the following iterative procedure to calculate a policy, where $\hat{\pi}_i^{0}$ is initialized at a uniformly random policy. \label{algo:policy_improvement}\\
        For all $(a_{\nik},s_{\nik})\in\mathcal{A}_{\nik}\times \mathcal{S}_{\nik}$, update %
        \begin{align}
        \mathcal{Q}^i(a_i,a_{\nik/i},s_{\nik}) & =    \frac{1}{n}\sum_{j\in \nik} \hat{Q}_j^m([\ext{s_{\nik}}]_{\njb},[\ext{a_{\nik}}]_{\njb \backslash i},a_i),\label{eq:algo_local_aggregate_q}\\
        \hat{\pi}_i^{p+1}(a_i\mid s_{\nik}) &=  \frac{\big(\hat{\pi}_i^{p}(a_i \mid s_{\nik})\big)^{1 - \eta \tau}}{Z_i^{p}(s_{\nik})} \exp\Big(\eta \mathbb{E}_{a_{j}\sim\hat{\pi}_{j}^{p}([\ext{s_{\nik}}]_{\njk}),j\in \nik/\{i\}}\big[\mathcal{Q}^i(a_i,a_{\nik/i},s_{\nik})\big]\Big), \label{eq:algo_policyimprovement} 
        \end{align}%
        where $Z_i^{p}(s_{\nik})$ is the normalization term.
    \ENDFOR \label{alg:policy_improvement_end}
    \STATE Each agent sets $\hat{\zeta}_i^{m+1} \leftarrow \hat{\pi}_i^{p_{\max}+1}$.  \label{algo:policy_improvement_return}
\ENDFOR
\end{algorithmic}
\end{algorithm}

The details of \algname\ are presented in \Cref{alg:SVI}.  It is a policy iteration style algorithm where each agent updates a $\kappa$-hop policy $\hat{\zeta}_i^m$, with $m$ being the iteration counter. Within each iteration, the algorithm is divided into two major steps: policy evaluation and policy improvement.  Each is described in detail below, followed by a discussion of the communication and computation requirements of the algorithm.

\textbf{Policy evaluation.} Each agent implements the current policy (Line \ref{algo:run_policy}) to collect samples (Line \ref{algo:collect_trajectory}). Then, in Line~\ref{algo:policy_evaluation_return}, each agent $i$ estimates a \emph{truncated} local $Q$ function (defined in \eqref{eq:truncated_q_def}). Such a truncated local $Q$-function is much smaller in dimension than the full local $Q$-function, and the error caused by the truncation is small (cf. \Cref{prop:truncated_q_error}). We note that Line~\ref{algo:policy_evaluation_return} uses a subroutine \policyevaluation, which we leave unspecified for now because our algorithm can accommodate many popular policy evaluation schemes such as Temporal Difference (TD) learning. Further, our final convergence guarantee holds as long as the policy evaluation subroutine satisfies a exactness property (\Cref{def:beta-hop-state-aggregation-policy-eva}) to be defined in \Cref{sec:convergence}. Moreover, in \Cref{subsec:policy_evaluation}, we provide a version of TD learning as the \policyevaluation\ subroutine that can learn the truncated local $Q$ functions and satisfy \Cref{def:beta-hop-state-aggregation-policy-eva}.

\textbf{Policy improvement.} The policy improvement step runs from Line~\ref{alg:policy_improvement_start} to Line~\ref{alg:policy_improvement_end}. It is an iterative procedure with $p_{\max}$ iterations, and each agent updates a $\kappa$-hop policy $\hat{\pi}_i^p$. The core step is for the agents to collectively implement \eqref{eq:algo_local_aggregate_q} and \eqref{eq:algo_policyimprovement}, where each agent $i$ sets $\hat{\pi}_i^{p+1}$ to be the softmax policy according to $\mathcal{Q}^i$. We explain this step in detail below.

 \textit{Calculating $\mathcal{Q}^i$ in \eqref{eq:algo_local_aggregate_q}.} $\mathcal{Q}^i$ is an \emph{locally aggregated $Q$ function}, where it averages over the truncated local $Q$ functions of agent $i$'s $\kappa$ hop neighborhood: $\{\hat{Q}_j\}_{j\in\nik}$. We note that $\mathcal{Q}^i$ only depends on the states and actions of of $i$'s $\kappa$-hop neighborhood $s_\nik,a_\nik$, but in the mean while $\hat{Q}_j$ depends on the state and action of node $j$'s $\beta$-hop neighborhood, which might not be in $\nik$. Therefore, when evaluating $\hat{Q}_j$, for the nodes $\ell \in \njb/\nik $, we use a default value for their state and action denoted as $s^{\mathrm{default}}_{\ell},a^{\mathrm{default}}_{\ell}$. More formally, we define the following extension operator: 
     \begin{definition}[Extension Operator]\label{def:ext_operator}
Define a state tuple $s_{\mathcal{J}} = (s_j)_{j\in\mathcal{J}}\in\cS_{\mathcal{J}}$ for a subset of agents $\mathcal{J}\subseteq\cN$. We define a corresponding global state $\ext{s_{\mathcal{J}}}$ based on $s_{\mathcal{J}}$:
\begin{align}
    [\ext{s_{\mathcal{J}}}]_j=
    \begin{cases}
   s_j&\text{if } j\in\mathcal{J},\\
    s^{\mathrm{default}}_j&\text{if} j\not\in\mathcal{J}.
    \end{cases}
\end{align}
 where $s^{\mathrm{default}}$ is a default global state. The same notation is used to extend local actions to global actions.
\end{definition}
\noindent We note that, throughout this paper, the default state $s^{\mathrm{default}}$ is fixed and can be any state tuple in $\mathcal{S}$. Its value does not affect our final guarantee. With \Cref{def:ext_operator}, we can write the evaluation of $\hat{Q}_j $ as $\hat{Q}_j( [\ext{s_{\nik}}]_\njb,  [\ext{a_{\nik}}]_\njb)$. This gives rise to \eqref{eq:algo_local_aggregate_q}. The rationale of calculating $\mathcal{Q}^i$ in this way is that, as shown later in the proof, we have for any $s,a_{-i}, a_i,a_i'$
\begin{align*}
    \mathcal{Q}^i(a_i,a_{\nik/i},s_{\nik}) - \mathcal{Q}^i(a_i',a_{\nik/i},s_{\nik}) \approx \hat{Q}^m(a_i, a_{-i}, s) -  \hat{Q}^m (a_i', a_{-i}, s)
\end{align*}
where $\hat{Q}^m(s,a) = \frac{1}{n} \sum_{j\in\mathcal{N}}\hat{Q}_j^m(s_{\njb},a_{\njb}) $, the average of the truncated local $Q$ functions for iteration $m$. In other words, $\mathcal{Q}^i$ is a good estimate of $\hat{Q}^m$ (up to a function that does not depend on $a_i$), which in turn is a good estimate of the true $Q$ function $Q^{\hat{\zeta}^m}$. Therefore, we can then conduct a multiplicative weights policy update based on $\mathcal{Q}^i$, which we discuss next. 

\emph{Multiplicative weights policy update \eqref{eq:algo_policyimprovement}.} The basic idea of \eqref{eq:algo_policyimprovement} is for each agent $i$ to update its policy using the multiplicative weights algorithm, with $\mathcal{Q}^i$ as the score for each action $a_i$. Note that $\mathcal{Q}^i$ not only depends on $a_i$, but also $a_{\nik/i}$, so when conducting the update, we take the expectation of $a_j$ using the current policy of agent $j$, $\hat{\pi}_j^p$. When doing so, $\hat{\pi}_j^p$ depends on states that are outside $\nik$, and for these states, we set it to a default value (the same as the default value used in the calculation of $\mathcal{Q}^i$). We show later in the proof (\Cref{lem:policy_improvement_error}) that the update \eqref{eq:algo_policyimprovement} is approximately solving the maximization in the Bellman optimal operator \eqref{eq:bellman_operator}.

\textbf{Computation and communication.} The major computational burden of $\algname$ lies in the steps that compute the truncated local $Q$ functions and the $\kappa$-hop polices. The complexity of these scales with the largest state-action space size of the $\kappa$ and $\beta$-hop neighborhood in the network. Therefore, our algorithm can avoid the exponential computation burden associated with the exponentially large state and action spaces. 

In terms of communication, each agent only needs local communication with agents in the $\max(\beta,\kappa)$-hop neighborhood, as opposed to centralized communication. Specifically, in Line~\ref{algo:collect_trajectory} each agent $i$ needs to receive information on the states and actions of agents in the $\beta$-hop neighborhood $s_{\nib}(t),a_{\nib}(t)$. In Line~\ref{algo:policy_improvement} (eq. \eqref{eq:algo_policyimprovement}), each agent $i$ needs to know $\{\hat{\pi}_j^{p}\}_{j\in\nik}$ as well as the truncated $Q$-functions $\{\hat{Q}_j^m
\}_{j\in\nik}$ for the agents in the $\kappa$-hop neighborhood.

\section{Convergence Analysis}\label{sec:convergence}

This section provides a convergence guarantee for \algname. We first present our requirements for the \policyevaluation\ subroutine in \Cref{def:beta-hop-state-aggregation-policy-eva}. Then, under these requirements, we present our main convergence result for \algname\ in \Cref{thm:convergence_svi}. Finally, in \Cref{subsec:policy_evaluation}, we provide a specific policy evaluation subroutine, \localizedtd, which is a variation of the approach proposed in \cite{lin2021multi}, and show that it can meet the \Cref{def:beta-hop-state-aggregation-policy-eva}.
\subsection{Convergence of Localized Policy Iteration}
Our requirement for the \policyevaluation\ subroutine is stated formally in \Cref{def:beta-hop-state-aggregation-policy-eva}. Intuitively, the policy evaluation process involves two parts of errors: a bias is introduced because we only estimate the truncated local $Q$ functions (cf. \eqref{eq:truncated_q_def}) using information collected within the $\beta$-hop neighborhood rather than global network. Such a truncation introduces a bias that is on the order of $O(\nu'/(\beta + 1)^{\mu})$ because of the $(\nu', \mu)$-decay property of the true local Q function (\Cref{prop:truncated_q_error}). In addition to the bias caused by the truncation, a stochastic error also exists due to the inherent stochasticity of the observed samples. Under proper assumptions, the stochastic error decays to zero as the number of samples $T$ increases. Since the bias part of the error is `inevitable', we require the policy evaluation algorithm to achieve an approximation error that is in the same order of the bias with high probability after a finite number of iterations, formally stated in \Cref{def:beta-hop-state-aggregation-policy-eva}. 

\begin{definition}\label{def:beta-hop-state-aggregation-policy-eva}
A policy evaluation algorithm is said to be $(\sigma,\nu',\mu)$-exact if the following condition holds: For any $\sigma$ regular policy $\hat{\zeta}$ whose corresponding local $Q$ functions $\{Q_i^{\hat{\zeta}}\}$ is $(\nu',\mu)$ decay, 

and $\delta \in (0, 1)$, there exists a function $c_{pe}(\delta,\sigma, \nu',\mu)$, such that for any $\beta\in\mathbb{N}$, there exits a $T_{eval}(\delta,\sigma, \nu',\mu, \beta)\in\mathbb{N}$ such that when the length of the trajectory supplied to the subroutine $T \geq T_{eval}(\delta,\sigma, \nu',\mu, \beta)$ steps, with probability at least $1 - \delta$, the following inequality holds
\[ \sup_{(s, a) \in \mathcal{S}\times \mathcal{A}}\abs{Q_i^{\hat{\zeta}}(s, a) - \hat{Q}_i(s_{\nib}, a_{\nib})} \leq \frac{c_{pe}(\delta,\sigma,\nu',\mu)}{(\beta + 1)^\mu}, \forall i\in \mathcal{N},\]
where $Q_i^{\hat{\zeta}}$ is the true local $Q$-functions under policy $\hat{\zeta}$, and $\hat{Q}_i$ is the output of the policy evaluation subroutine. When the context is clear, we may drop the parenthesis of $c_{pe}, T_{eval}$. 
\end{definition}

Using the above definition, we can now state our main convergence result for \Cref{alg:SVI}. 
\begin{theorem}\label{thm:convergence_svi}
Suppose $\gamma<0.8$, $\tau\geq 40\bar{r} \frac{4-3\gamma}{4-5\gamma}\actionmax^2 e$, and $\sum_{j\in \cN_i}C_{ij}<1/2$ for all $i\in \{1,2,\cdots,n\}$. 
Let $\nu' = \frac{4-3\gamma}{4-5\gamma}\bar{r}$, $\mu=\min\Big\{\log_2 \frac{\tau(4-5\gamma)}{40\bar{r}\actionmax^2 e(4-3\gamma)} ,\log_2 \frac{1}{2 \sup_{i\in\mathcal{N}}\sum_{j\in \cN_i}C_{ij}}\Big\}$, and $\tilde\sigma=\frac{2\nu'}{\tau n}$. Suppose the \emph{\policyevaluation}\ subroutine in \Cref{alg:SVI} is $(\tilde{\sigma},\nu',\mu)$-exact with constants $c_{pe}(\cdot), T_{eval}(\cdot)$ (cf. \Cref{def:beta-hop-state-aggregation-policy-eva}). Fixing any $\kappa$, take the algorithm parameters as $\beta = \frac{\kappa+1}{2}(\frac{2f(\kappa) c_{pe}}{\nu'})^{\frac{1}{\mu}}$, $\eta=\frac{1}{\tau}$, and $p_{\max}\geq -\log_2{\frac{4 + c_{pe}/(2^{\mu}\nu')}{3(\kappa/2+1)^\mu}}$. Then, given $M\in\mathbb{N}$ and $\delta\in(0,1)$, when the trajectory input to the \emph{\policyevaluation}\ subroutine $T \geq T_{eval}(\delta/M, \tilde{\sigma},\nu',\mu,\beta)$, we have with probability at least $1-\delta$,
\[J(\zeta^*) - J(\hat{\zeta}^M) \leq \gamma^{M} \Vert V^{\hat{\zeta}^0} - V^*\Vert_\infty + \frac{3(4-3\gamma)\bar{r} }{(1-\gamma)^2(4-5\gamma)} \bigg(4 + \frac{c_{pe} (4-5\gamma)}{2^{\mu}(4-3\gamma)\bar{r}}\bigg) \frac{1}{(\kappa/2+1)^\mu}. \]
\end{theorem}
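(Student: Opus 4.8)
The plan is to view \algname\ as an \emph{inexact} soft policy iteration for the entropy-regularized Bellman operator $\mathcal{T}$, and to control, iteration by iteration, the gap between the idealized soft-greedy update and the localized, truncated, finite-sample update actually performed by \Cref{alg:SVI}.

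\textbf{The idealized scheme and the $\gamma^M$ term.} First I would recall that, by \Cref{prop:exist_optimal_policy_entropy}, $\mathcal{T}$ is a $\gamma$-contraction in $\norm{\cdot}_\infty$ with fixed point $V^*=V^{\zeta^*}$. In the exact entropy-regularized scheme, the soft-greedy policy obtained from $Q^{\zeta^m}$ satisfies the soft policy-improvement inequality $V^{\zeta^{m+1}}\geq \mathcal{T}V^{\zeta^m}$ pointwise. Combined with $V^{\zeta^{m+1}}\leq V^*$ and $\mathcal{T}V^*=V^*$, this gives $0\le V^*-V^{\zeta^{m+1}}\le \mathcal{T}V^*-\mathcal{T}V^{\zeta^m}\le \gamma(V^*-V^{\zeta^m})$, hence geometric convergence at rate $\gamma$. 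This is the origin of the first term $\gamma^M\norm{V^{\hat\zeta^0}-V^*}_\infty$; the whole remaining effort is to show that the inexactness of \algname\ only adds a per-step error that accumulates to the stated polynomial-in-$\kappa$ bias.

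\textbf{Policy closure (the crux).} The heart of the argument is the policy-closure statement of \Cref{thm:convergence_prototype}: under the hypotheses $\gamma<0.8$, $\tau$ large, and $\sup_i\sum_{j\in\cN_i}C_{ij}<1/2$, the class $\Delta_{\nu',\mu,\tilde\sigma}$ of $(\nu',\mu)$-decay, $\tilde\sigma$-regular policies is (approximately) invariant under soft policy improvement, with exactly the parameters $\nu'=\frac{4-3\gamma}{4-5\gamma}\bar{r}$, $\tilde\sigma=\frac{2\nu'}{\tau n}$, and $\mu$ as stated. I would establish that whenever $\hat\zeta^m\in\Delta_{\nu',\mu,\tilde\sigma}$, its local $Q$-functions $\{Q_i^{\hat\zeta^m}\}$ are $(\nu',\mu)$-decay (\Cref{def:Z_Q}) and the soft-improved policy is again $\tilde\sigma$-regular and $(\nu,\mu)$-decay, so the invariant is maintained along the entire trajectory of iterates. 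This closure is what makes every downstream error controllable: it activates \Cref{prop:truncated_q_error} (truncating each $Q_i$ to its $\beta$-hop neighborhood costs at most $\nu'/(\beta+1)^\mu$) and it validates the hypothesis of \Cref{def:beta-hop-state-aggregation-policy-eva} at every $m$. I expect this to be the main obstacle: one must show the decay parameters $(\nu',\mu)$ do not degrade under iteration, and the precise lower bound on $\tau$ and upper bound on the interaction strength $\sum_j C_{ij}$ are exactly the conditions that keep $\mu>0$ and the class self-mapping.

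\textbf{One-step error decomposition.} Next I would bound the discrepancy between the idealized soft-greedy policy and the policy $\hat\zeta^{m+1}$ produced in Lines~\ref{alg:policy_improvement_start}--\ref{algo:policy_improvement_return}, splitting it into four sources: (a) \emph{localization} to $\kappa$-hop policies, handled via the decay properties in the same spirit as \Cref{thm:subopt_dist_policy} and contributing a term of order $\nu'/(\kappa/2+1)^\mu$ (the effective radius is $\approx\kappa/2$ because the aggregated score $\mathcal{Q}^i$ in \eqref{eq:algo_local_aggregate_q} averages the $\beta$-hop-truncated $Q_j$ over $j\in\nik$ with $\beta\approx(\kappa+1)/2$); (b) \emph{$Q$-truncation} to the $\beta$-hop neighborhood, at most $\nu'/(\beta+1)^\mu$ by \Cref{prop:truncated_q_error}; (c) \emph{finite policy-improvement iterations}, since the multiplicative-weights update \eqref{eq:algo_policyimprovement} with $\eta=1/\tau$ is a mirror-descent/NPG step that contracts geometrically in $p$, so the choice $p_{\max}\geq-\log_2\frac{4+c_{pe}/(2^\mu\nu')}{3(\kappa/2+1)^\mu}$ pushes this contribution below the target (the content of \Cref{lem:policy_improvement_error}); and (d) the \emph{finite-sample evaluation} error, at most $c_{pe}/(\beta+1)^\mu$ with probability $\geq 1-\delta/M$ by $(\tilde\sigma,\nu',\mu)$-exactness. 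The choice $\beta=\frac{\kappa+1}{2}(2f(\kappa)c_{pe}/\nu')^{1/\mu}$ balances (b) and (d). Propagating a per-step policy suboptimality through the infinite-horizon value costs a factor $\frac{1}{1-\gamma}$ (as in \Cref{thm:subopt_dist_policy}), so these combine into a single per-step Bellman-improvement error $\epsilon=O\!\big(\tfrac{\nu'}{1-\gamma}(4+c_{pe}/(2^\mu\nu'))(\kappa/2+1)^{-\mu}\big)$, i.e. $V^{\hat\zeta^{m+1}}\geq \mathcal{T}V^{\hat\zeta^m}-\epsilon\mathbf{1}$ pointwise.

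\textbf{Error propagation and union bound.} Finally I would iterate the inexact monotone-contraction argument: from $0\le V^*-V^{\hat\zeta^{m+1}}\le \gamma(V^*-V^{\hat\zeta^m})+\epsilon\mathbf{1}$ one obtains $\norm{V^*-V^{\hat\zeta^M}}_\infty\le \gamma^M\norm{V^*-V^{\hat\zeta^0}}_\infty+\frac{\epsilon}{1-\gamma}$. The second factor $\frac{1}{1-\gamma}$ from this accumulation, together with the per-step factor from (a), yields the $(1-\gamma)^2$ denominator in the stated bias. Taking $T\ge T_{eval}(\delta/M,\tilde\sigma,\nu',\mu,\beta)$ makes each of the $M$ evaluation steps accurate with probability $1-\delta/M$, and a union bound over $m=0,\dots,M-1$ gives overall success probability $1-\delta$. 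The claim then follows from $J(\zeta^*)-J(\hat\zeta^M)=\E_{s\sim\initialState}[V^*(s)-V^{\hat\zeta^M}(s)]\le\norm{V^*-V^{\hat\zeta^M}}_\infty$.
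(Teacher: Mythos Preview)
Your proposal is correct and follows essentially the same route as the paper: the policy-closure argument (\Cref{thm:convergence_prototype}) ensures the local $Q$-functions stay $(\nu',\mu)$-decay and the iterates $\tilde\sigma$-regular, which feeds the one-step policy-improvement error bound (\Cref{lem:policy_improvement_error}), and this per-step error is then propagated through the $\gamma$-contraction of $\mathcal{T}$ together with a union bound over the $M$ evaluation rounds. One minor inaccuracy: the $(\kappa/2+1)^\mu$ in the final bound does not arise from ``$\beta\approx(\kappa+1)/2$'' (in fact $\beta\geq\kappa$ under the stated choice) but from a radius-$\kappa/2$ split of $\nik$ inside the proof of \Cref{lem:policy_improvement_error} when bounding the localization terms.
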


As shown in \Cref{thm:convergence_svi}, \algname\ converges geometrically in $M$, with a steady state error on the order of $O(\frac{1}{(\kappa+1)^\mu})$, i.e., the steady state error decays polynomially in $\kappa$. This steady state error is a result of both the policy evaluation error in \Cref{def:beta-hop-state-aggregation-policy-eva}, and the fact that we are using a truncated $\kappa$-hop policy as opposed to a global policy. 

\textbf{Sample complexity. }As we discussed in \Cref{subsec:algo}, the \policyevaluation\ subroutine can be chosen to be any algorithm that satisfies \Cref{def:beta-hop-state-aggregation-policy-eva} and the sample complexity of $\algname$ depends on the specific choice. In \Cref{subsec:policy_evaluation}, we describe a specific \policyevaluation\ subroutine, $\beta$-hop Localized TD(0), that satisfies the requirement. Under this specific subroutine, 
we derive the sample complexity of $\algname$ in the following corollary.

\begin{corollary}\label{cor:convergence_with_localized_td}
Let the assumptions of \Cref{thm:convergence_svi} and \Cref{thm:beta-hop-policy-evaluation} hold. For any $\varepsilon>0$ and $\delta\in(0,1)$, when using $\beta$-hop \localizedtd\ as the \emph{\policyevaluation}\ subroutine, the sample complexity of \Cref{alg:SVI} to achieve $\varepsilon$-optimality is in the order of
\begin{align}\label{eq:sample_complexity}
\Theta\left( \frac{1}{\varepsilon^2}\frac{  f(\kappa)^2 \log(1/\varepsilon) \log [f(\beta) \log(1/\varepsilon)] }{\xi(\tilde\sigma,\beta)^2}\right), 
\end{align} 
where $\xi(\tilde{\sigma},\beta)$ is defined in \Cref{network-assp:geometric-mixing} and means the smallest probability the state-action pairs in $\beta$-hop neighborhoods are visited.
\end{corollary}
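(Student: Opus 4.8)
The plan is to feed the explicit policy-evaluation guarantee of \Cref{thm:beta-hop-policy-evaluation} into the global convergence bound of \Cref{thm:convergence_svi}, and then optimize the free parameters $M$ (number of outer iterations), $\kappa$ (localization radius), and $\beta$ (truncation radius) against the target accuracy $\varepsilon$. I treat $\gamma,\tau,\bar r,\mu,n$ and the decay constant $\nu'$ as problem-dependent constants and keep $f(\kappa)$, $f(\beta)$, and $\xi(\tilde{\sigma},\beta)$ (from \Cref{network-assp:geometric-mixing}) symbolic in the final bound. First I would split the error budget in \Cref{thm:convergence_svi}, which is a sum of a transient term $\gamma^{M}\Vert V^{\hat{\zeta}^0}-V^*\Vert_\infty$ and a steady-state term proportional to $1/(\kappa/2+1)^\mu$, by forcing each below $\varepsilon/2$. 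Since $\Vert V^{\hat{\zeta}^0}-V^*\Vert_\infty$ is bounded by a problem-dependent constant, the transient term yields $M=\Theta(\log(1/\varepsilon))$, while the steady-state term yields $(\kappa/2+1)^\mu=\Theta(1/\varepsilon)$, and hence $(\kappa+1)^\mu=\Theta(1/\varepsilon)$.

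Next I would translate these choices into the per-iteration sample requirement. Under the parameter setting of \Cref{thm:convergence_svi}, $\beta=\frac{\kappa+1}{2}(2f(\kappa)c_{pe}/\nu')^{1/\mu}$, so that $(\beta+1)^\mu=\Theta(f(\kappa)(\kappa+1)^\mu)=\Theta(f(\kappa)/\varepsilon)$ after the first step. The $(\tilde{\sigma},\nu',\mu)$-exactness of \Cref{def:beta-hop-state-aggregation-policy-eva} asks the evaluation error to reach the bias floor $\Theta(\nu'/(\beta+1)^\mu)$ certified by \Cref{prop:truncated_q_error}; because $\beta$-hop Localized TD(0) contributes a stochastic error of order $1/\sqrt{T}$ modulated by the minimum visitation probability $\xi(\tilde{\sigma},\beta)$, driving it to this floor requires a trajectory length
\[
T_{eval}=\tilde{\Theta}\!\left(\frac{(\beta+1)^{2\mu}}{\xi(\tilde{\sigma},\beta)^2}\right)=\tilde{\Theta}\!\left(\frac{f(\kappa)^2}{\varepsilon^2\,\xi(\tilde{\sigma},\beta)^2}\right),
\]
where $\tilde{\Theta}$ suppresses the logarithmic concentration factor supplied by \Cref{thm:beta-hop-policy-evaluation}; here the $1/\varepsilon^2$ is precisely $(\kappa+1)^{2\mu}$ and the $f(\kappa)^2$ is the prefactor in $\beta$, both living inside $(\beta+1)^{2\mu}$.

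Finally I would assemble the total count. The environment is sampled only inside policy evaluation, since the inner $p_{\max}$ loop of \Cref{alg:SVI} is purely computational, so the total is $M\cdot T_{eval}$. Because the subroutine is called once per outer iteration and all $M$ calls must succeed, I would invoke \Cref{thm:convergence_svi} with per-call confidence $\delta/M$; with $M=\Theta(\log(1/\varepsilon))$ this converts the $\log(1/\delta)$ inside $T_{eval}$ into a $\log(M/\delta)$, which, combined with the $f(\beta)$-dependence of the $\beta$-hop TD concentration bound, collapses into the single factor $\log[f(\beta)\log(1/\varepsilon)]$. Multiplying $M$ by $T_{eval}$ then reproduces the claimed
\[
\Theta\!\left(\frac{1}{\varepsilon^2}\,\frac{f(\kappa)^2\,\log(1/\varepsilon)\,\log[f(\beta)\log(1/\varepsilon)]}{\xi(\tilde{\sigma},\beta)^2}\right).
\]

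The hard part will be the bookkeeping of the logarithmic factors in the last step: I must check that the per-call failure probability $\delta/M$, the exponentially many $\beta$-hop state-action configurations entering the concentration bound (which contribute the $f(\beta)$), and the $\log T_{eval}$ factor typical of TD learning under Markovian noise collapse exactly into $\log[f(\beta)\log(1/\varepsilon)]$ rather than a bare $f(\beta)$. Equally delicate is verifying that $c_{pe}(\delta/M,\cdot)$ enters only logarithmically, so that it leaves the polynomial rates $1/\varepsilon^2$ and $(\beta+1)^{2\mu}$ intact; this is exactly where the explicit form of $T_{eval}$ from \Cref{thm:beta-hop-policy-evaluation} is indispensable.
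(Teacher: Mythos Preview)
Your proposal is correct and follows essentially the same route as the paper: set $M=\Theta(\log(1/\varepsilon))$ and $(\kappa+1)^\mu=\Theta(1/\varepsilon)$ from the two terms in \Cref{thm:convergence_svi}, deduce $(\beta+1)^\mu=\Theta(f(\kappa)/\varepsilon)$ from the prescribed $\beta$, read off $T_{eval}=\Theta\!\big((\beta+1)^{2\mu}\log(f(\beta)M/\delta)/\xi(\tilde\sigma,\beta)^2\big)$ from \Cref{thm:beta-hop-policy-evaluation}, and multiply by $M$. One simplification you will find when you carry out the ``hard part'': by \Cref{thm:beta-hop-policy-evaluation} the constant $c_{pe}=2\nu'/(1-\gamma)$ does not depend on $\delta$ at all, so there is no risk of it contaminating the polynomial rates.
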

Based on the convergence result in \Cref{thm:convergence_svi}, to find an $\varepsilon$-optimal policy, we only need to set $M = \Theta(\log(1/\varepsilon))$ and $\kappa = \Theta((1/\varepsilon)^{1/\mu} )$, and correspondingly $\beta = \Theta(\kappa (f(\kappa))^{1/\mu} )$. According to \eqref{eq:T_eval} that we show later in \Cref{thm:beta-hop-policy-evaluation}, the iteration complexity of  $\beta$-hop \localizedtd\ is $T_{eval} = \Theta( \frac{(\beta+1)^{2\mu}  \log (f(\beta)  M/\delta)}{\xi(\tilde{\sigma},\beta)^2} ) = \Theta(\frac{ (1/\varepsilon)^2 f(\kappa)^2 \log [f(\beta) \log(1/\varepsilon)] }{\xi(\tilde\sigma,\beta)^2})  $. Thus the sample complexity of $\algname$ follows the simple calculation $MT_{eval}$ and is given in \eqref{eq:sample_complexity}. In all the $\Theta(\cdot)$ notations above, we only keep the dependence on the network size $n$ and the $\varepsilon$ parameter, which involves $\varepsilon$ itself and by extension, the $\kappa,\beta,M$ parameters. 

In this sample complexity bound, we have the $\frac{1}{\varepsilon^2}$ factor that is standard in the RL literature. In addition, instead of depending on the global-state action space size (exponential in $n$), we have a square dependence on $\frac{1}{\xi(\tilde{\sigma},\beta)}$, where $\xi(\tilde{\sigma},\beta)$ is defined in \Cref{network-assp:geometric-mixing} and means the smallest probability the state-action pairs in $\beta$-hop neighborhoods are visited. As such, $\frac{1}{\xi(\tilde{\sigma},\beta)}$ scales with the largest state action space size of the $\beta$-hop neighborhoods, as opposed to the entire network. Therefore, $\algname$ is much more scalable and implementable than other methods that enjoy global convergence such as centralized tabular RL methods \citep{bertsekas1996neuro}. This advantage is especially pronounced in sparse networks where $\beta$-hop neighborhoods are much smaller than the entire network.
Lastly, beyond the $\frac{1}{\varepsilon^2}$ factor and $\frac{1}{(\xi(\tilde\sigma,\beta))^2}$ factor and a few $\log$ factors, our bound also contains a factor $f(\kappa)^2$, where we recall $f(\kappa)$ is the size of the largest $\kappa$-hop neighborhood. This factor reflects the complexity of the $\kappa$-hop policy we try to learn.

\subsection{Policy Evaluation via $\beta$-hop Localized TD(0)} \label{subsec:policy_evaluation}

\begin{algorithm}[ht]
        \caption{$\beta$-hop Localized TD(0) (Agent $i$)}\label{alg:beta-hop-TD-0}
        \begin{algorithmic}[1]
        \REQUIRE Parameter $\beta$, a sequence of $\beta$-hop state, $\beta$-hop action, and local reward tuples $\{s_{\nib}(t), a_{\nib}(t), r_i(t)\}$, local policy $\hat{\zeta}_i^m$, and a sequence of learning rate $\{\alpha_t\}$.
        \STATE Initialize $\hat{Q}_i^0$ to be all zero vector.
        \STATE Set $\hat{r}_i(t) \gets r_i(t) - n \tau \mathbb{E}_{a_i \sim \hat{\zeta}_i^m(\cdot \mid s_{\nik}(t))} \log \hat{\zeta}_i^m(a_i \mid s_{\nik}(t))$.
        \FOR{$t = 1, \cdots, T$}
            \STATE Update the local estimation $\hat{Q}_i$ with step size $\alpha_{t-1}$,
            {\begin{equation*}
                \begin{aligned}
                &\hat{Q}_i^t \big(s_{\nib}(t-1), a_{\nib}(t-1)\big) =\\
                &(1 - \alpha_{t-1})\hat{Q}_i^{t-1} \big(s_{\nib}(t-1), a_{\nib}(t-1)\big) + \alpha_{t-1}\big(\hat{r}_i(t) + \gamma \hat{Q}_i^{t-1} \big(s_{\nib}(t), a_{\nib}(t)\big)\big),\\
                &\hat{Q}_i^t \big(s_{\nib}, a_{\nib}\big) = \hat{Q}_i^{t-1} \big(s_{\nib}, a_{\nib}\big)\text{ for }\big(s_{\nib}, a_{\nib}\big) \not = \big(s_{\nib}(t-1), a_{\nib}(t-1)\big).
                \end{aligned}
            \end{equation*}}
            \label{alg:SAC_Outline:Critic_End}
        \ENDFOR
        \STATE For every $(s_{\nib}, a_{\nib})$ pair, set $\hat{Q}_i^T(s_{\nib}, a_{\nib}) \gets \hat{Q}_i^T(s_{\nib}, a_{\nib}) + n \tau \mathbb{E}_{a_i \sim \hat{\zeta}_i^m(\cdot \mid s_{\nik})} \log \hat{\zeta}_i^m(a_i \mid s_{\nik})$ and return $\hat{Q}_i^T(\cdot)$
        \end{algorithmic}
    \end{algorithm}

To provide a concrete example of the \policyevaluation\, subroutine, we introduce and analyze \localizedtd\ in this subsection. The pseudo-code of \localizedtd\ is given in \Cref{alg:beta-hop-TD-0}. It is a variation of a policy evaluation subroutine used in \cite{lin2021multi}, where each agent conducts temporal difference learning using the states and actions of its local $\beta$-hop neighborhood. Note that compared to \cite{lin2021multi}, we also add entropy regularization in the update (see Line \ref{alg:SAC_Outline:Critic_End} of \Cref{alg:beta-hop-TD-0}).  

In this section, we show that \Cref{alg:beta-hop-TD-0} satisfies \Cref{def:beta-hop-state-aggregation-policy-eva} by utilizing the convergence results in  \cite{lin2021multi}. We first state the assumption needed. 

\begin{assumption}\label{as:MC}
	There exists a policy $\zeta_b=(\zeta_{b,1},\zeta_{b,2},\cdots,\zeta_{b,n})\in\Delta_{\textrm{policy}}$ such that the Markov chain $\{s(t)\}_{t\geq 0}$ induced by $\zeta_b$ is irreducible and aperiodic.
\end{assumption}
Based on \Cref{as:MC}, we show that all $\sigma$-regular policies are sufficiently explorative. 
\begin{lemma}\label{network-assp:geometric-mixing}
Define $\mathcal{Z} := \mathcal{S} \times \mathcal{A}$. Under \Cref{as:MC}, for any $\sigma$-regular policy $\hat{\zeta}$, the induced Markov chain $\{z(t) := \left(s(t), a(t)\right)\}$ is irreducible and aperiodic, hence admits a unique stationary distribution $d^{\hat{\zeta}} \in \Delta_{\mathcal{Z}}$ with strictly positive components. Further, there exists positive constants $K_1(\sigma)$ and $K_2(\sigma)\geq 1$ depending on $\sigma$ such that 
\[\forall z' \in \mathcal{Z}, \forall t \geq 0, \sup_{\mathcal{K} \subseteq \mathcal{Z}}\Bigg|\sum_{z \in \mathcal{K}}d_z^{\hat{\zeta}} - \sum_{z \in \mathcal{K}}\mathbb{P}(z(t) = z \mid z(0) = z')\Bigg| \leq K_1 e^{-t/K_2},\]
Further, given any $\beta\in\mathbb{N}$, for each agent $i$ and $z' \in \mathcal{Z}_{\nib}$, define $d^{\hat{\zeta}}_{i, \beta}(z') = \sum_{z \in \mathcal{Z}: z_{\nib} = z'} d^{\hat{\zeta}}(z)$, which is the marginal stationary distribution for the state-actions in $\nib$, and define $\xi(\sigma,\beta) := \inf_{i \in \mathcal{N}, z' \in \mathcal{Z}_{\nib}} d^{\hat{\zeta}}(z')$ as the minimum probability in this distribution. 
\end{lemma}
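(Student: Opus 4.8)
The plan is to reduce the claim to a Doeblin minorization condition for the state--action chain that holds \emph{uniformly} over all $\sigma$-regular policies, and then invoke the standard equivalence between minorization and uniform geometric ergodicity. The first fact I would record is that $\sigma$-regularity forces full support with a quantitative floor: by \Cref{def:sigma_regular}, for every agent $i$ and state $s$ the distribution $\hat{\zeta}_i(\cdot\mid s)$ satisfies $\hat{\zeta}_i(a_i\mid s)\geq(\actionmax e^\sigma)^{-1}$, so the joint policy obeys $\hat{\zeta}(a\mid s)=\prod_i\hat{\zeta}_i(a_i\mid s)\geq(\actionmax e^\sigma)^{-n}=:p_{\min}(\sigma)>0$ for all $a,s$. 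Crucially this bound depends on the policy only through $\sigma$.

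Next I would analyze the induced state chain with kernel $P^{\hat{\zeta}}(s'\mid s)=\sum_a\hat{\zeta}(a\mid s)P(s'\mid s,a)$. Because $\hat{\zeta}$ has full support, $P^{\hat{\zeta}}(s'\mid s)>0$ if and only if $P(s'\mid s,a)>0$ for some $a$, so the sparsity pattern of $P^{\hat{\zeta}}$ is identical for every $\sigma$-regular policy and in particular contains the pattern of $P^{\zeta_b}$. Since $P^{\zeta_b}$ is irreducible and aperiodic by \Cref{as:MC}, the dominating pattern $P^{\hat{\zeta}}$ is irreducible and aperiodic as well, hence primitive; thus there is an index $N$ -- determined solely by this common, $\hat{\zeta}$-independent support graph -- with $P^{\hat{\zeta},N}(s'\mid s)>0$ for all $s,s'$. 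Every edge of the pattern carries single-step mass at least $p_{\min}(\sigma)\,p_0$, where $p_0$ is the smallest positive entry of $P$, so tracing a length-$N$ walk and then composing via the semigroup property yields the uniform minorization $P^{\hat{\zeta},t}(s'\mid s)\geq\epsilon(\sigma):=(p_{\min}(\sigma)p_0)^N$ for all $s,s'$ and all $t\geq N$.

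To transfer this to $\mathcal{Z}=\cS\times\cA$ and conclude, I would use that $a(t)\sim\hat{\zeta}(\cdot\mid s(t))$ independently, giving $\PR(z(t)=(s',a')\mid z(0)=(s,a))=\PR(s(t)=s'\mid z(0)=(s,a))\,\hat{\zeta}(a'\mid s')$, where $\PR(s(t)=s'\mid z(0)=(s,a))=\sum_{s_1}P(s_1\mid s,a)P^{\hat{\zeta},t-1}(s'\mid s_1)\geq\epsilon(\sigma)$ once $t-1\geq N$. Hence for all $t\geq N+1$ and all $z,z'\in\mathcal{Z}$, $\PR(z(t)=z'\mid z(0)=z)\geq\epsilon(\sigma)p_{\min}(\sigma)=:\tilde{\epsilon}(\sigma)>0$. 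This single inequality delivers every assertion: all pairs communicate (irreducibility); the return-time set of any $z$ contains the consecutive integers $N+1,N+2,\dots$ (aperiodicity); a finite irreducible chain has a unique stationary law which, as the limit of the transition probabilities, satisfies $d^{\hat{\zeta}}(z)\geq\tilde{\epsilon}(\sigma)>0$ (strict positivity); and the marginals satisfy $d^{\hat{\zeta}}_{i,\beta}(z')\geq\tilde{\epsilon}(\sigma)$, so $\xi(\sigma,\beta)\geq\tilde{\epsilon}(\sigma)>0$. Finally, reading the lag-$(N{+}1)$ bound as a minorization $\PR^{(N+1)}(z,\cdot)\geq\beta_0\,\mathrm{Unif}_{\mathcal{Z}}(\cdot)$ with $\beta_0=|\mathcal{Z}|\tilde{\epsilon}(\sigma)\in(0,1]$, the standard Doeblin coupling contracts the total variation to stationarity by $(1-\beta_0)$ every $N+1$ steps, so that $\sup_{\mathcal{K}\subseteq\mathcal{Z}}\abs{\sum_{z\in\mathcal{K}}d^{\hat{\zeta}}_z-\sum_{z\in\mathcal{K}}\PR(z(t)=z\mid z(0)=z')}=\norm{\PR^{(t)}(z',\cdot)-d^{\hat{\zeta}}}_{\TV}\leq(1-\beta_0)^{\lfloor t/(N+1)\rfloor}\leq K_1 e^{-t/K_2}$ with $K_1=(1-\beta_0)^{-1}$ and $K_2=(N+1)/\log\frac{1}{1-\beta_0}$, both functions of $\sigma$ alone for fixed problem data.

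The only delicate point -- and the reason I would route the argument through the common support graph rather than through a direct spectral-gap estimate -- is that the constants $N$ and $\tilde{\epsilon}(\sigma)$ (hence $K_1,K_2$) must be uniform over the entire family of $\sigma$-regular policies, since later results in the analysis of $\algname$ quantify over all policies generated across iterations. I would therefore stress that $N$ is read off from the $\hat{\zeta}$-independent sparsity pattern and that $\tilde{\epsilon}(\sigma)$ depends on the policy only through the $\sigma$-floor $p_{\min}(\sigma)$; any argument that let $N$ or the minorization constant depend on the particular $\hat{\zeta}$ (for instance via its individual spectral gap) would fail to produce the stated $\sigma$-only dependence, which is the real content of the lemma.
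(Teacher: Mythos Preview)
Your argument is correct and takes a genuinely different route from the paper. The paper first shows $\delta^{t}P_{\zeta_b}^{t}\leq P_{\zeta}^{t}$ entrywise to transfer irreducibility and aperiodicity from $\zeta_b$ to any $\zeta\in\Pi_\delta$; then, for the strict positivity of the stationary law, it invokes a continuity--compactness argument (the map $\zeta\mapsto P_\zeta$ is linear and Lipschitz, the map $P\mapsto\mu_P$ is continuous on the image of $\Pi_\delta$, and $\Pi_\delta$ is compact, so $\inf_{\zeta}\min_s\mu_\zeta(s)>0$ by Weierstrass); finally, for the uniform geometric mixing it defers to an external result that refines the proof of the ergodic theorem. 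By contrast, you exploit that every $\sigma$-regular policy has full support, so all induced state chains share a \emph{single} sparsity graph that dominates that of $\zeta_b$; its primitivity index $N$ is therefore policy-independent, and lower-bounding each edge by $p_{\min}(\sigma)p_0$ yields a Doeblin minorization uniform over the whole $\sigma$-regular family. This single minorization then discharges irreducibility, aperiodicity, strict positivity of $d^{\hat{\zeta}}$ and its marginals, and the geometric $\TV$ bound all at once via the standard Doeblin coupling. The payoff of your approach is that it is entirely self-contained and produces explicit constants $K_1(\sigma),K_2(\sigma)$ in terms of $N$, $p_0$, $\actionmax$, $n$, and $\sigma$; the paper's route is more abstract and structurally clean (the compactness/continuity step is reusable), but it does not yield explicit mixing constants and leans on a literature citation for the uniform rate. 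Your emphasis that $N$ and $\tilde{\epsilon}(\sigma)$ depend only on $\sigma$ and the fixed model data---and not on the particular $\hat{\zeta}$---is exactly the point that makes the uniformity work, and it is the same reason the paper has to route its positivity argument through compactness of $\Pi_\delta$ rather than through any individual policy.
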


\Cref{network-assp:geometric-mixing} is an immediate implication of a result on the sufficient exploration of non-deterministic policies stated in \Cref{ap:exploration}. Note that the only assumption we need here is that there exists a policy $\zeta_b$ such that the induced Markov chain $\{S_k\}$ is irreducible and aperiodic (\Cref{as:MC}). This is in contrast with the analysis in existing literature, where  \Cref{network-assp:geometric-mixing} is directly assumed to hold \citep{qu2020scalablelocal}. Therefore, our proof of \Cref{network-assp:geometric-mixing} is of independent interest in the literature of policy evaluation.

\Cref{network-assp:geometric-mixing} enables us to apply Theorem 3.2 in \cite{lin2021multi} to show that  $\beta$-hop Localized TD(0)  is a $(\sigma, \nu',\mu)$-exact policy evaluation algorithm.

\begin{theorem}\label{thm:beta-hop-policy-evaluation}
Suppose the policy $\hat{\zeta}$ we want to evaluate is $\sigma$ regular and under the policy, the local $Q$-functions satisfy the $(\nu', \mu)$-decay property. Suppose $K_1(\sigma), K_2(\sigma), \xi(\sigma,\beta)$ are the constants in \Cref{network-assp:geometric-mixing}. Recall that the local reward at every time step is upper bounded by $\bar{r}$. Let the step size of \Cref{alg:beta-hop-TD-0} be $\alpha_t = \frac{H}{t + t_0}$ with $t_0 = \max(4H, 2K_2 \log T)$, and $H = \frac{2}{(1 - \gamma) \xi(\sigma,\beta)}$. Then,  \Cref{alg:beta-hop-TD-0} is $(\sigma,\nu',\mu)$-exact with constant $c_{pe}(\delta,\sigma,\nu',\mu) = \frac{2 \nu'}{1 - \gamma}$ and $T_{eval}(\delta,\sigma,\nu',\mu, \beta)$ is upper bounded by{\small
\begin{align}
    \tilde{O}\Bigg(\frac{(\beta + 1)^{2\mu} (\bar{r} + \tau \log \actionmax)^2 K_2(\sigma) \log(f(\beta)K_2(\sigma)/\delta)}{(\nu')^2(1 - \gamma)^4(\xi(\sigma,\beta))^2} + \frac{(\beta + 1)^\mu (\bar{r} + \tau \log \actionmax) (K_1(\sigma) + 1)(K_2(\sigma) + 1)}{\nu'(1 - \gamma)^2(\xi(\sigma,\beta))^2}\Bigg), \label{eq:T_eval}
\end{align}}
where we recall $f(\beta) = \sup_{i \in \mathcal{N}} \big|\mathcal{N}_i^\beta\big|$ is the size of the largest $\beta$-hop neighborhood.
\end{theorem}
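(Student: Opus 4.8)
The plan is to reduce entropy-regularized local policy evaluation to an ordinary (unregularized) localized TD(0) problem with a modified reward, and then invoke the finite-time convergence rate for localized TD from \citet{lin2021multi}, supplying its mixing and exploration hypotheses through \Cref{network-assp:geometric-mixing} and its truncation-bias control through \Cref{prop:truncated_q_error}.

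First I would make the reward modification exact. Set the entropy bonus $g_i(s):=-n\tau\,\E_{a_i\sim\hat\zeta_i(\cdot\mid s_{\nik})}[\log\hat\zeta_i(a_i\mid s_{\nik})]$, which depends on $s$ only through $s_{\nik}$ because $\hat\zeta$ is a $\kappa$-hop policy. Using the local Bellman relations \eqref{eq:local_value_func}--\eqref{eq:local_q_func} and the value/$Q$ identity \eqref{eq:Vtau_Qtau_relation}, a one-line fixed-point computation shows that the (untruncated) TD operator associated with the modified reward $\hat r_i=r_i+g_i$ has fixed point $Q_i^{\hat\zeta}+g_i$. Hence the final correction in \Cref{alg:beta-hop-TD-0} (adding back $n\tau\,\E[\log\hat\zeta_i]=-g_i$) makes the algorithm target exactly $Q_i^{\hat\zeta}$. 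Two features of $g_i$ make this reduction clean: $g_i$ is bounded, since the Shannon entropy of a distribution on $\mathcal{A}_i$ is at most $\log\actionmax$, so $\hat r_i$ is uniformly bounded by $\bar r+n\tau\log\actionmax$, the effective reward magnitude governing the rate \eqref{eq:T_eval}; and $g_i$ is $\kappa$-local with $\kappa\le\beta$, so its $\beta$-hop truncation is exact and it contributes no truncation bias. Consequently the truncation bias of the combined target $Q_i^{\hat\zeta}+g_i$ collapses to that of $Q_i^{\hat\zeta}$ alone, which \Cref{prop:truncated_q_error} bounds by $\nu'/(\beta+1)^\mu$.

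Next I would feed the reduced problem to the localized-TD machinery. \Cref{network-assp:geometric-mixing} furnishes the two ingredients its analysis needs: geometric ergodicity of the induced chain $\{(s(t),a(t))\}$ with constants $K_1(\sigma),K_2(\sigma)$, and a strictly positive minimum $\beta$-hop visitation probability $\xi(\sigma,\beta)$ that controls the conditioning of the TD fixed point. With the step-size schedule $\alpha_t=H/(t+t_0)$, $H=2/((1-\gamma)\xi(\sigma,\beta))$, $t_0=\max(4H,2K_2\log T)$ prescribed in the statement, Theorem 3.2 of \citet{lin2021multi} applies to the modified-reward iterates $\hat Q_i^t$ and yields, for each agent $i$, a high-probability bound of the form (truncation bias) $+$ (stochastic error), where the bias is of order $\nu'/((1-\gamma)(\beta+1)^\mu)$ and the stochastic error decays in the trajectory length $T$. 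Undoing the entropy correction (and again using that $g_i$ is captured exactly) transfers this bound to $\sup_{s,a}\lvert Q_i^{\hat\zeta}(s,a)-\hat Q_i(s_{\nib},a_{\nib})\rvert$.

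Finally I would calibrate $T$. Choosing $T$ large enough that the stochastic error is driven below $\nu'/((1-\gamma)(\beta+1)^\mu)$ with probability at least $1-\delta$ makes the two contributions equal, giving total bound $2\nu'/((1-\gamma)(\beta+1)^\mu)$ and hence the stated $c_{pe}=2\nu'/(1-\gamma)$, independent of $\beta$ and $\delta$ as \Cref{def:beta-hop-state-aggregation-policy-eva} requires. Reading off the sample size for this stochastic bound from the \citet{lin2021multi} rate produces $T_{eval}$ in \eqref{eq:T_eval}: solving (stochastic error) $\le \nu'/((1-\gamma)(\beta+1)^\mu)$ for $T$ contributes the leading $(\beta+1)^{2\mu}/(\nu')^2$ factor, the lower-order $(\beta+1)^\mu/\nu'$ term comes from the bias/variance cross term in the finite-time rate, the $\xi(\sigma,\beta)^{-2}$ and $(1-\gamma)^{-4}$ factors track the conditioning and contraction of TD, and the $\log(f(\beta)K_2(\sigma)/\delta)$ factor arises from a union bound over the $f(\beta)$-sized $\beta$-hop coordinates together with the $1-\delta$ confidence. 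The main obstacle is the reduction step: one must verify \emph{exactly} that the modified-reward fixed point is $Q_i^{\hat\zeta}+g_i$ and that folding the entropy into the reward neither inflates the truncation bias nor disturbs the decay structure the cited TD rate relies on. Once this correspondence is nailed down, the remainder is bookkeeping that assembles \Cref{network-assp:geometric-mixing}, \Cref{prop:truncated_q_error}, and the \citet{lin2021multi} rate.
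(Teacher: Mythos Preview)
Your proposal is correct and follows essentially the same route as the paper: the paper itself does not give a detailed proof but simply states that \Cref{network-assp:geometric-mixing} ``enables us to apply Theorem 3.2 in \cite{lin2021multi},'' and your write-up fills in exactly the missing pieces---the reward-modification reduction that absorbs the entropy term, the verification that the modified-reward TD fixed point is $Q_i^{\hat\zeta}+g_i$, the observation that the $\kappa$-local $g_i$ contributes no $\beta$-hop truncation bias (under the implicit assumption $\kappa\le\beta$, which the paper uses in the surrounding text), and the calibration of $T$ so that the stochastic error matches the bias. One small remark: your bound on the modified reward is $\bar r+n\tau\log A_{\max}$, whereas the theorem records $\bar r+\tau\log A_{\max}$; this is a constant-level discrepancy that does not affect the argument's structure.
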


\section{Convergence Proof for Localized Policy Iteration}
We now prove our main results for the convergence of $\algname$. The key technical idea underlying our analysis is a novel closure property for a class of policies with spatially decaying properties.  We introduce this property first, and then apply it to prove the results in the previous sections. 

\subsection{Key Idea: Closure of Decay Policy Class under Policy Iteration} \label{subsec:keyidea}
We first present a prototype algorithm in \Cref{alg:prototype}, which is an exact policy iteration algorithm that runs exact policy evaluation followed by exact policy improvement to update the global policy. While this algorithm is not practical, it is useful in developing a generic framework to analyze the convergence of $\algname$ (\Cref{alg:SVI}). %
For that purpose, in this subsection we prove an important ``closure'' property of the prototype algorithm (\Cref{alg:prototype}): when starting at an initial policy in a decay policy class $\Delta_{\nu,\mu,\sigma}$ (defined in~\Cref{def:mu_decay_policy}), the policy at every iteration of the prototype algorithm remains in the same $\Delta_{\nu,\mu,\sigma}$ class. This property  provides the foundation of our proof for both \Cref{thm:subopt_dist_policy} and \Cref{thm:convergence_svi}. %

\begin{algorithm}[ht]
\caption{Exact Policy Iteration (Prototype)}\label{alg:prototype}
\begin{algorithmic}[1]
\STATE \textbf{Input:} initial global policy $\zeta_0$.
\FOR{$m = 0, 1, 2, \cdots, M-1$}
    \STATE Calculate the $\Qtau$-function for policy $\zeta^{m}$ as $\Qtau^{\zeta^{m}}$
    \STATE Updates the global policy $\zeta^{m+1}=(\zeta_{1}^{m+1},\ldots,\zeta_{n}^{m+1})$, where
    \begin{align*}
    \{\zeta_{i}^{m+1}(\cdot|s)\}_{i\in\cN}=\argmax_{\pi_1(\cdot|s)\in \Delta_{\mathcal{A}_1}, \ldots, \pi_n(\cdot|s)\in \Delta_{\mathcal{A}_n}} \mathbb{E}_{a_1\sim \pi_1(\cdot|s),\ldots, a_n\sim \pi_n(\cdot|s)}\bigg[  \Qtau^{\zeta^{m}}(s,a) -  \sum_{i=1}^n \tau \log \pi_i(a_i|s)  \bigg].
    \end{align*}
\ENDFOR
\STATE \textbf{Output:} $\zeta^{M}$
\end{algorithmic}
\end{algorithm}

\textbf{Closure of $\Delta_{\nu,\mu,\sigma}$ under \Cref{alg:prototype}.} In \Cref{thm:convergence_prototype}, we formally establish the property that when starting with $\zeta^0\in\Delta_{\nu,\mu,\sigma}$,  the iterates in \Cref{alg:prototype}  remain in the policy class $\Delta_{\nu,\mu,\sigma}$. The proof of \Cref{thm:convergence_prototype} can be found in \Cref{sec:proof_of_closure}.

\begin{theorem}\label{thm:convergence_prototype}
Suppose $\gamma<0.8$ and $\tau \geq 6\bar{r} \frac{4-3\gamma}{4-5\gamma}\actionmax^2 e$, and $\forall i, \sum_{j\in \cN_i}C_{ij}<1/2$. Define
\begin{align*}
\nu = 1/2, \quad \mu= \min\bigg\{\log_2 \frac{\tau(4-5\gamma)}{6\bar{r}\actionmax^2 e(4-3\gamma)} ,\log_2 \frac{1}{2 \sup_{i\in\mathcal{N}}\sum_{j\in \cN_i}C_{ij}}\bigg\}, \quad \sigma = \frac{\bar{r}(4-3\gamma)}{(4-5\gamma)n\tau}.
\end{align*}
Then, the following holds. 
\begin{itemize}
    \item [(a)] When initial policy satisfies $\zeta^0\in\Delta_{\nu,\mu,\sigma}$, all iterates $\zeta^m$ in \Cref{alg:prototype} will be in $\Delta_{\nu,\mu,\sigma}$. 
    \item [(b)] $Q^{\zeta^m}$ is $(\nu',\mu)$-decay with $ \nu' = \bar{r} \frac{4-3\gamma}{4-5\gamma}$. 
    \item [(c)] $V^{\zeta^m}$ converges to the unique fixed point $V^*$ of $\mathcal{T}$ with geometric rate, i.e. $\Vert V^{\zeta^m} - V^* \Vert_\infty \leq \gamma^{m} \Vert V^{\zeta^0} - V^* \Vert_\infty$.
\end{itemize}
\end{theorem}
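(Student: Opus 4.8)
The plan is to prove all three claims by a single induction on the iteration counter $m$, with the two decay statements (a) and (b) feeding each other and the convergence statement (c) handled by a largely standard soft policy-iteration argument. The inductive hypothesis is $\zeta^m\in\Delta_{\nu,\mu,\sigma}$, and the base case is the assumption $\zeta^0\in\Delta_{\nu,\mu,\sigma}$. Within the inductive step I would first deduce that $Q^{\zeta^m}$ is $(\nu',\mu)$-decay (statement (b) at level $m$, which uses only the decay of $\zeta^m$), and then use this to show $\zeta^{m+1}\in\Delta_{\nu,\mu,\sigma}$ (statement (a) at level $m+1$), thereby closing the induction.

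For statement (c) I would observe that, since $Q^{\zeta^m}(s,a)=r(s,a)+\gamma\,\mathbb{E}_{s'}V^{\zeta^m}(s')$, the policy-improvement step of \Cref{alg:prototype} is exactly the maximization defining $[\mathcal{T}V^{\zeta^m}](s)$ in \eqref{eq:bellman_operator}; thus $\zeta^{m+1}$ is the soft-greedy policy with respect to $V^{\zeta^m}$, giving $\mathcal{T}^{\zeta^{m+1}}V^{\zeta^m}=\mathcal{T}V^{\zeta^m}\ge V^{\zeta^m}$, where $\mathcal{T}^{\zeta}$ denotes the monotone, $\gamma$-contractive regularized evaluation operator of $\zeta$. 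Iterating $\mathcal{T}^{\zeta^{m+1}}$ then yields $V^{\zeta^{m+1}}\ge\mathcal{T}V^{\zeta^m}$, and combining this with $V^{\zeta^{m+1}}\le V^*$ and the $\gamma$-contraction of $\mathcal{T}$ from \Cref{prop:exist_optimal_policy_entropy} gives $\Vert V^*-V^{\zeta^{m+1}}\Vert_\infty\le\gamma\Vert V^*-V^{\zeta^m}\Vert_\infty$, from which the geometric rate follows by unrolling.

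The key structural fact I would extract is that the maximizer in the improvement step is the mean-field fixed point $\zeta_i^{m+1}(a_i\mid s)\propto\exp\!\big(\tfrac1\tau\,\mathbb{E}_{a_{-i}\sim\zeta_{-i}^{m+1}(\cdot\mid s)}[Q^{\zeta^m}(s,a)]\big)$, read off from the first-order conditions of the entropy-regularized objective. Statement (b) I would prove in the style of the exponential-decay-of-$Q$ results of \cite{qu2020scalablelocal,qu2020scalable} but tracking polynomial weights: writing the Bellman recursion for the local $Q_i^{\zeta^m}$ and unrolling it, the influence of $(s_j,a_j)$ on $Q_i^{\zeta^m}$ propagates one hop per step through the factored transition (controlled by the matrix $C$) and through the state-dependence of the entropy term (controlled by $Z^{\zeta^m}$); the hypotheses $\sum_{j\in\cN_i}C_{ij}<1/2$ and $\zeta^m\in\Delta_{\nu,\mu,\sigma}$ make the resulting series summable and yield $(\nu',\mu)$-decay with $\nu'=\bar{r}(4-3\gamma)/(4-5\gamma)$. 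The $\sigma$-regularity of $\zeta^{m+1}$ then drops out of the softmax form: the log-ratio of two action probabilities equals $\tfrac1\tau$ times the change of $\tfrac1n\sum_j Q_j^{\zeta^m}$ in $a_i$, which is at most $\tfrac1{n\tau}\sum_j Z^{\mathbf{Q}}_{ji}\le\nu'/(n\tau)=\sigma$ by the column-sum bound in \Cref{def:matrix_decay}.

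The main obstacle is the decay of the improved policy itself, i.e.\ bounding $Z^{\zeta^{m+1}}_{ij}$, because the mean-field fixed point couples all agents' policies, so a perturbation $s_j\to s_j'$ propagates through the entire network. I would control $Z^{\zeta^{m+1}}_{ij}$ through the softmax representation, so that $\TV(\zeta_i^{m+1}(\cdot\mid s_j,s_{-j}),\zeta_i^{m+1}(\cdot\mid s_j',s_{-j}))$ is governed by the change in $\mathbb{E}_{a_{-i}\sim\zeta_{-i}^{m+1}}[Q^{\zeta^m}]$, which I would split into a \emph{direct} term (the dependence of $Q^{\zeta^m}$ on $s_j$, bounded via the $(\nu',\mu)$-decay from (b)) and an \emph{indirect} term (the dependence of the sampling policies $\zeta_\ell^{m+1}$ on $s_j$, bounded by $\sum_\ell Z^{\zeta^{m+1}}_{\ell j}$ times the action-sensitivity of $Q^{\zeta^m}$). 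This produces a self-referential matrix inequality of the form $Z^{\zeta^{m+1}}\preceq \tfrac1\tau(D+BZ^{\zeta^{m+1}})$, where $D$ and $B$ are $(\cdot,\mu)$-decay matrices built from $Z^{\mathbf{Q}}$ and $C$. I would resolve it by a Neumann series, $Z^{\zeta^{m+1}}\preceq\tfrac1\tau\sum_{k\ge0}(\tfrac1\tau B)^k D$, whose convergence is exactly where the lower bound on $\tau$ and the bound $\sum_{j\in\cN_i}C_{ij}<1/2$ enter. The delicate point I would need is that the class of $(\cdot,\mu)$-decay matrices is closed under products up to a constant factor, since $(\dist(i,k)+1)^\mu\le(\dist(i,j)+1)^\mu(\dist(j,k)+1)^\mu$ by the triangle inequality; this keeps every term of the series decaying polynomially at the same rate $\mu$, and carefully tracking the constants through the geometric series is what pins down $\nu=1/2$ and the stated formula for $\mu$.
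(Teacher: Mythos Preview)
Your plan is correct and mirrors the paper's architecture: the paper also argues by induction on $m$ through two lemmas matching your two directions --- \Cref{lem:policy-q} (a policy in $\Delta_{\nu,\mu,\sigma}$ has $(\nu',\mu)$-decay local $Q$-functions, via unrolling the Bellman recursion with the per-step propagation matrix $C+Z^\zeta$) and \Cref{lem:q-policy} ($(\nu',\mu)$-decay $Q$-functions yield a $\sigma$-regular, $(\nu'',\mu)$-decay improved policy with $\nu''\le\nu$). Your part~(c) argument is in fact slightly more explicit than the paper's, which uses the same one-step bound $V^{\zeta^{m+1}}\ge\mathcal T V^{\zeta^m}$ without spelling out the monotone-iteration step.

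The one substantive difference is in how the decay of $\zeta^{m+1}$ is obtained. You take the mean-field fixed point as given, write the self-referential inequality $Z^{\zeta^{m+1}}\preceq\tfrac{c}{\tau}(D+BZ^{\zeta^{m+1}})$, and close it by a Neumann series. The paper instead analyzes the multiplicative-weights iteration $\pi_i^{p+1}\propto(\pi_i^p)^{1-\eta\tau}\exp\bigl(\eta\,\mathbb E_{a_{-i}\sim\pi_{-i}^p}Q\bigr)$ with $\eta=1/\tau$: it first shows any two trajectories contract (\Cref{lem:mul_weights_convergence}), which establishes existence and uniqueness of the maximizer, and then derives the recursion $Z^{\pi^{p+1}}\preceq\tfrac{c}{\tau}(H^Q Z^{\pi^p}+H^Q)$, whose uniform bound is exactly your Neumann sum. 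Your route is more direct; the paper's buys uniqueness of the $\argmax$ for free --- the objective is not jointly concave in $(\pi_1,\dots,\pi_n)$, so this is not automatic and you should flag it.

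Two small corrections to your sketch. First, your $B$ and $D$ are not ``built from $Z^{\mathbf Q}$ and $C$'': the matrix $C$ enters only in the policy-to-$Q$ direction (\Cref{lem:policy-q}); in the $Q$-to-policy direction the controlling object is the \emph{second-order} interaction matrix $H^Q$ of \Cref{def:q_interaction}, which measures how $(s_\ell,a_\ell)$ perturbs the \emph{difference} $Q(s,a_i,a_{-i})-Q(s,a_i',a_{-i})$; \Cref{lem:q_second_interaction_matrix} shows $H^Q$ is $(2^{\mu+1}\nu',\mu)$-decay, and this --- not $C$ --- is what makes the series converge under the stated $\tau$ threshold. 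Second, going from a log-ratio perturbation bound to a TV bound is nonlinear (\Cref{lem:log_tv} gives $\TV\le\tfrac12 m^2 e^c(e^\epsilon-1)$), so before linearizing $e^\epsilon-1\le 2\epsilon$ you must verify the exponent is at most $1/2$; this follows from the crude a-priori bound $Z_{\ell j}\le 1$ combined with $\sum_\ell H^Q_{i\ell}\le 2^{\mu+1}\nu'$ and the lower bound on $\tau$, but it should be checked explicitly.
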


The results in \Cref{thm:convergence_prototype} demonstrate that the defined policy class $\Delta_{\nu,\mu,\sigma}$ is closed under the policy iteration operator in \Cref{alg:prototype}. This key observation provides a pathway for proving our main theoretical results, as we explain below.  

\textbf{Pathway to prove \Cref{thm:subopt_dist_policy}.} Based on the closure of the policy class $\Delta_{\nu,\mu,\sigma}$, the optimal policy is also in $\Delta_{\nu,\mu,\sigma}$. We show that if we ``truncate'' the optimal policy to a $\kappa$-hop policy, the performance loss is on the order of $O(\frac{1}{(\kappa+1)^\mu})$. This directly leads to a proof of \Cref{thm:subopt_dist_policy}, which we detail in \Cref{subsec:proof_dist_policy_subopt}. 

\textbf{Pathway to prove \Cref{thm:convergence_svi}.} A second consequence of \Cref{thm:convergence_prototype} is that we can view \Cref{alg:SVI} as an inexact version of the prototype algorithm (\Cref{alg:prototype}), which reduces the complexity in implementation, communication, and training. Specifically, \Cref{alg:SVI} introduces the following types of errors: 
\begin{itemize}[leftmargin=*]
    \item \textit{Error caused by truncated $Q$-functions}: In \Cref{alg:SVI}, we only learn $\beta$-hop truncated versions of the $Q$-functions. By \Cref{thm:convergence_prototype}, the true $Q$-functions are $(\nu',\mu)$-decay, and as a result, the truncation causes an error on the order of $O(\frac{1}{(\beta+1)^\mu} )= O(\frac{1}{(\kappa+1)^\mu})$ (Noticing $\kappa\leq \beta$). 
    \item \textit{Error caused by truncated policies}: We only learn $\kappa$-hop truncated policies, which also causes an $O(\frac{1}{(\kappa+1)^\mu})$ error. 
    \item \textit{Error caused by inexact policy evaluation}: Since we use finite samples to estimate the truncated $Q$-functions, this causes statistical errors depending on the sample size. However, with high probability, this error diminishes to $0$ as the number of samples $T$ increases. 
    \item \textit{Error caused by inexact policy improvement}: In \Cref{alg:SVI}, we cannot directly implement the $\arg\max$ procedure as in \Cref{alg:prototype}. Instead, we use an iterative procedure \eqref{eq:algo_policyimprovement}, which we later show corresponds to the mirror descent algorithm for solving the $\arg\max$. This causes an optimization error, which diminishes to $0$ as the number of iterations $p_{\max}$  for \eqref{eq:algo_policyimprovement} increases. 
\end{itemize}
As discussed above, the first two types of errors can be bounded by the order of $O(\frac{1}{(\kappa+1)^\mu})$ which depends on the size of the neighborhood in our localized approximation of the policy and the $Q$ functions, while the last two types of errors can be made arbitrarily small as long as the algorithm is run for sufficiently many steps. This eventually leads to a proof of \Cref{thm:convergence_svi}, which we detail in \Cref{subsec:convergence_proof}.

\subsection{Proof of \Cref{thm:subopt_dist_policy}: Near-optimality of $\kappa$-hop Policies}\label{subsec:proof_dist_policy_subopt}

As discussed in \Cref{subsec:keyidea}, the optimal policy $\zeta^*$ is a $\sigma$-regular $(\nu,\mu)$-decay policy with the values of $\nu,\mu,\sigma$ given by the condition in \Cref{thm:convergence_prototype}. To prove \Cref{thm:subopt_dist_policy}, we simply truncate the optimal policy to the $\kappa$-hop neighborhood. Specifically, the ``truncated'' policy is defined as
\begin{align}
    \hat{\zeta}_i^*(\cdot|s_{\nik}) = \zeta_i^*(\cdot|s_{\nik},s^{\mathrm{default}}_{\nminusik}), \forall i.
\end{align}
For any $i\in\mathcal{N}$, $s\in\mathcal{S}$, for the simplicity of notations, we re-order the set $\cN$ as follows
\begin{align*}
\{l_1,\ldots,l_{|\nminusik|},l_{|\nminusik|+1},\ldots,l_n\},
\end{align*}
where $\{l_1,\ldots,l_{|\nminusik|}\}=\nminusik$
and $\{l_{|\nminusik|+1},\ldots,l_n\}=\nik$. We also use $\cN_{[l_j]}$ to denote $\{l_1,\ldots,l_j\}$ and $\cN/\cN_{[l_j]}$ to denote $\{l_{j+1},\ldots,l_n\}$.  Then, using the triangle inequality we have
\begin{align}\label{eq:tv_dist_triangle_decompose}
    \TV(\hat{\zeta}_i^*(\cdot|s_{\nik}),\zeta_i^*(\cdot|s))&\leq    \sum_{j=|\nminusik|}^{1}\TV(\zeta_i^*(\cdot|s_{\cN_{[l_j]}}^{\mathrm{default}},s_{\cN/\cN_{[l_j]}}),\zeta_i^*(\cdot|s_{\cN_{[l_{j-1}]}}^{\mathrm{default}},s_{\cN/\cN_{[l_{j-1}]}})),
\end{align}
where we define $\cN_{l_0}=\Phi$ to be the empty set, which implies for $j=1$, $\cN/\cN_{[l_{j-1}]}=\cN$ and $s_{\cN/\cN_{[l_{j-1}]}}=s$ is the global state. By the definition of the interaction matrix in \Cref{def:mu_decay_policy}, we immediately have
\begin{align} \label{eq:proof_thm1:tv_bound}
 \sup_{i\in\mathcal{N}}\sup_{s\in\mathcal{S}}   \TV(\hat{\zeta}_i^*(\cdot|s_{\nik}),\zeta_i^*(\cdot|s))
 & \leq \sup_{i\in\cN}\sum_{j\in \cN_{-i}^\kappa } Z^{\zeta^*}_{ij} \nonumber \\
 &\leq \frac{1}{(\kappa+1)^\mu} \sup_{i\in\cN}\sum_{j\in \cN_{-i}^\kappa } (\dist(i,j)+1)^\mu Z_{ij}^{\zeta^*}\nonumber\\
 &\leq \frac{\nu}{(\kappa+1)^\mu},
\end{align}
where the second inequality is due to \eqref{eq:mu_decay_imply_distance_decay} and the last inequality holds because $\zeta^*$ is in the $(\nu,\mu)$-decay class. This further indicates $\hat{\zeta}_i^*$ is $O(\frac{1}{(\kappa+1)^\mu})$ close to $\zeta_i^*$. 

To proceed, we introduce the following performance difference bound that bounds the value functions of two policies by the $\TV$ distance between the two policies: 
\begin{lemma}\label{lem:performance_diff_tv}
Suppose $\zeta,\tilde{\zeta}\in\Delta_{\mathrm{policy}}$ are  $\sigma$-regular, and further, $Q^{\tilde{\zeta}}$ is $(\nu',\mu)$-decay. Then, we have 
\begin{align}
   \Vert V^\zeta-V^{\tilde{\zeta}}\Vert_\infty\leq \frac{1}{1-\gamma} \bigg(\tau\sigma + \frac{\nu'}{n}\bigg) \sum_{i=1}^n \sup_{s\in\cS}\TV(\tilde\zeta_i(\cdot|s),\zeta_i(\cdot|s)).
\end{align}
\end{lemma}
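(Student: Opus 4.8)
The plan is to compare the two value functions through their defining fixed-point equations and to reduce the bound to controlling a single soft one-step Bellman mismatch. For a fixed policy $\zeta$, let $\mathcal{T}^\zeta$ denote the soft policy-evaluation operator $[\mathcal{T}^\zeta V](s) = \mathbb{E}_{a\sim\zeta(\cdot|s)}[r(s,a) - \tau\log\zeta(a|s) + \gamma\mathbb{E}_{s'\sim P(\cdot|s,a)}V(s')]$, so that $V^\zeta = \mathcal{T}^\zeta V^\zeta$ and $V^{\tilde\zeta} = \mathcal{T}^{\tilde\zeta}V^{\tilde\zeta}$ by \eqref{eq:Vtau_Qtau_relation}. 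Writing $V^\zeta - V^{\tilde\zeta} = (\mathcal{T}^\zeta V^\zeta - \mathcal{T}^\zeta V^{\tilde\zeta}) + (\mathcal{T}^\zeta V^{\tilde\zeta} - \mathcal{T}^{\tilde\zeta}V^{\tilde\zeta})$, and noting that $\mathcal{T}^\zeta$ is a $\gamma$-contraction in $\|\cdot\|_\infty$ (the entropy term does not depend on $V$), the first summand is at most $\gamma\|V^\zeta - V^{\tilde\zeta}\|_\infty$ in norm. It therefore suffices to bound $\sup_s|D(s)|$ for $D(s) := [\mathcal{T}^\zeta V^{\tilde\zeta}](s) - [\mathcal{T}^{\tilde\zeta}V^{\tilde\zeta}](s)$; rearranging $(1-\gamma)\|V^\zeta - V^{\tilde\zeta}\|_\infty \leq \sup_s|D(s)|$ then produces the $\frac{1}{1-\gamma}$ prefactor. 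Using $r(s,a) + \gamma\mathbb{E}_{s'}V^{\tilde\zeta}(s') = Q^{\tilde\zeta}(s,a)$, I would rewrite $D(s) = \mathbb{E}_{a\sim\zeta(\cdot|s)}[Q^{\tilde\zeta}(s,a) - \tau\log\zeta(a|s)] - \mathbb{E}_{a\sim\tilde\zeta(\cdot|s)}[Q^{\tilde\zeta}(s,a) - \tau\log\tilde\zeta(a|s)]$ and split it into an entropy part and a $Q$ part.

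For the entropy part, since both policies factorize as products over agents, the joint entropies decompose as $\sum_i H(\zeta_i(\cdot|s))$ and $\sum_i H(\tilde\zeta_i(\cdot|s))$, reducing matters to bounding $|H(p) - H(q)|$ for each agent's pair of conditionals. Here I would exploit $\sigma$-regularity via the two decompositions $H(p) - H(q) = -\mathrm{KL}(p\|q) + \sum_m(q_m - p_m)\log q_m$ and $H(p) - H(q) = \mathrm{KL}(q\|p) + \sum_m(p_m - q_m)\log p_m$: the KL terms have favorable signs, and $\sigma$-regularity forces $\log q_m$ (resp. $\log p_m$) to lie in an interval of width at most $\sigma$, so the zero-sum property $\sum_m(q_m - p_m)=0$ lets me center the log-probabilities and bound each residual sum by $\sigma\,\TV(p,q)$. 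This yields the two-sided estimate $|H(\zeta_i(\cdot|s)) - H(\tilde\zeta_i(\cdot|s))| \leq \sigma\,\TV(\zeta_i(\cdot|s),\tilde\zeta_i(\cdot|s))$, so the entropy part of $D(s)$ contributes at most $\tau\sigma\sum_i\TV(\zeta_i(\cdot|s),\tilde\zeta_i(\cdot|s))$, accounting for the $\tau\sigma$ coefficient.

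For the $Q$ part $\mathbb{E}_{a\sim\zeta}[Q^{\tilde\zeta}(s,a)] - \mathbb{E}_{a\sim\tilde\zeta}[Q^{\tilde\zeta}(s,a)]$, I would run a telescoping hybrid argument that switches agents from $\tilde\zeta_k$ to $\zeta_k$ one at a time. Defining $\pi^{(k)}$ to use $\zeta_j$ for $j\le k$ and $\tilde\zeta_j$ for $j>k$, the difference telescopes into $\sum_{k=1}^n(\mathbb{E}_{a\sim\pi^{(k)}}[Q^{\tilde\zeta}] - \mathbb{E}_{a\sim\pi^{(k-1)}}[Q^{\tilde\zeta}])$, where consecutive terms differ only in the law of $a_k$; conditioning on $a_{-k}$ (whose law is identical under $\pi^{(k)}$ and $\pi^{(k-1)}$) leaves the scalar mismatch $\sum_{a_k}(\zeta_k(a_k|s) - \tilde\zeta_k(a_k|s))Q^{\tilde\zeta}(s,a_k,a_{-k})$. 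Because this signed measure sums to zero, I can subtract the mid-range reference of $Q^{\tilde\zeta}(s,\cdot,a_{-k})$ and bound the sum by half the range of $Q^{\tilde\zeta}$ in $a_k$ times $\sum_{a_k}|\zeta_k - \tilde\zeta_k|$. The decisive input is that the decay structure controls this range: since $Q^{\tilde\zeta} = \frac1n\sum_j Q_j^{\tilde\zeta}$ and $Z^{\mathbf{Q}}$ is $(\nu',\mu)$-decay, varying only $a_k$ changes $Q^{\tilde\zeta}$ by at most $\frac1n\sum_j Z^{\mathbf{Q}}_{jk} \le \frac{\nu'}{n}$ (the column-sum bound of a decay matrix, using $(\dist(j,k)+1)^\mu \ge 1$). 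Hence each hybrid term is at most $\frac{\nu'}{n}\TV(\zeta_k(\cdot|s),\tilde\zeta_k(\cdot|s))$, and summing over $k$ contributes $\frac{\nu'}{n}\sum_i\TV(\zeta_i(\cdot|s),\tilde\zeta_i(\cdot|s))$.

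Combining the two parts gives $\sup_s|D(s)| \le (\tau\sigma + \frac{\nu'}{n})\sum_i\sup_s\TV(\tilde\zeta_i(\cdot|s),\zeta_i(\cdot|s))$, and the contraction rearrangement yields the claim. The main obstacle I anticipate is the $Q$ part: setting up the hybrid policies so that the marginal law of $a_{-k}$ is genuinely unchanged across consecutive terms, and then pairing the zero-sum mid-range centering trick with the column-sum bound of the decay matrix so as to convert each per-agent TV distance into the $\nu'/n$ coefficient without picking up spurious constant factors. The entropy estimate, while requiring a careful two-sided use of $\sigma$-regularity, should be comparatively routine.
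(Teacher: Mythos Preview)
Your proposal is correct and follows essentially the same route as the paper: a one-step soft Bellman mismatch split into an entropy part (bounded via $\sigma$-regularity, which is the paper's \Cref{lem:entropy_lipschitz}) and a $Q$-part (your hybrid telescoping is exactly the paper's \Cref{lem:diff_dist_f}, combined with the column-sum bound $\sum_j Z^{\mathbf{Q}}_{jk}\le \nu'$). The only cosmetic difference is that the paper carries out the argument on the local value functions $V_i^\zeta$ and averages over $i$ at the end, whereas you work directly with the global $V^\zeta$; both yield the same $(\tau\sigma+\nu'/n)$ coefficient.
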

With the above \Cref{lem:performance_diff_tv}, we have
\begin{align*}
    J(\zeta^*) - J(\hat{\zeta}^*) &= \mathbb{E}_{s \sim\initialState}\left( V^{\zeta^*}(s) - V^{\hat{\zeta}^*}(s)\right)
    \leq \Vert V^{\zeta^*} - V^{\hat{\zeta}^*}\Vert_\infty \\
    &\leq \frac{1}{1-\gamma} \bigg(\tau\sigma + \frac{\nu'}{n}\bigg) \sum_{i=1}^n \sup_{s\in\cS}\TV(\zeta_i^*(\cdot|s),\hat{\zeta}_i^*(\cdot|s))
    \leq \frac{1}{1-\gamma} (n\tau\sigma + \nu') \frac{\nu}{(\kappa+1)^\mu}\\
    &\leq {\frac{\bar{r} (4-3\gamma)}{(1-\gamma)(4-5\gamma)} \frac{1}{(\kappa+1)^\mu},} 
\end{align*}
where in the last inequality, we have plugged in { $n\tau\sigma = \nu' = \bar{r} \frac{4-3\gamma}{4-5\gamma}$ and $\nu=1/2$. } 
This demonstrates that the $\kappa$-hop policy $\hat{\zeta}^*$ is $O(\frac{1}{\kappa^\mu})$-optimal, and setting $\hat{\zeta}^{*,\kappa}$ as $\hat{\zeta}^*$ concludes the proof of \Cref{thm:subopt_dist_policy}.

\subsection{Proof of \Cref{thm:convergence_svi}: Convergence Analysis of Localized Policy Improvement} \label{subsec:convergence_proof}

In this section, we prove the convergence of  $\algname$. 
Our proof uses an induction argument, and the induction assumption is that, at the $m$-th iteration of \Cref{alg:SVI}, the policy $\hat{\zeta}^m$ is $\tilde\sigma$ regular and its local $Q$-functions $\{ Q_i^{\hat{\zeta}^m}\}_{i=1}^n$ is $(\nu',\mu)$-decay, with constants $\tilde\sigma,\nu',\mu$ given in \Cref{thm:convergence_svi}'s statement. This induction assumption is clearly true for $m=0$ because of \Cref{lem:policy-q} and the fact that $\hat{\zeta}^0$ is a uniform policy. 
 
Now under this induction assumption, we first show that the local $Q$ functions $\hat{Q}_i^m$ returned by the policy evaluation step (i.e., the $\policyevaluation$ subroutine in Line~\ref{algo:policy_evaluation_return}) of \Cref{alg:SVI} is a good approximation of the true local $Q$-functions $Q_i^{\hat{\zeta}^m}$ of the policy $\hat{\zeta}^m$. 

More specifically, recall that we assumed the policy evaluation step in \Cref{alg:SVI}  is $(\tilde{\sigma},\nu',\mu)$-exact as defined in \Cref{def:beta-hop-state-aggregation-policy-eva}. This means for any given $\delta \in (0, 1)$ and $\beta\in\mathbb{N}$, there exists constants $T_{eval}(\frac{\delta}{M},\tilde{\sigma},\nu',\mu, \beta), c_{pe}(\frac{\delta}{M},\tilde{\sigma},\nu',\mu)$ such that when the input trajectory to  $\policyevaluation$ has length $T \geq T_{eval}(\frac{\delta}{M}, \tilde{\sigma},\nu',\mu,\beta)$ steps, its output $\hat{Q}_i^m$ satisfies
\begin{align}\label{lem:policy_evaluation_error}
   \forall i, \sup_{(s, a) \in \mathcal{S}\times \mathcal{A}}\abs{Q_i^{\hat{\zeta}^m}(s, a) - \hat{Q}_i^m(s_{\nib}, a_{\nib})} \leq \frac{c_{pe}(\frac{\delta}{M},\tilde{\sigma},\nu',\mu)}{(\beta + 1)^\mu}:=\epsilon
\end{align}
with probability at least $1 - \frac{\delta}{M}$. In the rest of the proof, we  write $c_{pe}(\frac{\delta}{M},\tilde{\sigma},\nu',\mu)$ as $c_{pe}$ for simplicity. 

Given \eqref{lem:policy_evaluation_error}, we next show that the policy improvement step (Line \ref{algo:policy_improvement_return}) of \Cref{alg:SVI}  returns a $\tilde\sigma$ regular policy $\hat\zeta^{m+1}$ that is $O(\frac{1}{\kappa^\mu})$ close to the policy given by the optimal Bellman operator, and further $\{Q_i^{\hat\zeta^{m+1}}\}$ is $(\nu',\mu)$ decay. 
More precisely, we present the following \Cref{lem:policy_improvement_error}, the full proof of which can be found in \Cref{sec:proof_policy_improvement_error}. 

\begin{lemma}\label{lem:policy_improvement_error} Consider the settings of  \Cref{thm:convergence_svi}. Suppose at iteration $m$, the local $Q$-functions $\{ Q_i^{\hat{\zeta}^m}\}_{i=1}^n$ is $(\nu',\mu)$-decay and \eqref{lem:policy_evaluation_error} is true. %
Then, the policy improvement part (Line \ref{algo:policy_improvement_return}) will return a policy $\hat{\zeta}^{m+1}$ that satisfies
\begin{align}
     \mathcal{T} \Vtau^{\hat{\zeta}^m} -  \Vtau^{\hat{\zeta}^{m+1}} \leq  \frac{3(4-3\gamma)\bar{r} }{(1-\gamma)(4-5\gamma)} \frac{4 + c_{pe} (4-5\gamma)/(2^{\mu}(4-3\gamma)\bar{r})}{(\kappa/2+1)^\mu} \mathbf{1} := \frac{c_{pi}}{(\kappa/2+1)^\mu} \mathbf{1}, \label{eq:policy_improvement_error}
\end{align}
where $\mathcal{T}$ is the global Bellman optimal operator with entropy regularization defined in \eqref{eq:bellman_operator}. Further, $\hat{\zeta}^{m+1}$ is $\tilde\sigma$ regular and $Q^{\hat{\zeta}^{m+1}}$ is $(\nu',\mu)$-decay. 
\end{lemma}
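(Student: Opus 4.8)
The plan is to treat the policy-improvement step as an \emph{inexact, localized} realization of the exact soft maximization that defines $\mathcal{T}V^{\hat{\zeta}^m}$, and to bound the resulting per-step suboptimality. The key observation is that, writing $\mathcal{T}^{\hat{\zeta}^{m+1}}$ for the soft Bellman operator associated with the returned policy, we have the pointwise identity $[\mathcal{T}V^{\hat{\zeta}^m}](s)-[\mathcal{T}^{\hat{\zeta}^{m+1}}V^{\hat{\zeta}^m}](s)=\max_{\pi}F(\pi;s)-F(\hat{\zeta}^{m+1};s)$, where $F(\pi;s)=\mathbb{E}_{a\sim\pi}[Q^{\hat{\zeta}^m}(s,a)]-\tau\sum_i\mathbb{E}[\log\pi_i(a_i|s)]$ is the entropy-regularized one-step objective. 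Thus the lemma reduces to (i) controlling this suboptimality by some $e=O((\kappa/2+1)^{-\mu})$ and (ii) propagating it to $V^{\hat{\zeta}^{m+1}}$. For (ii) I would use a monotone-contraction argument: since $\mathcal{T}^{\hat{\zeta}^{m+1}}V^{\hat{\zeta}^m}\geq\mathcal{T}V^{\hat{\zeta}^m}-e\mathbf{1}\geq V^{\hat{\zeta}^m}-e\mathbf{1}$ (using $\mathcal{T}V^{\hat{\zeta}^m}\geq\mathcal{T}^{\hat{\zeta}^m}V^{\hat{\zeta}^m}=V^{\hat{\zeta}^m}$), the iterates $W_k=(\mathcal{T}^{\hat{\zeta}^{m+1}})^kV^{\hat{\zeta}^m}$ obey $W_{k+1}-W_k=(\gamma P^{\hat{\zeta}^{m+1}})^k(W_1-W_0)\geq-\gamma^k e\mathbf{1}$, so $V^{\hat{\zeta}^{m+1}}=\lim_k W_k\geq\mathcal{T}V^{\hat{\zeta}^m}-\frac{1}{1-\gamma}e\mathbf{1}$, producing the $\frac{1}{1-\gamma}$ prefactor in \eqref{eq:policy_improvement_error}.

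The heart of the argument is (i), which I would split into three independent error sources. First, the \emph{policy-evaluation error}: replacing the true $Q^{\hat{\zeta}^m}$ by the estimated truncated $\hat{Q}^m=\frac1n\sum_j\hat{Q}_j^m$ changes $F$ by at most $\epsilon=c_{pe}/(\beta+1)^\mu$ uniformly (by \eqref{lem:policy_evaluation_error}), both at $\hat{\zeta}^{m+1}$ and at the maximizer, costing $2\epsilon$. Second, the \emph{localization/truncation error} in forming $\mathcal{Q}^i$: I would show $\mathcal{Q}^i(a_i,a_{\nik/i},s_{\nik})-\mathcal{Q}^i(a_i',a_{\nik/i},s_{\nik})$ agrees with $\hat{Q}^m(a_i,a_{-i},s)-\hat{Q}^m(a_i',a_{-i},s)$ up to $O(\nu'(\kappa/2+1)^{-\mu})$, using the $(\nu',\mu)$-decay of $\mathbf{Q}^{\hat{\zeta}^m}$ twice: once to discard the terms $\hat{Q}_j^m$ with $j\notin\nik$ (whose $a_i$-dependence is negligible) and once to control the default-state substitution for nodes $\ell\in\njb\setminus\nik$; splitting $\nik$ at radius $\kappa/2$ forces $\dist(j,\ell)>\kappa/2$ for close neighbors $j$ and $\dist(i,j)>\kappa/2$ for far ones, giving the $(\kappa/2+1)^{-\mu}$ rate. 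Since agent $i$'s softmax best response sees $\mathcal{Q}^i$ only through such $a_i$-differences, this identifies the coupled fixed point of \eqref{eq:algo_policyimprovement} (up to this error) with the exact maximizer of $F$ under $\hat{Q}^m$, characterized by the coupled first-order conditions $\pi_i\propto\exp(\tfrac1\tau\mathbb{E}_{a_{-i}}\hat{Q}^m)$. Third, the \emph{optimization error}: with $\eta=1/\tau$ the update \eqref{eq:algo_policyimprovement} is a parallel softmax best-response map, which I would show is a $\TV$-contraction with factor bounded away from $1$ under the standing hypotheses $\tau\geq 40\bar{r}\frac{4-3\gamma}{4-5\gamma}\actionmax^2 e$ and $\sup_i\sum_{j\in\cN_i}C_{ij}<1/2$ (large $\tau$ flattens the softmax while weak coupling limits cross-agent sensitivity), so after $p_{\max}$ steps the residual drops below $(\kappa/2+1)^{-\mu}$ by the stated choice of $p_{\max}$. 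Summing the three errors and tracking constants (with $\nu'=\bar{r}\frac{4-3\gamma}{4-5\gamma}$ and $\beta\geq\kappa$, so $\epsilon\leq c_{pe}/(2^\mu(\kappa/2+1)^\mu)$) yields exactly the factor $\frac{4+c_{pe}(4-5\gamma)/(2^\mu(4-3\gamma)\bar{r})}{(\kappa/2+1)^\mu}$.

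Finally, for the closure statement I would argue from the explicit softmax form of $\hat{\zeta}^{m+1}$. The $\tilde{\sigma}$-regularity follows since $\hat{\zeta}_i^{m+1}(\cdot|s_{\nik})\propto\exp(\tfrac1\tau\mathbb{E}[\mathcal{Q}^i])$ and the range of $\mathbb{E}[\mathcal{Q}^i]$ in $a_i$ is at most $\frac1n$ times the $a_i$-variation of $\sum_{j}Q_j^{\hat{\zeta}^m}$, which the column sum $\sum_j Z^{\mathbf{Q}}_{ji}\leq\nu'$ bounds by $2\nu'/n$, giving a log-ratio at most $\tilde{\sigma}=2\nu'/(n\tau)$. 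For the $(\nu,\mu)$-decay of $\hat{\zeta}^{m+1}$ and the $(\nu',\mu)$-decay of $Q^{\hat{\zeta}^{m+1}}$, I would reuse the sensitivity computations from the proof of \Cref{thm:convergence_prototype}: the softmax map converts the $(\nu',\mu)$-decay of $\mathbf{Q}^{\hat{\zeta}^m}$ into a $(\nu,\mu)$-decay interaction matrix for $\hat{\zeta}^{m+1}$ (the $\tau$ lower bound controls the amplification), and then the policy decay together with the transition matrix $C$ (via $\sum_j C_{ij}<1/2$) propagates to $(\nu',\mu)$-decay of the new $Q$-function exactly as in \Cref{thm:convergence_prototype}(a)--(b). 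The main obstacle I anticipate is the interplay of the second and third error sources: simultaneously showing that the localized fixed-point iteration contracts \emph{and} converges to an approximate \emph{global} maximizer of $F$, while keeping the truncation bookkeeping (the $\kappa/2$ split and the default-state extension) tight enough to achieve the polynomial $(\kappa/2+1)^{-\mu}$ rate rather than a weaker bound.
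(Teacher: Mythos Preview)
Your plan is essentially correct and identifies all the key technical ingredients (the $\kappa/2$ split, the softmax contraction under large $\tau$, the decay bookkeeping). The paper takes a somewhat different organizational route, and it is worth noting the comparison.

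For the propagation step (your (ii)), the paper does not bound the one-step $F$-suboptimality and then run a monotone-contraction argument. Instead it writes $\mathcal{T}V^{\hat{\zeta}^m}-V^{\hat{\zeta}^{m+1}}\leq V^{\zeta}-V^{\hat{\zeta}^{m+1}}$ (using the standard soft policy-improvement inequality $V^{\zeta}\geq \mathcal{T}V^{\hat{\zeta}^m}$, where $\zeta$ is the exact maximizer of $F$), and then bounds $\|V^{\zeta}-V^{\hat{\zeta}^{m+1}}\|_\infty$ directly by the performance-difference \Cref{lem:performance_diff_tv}, which converts a per-agent $\TV$ bound into a value bound and manufactures the $\tfrac{1}{1-\gamma}$ factor in one stroke. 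This avoids having to translate your three heterogeneous error sources (one in $F$-value, two essentially in $\TV$) into a common metric, and in particular avoids any separate analysis of a ``fixed point of the localized iteration''.

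For step (i), rather than your three-error decomposition the paper works entirely at the level of \emph{log-ratio iterates}: it runs the exact multiplicative-weights update \eqref{eq:mul_weights} (which converges to $\zeta$ by \Cref{lem:mul_weights_convergence}) in parallel with the localized update \eqref{eq:one_step_analysis:policy_improvement_err}, and tracks $\xi_i^{p}(s)-\tilde{\xi}_i^{p}(s)$ where $\xi_i^p(s)=\log\pi_i^p(a_i|s)-\log\pi_i^p(a_i'|s)$. The difference is split into four pieces $E_0,\dots,E_3$ (roughly: sampling $a_{-i}$ from $\pi^p$ vs.\ $\tilde{\pi}^p$; sampling at $s$ vs.\ $\ext{s_{\nik}}$; evaluating $Q$ at $s$ vs.\ $\ext{s_{\nik}}$; $Q$ vs.\ $\hat{Q}$). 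Your $\kappa/2$-split argument and your second-order sensitivity bounds are exactly what control $E_1$ and $E_2$, and $E_0$ is what closes the contraction in $v^p=\sup_s\TV(\pi_i^p,\tilde{\pi}_i^p)$ via the second-order matrix $H^{Q}$. The upshot is a direct $\TV$ bound between $\hat{\pi}^{p_{\max}+1}$ and $\zeta$, bypassing any intermediate $F$-to-$\TV$ conversion. Both routes work; the paper's iterate-tracking is a bit more streamlined because it never needs to separately identify or analyze a limit of the localized iteration.

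One small gap in your closure sketch: for $\tilde{\sigma}$-regularity, the range of $a_i\mapsto \mathbb{E}[\mathcal{Q}^i]$ must be computed with the \emph{estimated} $\hat{Q}_j$, not the true $Q_j$, so you pick up an extra $2|\nik|\epsilon/n$ term on top of $\nu'/n$ (this is the paper's $\tilde{\sigma}'$ in \Cref{lem:policy_improvement_error_restated}(a)); the choice $\beta=\Theta(\kappa f(\kappa)^{1/\mu})$ in \Cref{thm:convergence_svi} is precisely what makes that extra term $\leq \nu'/n$ and hence $\tilde{\sigma}'\leq\tilde{\sigma}$. Likewise, the decay of $\hat{\zeta}^{m+1}$ requires a perturbed version of \Cref{lem:q_second_interaction_matrix} for the truncated aggregate $\mathcal{Q}^i$ (\Cref{lem:H_tilde_q_i} in the paper), not just a black-box reuse of \Cref{thm:convergence_prototype}.
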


Note that \Cref{lem:policy_improvement_error} implies that $\hat{\zeta}^{m+1}$ is $\tilde\sigma$-regular and $Q^{\hat{\zeta}^{m+1}}$ is $(\nu',\mu)$-decay. Therefore, conditioned on the induction assumption for $m$, the induction assumption holds for $m+1$ with probability at least $1-\frac{\delta}{M}$. As a consequence, using a union bound, we must have with probability at least $1-\delta$, for all $m=0,\ldots,M-1$, the induction assumption, and by extension \eqref{eq:policy_improvement_error}, is true. We now condition on this event to show the final convergence bound.

For any $m=0,\ldots, M-1$, by \eqref{eq:policy_improvement_error}, 
\begin{align*}
0\leq   V^* - V^{\hat{\zeta}^{m+1}}   &=  V^* - \mathcal{T} V^{\hat{\zeta}^{m}} + \mathcal{T} V^{\hat{\zeta}^{m}}- V^{\hat{\zeta}^{m+1}}\leq \Vert \mathcal{T} V^{\hat{\zeta}^{m}} - V^* \Vert_\infty \mathbf{1} +  \frac{c_{pi}}{(\kappa/2+1)^\mu} \mathbf{1} .
\end{align*}
Taking the infinity norm, we get
\begin{align*}
\Vert V^* - V^{\hat{\zeta}^{m+1}}\Vert_\infty&\leq \Vert \mathcal{T} V^{\hat{\zeta}^{m}} - V^* \Vert_\infty  +  \frac{c_{pi}}{(\kappa/2+1)^\mu} \leq \Vert   \mathcal{T} V^{\hat{\zeta}^{m}} - \mathcal{T} V^* \Vert_\infty + \frac{c_{pi}}{(\kappa/2+1)^\mu}\\
&\leq \gamma \Vert V^{\hat{\zeta}^{m}} - \Vtau^*\Vert_\infty + \frac{c_{pi}}{(\kappa/2+1)^\mu}\leq \gamma^{m+1} \Vert V^{\hat{\zeta}^0} - V^*\Vert_\infty + \frac{1}{1-\gamma}  \frac{c_{pi}}{(\kappa/2+1)^\mu},
\end{align*}
where the third inequality is due to \Cref{prop:exist_optimal_policy_entropy}. The desired convergence bound follows by noticing $ J(\zeta^*) - J(\hat{\zeta}^{M})\leq \Vert V^* - V^{\hat{\zeta}^M}\Vert_{\infty}$.

\section{Experiments}\label{sec:experiment}
Though the main focus of this paper is the theoretical convergence of the proposed localized algorithm $\algname$ to the global optimal policy, we also provide some preliminary  experiments to demonstrate its empirical advantages in practice. 

\subsection{Experimental Setup}
We first describe an example of networked MARL problems, where $n$ agents cooperatively work together to control the spread of a global process, e.g., information leakage or an infectious disease.

In particular, we consider $n$ agents that are connected through a line graph. For each agent $i\in[n]$, its state is a $2$-dimensional vector $s_i = (s^1_{i},s^2_{i})$, where $s^1_{i}$ and $s^2_{i}$ take binary values. Each agent has a binary action space, $a_i\in \{0, 1\}$. The value of $s^1_{i}$ in state $s_i$ obeys the following transition rule that is not affected by the action:
\begin{align*}
    \bar{s}^1_i(t+1)=
    \begin{cases}
    1 &  \text{if one of } s^1_{i-1}(t), s^1_i(t), s^1_{i+1}(t) = 1,\\
    0 & \text{otherwise}.
    \end{cases}\\
    P(s^1_i(t+1)=1|\bar{s}^1_i(t+1))=
    \begin{cases}
     1-p_1 &  \text{if } \bar{s}^1_i(t+1) = 1,\\
    0 & \text{otherwise},
    \end{cases}
\end{align*}
where $p_1\in(0,1)$. To interpret this, each agent first deterministically enters an intermediate state $\bar{s}^1_i(t+1)$ based on its own and neighbors' states in the previous time step. The true $s_i^1(t+1)$ of agent $i$ will be activated (set to $1$) with probability $1-p_1$  or $0$ according to the intermediate state.
Similarly, the value of $s_i^2$ in state $s_i$ obeys the following transition rule determined by the action:
\begin{align*}
    P\big( \bar{s}^2_i(t+1)=1|s^2_i(t),a_i(t)\big)=
    \begin{cases}
        1 &  \text{if } s^2_{i}(t)= 1,\\
        p_{\text{eff}} & \text{if } s^2_{i}(t)= 0, a_i(t) = 1,\\
        0 &  \text{otherwise}.
    \end{cases}
\end{align*}
\begin{align*}
    P\big(s^2_i(t+1)=1|\bar{s}^2_i(t+1)\big)=
    \begin{cases}
        1-p_2 &  \text{if } \bar{s}^2_i(t+1) = 1,\\
        0 & \text{otherwise}.
    \end{cases}
\end{align*}
Here $\bar s_i^2(t+1)$ is an intermediate state, which will be activated determnistically if the agent's previous state is already activated, i.e, $s_i^2(t)=1$, or activated with probability $p_{\text{eff}}$ if $s_i^2(t)=0$ but the agent takes action $1$. And the true $s_i^2(t+1)$ remains activated with probability $1-p_2$.  

This model defines a simple form of propagating/spreading dynamics on a graph, with the agents' actions providing a mechanism to control the spread.  We decompose  the reward into two terms, one depending only on the state $s_i=(s_i^1,s_i^2)$ and the other depending only on the action, 
$ r_i(s_i,a_i) = r_i^s (s_i^1,s_i^2) + r_i^a(a_i)$, where the state reward $r_i^s(\cdot)$ and action reward $r_i^a(\cdot)$ are defined as
\begin{align*}
    r_i^s(s^a_i,s^b_i) = 
    \begin{cases}
    0  & \text{if } s_i^1 =1, s_i^2 = 0, \\
    1  & \text{otherwise}.
    \end{cases}
    \qquad
    r_i^a(a_i) =  
    \begin{cases}
    1  & \text{ if } a_i=0, \\
    1 - c  & \text{ if } a_i=1 .
    \end{cases}
\end{align*}
It is easy to verify that states $(s_i^1, s_i^2) = (0,0), (1,1), (0,1)$ have a high state reward $r_i^s$, and $(s_i^1, s_i^2) = (1,0)$ has a low reward (or incurs a penalty). This corresponds to an agent incurring a penalty if a protection action is not taken before the dynamics reaches the agent. Furthermore, the protection action itself incurs a cost $c$ in the action reward $r_i^a$. 

Intuitively, if an agent does not observe any of its $\kappa$-hop neighbors to have $s^1(t)=1$, it should take action $a(t)=0$ to get the largest action reward. However, if some of its neighboring agents have $s^1(t)=1$ and the cost $c$ of taking action $1$ is relatively small,  it should take action $1$ to increase $s^2$ in order to get a reward $r^s=1$ even if its $s^1$ changes to $1$ at some time. If the probability $p_{\text{eff}}$ that action $a(t)=1$ being effective is smaller, the agent should start taking action $a=1$ earlier, when its nearest agent has $s^1(t)=1$ is at a larger distance, to make sure that it has a high probability to have $s^2=1$ when $s^1$ propagates to its position. In this sense, the optimal policy should depend on not only the local state but the states of other agents as well, and we can expect a large performance difference between localized policy with small $\kappa$  and the global optimal policy. 

\paragraph{Parameter Settings}
In our experiments, we use a variation of $\beta$-hop Localized TD(0) which has a constant step size of $0.1$. We set $\beta=\kappa$ for simplicity.  We set the environment parameters to be $p_1=0.6$, $p_2=0.7$, $c=0.3$ and $p_{\text{eff}}=0.4$. We set the training parameters to be $\gamma=0.95$ and $\eta=0.05$. We train $\algname$ for $50$ outer loops and in the inner loop we perform policy update (Line \ref{algo:policy_improvement} of \Cref{alg:SVI}) for $p_{\max}=10$ steps. %
For each experiment, we repeat it for $10$ times and plot the median in a solid line and $25/75$ percent performance values in the shaded area.  %

\subsection{Experimental Results}

\begin{figure*}[t]
    \centering
    \subfigure[$\tau=0.02$]{\includegraphics[width=0.32\textwidth]{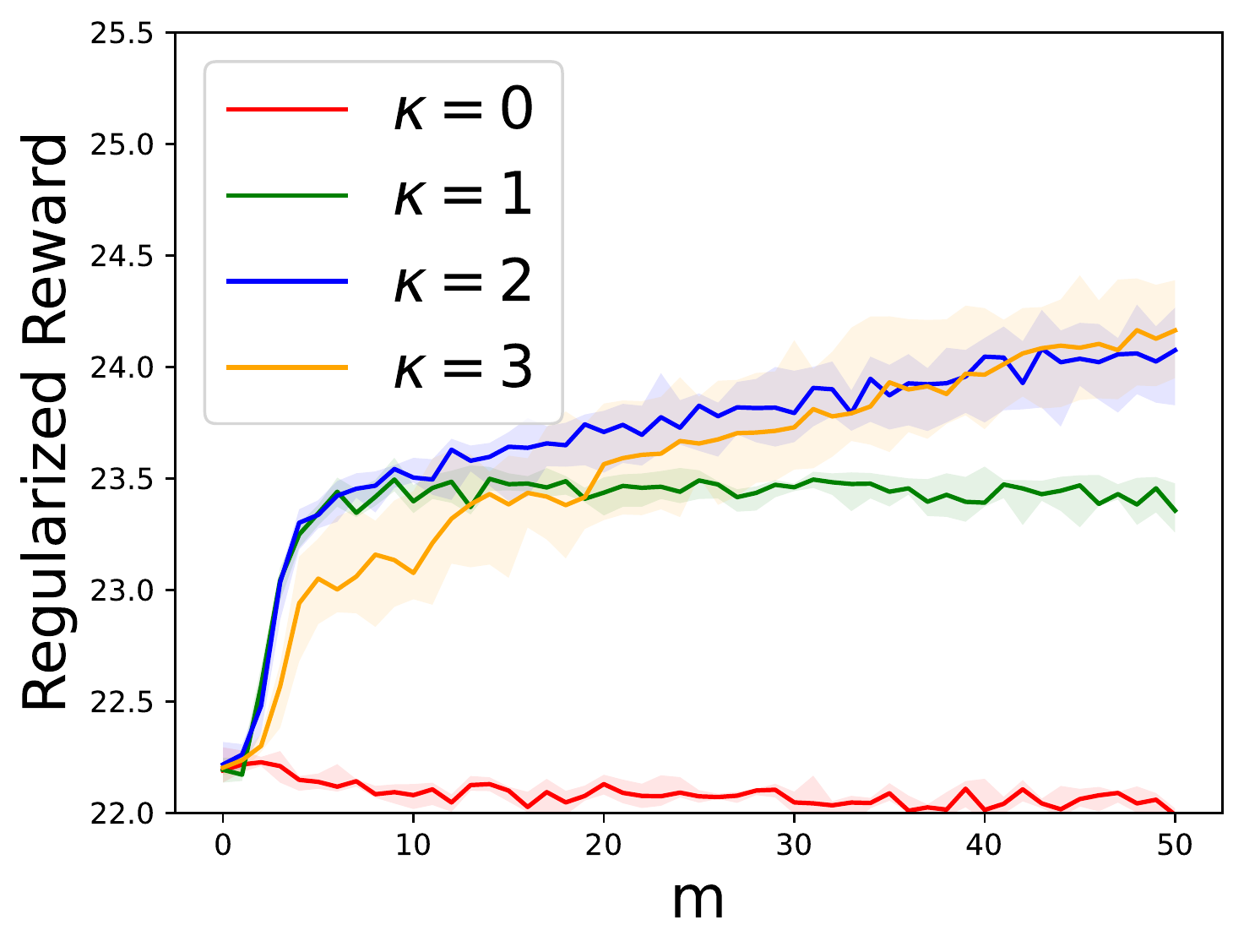}}
    \subfigure[$\tau=0.05$]{\includegraphics[width=0.32\textwidth]{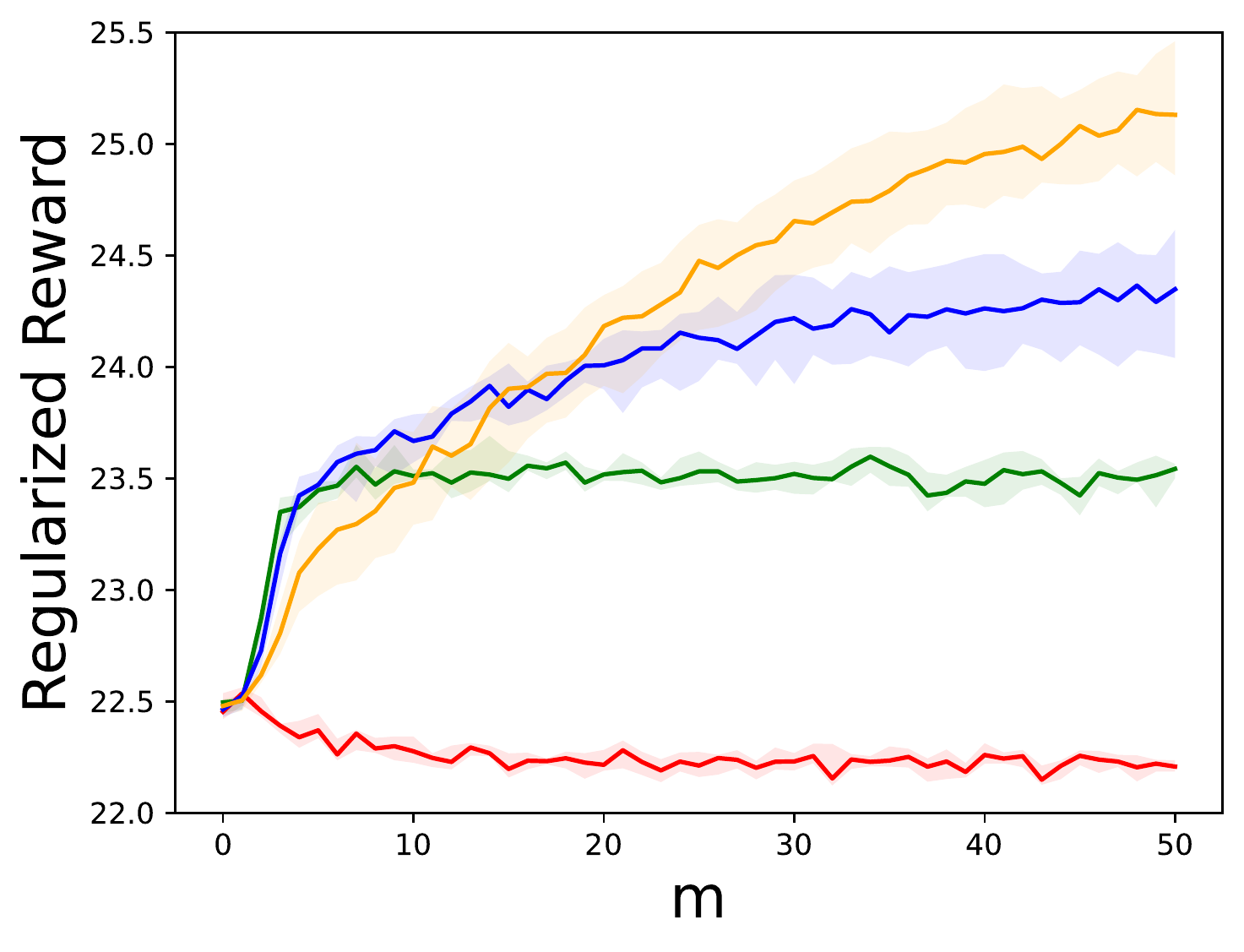}}
    \subfigure[$\tau=0.1$]{\includegraphics[width=0.32\textwidth]{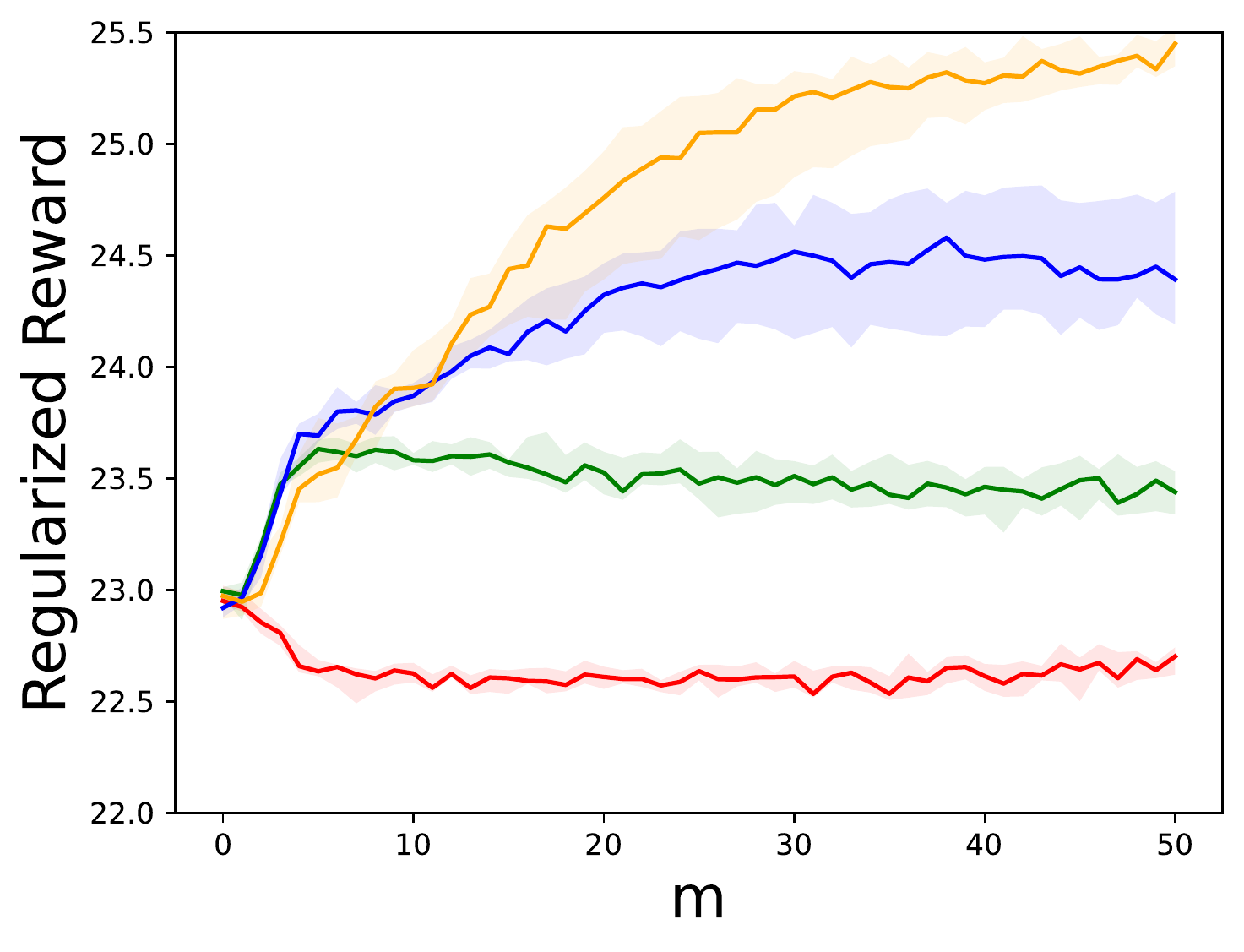}}
    \caption{Comparison of the performance of $\algname$ for different $\kappa$ under different levels of entropy regularization. %
    \label{fig:comparison_differ_tau}}
\end{figure*}
In \Cref{fig:comparison_differ_tau}, we compare the performance of $\algname$ with different choices of $\kappa$, which represents how much local information we used in the local policy learning on each agent. We also present different levels of entropy regularization in each subfigure of \Cref{fig:comparison_differ_tau} by setting $\tau=0.02,0.05$ and $0.1$. We set $n=8$ in the line graph. It can be seen that a larger $\kappa$ leads to a better total regularized reward but a lower convergence rate, which is consistent with our theoretical findings. %

\begin{figure*}[t]
    \centering
    \subfigure[$n=5$]{\includegraphics[width=0.32\textwidth]{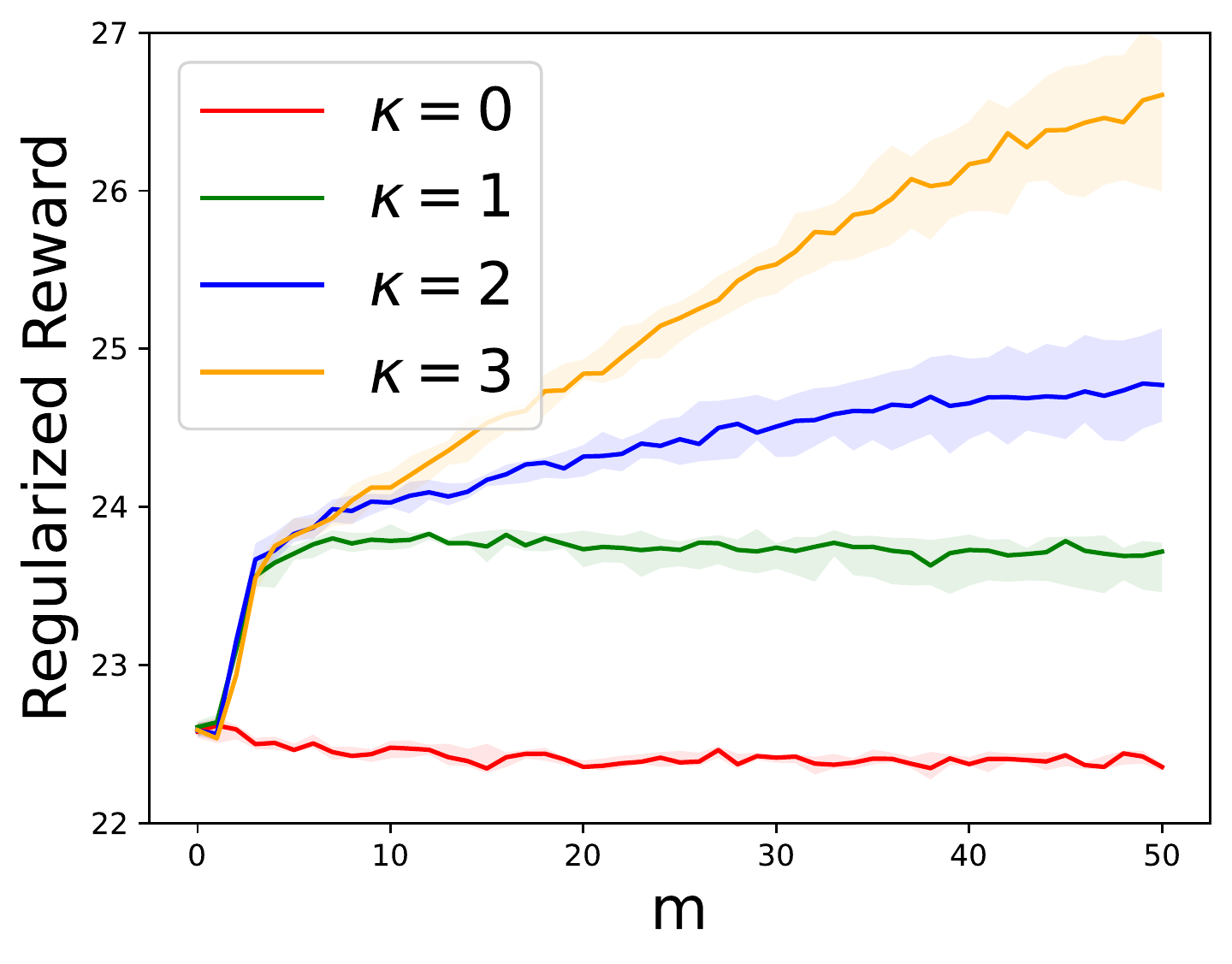}}
    \subfigure[$n=10$]{\includegraphics[width=0.32\textwidth]{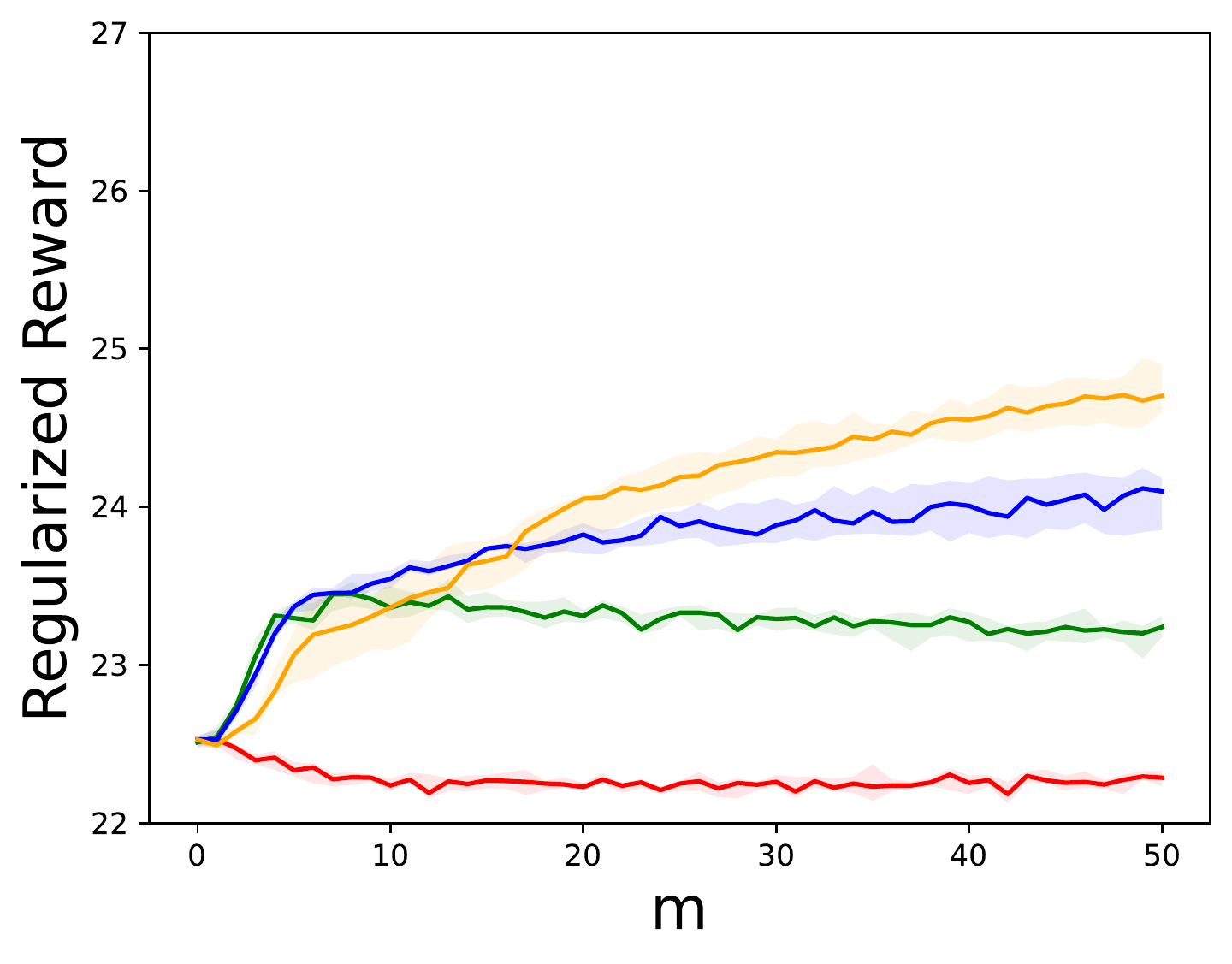}}
    \subfigure[$n=15$]{\includegraphics[width=0.32\textwidth]{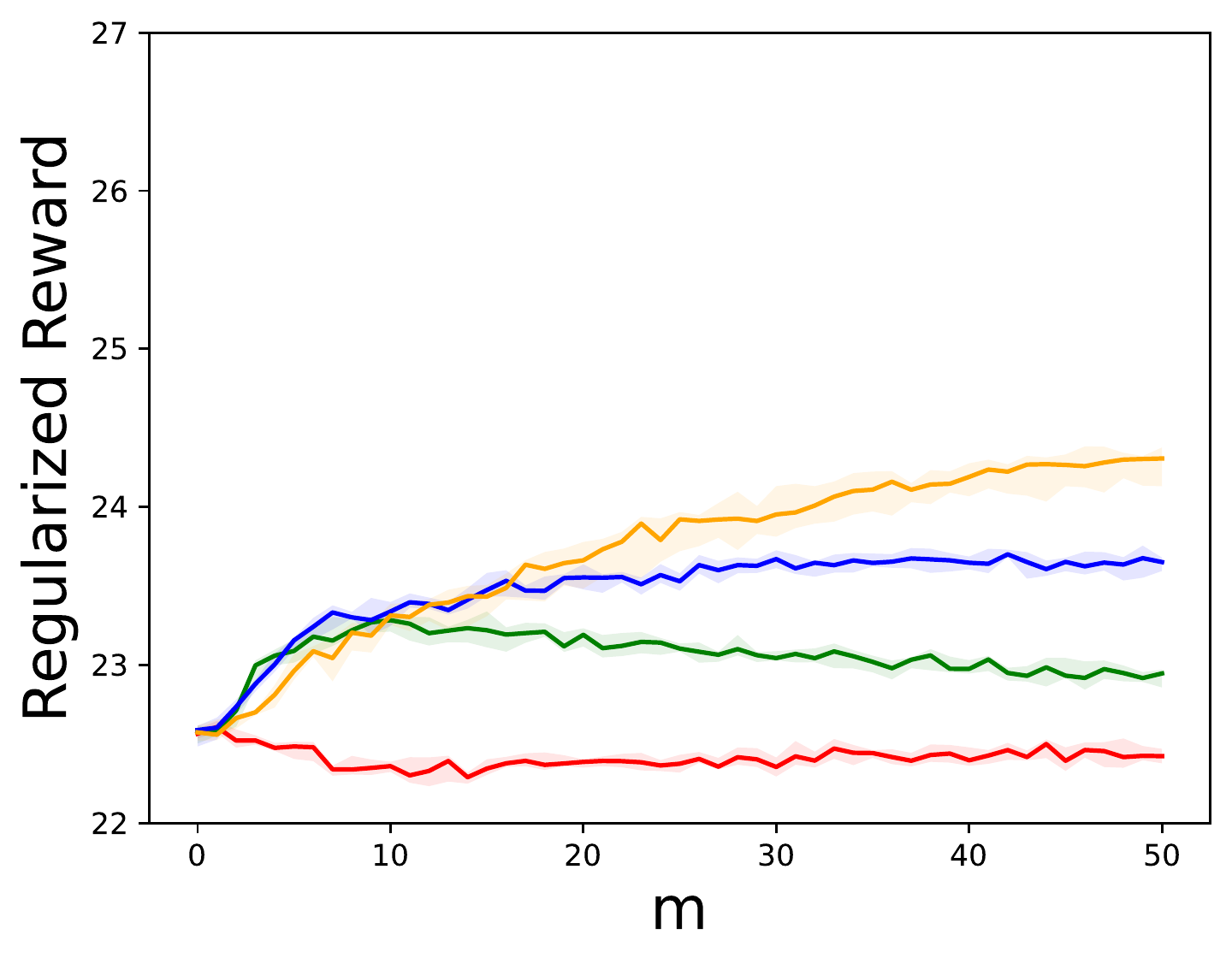}}
    \caption{Comparison of the performance of $\algname$ for different $\kappa$ in the settings with different network sizes. %
    \label{fig:comparison_differ_networksize}}
\end{figure*}

In \Cref{fig:comparison_differ_networksize}, we illustrate that our algorithm works for various network sizes by comparing the training curves among different network sizes $n=5, 10$ and $15$ while keeping $\tau$ as a constant. In each plot in \Cref{fig:comparison_differ_networksize}, larger $\kappa$ always leads to a better total regularized reward. Also, by comparing these three plots, we can see similar convergence rates for different network sizes. This shows that our algorithm has similar per-agent efficiency for networks of different scales.  

We note that some of our settings violate the bounds on $\tau$ and $\eta$ in our theorems, while $\algname$ still performs well.  Thus, in practice, one can expect our algorithm to be applicable to a larger problem class than suggested by our theoretical analysis, which could be conservative.

\section{Conclusion and Future Work}
This paper studies a cooperative MARL problem in networked systems. We provided the first theoretical guarantee on the performance gap between localized polices that make decisions only based on the information within a neighborhood of each agent and the optimal centralized policy. We achieved this by showing that a $\kappa$-hop localized policy where each agent chooses actions based on the states of its $\kappa$-hop neighbors is nearly globally optimal. Our analysis relies on a novel characterization of a class of spatial decaying policies. Further, we proposed a specific algorithm, Localized Policy Iteration ($\algname$), that learns a policy in this class. $\algname$ is communication-efficient and scales to large networked systems due to its localized implementation. We further provided a finite-sample analysis of $\algname$ showing that it converges to the globally optimal centralized policy and achieves $\varepsilon$-optimality using only local information of the $\kappa=\Theta(\text{poly}(1/\varepsilon))$-hop neighborhood  of each agent. It is worth noting that $\algname$ is the first localized algorithm that converges to the centralized globally optimal policy in the networked MARL setting. Finally, we conducted numerical experiments on a MARL problem where multiple agents on a graph cooperate together to control the spreading dynamics of a global process such as information leakage or disease transmission. %
There are many interesting questions motivated by this work. For example, it remains an open problem whether the polynomial dependence of the optimality gap on the neighborhood size $\kappa$ can be improved to an exponential decaying gap, which has been shown in the LQC setting by \citet{shin2022near}. It is also interesting to explore whether our analysis extends to the competitive MARL setting, where the objectives of agents are different from each other.

\appendix

\section{Notation}
The notations used in this paper are summarized in \Cref{tab:notation}.

\begin{table}[H]
    \centering
    \caption{Notations}
    \label{tab:notation}    
    \begin{tabular}{cp{12cm}}
    \toprule
     Symbol                                     & Definition \\
     \midrule
     $|\cS|$                                    & The cardinality of state space $\cS$. Similar notations apply to $\cA$, $\cS_i$, and $\cA_i$.\\ 
          $\actionmax$                               & The maximum cardinality for local action spaces, i.e., $\actionmax=\argmax_{i} |\cA_i|.$\\
     $[n]$                                      & Index set $\{1,2,\ldots,n\}$.\\
     $\beta,\kappa$.                            & The neighborhood sizes of the approximation for policy evaluation and policy improvement respectively in \Cref{alg:SVI}.\\
     $\cN_i$.                                   & The set including node $i$ and all its neighbors on the graph.\\
     $\cN_{i}^{\kappa}$                         & The set including node $i$ and all its $\kappa$-hop neighbors on the graph.\\
     $\cN_{-i}$                                 & The complement set of $\cN_i$, i.e., $\cN/\cN_i$. Similarly, $\cN_{-i}^{\kappa}=\cN/\nik$.\\
     $f(\kappa)$ & The size of the largest $\kappa$-hop neighborhood, $f(\kappa) = \sup_{i\in\cN}|\nik|$.  \\
     $s_{\nik}$ & The states of agents in $\nik$. Similarly for $s_{\nminusik}$ and other subscripts. \\
          $\ext{s_{\cJ}}$                            & The extension of a local state $s_\cJ$ that are supported on a set of indices $\cJ$ to a global state. See \Cref{def:ext_operator}.\\
     $\rho$ & The initial state distribution. \\
     $\tau$ & The entropy regularization parameter. \\
     $\Delta_{\nu,\mu,\sigma}$                  & The set of $(\nu,\mu)$-decay and $\sigma$-regular policy. See \Cref{def:mu_decay_policy} and \Cref{def:sigma_regular}.\\
          $\Delta_{\cA_i}$ & the probability simplex over the set $\cA_i$.\\
          $\Delta_{\cA_i|\cS}$ & The space of distributions over $\cA_i$ that can depend on elements in $\cS$. This is the space agent $i$'s policy lies in. \\
               $\Delta_{\mathrm{policy}}$& The space of all possible joint policies, cf. \Cref{subsec:mamdp}.\\ 
     $\zeta,\pi,\hat{\zeta},\hat{\pi}$ & Letters reserved for policies. Letter $\pi$ is used for policies in the inner-loop multiplicative weight updates. The hat indicates the policy is a $\kappa$-hop policy. \\
     $V^\zeta, Q^\zeta$ & Value and $Q$ functions for a given policy $\zeta$, cf. \eqref{eq:value_func} and \eqref{eq:q_func}. \\
     $V_i^\zeta, Q_i^\zeta $& Local value and $Q$ functions of agent $i$ for a given policy $\zeta$, cf. \eqref{eq:local_value_func} and \eqref{eq:local_q_func}.\\
          $\mathbf{ Q}$                              & Vector $(Q_1,\ldots,Q_n)$ that denotes the complete profile of all local $Q$ functions. \\
     $C$                                        & The matrix that characterizes the total variation of transition probabilities. See \eqref{def:C_mat_tv}.\\
     $Z^{\zeta}$                                & The interaction matrix of a policy $\zeta=(\zeta_1,\ldots,\zeta_n)$. See \Cref{def:mu_decay_policy}.\\
     $Z^{\bf Q}$                                & The interaction matrix of the local $Q$ functions ${\bf Q}=\{Q_i\}_i^n$. See \Cref{def:Z_Q}.\\
     $H^{ Q}$                                & The second-order interaction matrix of a $Q$ function. See \Cref{def:q_interaction}.\\
     $O(\cdot)$                                 & We write $f(n)=O(g(n))$ if there is a constant $c$ such that $f(n)\leq c g(n)$ for all large enough $n$.\\
     $\Omega(\cdot)$                            & We write $f(n)=\Omega(g(n))$ if there is a constant $c$ such that $f(n)\geq c g(n)$ for all large enough $n$.\\
    $\Theta(\cdot)$                            & We write $f(n)=\Theta(g(n))$ if both $f(n)=O(g(n))$ and $f(n) = \Omega(g(n))$. \\
    \bottomrule     
    \end{tabular}
\end{table}

\section{Proof of \Cref{thm:convergence_prototype}: Closure of Policy Class $\Delta_{\nu,\mu,\sigma}$ under Policy Improvement}\label{sec:proof_of_closure}
To prove the closure of policy class $\Delta_{\nu,\mu,\sigma}$ under policy improvement (i.e., \Cref{thm:convergence_prototype}), our first step is to show that for a policy $\zeta\in\Delta_{\nu,\mu,\sigma}$, its local $Q$ functions $\mathbf{Q}^\zeta = \{ Q_i^\zeta\}_{i\in\mathcal{N}}$ will also be $(\nu',\mu)$-decay for some $\nu'$ (\Cref{lem:policy-q}). Secondly, we show that if the local $Q$ functions are $(\nu',\mu)$-decay, conducting policy improvement with respect to it will lead to a $(\nu'',\mu)$ decay policy (\Cref{lem:q-policy}), with $\nu''\leq\nu$ (under the parameter settings of \Cref{thm:convergence_prototype}). Combining the two lemmas will lead to the closure property, i.e. part (a) and (b) of \Cref{thm:convergence_prototype}. The proofs of \Cref{lem:policy-q} and \Cref{lem:q-policy} can be found in \Cref{subsec:proof_lemma_closure}. 

We first present \Cref{lem:policy-q} below, showing decaying policies have decaying local $Q$ functions. 

\begin{lemma}\label{lem:policy-q}
For a policy $\zeta=(\zeta_1,\ldots,\zeta_n)\in \Delta_{\nu,\mu,\sigma}$, the interaction matrix of its corresponding local $Q$-functions $\mathbf{Q}^\zeta = \{ Q_i^\zeta\}_{i\in\mathcal{N}}$ is given by
\begin{align*}
    Z^{\mathbf{Q}^\zeta}  \leq \bar{r} I +\gamma  (\bar{r}+n\tau\sigma)\sum_{t=0}^\infty \gamma^t Z^\zeta (C+Z^\zeta)^t C,
\end{align*}
where $Z^\zeta$ is the interaction matrix for policy $\zeta$ and  $C=[C_{ij}]_{i,j=1,\ldots,n}$ is defined in \eqref{def:C_mat_tv}, and the ``$\leq$'' above is element-wise. Furthermore, if we assume $\forall i,\sum_{j\in \cN_i} 2^\mu C_{ij}\leq 1/2$ and $\nu \leq 1/2$, then $\mathbf{Q}^\zeta$ is $(\nu',\mu)$-decay with $\nu' = \bar{r} + \frac{\gamma(\bar{r} +n\tau\sigma)}{4(1-\gamma)}$. 
  
\end{lemma}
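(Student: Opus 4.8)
The plan is to establish the matrix inequality by setting up two coupled, entrywise-linear recursions — one expressing the spatial sensitivity of the $Q$-functions in terms of that of the value functions (``$Q$-to-$V$''), and one expressing the sensitivity of the value functions in terms of that of the $Q$-functions and of the policy (``$V$-to-$Q$'') — and then unrolling them into a Neumann series. Alongside $Z^{\mathbf{Q}^\zeta}$ from \Cref{def:Z_Q} I would introduce the value-sensitivity matrix $X_{i\ell} := \max_{s_{-\ell}}\max_{s_\ell,s_\ell'}|V_i^\zeta(s_\ell,s_{-\ell}) - V_i^\zeta(s_\ell',s_{-\ell})|$. For the $Q$-to-$V$ direction I would start from \eqref{eq:local_q_func}: perturbing $(s_j,a_j)$ changes the local reward $r_i$ by at most $\bar{r}\mathbf{1}[i=j]$ and changes the expected next value through the transition kernel. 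Using a per-factor maximal coupling of the product kernel $P(\cdot\mid s,a)=\prod_\ell P_\ell$, the $\ell$-th next-state coordinate disagrees with probability at most $C_{\ell j}$ (by the definition \eqref{def:C_mat_tv} of $C$), so a hybrid/telescoping argument over coordinates yields entrywise
\[
Z^{\mathbf{Q}^\zeta}\ \le\ \bar{r}\,I+\gamma\, X\,C .
\]

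For the $V$-to-$Q$ direction I would start from the local form of \eqref{eq:Vtau_Qtau_relation}, namely $V_i^\zeta(s)=\mathbb{E}_{a\sim\zeta(\cdot\mid s)}[Q_i^\zeta(s,a)-n\tau\log\zeta_i(a_i\mid s)]$, and bound the effect of perturbing $s_\ell$ by splitting it into the change of the integrand with the measure fixed and the change of the measure $\zeta(\cdot\mid s)=\prod_k\zeta_k(\cdot\mid s)$ with the integrand fixed. The measure change is controlled by the per-agent policy sensitivity $Z^\zeta_{k\ell}$ times the oscillation of the integrand, while the entropy contribution is handled using $\sigma$-regularity: since $-\log\zeta_i(\cdot\mid s)$ has oscillation at most $\sigma$ over $\cA_i$ (\Cref{def:sigma_regular}), the entropy term of a single coordinate changes by at most $n\tau\sigma\,Z^\zeta_{i\ell}$ — cleanest via a mean-value identity $H(p)-H(q)=-\sum_a\log\xi_a\,(p_a-q_a)$ with $\xi$ on the segment, noting $\xi$ inherits $\sigma$-regularity and that $\sum_a(p_a-q_a)=0$ lets one subtract the midpoint of $\log\xi$ (this avoids an uncontrolled KL/$\actionmax$ factor). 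Combined with the boundedness ($\bar{r}$) of the reward part, the reward-plus-entropy oscillation per step is $\bar{r}+n\tau\sigma$, and I would organize the recursion so that each additional time step contributes the joint one-step propagation operator $C+Z^\zeta$ (state spreads through transitions via $C$ and through the policy via $Z^\zeta$), with a leading $Z^\zeta$ from the policy dependence at the perturbed coordinate and a trailing $C$ from the final transition. Substituting the two recursions into one another and summing gives $Z^{\mathbf{Q}^\zeta}\le \bar{r} I+\gamma(\bar{r}+n\tau\sigma)\sum_{t\ge 0}\gamma^t Z^\zeta(C+Z^\zeta)^tC$.

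For the second claim I would invoke closure of $(\,\cdot\,,\mu)$-decay matrices (\Cref{def:matrix_decay}) under addition and multiplication: addition adds the $\nu$-constants, and multiplication multiplies them because $\dist(i,k)+1\le(\dist(i,j)+1)(\dist(j,k)+1)$ lets the weight factor split across a product. Under the hypotheses, $Z^\zeta$ is $(\nu,\mu)=(1/2,\mu)$-decay and $C$ is $(1/2,\mu)$-decay (for $j\in\cN_i$ one has $\dist(i,j)\le1$, so $\sum_j C_{ij}(\dist(i,j)+1)^\mu\le\sum_{j\in\cN_i}2^\mu C_{ij}\le1/2$, and likewise for columns); hence $C+Z^\zeta$ is $(1,\mu)$-decay, each power $(C+Z^\zeta)^t$ stays $(1,\mu)$-decay, and each summand $Z^\zeta(C+Z^\zeta)^tC$ is $(\tfrac12\cdot1\cdot\tfrac12,\mu)=(\tfrac14,\mu)$-decay. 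Summing $\sum_t\gamma^t=\tfrac1{1-\gamma}$ and adding the $(\bar{r},\mu)$-decay diagonal term $\bar{r} I$ yields exactly $\nu'=\bar{r}+\frac{\gamma(\bar{r}+n\tau\sigma)}{4(1-\gamma)}$. The main obstacle is the $V$-to-$Q$ bookkeeping: isolating the clean one-step operator $C+Z^\zeta$ (rather than a looser product) requires carefully tracking how a single state perturbation simultaneously enters $Q_i$ directly, reshapes the joint action distribution, and feeds the entropy regularizer, and it is precisely the $\sigma$-regularity bound on the log-policy oscillation that keeps the entropy contribution first-order in $Z^\zeta$.
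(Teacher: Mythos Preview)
Your proposal is correct and follows essentially the same route as the paper. The paper packages your ``$V$-to-$Q$'' step via a Markov-chain propagation lemma (bounding the induced state chain's one-step sensitivity matrix $C^z$ entrywise by $C+Z^\zeta$, then unrolling $V_i^\zeta$ as a discounted sum of expected reward-plus-entropy along that chain), and it handles the entropy Lipschitz bound by the same mean-value trick you describe and the $(\nu,\mu)$-decay closure by the same addition/multiplication rules.
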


We next present \Cref{lem:q-policy}, showing that given local $Q$ functions that are $(\nu',\mu)$-decay, the policy improvement step will result in a $(\nu'',\mu)$-decay policy for some $\nu''$ given below.

\begin{lemma}\label{lem:q-policy}
Consider $\Qtau$-functions that are $(\nu',\mu)$-decay. For $\forall s$, let $\zeta(\cdot|s)=(\zeta_1(\cdot|s),\ldots,\zeta_n(\cdot|s))$ be the solution to the following policy improvement step: 
\begin{align}\label{eq:entropy_regu_Q}
 \max_{\zeta_1(\cdot|s)\in \Delta_{\mathcal{A}_1}, \ldots, \zeta_n(\cdot|s)\in \Delta_{\mathcal{A}_n}} \mathbb{E}_{a_1\sim \zeta_1(\cdot|s),\ldots, a_n\sim \zeta_n(\cdot|s)}\bigg[  \Qtau(s,a) -  \sum_{i=1}^n \tau \log \zeta_i(a_i|s)  \bigg].
\end{align}
Then, when $\tau\geq 3\times 2^{\mu+1}\nu' \actionmax^2 e$, $\zeta$ is a $\sigma'=\frac{\nu'}{n\tau}$ regular and $(\nu'',\mu)$-decay policy, where $\nu''=\frac{2^{\mu+1}\nu' \actionmax^2e^{\frac{\nu'}{n\tau}}}{\tau-2^{\mu+1}\nu' \actionmax^2 e^{\frac{\nu'}{n\tau}}}$. 
\end{lemma}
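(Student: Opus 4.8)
\textbf{Overall approach.} The plan is to characterize the maximizer of the separable problem \eqref{eq:entropy_regu_Q} through its first-order optimality conditions, and then to propagate the $(\nu',\mu)$-decay of $\mathbf{Q}$ through these conditions to control both the regularity and the decay of $\zeta$. Fix the state $s$. Since the objective is continuous on the compact product of simplices $\prod_i \Delta_{\cA_i}$, a maximizer $\zeta(\cdot|s)$ exists. Holding $\zeta_{-i}(\cdot|s)$ fixed, the objective is strictly concave in $\zeta_i(\cdot|s)$ (the term $-\tau\sum_{a_i}\zeta_i\log\zeta_i$ is strictly concave and the reward term is linear), so the optimal $\zeta_i$ is the Gibbs distribution
\begin{align*}
\zeta_i(a_i|s)=\frac{\exp\big(\bar{Q}_i(a_i;s)/\tau\big)}{\sum_{a_i'}\exp\big(\bar{Q}_i(a_i';s)/\tau\big)},\qquad \bar{Q}_i(a_i;s):=\E_{a_{-i}\sim\zeta_{-i}(\cdot|s)}\big[\Qtau(s,a_i,a_{-i})\big].
\end{align*}
Thus the maximizer is a fixed point of the coupled softmax map above; I will show this map is a contraction under the hypothesis on $\tau$, which also gives uniqueness of the solution.

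\textbf{Regularity.} This is the easy claim and follows directly from the Gibbs form: $\log\big(\zeta_i(a_i|s)/\zeta_i(a_i'|s)\big)=\tfrac1\tau\big(\bar{Q}_i(a_i;s)-\bar{Q}_i(a_i';s)\big)$. Changing only agent $i$'s action perturbs each local $\Qtau_\ell$ by at most $Z^{\mathbf{Q}}_{\ell i}$, hence perturbs $\Qtau=\tfrac1n\sum_\ell \Qtau_\ell$ by at most $\tfrac1n\sum_\ell Z^{\mathbf{Q}}_{\ell i}\le \nu'/n$ by the column-sum bound of the $(\nu',\mu)$-decay property. Therefore the logit gap is at most $\nu'/(n\tau)$, so $\zeta$ is $\sigma'=\tfrac{\nu'}{n\tau}$-regular.

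\textbf{Decay.} This is the main difficulty, and is where the self-referential structure must be handled. To bound $Z^\zeta_{ij}=\max_{s_{-j}}\max_{s_j,s_j'}\TV(\zeta_i(\cdot|s_j,s_{-j}),\zeta_i(\cdot|s_j',s_{-j}))$, I will use a softmax Lipschitz estimate showing that the TV distance between two Gibbs policies is controlled by the range over $a_i$ of $\delta_i(a_i):=\bar{Q}_i(a_i;s_j,s_{-j})-\bar{Q}_i(a_i;s_j',s_{-j})$, i.e. by the \emph{second-order cross difference} $[\bar{Q}_i(a_i;s)-\bar{Q}_i(a_i';s)]-[\bar{Q}_i(a_i;s')-\bar{Q}_i(a_i';s')]$. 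It is essential to work with this cross difference in $(a_i,s_j)$ rather than a first-order sensitivity of $\Qtau$, since the latter would introduce a spurious factor of $n$ and would not decay in $\dist(i,j)$. I then split $\delta_i$ into a \emph{direct} part (the explicit dependence of $\Qtau$ on $s_j$, a second-order $\Qtau$-interaction that is bounded entrywise and carries a genuine decay factor in $\dist(i,j)$) and an \emph{indirect} part (the dependence of $\zeta_{-i}$ on $s_j$). For the indirect part I telescope the product $\prod_{\ell\ne i}\zeta_\ell(\cdot|s)$ over the agents and, for each agent $k$, use that $\zeta_k(\cdot|s)-\zeta_k(\cdot|s')$ has zero total mass to subtract a reference value of $\Qtau$, so that the $k$-th term is bounded by $2\,\TV(\zeta_k(\cdot|s),\zeta_k(\cdot|s'))\le 2Z^\zeta_{kj}$ times the sensitivity of the aggregate $\Qtau$ to $a_k$. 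Collecting terms gives a self-referential inequality of the schematic form $Z^\zeta_{ij}\le \tfrac1\tau\big(H_{ij}+c\,\actionmax^2 e^{\sigma'}\sum_k \mathrm{int}_{ik}\,Z^\zeta_{kj}\big)$, where the factors $\actionmax^2$ and $e^{\sigma'}$ come from the crude softmax estimates and the $\sigma'$-regular lower bound on the Gibbs weights.

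\textbf{Solving the fixed-point inequality and the main obstacle.} Finally I read the above as a statement about the best-response softmax map $F$: writing everything in the weighted decay norm $\norm{A}_w:=\max\{\sup_i\sum_j A_{ij}(\dist(i,j)+1)^\mu,\ \sup_j\sum_i A_{ij}(\dist(i,j)+1)^\mu\}$ and using the triangle inequality in the form $(\dist(i,j)+1)^\mu\le 2^\mu(\dist(i,k)+1)^\mu(\dist(k,j)+1)^\mu$ to recombine the two decay factors picked up along $i\to k\to j$, one application of $F$ obeys $\norm{Z^{F(\zeta)}}_w\le x\,(1+\norm{Z^\zeta}_w)$ with $x:=2^{\mu+1}\nu'\actionmax^2 e^{\sigma'}/\tau$. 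The hypothesis $\tau\ge 3\cdot 2^{\mu+1}\nu'\actionmax^2 e$ together with $\sigma'\le1$ forces $x\le \tfrac{e^{\sigma'}}{3e}\le \tfrac13<1$, so evaluating the inequality at the fixed point $Z^\zeta=Z^{F(\zeta)}$ and solving $\norm{Z^\zeta}_w\le x(1+\norm{Z^\zeta}_w)$ yields $\norm{Z^\zeta}_w\le \tfrac{x}{1-x}=\nu''$ with exactly the stated constant; the same contraction gives uniqueness. The main obstacle throughout is precisely this coupling: because $\bar{Q}_i$ depends on the other agents' optimized policies $\zeta_{-i}$, the bound on $Z^\zeta$ is implicit, and the delicate points are (i) isolating the second-order cross difference so that the direct term decays in $\dist(i,j)$, and (ii) controlling the geometric propagation of the indirect terms in the \emph{weighted} norm so that the polynomial decay class is preserved rather than merely a total-variation budget.
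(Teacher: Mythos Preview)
Your approach is essentially the same as the paper's, only framed as a direct fixed-point analysis rather than an iterative one. Both arguments (i) read off $\sigma'$-regularity from the Gibbs form and the column-sum bound $\tfrac1n\sum_\ell Z^{\mathbf Q}_{\ell i}\le \nu'/n$; (ii) introduce the second-order interaction matrix $H^Q$ (your ``cross difference''), shown to be $(2^{\mu+1}\nu',\mu)$-decay; (iii) combine a log-ratio-to-TV lemma (the paper's Lemma~\ref{lem:log_tv}) with the telescoping over agents $k\ne i$ to obtain the self-referential inequality $Z^{\zeta}\le \tfrac{A_{\max}^2 e^{\sigma'}}{\tau}(H^Q Z^\zeta+H^Q)$, linearizing $e^\epsilon-1\le 2\epsilon$ using the trivial bound $Z^\zeta_{kj}\le 1$; and (iv) solve it in the weighted norm to get $\nu''=x/(1-x)$ with $x=2^{\mu+1}\nu' A_{\max}^2 e^{\sigma'}/\tau$.

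The paper runs the multiplicative-weights update \eqref{eq:mul_weights} with $\eta=1/\tau$, which makes $(1-\eta\tau)=0$ so that $\pi^{p+1}=F(\pi^p)$ is precisely your softmax best-response map; it starts from the uniform policy, shows by induction that each $\pi^p$ is $\sigma'$-regular and $(\nu'',\mu)$-decay, and takes $p\to\infty$. The payoff of the iterative framing is that it simultaneously yields Lemma~\ref{lem:mul_weights_convergence} (convergence, hence uniqueness of the stationary point) and Corollary~\ref{cor:q_policy_pi_p_decay} (the intermediate iterates are $(\nu'',\mu)$-decay), both of which are reused later in the analysis of \algname. Your fixed-point framing is cleaner for this lemma in isolation, but you would still need the two-trajectory contraction computation of Lemma~\ref{lem:mul_weights_convergence} to justify uniqueness, since the inequality $\|Z^{F(\zeta)}\|_w\le x(1+\|Z^\zeta\|_w)$ controls the decay of $F(\zeta)$ but is not by itself a contraction on policies.
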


We now prove parts (a) (b) of \Cref{thm:convergence_prototype} by induction. Consider $\zeta^m\in\Delta_{\nu,\mu,\sigma}$. Under the parameter settings in \Cref{thm:convergence_prototype}, we have $2^\mu \sup_i \sum_{j\in\cN_i} C_{ij}\leq 1/2$, and $\nu=1/2$. As a result, we can apply \Cref{lem:policy-q} to $\zeta^m$ and get $\mathbf{Q}^{\zeta^m}$ is $(\nu',\mu)$ decay, where we have utilized the easy to check fact that $\bar{r} + \frac{\gamma(\bar{r} +n\tau\sigma)}{4(1-\gamma)} = \nu'$ (using $\sigma = \frac{\nu'}{n\tau}$). Then, notice the parameter settings in \Cref{thm:convergence_prototype} leads to $\tau\geq 2^\mu 6 \frac{4-3\gamma}{4-5\gamma}\bar{r}\actionmax^2 e \geq 3 \times 2^{\mu+1}\nu' \actionmax^2 e^{\frac{\nu'}{n\tau}}  $. Therefore, we can now apply \Cref{lem:q-policy} to $Q^{\zeta^m}$ to show that $\zeta^{m+1}$ is $\sigma$ regular and $(\nu'',\mu)$ decay with $\nu''\leq 1/2 = \nu$. As a result, $\zeta^{m+1}\in\Delta_{\nu,\mu,\sigma}$ and the induction is finished. 

This concludes the proof for part of (a) and (b) of \Cref{thm:convergence_prototype}. For part (c), note that
\begin{align*}
0\leq   V^* - V^{{\zeta}^{m+1}}   &=  V^* - \mathcal{T} V^{\zeta^{m}} + \mathcal{T} V^{\zeta^{m}}- V^{\zeta^{m+1}}\leq \Vert \mathcal{T} V^{\zeta^{m}} - V^* \Vert_\infty \mathbf{1}.
\end{align*}
Taking the infinity norm, we get
\begin{align*}
\Vert V^* - V^{\zeta^{m+1}}\Vert_\infty&\leq \Vert \mathcal{T} V^{\zeta^{m}} - V^* \Vert_\infty  \\
&\leq \Vert   \mathcal{T} V^{\zeta^{m}} - \mathcal{T} V^* \Vert_\infty \\
&\leq \gamma \Vert V^{\zeta^{m}} - \Vtau^*\Vert_\infty \\
&\leq \gamma^{m+1} \Vert V^{\zeta^0} - V^*\Vert_\infty ,
\end{align*}
where the third inequality is due to \Cref{prop:exist_optimal_policy_entropy}.

\section{Proof of Lemmas in Appendix \ref{sec:proof_of_closure} }\label{subsec:proof_lemma_closure}
\subsection{Proof of \Cref{lem:policy-q}}
Before we present the proof of \Cref{lem:policy-q}, we lay out some useful lemmas that will be frequently used in our analysis.

Firstly, we use the following lemma regarding the summation and product of decaying matrices, the proof of which can be found in \Cref{subsec:decay_matrix_proof}. 
\begin{lemma}\label{lem:decay_matrix}
Let $A\in\mathbb{R}^{n\times n}$ be a $(\nu,\mu)$-decay matrix and $A'\in\mathbb{R}^{n\times n}$ be a $(\nu',\mu)$-decay matrix. Then we have the following results.
\begin{itemize}
    \item [(a)] $ c A+c' A'$ is a $(c\nu + c'\nu',\mu)$-decay matrix, where $c,c'\geq 0$ are arbitrary constants.
    \item [(b)] $AA'$ is a $(\nu\nu',\mu)$-decay matrix. 
\end{itemize}
\end{lemma}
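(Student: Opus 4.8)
The plan is to verify the two defining conditions of a $(\cdot,\mu)$-decay matrix from \Cref{def:matrix_decay} --- entrywise non-negativity together with the weighted row- and column-sum bounds --- for each of $cA+c'A'$ and $AA'$. Part (a) is an immediate consequence of linearity. Since $A$ and $A'$ have non-negative entries and $c,c'\geq 0$, the matrix $cA+c'A'$ is entrywise non-negative, and for the weighted row sums I would simply split
\begin{align*}
\sum_{j}(cA_{ij}+c'A'_{ij})(\dist(i,j)+1)^\mu &= c\sum_j A_{ij}(\dist(i,j)+1)^\mu + c'\sum_j A'_{ij}(\dist(i,j)+1)^\mu\\
&\leq c\nu + c'\nu',
\end{align*}
then argue identically for the column sums and take the supremum over $i$ (respectively $j$).

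Part (b) is the substantive step, and its key ingredient is a submultiplicative bound on the weights that comes from the triangle inequality for the graph distance. Specifically, I would first establish that for any triple $i,j,k$,
\begin{align*}
\dist(i,k)+1 \leq \dist(i,j)+\dist(j,k)+1 \leq (\dist(i,j)+1)(\dist(j,k)+1),
\end{align*}
where the first inequality is the triangle inequality and the second expands the product and discards the non-negative cross term $\dist(i,j)\dist(j,k)$. Since $\mu>0$, raising both sides to the power $\mu$ preserves the inequality, yielding $(\dist(i,k)+1)^\mu \leq (\dist(i,j)+1)^\mu(\dist(j,k)+1)^\mu$.

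With this in hand, and writing $(AA')_{ik}=\sum_j A_{ij}A'_{jk}\geq 0$, I would bound the weighted row sum by inserting the submultiplicative inequality and then factoring the double sum:
\begin{align*}
\sum_k (AA')_{ik}(\dist(i,k)+1)^\mu &\leq \sum_j A_{ij}(\dist(i,j)+1)^\mu \sum_k A'_{jk}(\dist(j,k)+1)^\mu\\
&\leq \nu' \sum_j A_{ij}(\dist(i,j)+1)^\mu \leq \nu\nu',
\end{align*}
applying the row-sum bound for $A'$ first and the row-sum bound for $A$ second. The column-sum direction is symmetric: the same factorization isolates $\sum_i A_{ij}(\dist(i,j)+1)^\mu \leq \nu$ (the column bound for $A$) and $\sum_j A'_{jk}(\dist(j,k)+1)^\mu \leq \nu'$ (the column bound for $A'$), again giving $\nu\nu'$. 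Taking suprema over $i$ and $k$ and combining the two directions completes the argument.

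I expect the only genuine obstacle to be identifying the correct multiplicative distance inequality: the naive bound $(\dist(i,k)+1)^\mu\leq(\dist(i,j)+\dist(j,k)+1)^\mu$ coming directly from the triangle inequality does not factor across the two sums, whereas the product form $(\dist(i,j)+1)^\mu(\dist(j,k)+1)^\mu$ does, which is precisely what makes the convolution $\sum_j A_{ij}A'_{jk}$ separate into a product of the two decay bounds. Everything after that reduces to rearranging non-negative sums.
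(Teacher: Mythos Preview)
Your proposal is correct and matches the paper's proof essentially line for line: both parts use exactly the linearity split for (a) and the same submultiplicative distance inequality $(\dist(i,k)+1)\leq(\dist(i,j)+1)(\dist(j,k)+1)$ followed by the same factorization of the double sum for (b). The only cosmetic difference is that the paper notes the row/column symmetry once at the outset rather than repeating the column argument.
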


Next, we will use the Lipschitz property of the entropy function, the proof of which is deferred to \Cref{subsec:entropy_lipschitz_proof}. 
\begin{lemma}\label{lem:entropy_lipschitz}
Let $d,d'$ be two $\sigma$-regular distributions over the same set $\{1,2,\cdots,M\}$. Then we have $| H(d) - H(d')| \leq \sigma \TV(d,d')$, where $H(\cdot)$ is the entropy function. 
\end{lemma}

Our proof also uses the following two lemmas, whose proofs can be found in \cite{qu2020scalable}. 
\begin{lemma}[Lemma 5 in \citet{qu2020scalable}]\label{lem:diff_dist_f}
Let $f(\cdot)$ be a function that maps $\prod_{i\in V}\mathcal{Z}_i$ to $\mathbb{R}$, where $\mathcal{Z}_i$ is a finite set for all $i$.
Let $P_i$ and $\tilde{P_i}$  be two distributions on $\mathcal{Z}_i$, and further let $P$ (and $\tilde{P}$) be the product distribution of $P_i$ (and $\tilde{P}_i$). Then we have
\[\big|\E_{z\sim P} f(z) - \E_{z\sim \tilde{P}} f(z) \big| \leq \sum_{i\in V} \TV(P_i, \tilde{P}_i) \delta_i (f) ,\]
 	where $\delta_i(f)= \sup_{z_{V/i}} \sup_{z_i, z_i'} |f(z_i,z_{V/i}  ) -f(z_i',z_{V/i}  )  | $.
\end{lemma}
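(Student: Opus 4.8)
The plan is to prove this inequality by a \emph{hybrid} (telescoping) argument that replaces the coordinates of $P$ by those of $\tilde{P}$ one index at a time, thereby reducing a comparison of two multivariate product measures to a sequence of single-coordinate comparisons. Concretely, enumerate $V=\{1,\ldots,m\}$ and, for each $k\in\{0,1,\ldots,m\}$, define the hybrid product distribution
\begin{align*}
Q^{(k)} = \tilde{P}_1\times\cdots\times\tilde{P}_k\times P_{k+1}\times\cdots\times P_m,
\end{align*}
so that $Q^{(0)}=P$ and $Q^{(m)}=\tilde{P}$. The telescoping identity then gives
\begin{align*}
\E_{z\sim P} f(z) - \E_{z\sim \tilde{P}} f(z) = \sum_{k=1}^m \big(\E_{z\sim Q^{(k-1)}} f(z) - \E_{z\sim Q^{(k)}} f(z)\big),
\end{align*}
so by the triangle inequality it suffices to bound each summand by $\TV(P_k,\tilde{P}_k)\,\delta_k(f)$.

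For a fixed $k$, the measures $Q^{(k-1)}$ and $Q^{(k)}$ are identical products except in coordinate $k$, where the former uses $P_k$ and the latter uses $\tilde{P}_k$. Factoring out the shared marginal on the remaining coordinates $z_{V/k}$, the $k$-th summand equals
\begin{align*}
\E_{z_{V/k}}\Big[\E_{z_k\sim P_k} f(z_k,z_{V/k}) - \E_{z_k\sim \tilde{P}_k} f(z_k,z_{V/k})\Big],
\end{align*}
so it remains to control, for each fixed $z_{V/k}$, the single-variable difference with $g(\cdot):=f(\cdot,z_{V/k})$.

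The key step is the single-coordinate bound $|\E_{z_k\sim P_k} g(z_k) - \E_{z_k\sim \tilde{P}_k} g(z_k)|\leq \TV(P_k,\tilde{P}_k)\cdot\mathrm{osc}(g)$, where $\mathrm{osc}(g)=\sup_{z_k,z_k'}|g(z_k)-g(z_k')|$. I would obtain this via the coupling characterization of total variation: there exists a coupling $(X,Y)$ with $X\sim P_k$, $Y\sim\tilde{P}_k$, and $\PR(X\neq Y)=\TV(P_k,\tilde{P}_k)$; then
\begin{align*}
|\E g(X)-\E g(Y)| \leq \E|g(X)-g(Y)| = \E\big[|g(X)-g(Y)|\mathbf{1}_{\{X\neq Y\}}\big] \leq \mathrm{osc}(g)\cdot\PR(X\neq Y).
\end{align*}
Since $\mathrm{osc}(g)=\sup_{z_k,z_k'}|f(z_k,z_{V/k})-f(z_k',z_{V/k})|\leq \delta_k(f)$ uniformly in $z_{V/k}$, this pointwise bound $\TV(P_k,\tilde{P}_k)\,\delta_k(f)$ survives the outer expectation over $z_{V/k}$. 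Substituting into the telescoped sum and relabeling $k$ as $i$ yields the claim.

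The only real subtlety, and hence the main point to get right, is producing the factor $\TV$ (i.e., \emph{half} the $\ell_1$ distance) rather than the full $\ell_1$ distance in the single-coordinate step. The naive bound $|\sum_{z_k}(P_k-\tilde{P}_k)(z_k)g(z_k)|\leq \|g\|_\infty\sum_{z_k}|P_k-\tilde{P}_k|(z_k)$ loses a factor of two and replaces the oscillation by the range of $g$. The coupling argument above resolves both issues at once; alternatively one can exploit that $\sum_{z_k}(P_k-\tilde{P}_k)(z_k)=0$ to replace $g$ by $g-c$ for a freely chosen constant $c$ and then split the sum over the sets where $P_k-\tilde{P}_k$ is positive and negative, each of which has mass exactly $\TV(P_k,\tilde{P}_k)$. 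Everything else in the proof is routine telescoping and the triangle inequality.
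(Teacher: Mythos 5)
Your proof is correct. The paper does not actually reprove this lemma---it defers to Lemma 5 of \citet{qu2020scalable}---and your argument (telescoping through hybrid product measures that swap one coordinate at a time, then bounding each single-coordinate difference by $\TV(P_k,\tilde{P}_k)\cdot\mathrm{osc}$ via a maximal coupling, correctly accounting for the half-$\ell_1$ convention of $\TV$ used throughout the paper) is exactly the standard route taken in that cited proof, so nothing further is needed.
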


\begin{lemma}[Lemma 3,4 in \citet{qu2020scalable}]\label{lem:markov_chain_exp_decay}
Consider a Markov Chain with state $z=(z_1,\ldots,z_n)\in \mathcal{Z}=\mathcal{Z}_1\times\cdots\times \mathcal{Z}_n$, where each $\mathcal{Z}_i$ is some finite set. Suppose its transition probability factorizes as
\[ P(z(t+1)|z(t)) = \prod_{i=1}^n P_i(z_i(t+1)|z(t))\]
and further, define $C^z$ to be the following matrix,
\[C_{ij}^z =   
\sup_{z_{-j}}\sup_{z_
j,z_j'} \TV(  P_i(\cdot|z_j, z_{-j}) , P_i(\cdot| z_j', z_{-j})   )  \]
Then, fixing a pair of $(i,j)$ for any $z = (z_j,z_{-j})$, $z'=(z_j',z_{-j})$, the following holds. 
\begin{itemize}
    \item[(a)] We have, \[ \TV(\pi_{t,i},\pi'_{t,i}) \leq  [(C^z)^t  e_{j}]_i,\]
where $\pi_{t,i}$ is the distribution of $z_i(t)$ given $z(0)=z$, and $\pi'_{t,i}$ is the distribution of $z_i(t)$ given $z(0) = z'$, and $e_{j}$ is the indicator vector in which the $j$'th entry is $1$ and all other entries are $0$. 
    \item[(b)] For any function $f:\mathbb{R}^{\mathcal{Z}}\rightarrow\mathbb{R}$, we have
    \[ \left|\mathbb{E}_{z\sim \pi_t} f(z) - \mathbb{E}_{z\sim \pi_t'} f(z)\right| \leq \sum_{\ell=1}^n [(C^z)^t e_j]_\ell \delta_\ell(f) \]
    where $\delta_\ell(f) =\sup_{z_{-\ell}} \sup_{z_\ell,z_{\ell}'} |f(z_\ell,z_{-\ell}) - f(z_{\ell}',z_{-\ell})|$ and $\pi_{t}$ is the distribution of $z(t)$ given $z(0)=z$, and $\pi'_{t}$ is the distribution of $z(t)$ given $z(0) = z'$
\end{itemize}
\end{lemma}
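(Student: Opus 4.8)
The plan is to prove part (b) first by induction on $t$, and then obtain part (a) as the special case in which the test function depends on a single coordinate. The central object is the ``value function'' $g_t(z) := \mathbb{E}[f(z(t)) \mid z(0) = z]$, which by the Markov property (Chapman--Kolmogorov) satisfies the one-step recursion $g_t(z) = \mathbb{E}_{w \sim P(\cdot\mid z)}[g_{t-1}(w)]$ with $g_0 = f$. Since the two expectations appearing in the statement are precisely $g_t(z)$ and $g_t(z')$, part (b) reduces to bounding $|g_t(z) - g_t(z')|$ for initial states $z,z'$ that differ only in coordinate $j$.

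The first and only substantive step is to establish a linear recursion for the coordinatewise sensitivity vector $d_t := (\delta_1(g_t), \ldots, \delta_n(g_t))^\top$, where $\delta_\ell$ is defined exactly as in the statement. Fixing any $z,z'$ that agree off coordinate $j$, I would use that the transition factorizes, so $P(\cdot\mid z) = \prod_k P_k(\cdot\mid z)$ and $P(\cdot\mid z')$ are both product distributions; applying \Cref{lem:diff_dist_f} with the test function $g_{t-1}$ gives $|g_t(z) - g_t(z')| \le \sum_k \TV(P_k(\cdot\mid z), P_k(\cdot\mid z'))\,\delta_k(g_{t-1})$. Because $z$ and $z'$ differ only at coordinate $j$, the definition of $C^z$ yields $\TV(P_k(\cdot\mid z), P_k(\cdot\mid z')) \le C^z_{kj}$, so $|g_t(z) - g_t(z')| \le \sum_k C^z_{kj}\,\delta_k(g_{t-1})$. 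Taking the supremum over all such $z,z'$ (the right-hand side being independent of the specific pair) gives $\delta_j(g_t) \le \sum_k C^z_{kj}\,\delta_k(g_{t-1})$, i.e. $d_t \le (C^z)^\top d_{t-1}$ entrywise.

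Next, since $(C^z)^\top$ has nonnegative entries, this monotone recursion iterates cleanly to $d_t \le ((C^z)^\top)^t d_0 = ((C^z)^t)^\top d_0$, and $d_0 = (\delta_1(f),\ldots,\delta_n(f))^\top$ because $g_0 = f$. Reading off the $j$-th coordinate, $\delta_j(g_t) \le \sum_\ell ((C^z)^t)_{\ell j}\,\delta_\ell(f) = \sum_\ell [(C^z)^t e_j]_\ell\,\delta_\ell(f)$, and since $|g_t(z) - g_t(z')| \le \delta_j(g_t)$ whenever $z,z'$ differ only at coordinate $j$, part (b) follows at once.

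Finally I would deduce part (a) by specializing the test function. For any $A \subseteq \mathcal{Z}_i$, take $f$ to be the indicator of the event $\{z_i \in A\}$, which depends only on coordinate $i$; then $\delta_\ell(f) = 0$ for $\ell \ne i$ and $\delta_i(f) \le 1$, so part (b) gives $|\pi_{t,i}(A) - \pi'_{t,i}(A)| \le [(C^z)^t e_j]_i$, and taking the supremum over $A$ recovers $\TV(\pi_{t,i}, \pi'_{t,i}) \le [(C^z)^t e_j]_i$. The only genuinely nontrivial ingredient is the one-step perturbation bound obtained from the product structure together with \Cref{lem:diff_dist_f}; everything else is bookkeeping on a nonnegative-matrix recursion. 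The point to handle with care is that the recursion must be propagated on the entire sensitivity vector $d_t$ rather than on its $j$-th entry alone, since a perturbation at coordinate $j$ at time $t$ is traced back through contributions of all coordinates at the preceding step, which is exactly what produces the matrix power $(C^z)^t$.
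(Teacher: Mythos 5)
Your proof is correct. Note that the paper itself gives no argument for \Cref{lem:markov_chain_exp_decay}: it imports the statement wholesale from Lemmas 3--4 of \citet{qu2020scalable}, so the only comparison available is with that cited source, which argues in the opposite direction --- it proves the total-variation bound (a) first, by a forward induction (coupling the two chains coordinatewise and propagating the event $\{z_\ell(t)\neq z_\ell'(t)\}$ through the factorized kernel), and then builds the expectation bound (b) on top of it. You instead work in the dual, function-space picture: you prove (b) directly via the backward recursion $g_t(z)=\mathbb{E}_{w\sim P(\cdot\mid z)}[g_{t-1}(w)]$ on the value function, show the sensitivity vector $d_t=(\delta_\ell(g_t))_\ell$ satisfies $d_t\leq (C^z)^\top d_{t-1}$ entrywise by one application of \Cref{lem:diff_dist_f} to the product kernel, and recover (a) by specializing $f$ to indicators $\mathbf{1}\{z_i\in A\}$ and taking the supremum over $A$. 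Every step checks out: the index bookkeeping $[(C^z)^t e_j]_\ell=((C^z)^t)_{\ell j}$ is right, the entrywise iteration of the recursion is legitimate because $(C^z)^\top$ is nonnegative and so are the $d_t$, and the finite state space removes any integrability concerns. Your route buys a shorter, self-contained argument in which (b) is primary and (a) is a one-line corollary, and it makes transparent that the matrix power arises from composing one-step Dobrushin-type influence bounds on the Markov semigroup; the coupling route of the original, by contrast, yields the probabilistic interpretation $\Pr(z_i(t)\neq z_i'(t))\leq [(C^z)^t e_j]_i$, which is slightly stronger in spirit but is not needed anywhere in this paper. Your closing remark is also the right thing to flag: propagating only the $j$-th entry $\delta_j(g_t)$ would fail, since the perturbation spreads to all coordinates after one step, and tracking the full vector $d_t$ is exactly what generates $(C^z)^t$.
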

We start our proof by showing the following \Cref{lem:cij_mupolicy}, where we treat the Markov Chain in \Cref{lem:markov_chain_exp_decay} as the Markov Chain induced by policy $\zeta$ and bound the $C^z$ matrix. The proof of \Cref{lem:cij_mupolicy} is deferred to \Cref{subsec:cij_mupolicy_proof}. 

\begin{lemma}\label{lem:cij_mupolicy}
In the settings of \Cref{lem:markov_chain_exp_decay}, we set the Markov Chain as the induced Markov Chain of our MDP when using policy $\zeta$, and we treat $z_i = s_i$ and $z=s$. The transition probabilities of this induced Markov Chain is given by,
\[ P(z(t+1)|z(t)) = \prod_{i=1}^n \underbrace{\sum_{a_i(t)\in \mathcal{A}_i}\big[ P_i(s_i(t+1)|s_{\cN_i}(t),a_i(t)) \zeta_i(a_i(t)| s(t))\big]}_{:= P_i^\zeta(s_i(t+1)|s(t))}  \]
the resulting Markov chain's interaction strength parameter $C^z = [C_{ij}^z]_{i,j=1,\ldots,n}$ (as defined in \Cref{lem:markov_chain_exp_decay}) satisfies
\begin{align*}
    C_{ij}^z =  \sup_{s_{-j}}\sup_{s_j,s_j'} \TV(  P_i^\zeta(\cdot|s_j, s_{-j}) , P_i^\zeta(\cdot| s_j', s_{-j})   ) \leq Z^\zeta_{ij} + C_{ij},
\end{align*}
where $[Z^\zeta_{ij}]_{i,j=1,\ldots,n}$ is the interaction matrix of policy $\zeta$ (cf. \Cref{def:mu_decay_policy}), and $C=[C_{ij}]_{i,j=1,\ldots,n}$ is defined in \eqref{def:C_mat_tv}. 
\end{lemma}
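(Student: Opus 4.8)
The plan is to fix a pair $(i,j)$ together with a configuration $s_{-j}$ of all coordinates except $j$ and two values $s_j,s_j'$, write $s=(s_j,s_{-j})$ and $\tilde s=(s_j',s_{-j})$, and bound $\TV(P_i^\zeta(\cdot|s),P_i^\zeta(\cdot|\tilde s))$ by $Z_{ij}^\zeta+C_{ij}$ uniformly; taking the supremum over $s_j,s_j',s_{-j}$ then yields the claim. The displayed equality $C_{ij}^z=\sup_{s_{-j}}\sup_{s_j,s_j'}\TV(P_i^\zeta(\cdot|s_j,s_{-j}),P_i^\zeta(\cdot|s_j',s_{-j}))$ is just the definition of the interaction matrix from \Cref{lem:markov_chain_exp_decay} specialized to $z_i=s_i$, and the factorized form $P^\zeta(\cdot|s)=\prod_i P_i^\zeta(\cdot|s)$ of the induced kernel follows by summing the product transition \eqref{eq:transition_factor} against the product policy $\zeta(a|s)=\prod_i\zeta_i(a_i|s)$ and exchanging the sum over $a=(a_1,\dots,a_n)$ with the products over $i$.

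The key idea is to separate the two sources of dependence of $P_i^\zeta(\cdot|s)=\sum_{a_i}P_i(\cdot|s_{\cN_i},a_i)\zeta_i(a_i|s)$ on $s_j$: the dependence through the local policy $\zeta_i$ and the dependence through the local transition $P_i$. To this end I introduce the hybrid distribution $M(\cdot)=\sum_{a_i}P_i(\cdot|s_{\cN_i},a_i)\zeta_i(a_i|\tilde s)$, which uses the transition evaluated at $s$ but the policy weights evaluated at $\tilde s$, and apply the triangle inequality $\TV(P_i^\zeta(\cdot|s),P_i^\zeta(\cdot|\tilde s))\le \TV(P_i^\zeta(\cdot|s),M)+\TV(M,P_i^\zeta(\cdot|\tilde s))$.

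For the first term, $P_i^\zeta(\cdot|s)$ and $M$ are two mixtures of the same component distributions $\{P_i(\cdot|s_{\cN_i},a_i)\}_{a_i}$ with mixing weights $\zeta_i(\cdot|s)$ and $\zeta_i(\cdot|\tilde s)$, so a one-line computation (pulling the absolute value inside the sum over $a_i$ and using that each component is a probability vector) gives $\TV(P_i^\zeta(\cdot|s),M)\le \TV(\zeta_i(\cdot|s),\zeta_i(\cdot|\tilde s))$, which is at most $Z_{ij}^\zeta$ by \Cref{def:mu_decay_policy}. For the second term, both mixtures use the same weights $\zeta_i(\cdot|\tilde s)$, so by joint convexity of total variation $\TV(M,P_i^\zeta(\cdot|\tilde s))\le \sum_{a_i}\zeta_i(a_i|\tilde s)\,\TV(P_i(\cdot|s_{\cN_i},a_i),P_i(\cdot|\tilde s_{\cN_i},a_i))$. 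Since $s$ and $\tilde s$ differ only in coordinate $j$, I bound each $\TV(P_i(\cdot|s_{\cN_i},a_i),P_i(\cdot|\tilde s_{\cN_i},a_i))$ by $C_{ij}$ via the three-way case split in the definition \eqref{def:C_mat_tv}: it is $0$ when $j\notin\cN_i$ (then $s_{\cN_i}=\tilde s_{\cN_i}$), it is at most $C_{ij}$ when $j\in\cN_i\setminus\{i\}$ (middle case), and at most $C_{ii}$ when $j=i$ (last case, with the same action $a_i$ on both sides). Summing against the probability vector $\zeta_i(\cdot|\tilde s)$ collapses this to $C_{ij}$, and adding the two terms finishes the estimate.

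I expect the only real care to be in the second term's case analysis, namely matching each scenario for the location of $j$ relative to $\cN_i$ to the correct branch of the piecewise definition of $C_{ij}$; in particular the $j=i$ branch defines $C_{ii}$ with a supremum over two possibly distinct actions, so the quantity we actually need (the same action $a_i$ on both transitions) is a special instance dominated by it. The mixture total-variation inequalities themselves are routine, so no further difficulty is anticipated.
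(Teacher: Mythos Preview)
Your proposal is correct and follows essentially the same approach as the paper: both introduce the hybrid kernel that pairs the transition at $s$ with the policy weights at $\tilde s$, apply the triangle inequality, and bound the two pieces by $Z_{ij}^\zeta$ and $C_{ij}$ respectively. The only organizational difference is that the paper splits first into the cases $j\notin\cN_i$ and $j\in\cN_i$ (handling the former directly without a hybrid) whereas you use the single hybrid $M$ uniformly and defer the case split on $j$'s location to the $C_{ij}$ bound at the end; your version is slightly more streamlined but the content is the same.
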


Now we start proving \Cref{lem:policy-q}.
\begin{proof}[Proof of \Cref{lem:policy-q}]
Given the relationship between $\Qtau_i^\zeta$ and $\Vtau_i^\zeta$ in \eqref{eq:local_q_func}, it is easy to compute the interaction matrix for the local value functions $\mathbf{\Vtau}^\zeta = \{V_i^\zeta\}_{i\in\mathcal{N}}$, which we define as below:
\begin{align}\label{eq:interact_V_proof_pi_Q}
   Z_{ij}^{\mathbf{\Vtau}^\zeta} = \sup_{s_{-j}}\sup_{s_j,s_j'}|\Vtau_i^\zeta(s_j, s_{-j}) - \Vtau_i^\zeta(s_j', s_{-j})|.
\end{align}
Fixing $i,j,s_j,s_j',s_{-j}$, define $\pi_t$ (or $\pi_t'$) to be the distribution of $s(t)$ given $s(0) = (s_j,s_{-j})$ (or $s(0) = (s_j',s_{-j})$). Let $r_i^\zeta(s) = \mathbb{E}_{a_i\sim \zeta_i(\cdot|s)} [r_i (s_i,a_i)]$. Also, let $H_i^\zeta(s) = - \sum_{a_i\in\mathcal{A}_i} \zeta_i(a_i|s) \log \zeta_i(a_i|s) $ be the entropy of local policy $\zeta_i(\cdot|s)$ as a function of $s$. Then, by \eqref{eq:local_value_func}, we have
\begin{align*}
    &|\Vtau_i^\zeta(s_j, s_{-j}) - \Vtau_i^\zeta(s_j', s_{-j})| \\
    &= \Bigg| \E_{a(t) \sim \zeta(\cdot|s(t))}\bigg[ \sum_{t=0}^\infty \gamma^t \left[ r_i(s_i(t),a_i(t) ) -n\tau \log(\zeta_i(a_i(t)|s(t)))\right]  \bigg|s(0) = (s_j,s_{-j})\bigg]   \\
    & \quad - \E_{a(t) \sim \zeta(\cdot|s(t))}\bigg[ \sum_{t=0}^\infty \gamma^t \left[ r_i(s_i(t),a_i(t) ) -n\tau \log(\zeta_i(a_i(t)|s(t)))\right]  \bigg|s(0) = (s_j',s_{-j})\bigg]   \Bigg|\\
    &\leq  \sum_{t=0}^\infty \gamma^t \left| \mathbb{E}_{s\sim \pi_t} [r_i^\zeta(s) + n\tau H_i^\zeta(s)] - \mathbb{E}_{s\sim \pi_t'} [r_i^\zeta(s) + n\tau H_i^\zeta(s)] \right|  \\
    &\leq \sum_{t=0}^\infty \gamma^t \sum_{\ell=1}^n [(C^z)^t e_j]_\ell \delta_\ell( r_i^\zeta(\cdot) + n\tau H_i^\zeta(\cdot)),
\end{align*}
where in the last inequality we have used Statement (b) of \Cref{lem:markov_chain_exp_decay}. Now let's compute $\delta_\ell( r_i^\zeta(\cdot) + n\tau H_i^\zeta(\cdot))$. For any $s_\ell,s_\ell',s_{-\ell}$, we have by definition that
\begin{align*}
   | r_i^\zeta(s_\ell,s_{-\ell}) - r_i^\zeta(s_\ell',s_{-\ell}) | &= | \mathbb{E}_{a_i\sim \zeta_i(\cdot|s_\ell,s_{-\ell})} r_i(s_i,a_i) - \mathbb{E}_{a_i\sim \zeta_i(\cdot|s_\ell',s_{-\ell})} r_i(s_i,a_i)| \\
   &\leq \TV(\zeta_i(\cdot|s_\ell,s_{-\ell}), \zeta_i(\cdot|s_\ell',s_{-\ell})) \bar{r}\\
   &\leq Z^\zeta_{i\ell} \bar{r},
\end{align*}
where $Z^{\zeta}$ is the interaction matrix of global policy $\zeta$. Similarly, we have by \Cref{lem:entropy_lipschitz},
\begin{align*}
    | H_i^\zeta(s_\ell,s_{-\ell}) - H_i^\zeta(s_\ell',s_{-\ell}) |\leq \sigma \TV(\zeta_i(\cdot|s_\ell,s_{-\ell}),\zeta_i(\cdot|s_\ell',s_{-\ell}))\leq \sigma Z^{\zeta}_{i\ell},
\end{align*}
which immediately implies that
\[\delta_\ell( r_i^\zeta(\cdot) + n\tau H_i^\zeta(\cdot)) \leq Z^\zeta_{i\ell} (\bar{r}+n\tau\sigma).  \]
As a result, 
\begin{align*}
    |\Vtau_i^\zeta(s_j, s_{-j}) - \Vtau_i^\zeta(s_j', s_{-j})|  \leq \sum_{t=0}^\infty \gamma^t \sum_{\ell=1}^n [(C^z)^t e_j]_\ell Z^\zeta_{i\ell} (\bar{r}+n\tau\sigma) = (\bar{r}+n\tau\sigma)\sum_{t=0}^\infty \gamma^t [Z^\zeta (C^z)^t]_{ij}. 
\end{align*}
By the definition in \eqref{eq:interact_V_proof_pi_Q}, we further have 
\begin{align}\label{eq:bound_ZV}
   Z^{\mathbf{\Vtau}^\zeta} \leq   (\bar{r}+n\tau\sigma)\sum_{t=0}^\infty \gamma^t Z^\zeta (C^z)^t. 
\end{align}   
It remains to convert this result to the interaction matrix of the local $\Qtau$-functions $\mathbf{\Qtau}^\zeta=\{Q_i^\zeta \}_{i\in\mathcal{N}}$. Recall that, 
 \[    \Qtau^\zeta_i(s,a) = r_i(s_i,a_i) + \gamma \mathbb{E}_{\bar{s}\sim P(\cdot|s,a)}\big[ \Vtau^\zeta_i(\bar{s})\big]. \]
Therefore, we have
\begin{align*}
   &| \Qtau^\zeta_i(s_j,a_j,s_{-j},a_{-j}) - \Qtau^\zeta_i(s_j',a_j',s_{-j},a_{-j})| \\
   &\leq  \bar{r} \mathbf{1}(j=i) +\gamma \big| \mathbb{E}_{\bar{s}\sim P(\cdot|s_j,a_j,s_{-j},a_{-j})} \Vtau^\zeta_i(\bar{s}) - \mathbb{E}_{\bar{s}\sim P(\cdot|s_j',a_j',s_{-j},a_{-j})} \Vtau^\zeta_i(\bar{s})\big|\\
   &\leq \bar{r} \mathbf{1}(j=i)+\gamma \sum_{\ell\in N_j} C_{\ell j} Z^{\mathbf{\Vtau}^\zeta}_{i\ell},
\end{align*}
where in the second inequality we have used \Cref{lem:diff_dist_f} and the definition of $C_{\ell,j}$. 
So by \Cref{def:Z_Q} and \eqref{eq:bound_ZV} we further have
\begin{align}\label{eq:bound_ZQ}
    Z^{\mathbf{\Qtau}^\zeta} \leq \bar{r} I + \gamma Z^{\mathbf{\Vtau}^\zeta} C = \bar{r} I +\gamma  (\bar{r}+n\tau\sigma)\sum_{t=0}^\infty \gamma^t Z^\zeta (C^z)^t C.
\end{align}
This gives rises to the upper bound on  $Z^{\mathbf{\Qtau}^\zeta} $ in \Cref{lem:policy-q}. 

Further, consider the scenario where we assume $Z^{\zeta}$ is $(\nu,\mu)$-decay, $\nu\leq \frac{1}{2}$ and $\forall i, \sum_{j\in\cN_i}2^{\mu}C_{ij}\leq1/2$. This means $C$ is $(1/2,\mu)$-decay since by \eqref{def:C_mat_tv}, $C$ is a sparse matrix and $C_{ij} = 0$ when $\dist(i,j)>1$. Combining these with the fact that $C_{ij}^z\leq Z_{ij}^{\zeta}+C_{ij}$ (\Cref{lem:cij_mupolicy}), we can easily prove that $C^z$ is $(\nu+1/2,\mu)$-decay by \Cref{lem:decay_matrix}. Also note that $I$ is $(1,\mu)$-decay for any $\mu$. Then applying \Cref{lem:decay_matrix} to \eqref{eq:bound_ZQ}, we can show that
$Z^{\mathbf{\Qtau}^\zeta}$ is $(\nu',\mu)$-decay, where
\begin{align}
    \nu'= \bar r + \gamma(\bar r + n\tau\sigma)\sum_{t=0}^{\infty}\gamma^t\nu(\nu+1/2)^{t} \frac{1}{2}=\bar r+ \nu(\bar r + n\tau\sigma)\frac{\frac{1}{2}\gamma }{1-\gamma(\nu+1/2)}\leq\bar r + \frac{\gamma(\bar r + n\tau\sigma)}{4(1-\gamma)},\label{eq:proof_policy_q_nuprime}
\end{align}
where the inequality uses $\nu\leq 1/2$. This completes the proof. 
\end{proof}

\subsection{Proof of \Cref{lem:q-policy}}

We first define the notion of the second order interaction matrix for the $\Qtau$ function. %
\begin{definition}\label{def:q_interaction}
Consider a $\Qtau$-function $Q(s,a)$ defined on the global state-action pair. The second order interaction matrix $H^\Qtau = [H^\Qtau_{ij}]_{i,j=1,\ldots,n}$ is defined by
\begin{align*}
    H_{ij}^\Qtau = \sup_{z_i,z_j, z_i',z_j'} \sup_{z_{-(i,j)}}\Big| [\Qtau(z_i,z_j,z_{-(i,j)} ) - \Qtau(z_i',z_j,z_{-(i,j)})] - [\Qtau(z_i,z_j',z_{-(i,j)}) - \Qtau(z_i',z_j',z_{-(i,j)})] \Big|, 
\end{align*}
where $-(i,j)$ denotes the set of nodes outside of $i$ and $j$, and here for notational simplicity, we use $z$ to represent state-action pairs, e.g. we use $z_i$ to represent $(s_i,a_i)$ and similarly $z_{-(i,j)} = (s_{-(i,j)},a_{-(i,j)})$. 
\end{definition}

Our first result is \Cref{lem:q_second_interaction_matrix} on the decay property of the second order interaction matrix of the global $Q$ function of a given policy. The proof is deferred to \Cref{subsec:q_second_interaction_matrix_proof}

\begin{lemma}\label{lem:q_second_interaction_matrix}
Suppose the local $Q$ functions $\mathbf{\Qtau} = \{Q_i\}_{i\in\mathcal{N}}$ is $(\nu',\mu)$-decay. Then, for the corresponding global $Q$ function $Q = \frac{1}{n} \sum_{i\in \mathcal{N}} Q_i$ its second order interaction matrix $H^\Qtau$ is $(2^{\mu+1}\nu',\mu)$-decay. 
\end{lemma}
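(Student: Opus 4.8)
The plan is to reduce the second-order interaction of the global $Q$ to those of the local $Q_\ell$, bound each of those by the (first-order) interaction matrix $Z^{\mathbf{Q}}$ from \Cref{def:Z_Q}, and then convert the $(\nu',\mu)$-decay of $Z^{\mathbf{Q}}$ into $(2^{\mu+1}\nu',\mu)$-decay of $H^Q$ by a triangle-inequality-plus-convexity argument. First, since $Q=\frac1n\sum_{\ell\in\cN}Q_\ell$ and the second-order difference appearing in \Cref{def:q_interaction} is a \emph{linear} functional of the $Q$ being differenced, I would pull the average out and bound the supremum of a sum by the sum of suprema, obtaining the elementwise bound $H^{Q}_{ij}\le \frac1n\sum_{\ell=1}^n H^{Q_\ell}_{ij}$, where $H^{Q_\ell}_{ij}$ denotes the same second-order interaction formed from the single local function $Q_\ell$.

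Second, I would control each $H^{Q_\ell}_{ij}$ by entries of $Z^{\mathbf{Q}}$. Writing the second-order difference (with $z$ abbreviating state--action pairs as in \Cref{def:q_interaction}) as $h(z_j)-h(z_j')$ where $h(z_j):=Q_\ell(z_i,z_j,z_{-(i,j)})-Q_\ell(z_i',z_j,z_{-(i,j)})$, the definition of $Z^{\mathbf{Q}}$ gives $\sup|h|\le Z^{\mathbf{Q}}_{\ell i}$, so the difference is at most $2Z^{\mathbf{Q}}_{\ell i}$; grouping the two inner brackets the other way gives at most $2Z^{\mathbf{Q}}_{\ell j}$. Hence $H^{Q_\ell}_{ij}\le 2\min\{Z^{\mathbf{Q}}_{\ell i},Z^{\mathbf{Q}}_{\ell j}\}$, and therefore $H^{Q}_{ij}\le \frac2n\sum_{\ell}\min\{Z^{\mathbf{Q}}_{\ell i},Z^{\mathbf{Q}}_{\ell j}\}$.

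Third, I would verify the decay condition of \Cref{def:matrix_decay} directly for $H^Q$ by bounding the weighted row sum $\sum_j (\dist(i,j)+1)^\mu H^{Q}_{ij}$ (the column sum is symmetric). The key estimate combines the triangle inequality $\dist(i,j)\le \dist(i,\ell)+\dist(\ell,j)$ with the convexity bound
\[
(\dist(i,j)+1)^\mu \le 2^{\mu-1}\big[(\dist(i,\ell)+1)^\mu+(\dist(\ell,j)+1)^\mu\big].
\]
The decisive move is to pair the first summand with the estimate $\min\{Z^{\mathbf{Q}}_{\ell i},Z^{\mathbf{Q}}_{\ell j}\}\le Z^{\mathbf{Q}}_{\ell i}$ and the second with $\min\{Z^{\mathbf{Q}}_{\ell i},Z^{\mathbf{Q}}_{\ell j}\}\le Z^{\mathbf{Q}}_{\ell j}$. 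The first resulting term $(\dist(i,\ell)+1)^\mu Z^{\mathbf{Q}}_{\ell i}$ is constant in $j$, so the free sum over $j$ contributes a factor $n$ that is exactly cancelled by the $\frac1n$ averaging, leaving $\sum_\ell (\dist(i,\ell)+1)^\mu Z^{\mathbf{Q}}_{\ell i}$, which is a \emph{column} sum of $Z^{\mathbf{Q}}$ and hence $\le\nu'$; the second term $\sum_j (\dist(\ell,j)+1)^\mu Z^{\mathbf{Q}}_{\ell j}$ is a \emph{row} sum and hence $\le\nu'$ for each $\ell$. Carrying the prefactors $\frac2n$ and $2^{\mu-1}$ through, each of the two contributions equals $2^\mu\nu'$, for a total of $2^{\mu+1}\nu'$.

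I expect the third step to be the main obstacle. Its subtlety is that a single $\min$ carries only one factor of $Z^{\mathbf{Q}}$, whereas the product of distances $(\dist(i,\ell)+1)^\mu(\dist(\ell,j)+1)^\mu$ would need two; the additive convexity split is what lets the $Z^{\mathbf{Q}}_{\ell i}$ factor absorb the distance to $i$ (giving $\ell$-decay via the column sum) while the $Z^{\mathbf{Q}}_{\ell j}$ factor absorbs the distance to $j$ (giving $j$-decay via the row sum). This is precisely why \Cref{def:matrix_decay} must control both the row and column sums, and why the otherwise-divergent free $j$-summation is harmless only after dividing by $n$.
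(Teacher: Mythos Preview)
Your proposal is correct and follows essentially the same route as the paper: the paper also reduces to $H^{Q}_{ij}\le \frac{2}{n}\sum_{\ell}\min\{Z^{\mathbf Q}_{\ell i},Z^{\mathbf Q}_{\ell j}\}$, applies the same triangle inequality plus the power-mean split $(\dist(i,j)+1)^\mu\le 2^{\mu-1}[(\dist(i,\ell)+1)^\mu+(\dist(\ell,j)+1)^\mu]$, and pairs the two summands with $Z^{\mathbf Q}_{\ell i}$ and $Z^{\mathbf Q}_{\ell j}$ exactly as you describe, using the column- and row-sum bounds of \Cref{def:matrix_decay} respectively. Your observation that the free $j$-sum in the first term produces a factor $n$ cancelled by the $\frac{1}{n}$ averaging is precisely how the paper handles it.
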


We will also frequently use the following auxilliary result, the proof of which is deferred to \Cref{subsec:log_tv_proof}. 
\begin{lemma} \label{lem:log_tv}
    For two distributions $d,d'$ on $\{1,2,\ldots,m\}$, suppose for all $i,j$, we have $|\log \frac{d_i}{d_j} - \log\frac{d'_i}{d'_j}| \leq \epsilon$, and $\max(|\log \frac{d_i}{d_j} |,| \log\frac{d'_i}{d'_j}| )\leq c$.   Then, $\TV(d,d')\leq \frac{1}{2} m^2 e^c(e^\epsilon-1)$.
\end{lemma}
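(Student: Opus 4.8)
The plan is to reduce the comparison of $d$ and $d'$ to a comparison of their pairwise log-ratios, which is exactly the quantity the two hypotheses control. First note that both hypotheses force every entry to be strictly positive (a zero entry would make some $\log(d_i/d_j)$ infinite), so all ratios are well defined. Writing $a_{ij}=\log(d_i/d_j)$ and $a'_{ij}=\log(d'_i/d'_j)$, the normalization $\sum_j d_j=1$ lets me express each coordinate purely through these ratios:
\[
d_i=\frac{1}{\sum_{j=1}^m e^{-a_{ij}}},\qquad d'_i=\frac{1}{\sum_{j=1}^m e^{-a'_{ij}}}.
\]
The hypotheses say $|a_{ij}-a'_{ij}|\le\epsilon$ and $|a_{ij}|,|a'_{ij}|\le c$, so I have pointwise control of every exponent appearing in these two representations.

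Next I would bound $|d_i-d'_i|$ coordinatewise by putting the two fractions over a common denominator:
\[
|d_i-d'_i|=\frac{\bigl|\sum_j\bigl(e^{-a'_{ij}}-e^{-a_{ij}}\bigr)\bigr|}{\bigl(\sum_j e^{-a_{ij}}\bigr)\bigl(\sum_j e^{-a'_{ij}}\bigr)}.
\]
For the numerator, factor $e^{-a'_{ij}}-e^{-a_{ij}}=e^{-a_{ij}}(e^{a_{ij}-a'_{ij}}-1)$ and use $e^{-a_{ij}}\le e^{c}$ together with $|a_{ij}-a'_{ij}|\le\epsilon$ to get $|e^{-a'_{ij}}-e^{-a_{ij}}|\le e^{c}(e^{\epsilon}-1)$, which after summing over the $m$ values of $j$ gives a numerator at most $m\,e^{c}(e^{\epsilon}-1)$. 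For the denominator, the key observation is that the diagonal term $j=i$ contributes $e^{-a_{ii}}=1$, so each factor $\sum_j e^{-a_{ij}}\ge 1$ and the product is $\ge 1$. Combining these yields $|d_i-d'_i|\le m\,e^{c}(e^{\epsilon}-1)$ for every $i$.

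Finally, summing over the $m$ coordinates in $\TV(d,d')=\tfrac12\sum_i|d_i-d'_i|$ produces the claimed bound $\tfrac12 m^2 e^{c}(e^{\epsilon}-1)$. There is no real obstacle here beyond bookkeeping of the elementary exponential inequalities; the only genuine idea is the log-ratio representation of $d_i$ and the use of the diagonal term to lower-bound the denominator by $1$. I would also remark that the stated bound is quite loose: since $\sum_i d_i=\sum_i d'_i=1$, the quantities $\log(d_i/d'_i)$ all differ pairwise by at most $\epsilon$ (as $\log(d_i/d'_i)-\log(d_j/d'_j)=a_{ij}-a'_{ij}$) and cannot all share the same sign, which forces $|\log(d_i/d'_i)|\le\epsilon$ for every $i$; this sharper route gives $\TV(d,d')\le\tfrac12(e^\epsilon-1)$ with no dependence on $m$ or $c$ and trivially implies the stated inequality. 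For the application, however, the stated form is all that is needed.
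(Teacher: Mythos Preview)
Your proof is correct and follows essentially the same route as the paper: both express $d_i$ as the reciprocal of $\sum_j d_j/d_i$, bound each term $|d_j/d_i - d'_j/d'_i|$ by $e^c(e^\epsilon-1)$, and use the diagonal contribution to lower-bound the denominator by $1$. Your closing remark that the hypotheses actually force $|\log(d_i/d'_i)|\le\epsilon$ and hence $\TV(d,d')\le\tfrac12(e^\epsilon-1)$ is a correct and genuinely sharper observation not present in the paper.
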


Now we prove \Cref{lem:q-policy}. %
For each state $s$ we consider the following entropy regularized optimization problem over $\Delta_{A_1}\times\cdots\Delta_{A_n}$, 
\begin{align} \label{eq:policy_improvement_optimization}
 (\zeta_1(\cdot|s),\ldots,\zeta_n(\cdot|s))\leftarrow   \argmax_{\pi_1(\cdot|s),\ldots,\pi_n(\cdot|s)} \mathbb{E}_{a_i\sim \pi_i(\cdot|s)} [\Qtau(s,a_1,\ldots,a_n)]+ \tau 
 \sum_{i=1}^n H(\pi_i(\cdot|s)), 
\end{align}
where $H$ is the entropy function. When the context is clear, we also use $\pi_i(s)$ to denote the distribution $\pi_i(\cdot|s)$. We also will aggregate the subscripts, e.g. $\pi(s)$ will denote the collection of $(\pi_i(s))_{i=1}^n$; $\pi_{-i}(s)$ denotes $(\pi_j(s))_{j\neq i}$. 
We consider the following multiplicative weights algorithm (where $p$ is the iteration counter):
\begin{align}
    \pi_i^{p+1} (a_i|s)\propto  \pi_i^p(a_i|s)^{1-\eta\tau}  \exp( \eta \E_{a_{-i}\sim \pi_{-i}^p(s)} \Qtau(s,a_{-i},a_i)), i=1,2,\ldots,n, \label{eq:mul_weights}
\end{align}
and we will set the initial distribution ($p=0$) $\pi_i^{0}(\cdot|s) $ to be the uniform distribution over $\mathcal{A}_i$.  The convergence of algorithm \eqref{eq:mul_weights} is shown in the following \Cref{lem:mul_weights_convergence} whose proof can be found in \Cref{subsec:proof_mul_weights}. 

\begin{lemma}\label{lem:mul_weights_convergence}
    When {$\tau>2\times 2^{\mu+1}\nu' \actionmax^2 e^{\nu'/n\tau}$ and $\eta=\frac{1}{\tau}$}, for each $s$, optimization problem \eqref{eq:policy_improvement_optimization} has a unique solution $(\zeta_1(\cdot|s),\ldots,\zeta_n(\cdot|s))$ and the algorithm \eqref{eq:mul_weights} will converge to it with a geometric convergence rate:
    \begin{align*}
         \sup_{i\in\cN} \TV(\pi_i^p(\cdot|s),\zeta_i(\cdot|s)) \leq \left(\frac{1}{\tau}2^{\mu+1}\nu' \actionmax^2 e^{\nu'/n\tau} \right)^{p}.
    \end{align*}
\end{lemma}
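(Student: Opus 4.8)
The plan is to exploit the special choice $\eta = 1/\tau$, which makes the exponent $1-\eta\tau$ vanish so that the multiplicative-weights update \eqref{eq:mul_weights} collapses into a pure best-response (softmax) map. Concretely, I would define the operator $F$ on the product of simplices $\prod_i \Delta_{\mathcal{A}_i}$ by $F(\pi)_i(a_i) \propto \exp\big(\tfrac{1}{\tau}\,\E_{a_{-i}\sim\pi_{-i}}\Qtau(s,a_{-i},a_i)\big)$, so that the iteration reads $\pi^{p+1} = F(\pi^p)$. The first step is to identify the maximizers of \eqref{eq:policy_improvement_optimization} with the fixed points of $F$: for fixed $\pi_{-i}$ the objective is strictly concave in $\pi_i$ (the reward term is linear and the entropy term strictly concave), so its unique maximizer is exactly the softmax $F(\pi)_i$; hence any joint maximizer must be a fixed point of $F$, and conversely every fixed point satisfies the per-coordinate optimality conditions. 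Existence of a maximizer follows from continuity of the objective on the compact product of simplices.

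The core of the proof is to show that $F$ is a contraction under the metric $d(\pi,\tilde\pi) := \sup_i \TV(\pi_i,\tilde\pi_i)$. I would fix $i$ together with two actions $a_i,a_i'$ and compare log-ratios. Since $\log\frac{F(\pi)_i(a_i)}{F(\pi)_i(a_i')} = \tfrac{1}{\tau}\E_{a_{-i}\sim\pi_{-i}}[h(a_{-i})]$ with $h(a_{-i}) := \Qtau(s,a_{-i},a_i) - \Qtau(s,a_{-i},a_i')$, the difference of log-ratios for $\pi$ and $\tilde\pi$ equals $\tfrac{1}{\tau}\big(\E_{\pi_{-i}} h - \E_{\tilde\pi_{-i}} h\big)$. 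Applying \Cref{lem:diff_dist_f} to these two product distributions bounds this by $\tfrac{1}{\tau}\sum_{j\neq i}\TV(\pi_j,\tilde\pi_j)\,\delta_j(h)$, and by construction $\delta_j(h)$ is precisely the second-order interaction $H^{\Qtau}_{ij}$ of \Cref{def:q_interaction}. By \Cref{lem:q_second_interaction_matrix} the matrix $H^{\Qtau}$ is $(2^{\mu+1}\nu',\mu)$-decay, so the row sum satisfies $\sum_j H^{\Qtau}_{ij}\leq 2^{\mu+1}\nu'$, yielding the log-ratio perturbation bound $\epsilon \leq \tfrac{2^{\mu+1}\nu'}{\tau}\, d(\pi,\tilde\pi)$.

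To convert this into a total-variation bound I would invoke \Cref{lem:log_tv}, which also needs a uniform bound $c$ on the log-ratios themselves. Writing the global $Q$ as $\tfrac{1}{n}\sum_\ell Q_\ell$ and using the column-sum bound $\sum_\ell Z^{\mathbf{Q}}_{\ell i}\leq\nu'$ from the $(\nu',\mu)$-decay of $\mathbf{Q}$ gives $|\Qtau(s,a_{-i},a_i)-\Qtau(s,a_{-i},a_i')|\leq \nu'/n$, hence $c = \nu'/(n\tau)$; this also establishes the $\sigma'=\nu'/(n\tau)$-regularity asserted in \Cref{lem:q-policy}. \Cref{lem:log_tv} with $m\le\actionmax$ then gives $\TV(F(\pi)_i,F(\tilde\pi)_i)\leq\tfrac{1}{2}\actionmax^2 e^{\nu'/(n\tau)}(e^\epsilon-1)$. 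The assumption $\tau > 2\cdot 2^{\mu+1}\nu'\actionmax^2 e^{\nu'/(n\tau)}$ forces $\epsilon < 1/2$, on which $e^\epsilon-1\leq 2\epsilon$, delivering $\TV(F(\pi)_i,F(\tilde\pi)_i) \le L\, d(\pi,\tilde\pi)$ with contraction factor $L = \tfrac{1}{\tau}\,2^{\mu+1}\nu'\actionmax^2 e^{\nu'/(n\tau)} < 1$.

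Finally I would apply the Banach fixed-point theorem on the complete metric space $(\prod_i\Delta_{\mathcal{A}_i},\, d)$: $F$ has a unique fixed point $\zeta(\cdot|s)$, which by the first step is the unique maximizer of \eqref{eq:policy_improvement_optimization}, and the iterates obey $d(\pi^p,\zeta)\leq L^p\, d(\pi^0,\zeta)\leq L^p$ because total-variation distances are bounded by $1$. This is exactly the claimed geometric rate. The main obstacle is the contraction estimate, where two different decay structures must be combined correctly: the \emph{second}-order interaction matrix $H^{\Qtau}$ controls the log-ratio perturbation $\epsilon$, while the \emph{first}-order decay of $\mathbf{Q}$ controls the regularity constant $c$, and the linearization $e^\epsilon-1\le 2\epsilon$ must be justified precisely from the lower bound on $\tau$.
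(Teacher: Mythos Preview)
Your proposal is correct and follows essentially the same route as the paper: with $\eta=1/\tau$ the update collapses to a softmax best-response map, log-ratio differences are controlled by the second-order interaction matrix $H^{\Qtau}$ via \Cref{lem:diff_dist_f}, the $\sigma'=\nu'/(n\tau)$ regularity comes from the column-sum bound on $Z^{\mathbf{Q}}$, and \Cref{lem:log_tv} together with the linearization $e^\epsilon-1\le 2\epsilon$ yields the contraction factor $\tfrac{1}{\tau}2^{\mu+1}\nu'\actionmax^2 e^{\nu'/(n\tau)}$. The only cosmetic difference is that you invoke the Banach fixed-point theorem directly, whereas the paper compares two arbitrary trajectories and deduces uniqueness of the stationary point from the fact that their $\TV$-distance vanishes; both arguments are equivalent and rest on the same estimates.
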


\begin{proof}[Proof of \Cref{lem:q-policy}]
Recall that \Cref{lem:q-policy} says that the $\zeta$ obtained by \eqref{eq:entropy_regu_Q} is $\sigma'$-regular and $(\nu'',\mu)$-decay. We show these two properties now. 

\textbf{Proof of $\sigma’$-regular}. By considering \eqref{eq:mul_weights}, we have for each $i$ and $a_i$
    \begin{align}
        \log \pi_i^{p+1}(a_i|s) = (1-\eta\tau) \log \pi_i^{p}(a_i|s)+ \eta\E_{a_{-i}\sim \pi_{-i}^p(s)} [\Qtau(s,a_{-i},a_i)] + c(s) , \label{eq:mul_weights_differential}
    \end{align}
where $c(s)$ is a normalization constant that only depends on $s$ and $\eta = \frac{\tau}{n}$. Therefore, for any 2 action pairs $a_i, \tilde{a}_i$, we have
    \begin{align*}
      & \overbrace{\log \pi_i^{p+1}(a_i|s)-\log \pi_i^{p+1}(\tilde{a}_i|s)}^{\xi_i^{p+1}(s)} \\
        &=  \underbrace{(1-\eta\tau ) (\log \pi_i^{p}(a_i|s) - \log \pi_i^{p}(\tilde{a}_i|s))}_{:=\xi_i^p(s)} +\eta\E_{a_{-i}\sim \pi_{-i}^p(s)} [\Qtau(s,a_{-i},a_i) - \Qtau(s,a_{-i},\tilde{a}_i)]  ,
    \end{align*}
where for notational simplicity, we denote $\log \pi_i^{p+1}(a_i|s)-\log \pi_i^{p+1}(\tilde{a}_i|s) = \xi_i^{p+1}(s)$ for now (where we have fixed an action pair $(a_i,\tilde{a}_i$). Note that, 
\begin{align*}
    |\Qtau(s,a_{-i},a_i) - \Qtau(s,a_{-i},\tilde{a}_i)| & \leq \frac{1}{n}\sum_{\ell=1}^n |\Qtau_\ell(s,a_{-i},a_i) - \Qtau_\ell(s,a_{-i},\tilde{a}_i) | \leq \frac{1}{n}\sum_{\ell=1}^n Z^\Qtau_{\ell i} \leq \frac{\nu'}{n}. 
\end{align*}
Next, we show that $|\xi_i^p(s)|\leq \sigma'$ by induction. The statement is clearly true for $p=0$ as $\pi_i^0(\cdot|s)$ is an uniform distribution. Suppose the statement is true for $p$, we have
\begin{align}\label{eq:sigma_regu_xi}
    |\xi_i^{p+1}(s)| \leq (1-\eta \tau) \sigma' + \eta \frac{\nu'}{n} \leq \sigma',
\end{align}
where we have used the fact that  $\sigma' = \frac{\nu'}{n\tau}$.

\textbf{Proof of $(\nu'',\mu)$-decay. } 
The bulk of the proof is to show that for any $p$, $\pi^p = (\pi_1^p,\ldots,\pi_n^p)$ is a $(\nu'',\mu)$-decay policy under the assumptions of \Cref{lem:q-policy}, in other words, the interaction matrix $Z^{\pi^k}$ is a $(\nu'',\mu)$-decay matrix. 
Now we fix a $j$ and consider \eqref{eq:mul_weights} for two values of $s$: $s = (s_j,s_{-j})$ and $s' = (s_j',s_{-j})$. The respective trajectories are $(\pi_i^p(\cdot|s))_{i=0,\ldots,n}$ and $(\pi_i^p(\cdot|s'))_{i=0,\ldots,n}$ where $p$ is the iteration counter, and the two have the same initializer, that is $\pi_i^0(\cdot|s) = \pi_i^0(\cdot|s')$ as we initialize both with the uniform distribution. Then, consider \eqref{eq:mul_weights_differential} for $s$ and $s'$, we get
\begin{align*}
  &  \xi_i^{p+1}(s) - \xi_i^{p+1}(s')\\
  & = (1-\eta\tau)(\xi_i^{p}(s) - \xi_i^{p}(s')) \\
    &\quad + \eta\E_{a_{-i}\sim \pi_{-i}^p(s)} [\Qtau(s,a_{-i},a_i) - \Qtau(s,a_{-i},\tilde{a}_i)] - \eta\E_{a_{-i}\sim \pi_{-i}^p(s')} [\Qtau(s',a_{-i},a_i) - \Qtau(s',a_{-i},\tilde{a}_i)]\\
    &= (1-\eta \tau )(\xi_i^{p}(s) - \xi_i^{p}(s')) \\
    &\quad + \underbrace{\eta\E_{a_{-i}\sim \pi_{-i}^p(s)} [\Qtau(s,a_{-i},a_i) - \Qtau(s,a_{-i},\tilde{a}_i)] - \eta\E_{a_{-i}\sim \pi_{-i}^p(s')} [\Qtau(s,a_{-i},a_i) - \Qtau(s,a_{-i},\tilde{a}_i)]}_{:=I_1}\\
    &\quad + \underbrace{ \eta\E_{a_{-i}\sim \pi_{-i}^p(s')} \Big([\Qtau(s,a_{-i},a_i) - \Qtau(s,a_{-i},\tilde{a}_i)] - [\Qtau(s',a_{-i},a_i) - \Qtau(s',a_{-i},\tilde{a}_i)] \Big)}_{:=I_2}.
\end{align*}
The absolute values of $I_1,I_2$ can be bounded as follows:
\begin{align*}
    |I_1|&\leq \eta \sum_{\ell\neq i} \TV(\pi_\ell^p(s),\pi_\ell^p(s'))H_{i \ell}^\Qtau,
\end{align*}
where we have used \Cref{lem:diff_dist_f} and the definition of second order interaction matrix $H^\Qtau$.  Similarly, for term $I_2$, we have
\begin{align*}
    |I_2|\leq \eta H_{ij}^\Qtau.
\end{align*}
This shows that,
 \begin{align*}
    | \xi_i^{p+1}(s) - \xi_i^{p+1}(s')|
 & \leq (1-\eta \tau)|\xi_i^{p}(s) - \xi_i^{p}(s')| + \eta \sum_{\ell\neq i} \TV(\pi_\ell^p(s),\pi_\ell^p(s'))H_{i \ell}^\Qtau+\eta H_{i j}^\Qtau.
\end{align*}
Denote vector $H_{:j}^\Qtau$ as the $j$'th column of $H^\Qtau$, $H_{i:}^\Qtau$ as the $i$'th row, and $H^\Qtau_{jj}=0$ for any $i,j\in[n]$. Let vector $v^p$ denote $v^p_i = \TV(\pi_i^p(s),\pi_i^p(s'))$.  Then it holds that
\begin{align*}
    | \xi_i^{p+1}(s) - \xi_i^{p+1}(s')| &\leq(1-\eta \tau)|\xi_i^{p}(s) - \xi_i^{p}(s')|+\eta H_{i:}^\Qtau v^p+\eta H_{i j}^\Qtau\\
    & \leq (1-\eta\tau)^{p+1}| \xi_i^{0}(s) - \xi_i^{0}(s')|+\sum_{k=0}^p \eta(1-\eta\tau)^{p-k}(H_{i:}^\Qtau v^k+H_{ij}^\Qtau).
\end{align*}
Recall the definition  $\xi_i^{p+1}(s)=\log\pi_{i}^{p+1}(a_i|s)-\log\pi_i^{p+1}(\tilde a_i|s)$ and that $|\xi_i^{p+1}(s)|\leq \nu'/\tau$ by \eqref{eq:sigma_regu_xi}. By  \Cref{lem:log_tv}, we obtain the following bound on $v_i^{p+1}=\TV(\pi_i^{p+1}(s),\pi_i^{p+1}(s'))$:
\begin{align*}
    v_i^{p+1}&\leq \frac{|\cA_i|^2e^{\sigma'}}{2} \bigg(\exp\bigg((1-\eta\tau)^{p+1}|\xi^0(s) - \xi^{0}(s')|+\sum_{k=0}^p \eta (1-\eta\tau)^{p-k} (H_{i:}^\Qtau v^k + H^\Qtau_{ij})\bigg)-1\bigg).
\end{align*}
where $|\cA_i|$ is the cardinality of $\cA_i$. Now we choose $\eta=1/\tau$ to simplify the presentation. Note that in general $\eta\leq 1/\tau$ works and the proof will be more involved. Then we obtain
\begin{align*}
    v_i^{p+1}&\leq |\cA_i|^2e^{\sigma'}/2 \big(\exp\big(\eta  (H_{i:}^\Qtau v^p + H^\Qtau_{ij})\big)-1\big).
\end{align*}
Denote $c_{\TV}=\actionmax^2e^{\sigma'}/2$. For all $i,j\in[n]$ and $p=0,1,\ldots$, we have that $v_i^p\leq 1$ and that  $H^\Qtau_{ij}\leq\sum_{k}H^\Qtau_{ik}\leq\sum_{k}H^\Qtau_{ik}(\dist(i,k)+1)^{\mu}\leq 2^{\mu+1}\nu'$ by \Cref{lem:q_second_interaction_matrix}.

Then it holds that $\eta(\sum_{k}H_{ik}^\Qtau v_k^p + H^\Qtau_{ij})\leq 1/2$ which is due to our choice of $\tau$ parameter satisfies $\tau=\frac{1}{\eta}\geq \nu'2^{\mu+3}$, 
which further implies
\begin{align*}
    v_i^{p+1}&\leq c_{\TV}\bigg(\exp\bigg(\eta  \bigg(\sum_{k}H_{ik}^\Qtau v_k^p + H^\Qtau_{ij}\bigg)\bigg)-1\bigg)\leq 2c_{\TV}\eta  \bigg(\sum_{k}H_{ik}^\Qtau v_k^p + H^\Qtau_{ij}\bigg).
\end{align*}
where the second inequality is due to the fact that for $0<x<c<1$, we have $e^x\leq 1+x/(1-c)$. Stacking all $v_i^{p+1}$ together yields 
\begin{align*}
     v^{p+1} \leq 2c_{\TV} \eta (H^\Qtau v^p + H^\Qtau_{:j}),
\end{align*}
where the absolute values and $\leq$ are interpreted element-wise. 
By \Cref{def:mu_decay_policy}, we have $Z_{ij}^{\pi^{p}}=\sup_{s_{-j}}\sup_{s_j,s_j'}v_i^p$. Note that for a given $\Qtau$ function, $H^\Qtau$ is fixed and independent of states or actions. 

Since the above inequality holds for any $s$ and $s'$ that only differ in the $j$-th entry, $\forall j\in[n]$, it immediately implies
\begin{align*}
    Z^{\pi^{p+1}} \leq 2 c_{\TV}\eta(H^\Qtau Z^{\pi^p}+H^\Qtau).
\end{align*}
By \Cref{lem:q_second_interaction_matrix} $H^\Qtau$ is $(2^{\mu+1}\nu',\mu)$-decay. Denote $\nu_H=2^{\mu+1}\nu'$. We will prove that $Z^{\pi_p}$ is $(\nu_p,\mu)$-decay by induction, where $\nu_p$ is to be determined later. First, $\pi_0$ is a uniform policy and thus $\nu_0=0$ for $Z^{\pi_0}$. By  \Cref{lem:decay_matrix}, it holds that
\begin{align*}
    \nu_{p+1}&\leq 2c_{\TV}\eta(\nu_H\nu_p+\nu_H)\leq (2c_{\TV}\eta\nu_H)^{p+1}\nu_0+\sum_{k=1}^{p+1}(2c_{\TV}\eta\nu_H)^k\leq \frac{2c_{\TV}\eta\nu_H}{1-2c_{\TV}\eta\nu_H},
\end{align*}
which implies that $Z^{\pi^{p}}$ is $(\frac{2^{\mu+2}\nu' c_{\TV}}{\tau-2^{\mu+2}\nu' c_{\TV}},\mu)$-decay for all $p$. Further by  \Cref{lem:mul_weights_convergence}, let $p\rightarrow\infty$, we have that $Z^{\zeta^*}$ is $(\nu'',\mu)$-decay and thus $\zeta^*\in\Delta_{\nu'',\mu,\sigma'}$, where $\nu''=\frac{2^{\mu+1}\nu'\actionmax^2e^{\frac{\nu'}{n\tau}}}{\tau-2^{\mu+1}\nu' \actionmax^2e^{\frac{\nu'}{n\tau}}}$. This completes the proof of \Cref{lem:q-policy}.

\end{proof}

Based on the proof of \Cref{lem:q-policy}, it is easy to see that the intermediate iterates in the iterative algorithm in \eqref{eq:mul_weights} are also $(\nu'',\mu)$-decay policies. We wrote this intermediate result below as a Corollary and we will use it in other parts of the proof. 
\begin{corollary}\label{cor:q_policy_pi_p_decay}
We have that under the same setting of \Cref{lem:q-policy} and under the multiplicative algorithm \eqref{eq:mul_weights}, $\pi^p$ is $\bigg(\frac{2^{\mu+1}\nu'\actionmax^2e^{\frac{\nu'}{n\tau}}}{\tau-2^{\mu+1}\nu' \actionmax^2e^{\frac{\nu'}{n\tau}}},\mu\bigg)$-decay. 
\end{corollary}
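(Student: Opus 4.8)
The plan is to recognize that this corollary has essentially already been proved as an intermediate step inside the proof of \Cref{lem:q-policy}, so the task reduces to isolating the uniform-in-$p$ bound that was established there before the limit $p\to\infty$ was taken. Recall that in that proof we derived, for the multiplicative weights iterates \eqref{eq:mul_weights}, the element-wise recursive inequality
\begin{align*}
    Z^{\pi^{p+1}} \leq 2c_{\TV}\eta\,(H^\Qtau Z^{\pi^p} + H^\Qtau),
\end{align*}
where $c_{\TV}=\actionmax^2 e^{\sigma'}/2$ with $\sigma'=\nu'/(n\tau)$ and $\eta=1/\tau$. The crucial observation is that this inequality holds at \emph{every} iteration $p$, so each finite iterate---not only the limiting policy $\zeta$---inherits a decay estimate.

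First I would invoke \Cref{lem:q_second_interaction_matrix} to conclude that $H^\Qtau$ is $(\nu_H,\mu)$-decay with $\nu_H=2^{\mu+1}\nu'$. Combining this with the closure of decay matrices under sums and products (\Cref{lem:decay_matrix}), I would argue by induction on $p$ that $Z^{\pi^p}$ is $(\nu_p,\mu)$-decay, where the scalar sequence $\{\nu_p\}$ satisfies $\nu_{p+1}\leq 2c_{\TV}\eta\nu_H(\nu_p+1)$ with base case $\nu_0=0$ (since $\pi^0$ is uniform and hence has a vanishing interaction matrix).

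Next I would solve this scalar recursion explicitly. Setting $\lambda:=2c_{\TV}\eta\nu_H=\frac{2^{\mu+1}\nu'\actionmax^2 e^{\nu'/(n\tau)}}{\tau}$, the recursion unrolls to $\nu_p\leq\sum_{k=1}^p\lambda^k\leq\frac{\lambda}{1-\lambda}$ whenever $\lambda<1$. The assumption $\tau\geq 3\times 2^{\mu+1}\nu'\actionmax^2 e$ from \Cref{lem:q-policy} guarantees $\lambda\leq 1/3<1$ (using $e^{\nu'/(n\tau)}\leq e$), so the geometric series converges and the bound holds uniformly in $p$. Substituting back gives $\frac{\lambda}{1-\lambda}=\frac{2^{\mu+1}\nu'\actionmax^2 e^{\nu'/(n\tau)}}{\tau-2^{\mu+1}\nu'\actionmax^2 e^{\nu'/(n\tau)}}=\nu''$, which is exactly the claimed decay constant, completing the proof.

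I do not expect a substantive obstacle, since every ingredient already appears in the proof of \Cref{lem:q-policy}; the only subtlety worth flagging is that the induction must be carried out so as to bound $\nu_p$ uniformly over all finite $p$ rather than only in the limit, so that the same constant $\nu''$ governs each iterate $\pi^p$.
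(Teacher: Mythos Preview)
Your proposal is correct and matches the paper's approach exactly: the paper does not give a separate proof of the corollary but simply remarks that it is an intermediate result already established inside the proof of \Cref{lem:q-policy}, which is precisely what you have extracted. Your identification of the recursion $\nu_{p+1}\le 2c_{\TV}\eta\nu_H(\nu_p+1)$, the base case $\nu_0=0$, and the uniform-in-$p$ bound $\nu_p\le \lambda/(1-\lambda)$ with $\lambda=2^{\mu+1}\nu'\actionmax^2 e^{\nu'/(n\tau)}/\tau$ is exactly the argument the paper has in mind.
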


\section{Proofs of Supporting Lemmas}
In this section, we provide the proofs of the lemmas we used in proving the main results.

\subsection{Proof of  \Cref{prop:exist_optimal_policy_entropy}}\label{subsec:proof_exist_optimal_policy_entropy}

Since $|\sup_{x}f(x)-\sup_{x'}g(x')|\leq \sup_{x}|f(x)-g(x)|$, we have for any $V_1,V_2\in\mathbb{R}^{|\mathcal{S}|}$ and $s\in\mathcal{S}$ that
\begin{align*}
    |[\mathcal{T}(V_1)](s)-[\mathcal{T}(V_2)](s)|
    \leq \;&\gamma \sup_{\zeta\in \Delta_{\mathrm{policy}}}|  \mathbb{E}_{\zeta} \left[ V_1(s(1))-V_2(s(1))\mid s(0)=s\right] |\\
    \leq \;&\gamma \sup_{\zeta\in \Delta_{\mathrm{policy}}} \mathbb{E}_{\zeta} \left[ |V_1(s(1))-V_2(s(1))|\mid s(0)=s\right] \\
    \leq \;&\gamma \|V_1-V_2\|_\infty.
\end{align*}
Since the previous inequality holds for all $s\in\mathcal{S}$, we have the desired contraction property:
\begin{align*}
    \|\mathcal{T}(V_1)-\mathcal{T}(V_2)\|_\infty\leq \gamma \|V_1-V_2\|_\infty.
\end{align*}

We next show that if a policy $\zeta^*$ is such that $V^{\zeta^*}=\mathcal{T}(V^{\zeta^*})$, then $\zeta^*$ is an optimal policy. For any policy $\zeta\in\Delta_{\text{policy}}$, we have for all $s\in\mathcal{S}$ that
\begin{align*}
    \Vtau^{\zeta^*} (s)=\;& (\mathcal{T}\Vtau^{\zeta^*})(s) \\
    \geq \;&\mathbb{E}_{\zeta}\left[  r(s(0),a(0))  -   \tau \log \zeta(a(0)|s(0)) +  \gamma  \Vtau^{\zeta^*}(s(1))\;\middle|\; s(0)=s\right]\\
    =\;&  \mathbb{E}_{\zeta}\left[  r(s(0),a(0))  -  \tau \log \zeta(a(0)|s(0)) \mid s(0)=s\right]+\gamma \mathbb{E}_{\zeta} \left[\Vtau^{\zeta^*}(s(1)) \;\middle|\; s(0)=s\right] \\
    \geq\;& \mathbb{E}_{\zeta}\left[  r(s(0),a(0))  -  \tau \log \zeta(a(0)|s(0)) \mid s(0)=s\right] \\
    &+ \gamma \mathbb{E}_{\zeta}\left[  r(s(1),a(1))  -   \tau \log \zeta(a(1)|s(1)) \mid s(0)=s\right] + \gamma^2 \mathbb{E}_{\zeta} \left[\Vtau^{\zeta^*}(s(2))\;\middle|\; s(0)=s\right]\\
    \geq \;&\sum_{t=0}^\infty\gamma^t\mathbb{E}_{\zeta}\left[  r(s(t),a(t))  -  \tau \log \zeta(a(t)|s(t)) \mid s(0)=s\right]\\
    = \;&\Vtau^{\zeta}(s). 
\end{align*}
As a result, we have for all $\zeta\in\Delta_{\text{policy}}$ that
\begin{align*}
    J(\zeta^*)=\sum_{s\in\mathcal{S}}\initialState(s)V^{\zeta^*}(s)\geq \sum_{s\in\mathcal{S}}\initialState(s)V^{\zeta}(s)=J(\zeta),
\end{align*}
hence $\zeta^*$ is an optimal policy.

\subsection{Proof of \Cref{lem:performance_diff_tv}}
For each $i\in[n]$, we have
\begin{align*}
   & |V_i^\zeta(s)-V_i^{\tilde{\zeta}}(s)|\\
&= \Big| \mathbb{E}_{a\sim \zeta(\cdot|s)}[r_i(s_i,a_i)-n\tau\log\zeta_i(a_i|s)+\gamma\mathbb{E}_{s'\sim P(\cdot|s,a)}V_i^\zeta(s')  ] \\
&\qquad-\mathbb{E}_{a\sim \tilde\zeta(\cdot|s)}[r_i(s_i,a_i)- n\tau\log\tilde\zeta_i(a_i|s)+\gamma\mathbb{E}_{s'\sim P(\cdot|s,a)}V_i^{\tilde\zeta}(s') ]\Big|\\
&\leq \underbrace{\left| \mathbb{E}_{a\sim \zeta(\cdot|s)}[r_i(s_i,a_i)+\gamma\mathbb{E}_{s'\sim P(\cdot|s,a)}V_i^\zeta(s') ]-\mathbb{E}_{a\sim \tilde\zeta(\cdot|s)}[r_i(s_i,a_i) +\gamma\mathbb{E}_{s'\sim P(\cdot|s,a)}V_i^\zeta(s') ] \right|}_{I_1} \\
&\qquad+\underbrace{\left|\mathbb{E}_{a\sim \tilde\zeta(\cdot|s)}[r_i(s_i,a_i)+\gamma\mathbb{E}_{s'\sim P(\cdot|s,a)}V_i^\zeta(s') ]-\mathbb{E}_{a\sim \tilde\zeta(\cdot|s)}[r_i(s_i,a_i) +\gamma\mathbb{E}_{s'\sim P(\cdot|s,a)}V_i^{\tilde\zeta}(s') ]\right|}_{I_2}\\
 & \qquad+\underbrace{n\tau | H(\zeta_i(\cdot|s)) - H(\tilde\zeta_i(\cdot|s))|}_{I_3} \\
&\leq  \gamma\Vert V_i^\zeta-V_i^{\tilde\zeta}\Vert_\infty + n\tau\sigma \TV(\tilde\zeta_i(\cdot|s),\zeta_i(\cdot|s)) + \sum_{j=1}^n\TV(\zeta_j(\cdot|s),\tilde\zeta_j(\cdot|s)) Z^{Q^{\tilde{\zeta}}}_{ij},
\end{align*}
where in the last inequality, we have used  \Cref{lem:diff_dist_f} (for term $I_1$), the definition of $Z^{Q^{\tilde{\zeta}}}$ in  \Cref{def:Z_Q} (for term $I_2$), and \Cref{lem:entropy_lipschitz} (for term $I_3$) respectively. Therefore, we have
\begin{align*}
    \Vert V_i^\zeta - V_i^{\tilde{\zeta}}\Vert_\infty\leq  \frac{1}{1-\gamma} \left(n\tau\sigma \sup_{s\in\cS}\TV(\tilde\zeta_i(\cdot|s),\zeta_i(\cdot|s)) + \sum_{j=1}^n \sup_{s\in\cS} \TV(\zeta_j(\cdot|s),\tilde\zeta_j(\cdot|s)) Z^{Q^{\tilde{\zeta}}}_{ij}  \right).
\end{align*}
Taking the average over $i$, we have, 
\begin{align*}
  \Vert V^\zeta-V^{\tilde{\zeta}}\Vert_\infty
&\leq \frac{1}{1-\gamma} \left( \tau\sigma \sum_{i=1}^n \sup_{s\in\cS}\TV(\tilde\zeta_i(\cdot|s),\zeta_i(\cdot|s)) +\frac{1}{n} \sum_{i=1}^n\sum_{j=1}^n \sup_{s\in\cS}\TV(\zeta_j(\cdot|s),\tilde\zeta_j(\cdot|s))  Z^{Q^{\tilde{\zeta}}}_{ij} \right)\\
&\leq \frac{1}{1-\gamma} \bigg(\tau\sigma + \frac{\nu'}{n}\bigg) \sum_{i=1}^n \sup_{s\in\cS}\TV(\tilde\zeta_i(\cdot|s),\zeta_i(\cdot|s))  ,
\end{align*}
where in the last inequality, we have used $\sum_{i\in\cN} Z_{ij}^{Q^{\tilde{\zeta}}} \leq \nu'$.

\subsection{Proof of \Cref{lem:decay_matrix}} \label{subsec:decay_matrix_proof}

Due to the symmetry in  \Cref{def:matrix_decay}, we only show the $(\nu,\mu)$-decay property of a matrix for each row sums. The proof for column sums follows the same process. 
\begin{itemize}
    \item[(a)] For any $c,c'\geq 0$, it holds that
\begin{align*}
    \sum_{j=1}^n (cA_{ij}+c'A_{ij}')(\dist(i,j)+1)^{\mu}\leq c\nu+c'\nu'.
\end{align*}
\item[(b)] Since $1\leq\dist(i,j)+1\leq(\dist(i,k)+1)(\dist(k,j)+1)$ for all $i,j,k\in[n]$, we have
\begin{align*}
    \sum_{j}[AA']_{ij}(\dist(i,j)+1)^{\mu}&=\sum_{j}\sum_{k}a_{ik}a_{kj}'(\dist(i,j)+1)^{\mu}\\
    &\leq \sum_{j}\sum_{k}a_{ik}a_{kj}'(\dist(i,k)+1)^{\mu}(\dist(k,j)+1)^{\mu}\\
    &= \sum_{k}a_{ik} (\dist(i,k)+1)^{\mu} \sum_{j}a_{kj}'(\dist(k,j)+1)^{\mu}\\
    &\leq \nu\nu'.
\end{align*}
\end{itemize}

\subsection{Proof of  \Cref{lem:entropy_lipschitz}}\label{subsec:entropy_lipschitz_proof}
Define $h(t) = H(d+t(d'-d))$ for all $t\in [0,1]$. Then we have
\begin{align*}
    |H(d) - H(d')| &= |h(1) - h(0)|\\
    &= \left|\int_{0}^1 h'(t) dt\right| \\
    &=\left|\int_{0}^1 \langle \nabla H(d+t(d'-d)),  d' - d\rangle dt\right|.
\end{align*}
For simplicity of notation, denote $d'' = d + t(d'-d)$. Since the set of all $\sigma$-regular distributions over $\{1,2,\cdots,M\}$ is a polytope (hence convex),
the distribution $d''$ as a convex combination between $d$ and $d'$ is also $\sigma$-regular. Let $d''_{\max }= \max_m d''_m$ and $d''_{\min} = \min_m d''_m$. Then, we have,
\begin{align}
    \left| \langle \nabla H(d''),  d' - d\rangle \right| & =\left|\sum_{m=1}^M (1 + \log d_m'')(d_m - d_m') \right|\nonumber\\
    &=\left|\sum_{m=1}^M \bigg( -\frac{\log d''_{\max} + \log d''_{\min}}{2} + \log d_m''\bigg)(d_m - d_m') \right|\label{eqeq:1}\\
    &\leq \sum_{m=1}^M \left| -\frac{\log d''_{\max} + \log d''_{\min}}{2} + \log d_m'' \right| \left|d_m - d_m'\right|\nonumber\\
    &\leq \sum_{m=1}^M \left| \frac{\log d''_{\max} - \log d''_{\min}}{2}  \right| \left|d_m - d_m'\right|\label{eqeq:2}\\
    &\leq \sigma \TV(d,d'),\nonumber
\end{align}    
where Eq. (\ref{eqeq:1}) is due to the fact that $\sum_{m=1}^Mc(d_m-d_m')=0$ for any constant $c\in\mathbb{R}$, and Eq. (\ref{eqeq:2}) is due to the fact that 
\begin{align*}
    -(\log d''_{\max} - \log d''_{\min})/2\leq-(\log d''_{\max} + \log d''_{\min})/2+ \log d_m''\leq(\log d''_{\max} - \log d''_{\min})/2.
\end{align*}

\subsection{Proof of  \Cref{lem:cij_mupolicy}}\label{subsec:cij_mupolicy_proof}
To calculate $C_{ij}^z$, we consider the following two cases. 

\textit{Scenario 1:} $j\not\in \cN_i$. 
The probability of the next state $\bar{s}_i$ 
given the current state $(s_j,s_{-j})$ is
\[ \sum_{a_i\in \mathcal{A}_i} P_i(\bar{s}_i| s_{\cN_i},a_i)  \zeta_i(a_i | s_j,s_{-j}) = P^\zeta_i(\bar{s}_i|s_j,s_{-j}).\]
Therefore, when $s_j$ is changed to $s_j'$, the TV distance is,
\begin{align*}
&\TV( P^\zeta_i(\cdot|s_j,s_{-j}), P^\zeta_i(\cdot|s_j',s_{-j}))\\
& = \frac{1}{2} \sum_{\bar{s}_i} \Big| \sum_{a_i\in \mathcal{A}_i} P_i(\bar{s}_i| s_{\cN_i},a_i) \zeta_i(a_i | s_j,s_{-j}) -\sum_{a_i\in \mathcal{A}_i} P_i(\bar{s}_i| s_{\cN_i},a_i) \zeta_i(a_i | s_j',s_{-j})  \Big| \\
  &\leq \frac{1}{2}  \sum_{a_i} \left(\sum_{\bar{s}_i}   P_i(\bar{s}_i| s_{\cN_i},a_i) \right)\Big|\zeta_i(a_i | s_j,s_{-j}) - \zeta_i(a_i | s_j',s_{-j}) \Big| \\
  &= \TV(\zeta_i(\cdot | s_j,s_{-j}) , \zeta_i(\cdot | s_j',s_{-j}))  \\
  &\leq   Z^{\zeta}_{ij},
  \end{align*}
  where in the last inequality we have used the definition of the interaction matrix $Z_{ij}^\zeta$ of policy $\zeta$ in \Cref{def:mu_decay_policy}. 
  
\textit{Scenario 2: $j\in \cN_i$:} When $s_j$ is changed to $s_j'$, we have 
\begin{align*}
    &\TV( P^\zeta_i(\cdot|s_j,s_{-j}), P^\zeta_i(\cdot|s_j',s_{-j}))\\
& = \frac{1}{2} \sum_{\bar{s}_i} \Big| \sum_{a_i\in \mathcal{A}_i} P_i(\bar{s}_i| s_{\cN_i/j},s_j,a_i) \zeta_i(a_i | s_j,s_{-j}) -\sum_{a_i\in \mathcal{A}_i} P_i(\bar{s}_i| s_{\cN_i/j},s_j',a_i) \zeta_i(a_i | s_j',s_{-j})  \Big| \\
&\leq  \frac{1}{2} \sum_{\bar{s}_i}  \sum_{a_i\in \mathcal{A}_i}\Big| P_i(\bar{s}_i| s_{\cN_i/j},s_j,a_i) \zeta_i(a_i | s_j,s_{-j}) -P_i(\bar{s}_i| s_{\cN_i/j},s_j',a_i) \zeta_i(a_i | s_j',s_{-j})  \Big|\\
&\leq \frac{1}{2} \sum_{\bar{s}_i}  \sum_{a_i\in \mathcal{A}_i}\Big| P_i(\bar{s}_i| s_{\cN_i/j},s_j,a_i) \zeta_i(a_i | s_j,s_{-j}) -P_i(\bar{s}_i| s_{\cN_i/j},s_j,a_i) \zeta_i(a_i | s_j',s_{-j})  \Big|\\
&\qquad + \frac{1}{2} \sum_{\bar{s}_i}  \sum_{a_i\in \mathcal{A}_i}\Big| P_i(\bar{s}_i| s_{\cN_i/j},s_j,a_i) \zeta_i(a_i | s_j',s_{-j}) -P_i(\bar{s}_i| s_{\cN_i/j},s_j',a_i) \zeta_i(a_i | s_j',s_{-j})  \Big|\\
&\leq \frac{1}{2} \sum_{a_i\in \mathcal{A}_i} \sum_{\bar{s}_i}  P_i(\bar{s}_i| s_{\cN_i/j},s_j,a_i) \Big|  \zeta_i(a_i | s_j,s_{-j}) -\zeta_i(a_i | s_j',s_{-j})  \Big|\\
&\qquad + \frac{1}{2} \sum_{a_i\in \mathcal{A}_i} \zeta_i(a_i | s_j',s_{-j}) \sum_{\bar{s}_i}   \Big| P_i(\bar{s}_i| s_{\cN_i/j},s_j,a_i)  -P_i(\bar{s}_i| s_{\cN_i/j},s_j',a_i)   \Big| \\
&= \TV(\zeta_i(\cdot|s_j,s_{-j}), \zeta_i(\cdot|s_j',s_{-j})) +  \sum_{a_i\in \mathcal{A}_i} \zeta_i(a_i | s_j',s_{-j}) \TV(P_i(\cdot| s_{\cN_i/j},s_j,a_i),P_i(\cdot| s_{\cN_i/j},s_j',a_i))\\
&\leq Z^{\zeta}_{ij} + C_{ij}.
\end{align*}

\subsection{Proof of  \Cref{lem:q_second_interaction_matrix}}\label{subsec:q_second_interaction_matrix_proof}
First notice the following, 
\begin{align*}
        H_{ij}^\Qtau &\leq  \sup_{z_i,z_j, z_i',z_j'} \sup_{z_{-(i,j)}}  \frac{1}{n} \sum_{\ell=1}^n \Big| [\Qtau_\ell(z_i,z_j,z_{-(i,j)}) - \Qtau_\ell(z_i',z_j,z_{-(i,j)})] - [\Qtau_\ell(z_i,z_j',z_{-(i,j)}) - \Qtau_\ell(z_i',z_j',z_{-(i,j)})] .\Big|
\end{align*}
Clearly, by the definition of $(\nu',\mu)$-decay of $\Qtau$ function, we have
\begin{align*}
  &  \Big| [\Qtau_\ell(z_i,z_j,z_{-(i,j)}) - \Qtau_\ell(z_i',z_j,z_{-(i,j)})] - [\Qtau_\ell(z_i,z_j',z_{-(i,j)}) - \Qtau_\ell(z_i',z_j',z_{-(i,j)})] \Big|\\
    &\leq |\Qtau_\ell(z_i,z_j,z_{-(i,j)}) - \Qtau_\ell(z_i',z_j,z_{-(i,j)})| + |\Qtau_\ell(z_i,z_j',z_{-(i,j)}) - \Qtau_\ell(z_i',z_j',z_{-(i,j)})|\\
    &\leq 2 Z^\Qtau_{\ell i}. 
\end{align*}
Using a symmetric argument, we also have, 
\begin{align*}
      &  \Big| [\Qtau_\ell(z_i,z_j,z_{-(i,j)}) - \Qtau_\ell(z_i',z_j,z_{-(i,j)})] - [\Qtau_\ell(z_i,z_j',z_{-(i,j)}) - \Qtau_\ell(z_i',z_j',z_{-(i,j)})] \Big|\\
      &= \Big| [\Qtau_\ell(z_i,z_j,z_{-(i,j)}) - \Qtau_\ell(z_i,z_j',z_{-(i,j)})] - [\Qtau_\ell(z_i',z_j,z_{-(i,j)}) - \Qtau_\ell(z_i',z_j',z_{-(i,j)})] \Big|\\
    &\leq 2 Z^\Qtau_{\ell j}.
\end{align*}
As a result, 
\[  H_{ij}^\Qtau \leq \frac{1}{n} \sum_{\ell=1}^n 2 \min(Z^\Qtau_{\ell j},Z^\Qtau_{\ell i}). \]
As such, we have,
\begin{align*}
    &\sum_{j=1 }^nH_{ij}^\Qtau (\dist(i,j)+1)^\mu \\
    &\leq  \sum_{j=1 }^n \frac{1}{n} \sum_{\ell=1}^n 2 \min(Z^\Qtau_{\ell j},Z^\Qtau_{\ell i})  (\dist(i,\ell) + 1 + \dist(j,\ell)+1)^\mu\\
    &\leq \sum_{j=1 }^n \frac{1}{n} \sum_{\ell=1}^n 2 \min(Z^\Qtau_{\ell j},Z^\Qtau_{\ell i})  2^{\mu-1}\left[(\dist(i,\ell) + 1)^\mu +( \dist(j,\ell)+1)^\mu\right]\\
    &\leq \sum_{j=1 }^n \frac{1}{n} \sum_{\ell=1}^n 2  Z^\Qtau_{\ell i} 2^{\mu-1} (\dist(i,\ell) + 1)^\mu + \sum_{j=1 }^n \frac{1}{n} \sum_{\ell=1}^n 2 Z^\Qtau_{\ell j} 2^{\mu-1}( \dist(j,\ell)+1)^\mu \\
    &\leq 2^\mu \nu' + 2^\mu \nu' = 2^{\mu+1}\nu'.
\end{align*}
Similarly, we have,
\begin{align*}
    &\sum_{i=1 }^nH_{ij}^\Qtau (\dist(i,j)+1)^\mu \\
    &\leq  \sum_{i=1 }^n \frac{1}{n} \sum_{\ell=1}^n 2 \min(Z^\Qtau_{\ell j},Z^\Qtau_{\ell i})  (\dist(i,\ell) + 1 + \dist(j,\ell)+1)^\mu\\
    &\leq \sum_{i=1 }^n \frac{1}{n} \sum_{\ell=1}^n 2 \min(Z^\Qtau_{\ell j},Z^\Qtau_{\ell i})  2^{\mu-1}\left[(\dist(i,\ell) + 1)^\mu +( \dist(j,\ell)+1)^\mu\right]\\
    &\leq  \frac{2^\mu}{n} \sum_{\ell=1}^n  \sum_{i=1 }^n  Z^\Qtau_{\ell i}  (\dist(\ell,i) + 1)^\mu + \frac{2^\mu}{n} \sum_{i=1 }^n  \sum_{\ell=1}^n  Z^\Qtau_{\ell j}  ( \dist(\ell,j)+1)^\mu \\
    &\leq 2^\mu \nu' + 2^\mu \nu' = 2^{\mu+1}\nu',
\end{align*}
which completes the proof.

\subsection{Proof of \Cref{lem:log_tv}}\label{subsec:log_tv_proof}
We have, $\frac{d_i/d_j}{d'_i/d'_j} = \exp\big(\log \frac{d_i/d_j}{d'_i/d'_j} \big) \in [e^{-\epsilon},e^\epsilon]$. Then, $|d_i/d_j - d_i'/d_j'| \leq |d_i'/d_j'| |\frac{d_i/d_j}{d'_i/d'_j}-1| \leq e^c \max(e^\epsilon -1,1-e^{-\epsilon})=e^c(e^\epsilon -1)$, since $e^{\epsilon}-1\geq 1-e^{-\epsilon}$ for any $\epsilon$. Based on these observations, we have
\begin{align*}
   |d_i - d_i'|= \left|\frac{d_i}{\sum_j d_j} -\frac{d_i'}{\sum_j d_j'  }\right|=\left|\frac{1}{\sum_j d_j/d_i} -\frac{1}{\sum_j d_j'/d_i'  }\right|\leq \sum_j|d_j/d_i - d_j'/d_i'|\leq m e^c(e^\epsilon-1),
\end{align*}
where we used the fact that $\sum_j d_j/d_i\geq 1$. This concludes the proof. 

\subsection{Proof of \Cref{lem:mul_weights_convergence}}\label{subsec:proof_mul_weights}

In this subsection, we prove the convergence of the proposed algorithm in \eqref{eq:mul_weights}. This guarantees the existence of a limiting policy, which is the solution of the entropy regularized optimization problem in \eqref{eq:entropy_regu_Q}.

\textbf{Derivation of the multiplicative weights. }We derive the multiplicative weights approach using the mirror descent perspective. For this paragraph, we drop the dependence on $s$ as it is fixed. We denote $\pi = (\pi_1, \pi_2,\ldots,\pi_n)\in \Delta_{\mathcal{A}_1}\times \cdots\times \Delta_{\mathcal{A}_n}$. We consider the following $h(\pi) = \sum_{i=1}^n  h_i(\pi_i) := \sum_{i=1}^n \sum_{a_i\in\mathcal{A}_i} \pi_i(a_i)\log \pi_i(a_i)$. Its corresponding Bregman divergence is 
\begin{align*}
    D_h(\pi'||\pi) &= h(\pi') - h(\pi) - \langle \nabla h(\pi),\pi'-\pi)\\
    &= \sum_{i=1}^n\left( h_i(\pi_i') - h_i(\pi_i) - \langle \nabla h_i(\pi_i),\pi'_i-\pi_i)\right)\\
    &= \sum_{i=1}^n \sum_{a_i\in\mathcal{A}_i} \pi_i'(a_i)\log \frac{\pi_i'(a_i)}{\pi_i(a_i)}\\
    &=\sum_{i=1}^n D_{KL} (\pi_i'||\pi_i).
\end{align*}
Let the objective function in \eqref{eq:entropy_regu_Q} be $F(\pi)$. Then mirror descent update rule given the Bregman divergence is then,
\begin{align*}
    \pi^{p+1} = \argmin_{\pi\in \Delta_{\mathcal{A}_1}\times \cdots\times \Delta_{\mathcal{A}_n}} (\eta  \langle -\nabla F(\pi^p) ,\pi\rangle + D_h(\pi||\pi^p)).
\end{align*}
Given the additive structure of the Bregman divergence, we have the above rule can be written component wise as
\begin{align*}
    \pi_i^{p+1} = \argmin_{\pi_i\in \Delta_{\mathcal{A}_i}} (\eta  \langle -\nabla_{\pi_i} F(\pi^p) ,\pi_i\rangle + D_{KL}(\pi_i||\pi_i^p)).
\end{align*}
Therefore, $\pi_i^{p+1}$ will satisfy
\begin{align*}
    -\eta [ \mathbb{E}_{a_{-i}\sim \pi_{-i}^p} \Qtau(s,a_{-i},a_i) -\tau (\log \pi_i^p(a_i) + 1) ] + 1 + \log \pi_i^{p+1}(a_i)  - \log \pi_i^p(a_i) + \lambda=0,
\end{align*}
where $\lambda$ is a Lagrangian multiplier that accounts for the constraint that $\sum_{a_i}\pi_i^{p+1}(a_i)=1$. 

As a result,
\begin{align*}
    \pi_i^{p+1}(a_i)\propto \pi_i^p(a_i)^{1-\eta\tau}\exp(\eta\mathbb{E}_{a_{-i}\sim \pi_{-i}^p} \Qtau(s,a_{-i},a_i) ).
\end{align*}
which recovers the algorithm in \eqref{eq:mul_weights}. %
Using this interpretation, we prove the convergence of the algorithm~\eqref{eq:mul_weights}.
\begin{proof}[Proof of \Cref{lem:mul_weights_convergence}] 
Fixing $s$, we consider two runs of the multiplicative weight algorithm with different initializers $\pi^0$ and $\bar{\pi}^0$ respectively. Their respective trajectories are denoted as $\pi^p$ and $\bar{\pi}^p$. The update rule in \eqref{eq:mul_weights} suggests that for each $\ell$ and $a_\ell$, it holds that
\begin{align*}
    \log \pi_\ell^{p+1}(a_\ell|s) = (1-\eta\tau)\log \pi_\ell^p(a_\ell|s)+\eta\E_{a_{-\ell}\sim \pi_{-\ell}^p(s)} [\Qtau(s,a_{-\ell},a_\ell)] + c(s) ,
\end{align*}
where $c(s)$ is a normalization constant that only depends on $s$. Therefore, for any two action pairs $a_\ell, \tilde{a}_\ell$, we have
\begin{align*}
  &  \overbrace{\log \pi_\ell^{p+1}(a_\ell|s)-\log \pi_\ell^{p+1}(\tilde{a}_\ell|s)}^{\xi_\ell^{p+1}} \\
    &= (1-\eta\tau)\underbrace{(\log \pi_\ell^p(a_\ell|s) -\log \pi_\ell^p(\tilde{a}_\ell|s) )}_{\xi_\ell^p}+\eta\E_{a_{-\ell}\sim \pi_{-\ell}^p(s)} [\Qtau(s,a_{-\ell},a_\ell) - \Qtau(s,a_{-\ell},\tilde{a}_\ell)]  ,
\end{align*}
where for notational simplicity, we denote $\log \pi_\ell^{p+1}(a_\ell|s)-\log \pi_\ell^{p+1}(\tilde{a}_\ell|s) = \xi_\ell^{p+1}$ for now (where we have fixed an action pair $a_\ell,\tilde{a}_\ell$). Similar to \eqref{eq:sigma_regu_xi}, we have $|\xi_{\ell}^p|\leq \nu'/\tau$ for all $p,\ell$.

Do the same for the other trajectory starting with a different initialization policy and define $\bar{\xi}_\ell^{p+1}$ similarly. We have
\begin{align*}
  &  \xi_\ell^{p+1} - \bar{\xi}_\ell^{p+1}\\
  & = (1-\eta\tau)(\xi_\ell^{p} - \bar{\xi}_\ell^{p}) \\
    &\quad + \eta\E_{a_{-\ell}\sim \pi_{-\ell}^p(s)} [\Qtau(s,a_{-\ell},a_\ell) - \Qtau(s,a_{-\ell},\tilde{a}_\ell)] - \eta\E_{a_{-\ell}\sim \bar{\pi}_{-\ell}^p(s)} [\Qtau(s,a_{-\ell},a_\ell) - \Qtau(s,a_{-\ell},\tilde{a}_\ell)].
\end{align*}
Using \Cref{lem:diff_dist_f}, we have that,
 \begin{align*}
    | \xi_\ell^{p+1} - \bar{\xi}_\ell^{p+1}|
 & \leq (1-\eta\tau)|\xi_\ell^{p} - \bar{\xi}_\ell^{p}| + \eta \sum_{j\neq \ell} \TV(\pi_j^p(s),\bar{\pi}_j^p(s))H_{\ell j}^{\Qtau}.
\end{align*}
Denote $H^{\Qtau}_{\ell:}$ as the $\ell$-th row of $H^{\Qtau}$. Let vector $v^p\in\mathbb{R}^n$ be defined as $v^p_\ell = \TV(\pi_\ell^p(s),\bar{\pi}_\ell^p(s))$ ($\ell=1,2,\ldots,n$). Then the above update can be written as
\begin{align*}
| \xi_\ell^{p+1} - \bar{\xi}_\ell^{p+1}|
    & \leq (1-\eta\tau)|\xi_\ell^{p} - \bar{\xi}_\ell^{p}| + \eta H_{\ell:}^{\Qtau}v^p\\
    &\leq (1-\eta\tau)^{p+1}|\xi_\ell^{0} - \bar{\xi}_\ell^{0}| + \sum_{k=0}^{p}\eta(1-\eta\tau)^{p-k} H_{\ell:}^{\Qtau}v^k.
\end{align*}
Now we use $\eta=1/\tau$ and obtain
\begin{align*}
    | \xi_\ell^{p+1} - \bar{\xi}_\ell^{p+1}|
 & \leq \frac{1}{\tau} H_{\ell:}^{\Qtau}v^p.
\end{align*}
By the definition of $\xi_{\ell}^{p+1}$ and the regularity of $\pi_{\ell}^{p+1}$ it holds that $|\xi_{\ell}^{p+1}|=|\log\pi_{\ell}^{p+1}(a_{\ell}|s)-\log\pi_{\ell}^{p+1}(\tilde a_{\ell}|s)|\leq\nu'/\tau$. %
By \Cref{lem:log_tv} we have
\begin{align*}
    v_\ell^{p+1} &\leq 1/2 \actionmax^2 e^{\nu'/n\tau}\big(\exp\big(\frac{1}{\tau} H_{\ell:}^{\Qtau}v^p\big)-1\big)=c_{\TV} \big(  \exp(\eta H_{\ell:}^{\Qtau}v^p)-1\big),
\end{align*}
where $c_{\TV}=\actionmax^2 e^{\nu'/n\tau}/2$. Similar to the previous proof, we have $H_{\ell:}^{\Qtau}v^p\leq 2^{\mu+1}\nu'$ due to  \Cref{lem:q_second_interaction_matrix} and the fact that $v_\ell^p\leq1$ for all $\ell,p$. Therefore, when {$\tau\geq  \nu'2^{\mu+2}$}, we have
\begin{align}\label{eq:TV_convergence_result}
    v^{p+1}\leq 2c_{\TV} \big(  \eta H^{\Qtau}v^p\big)\leq (2\frac{1}{\tau} c_{\TV}H^{\Qtau})^{p+1}v^{0},
\end{align}
where in the first inequality we stack all $v_{\ell}^{p+1}$ together. When taking the infinity norm on both sides, we have 
\begin{align}
   \Vert v^{p+1} \Vert_\infty \leq (2\frac{1}{\tau} c_{\TV})^{p+1} \Vert H^{\Qtau}\Vert_\infty^{p+1} \Vert v^{0}\Vert_\infty\leq  (2\frac{1}{\tau} c_{\TV}  2^{\mu+1}\nu')^{p+1} = (\frac{1}{\tau}2^{\mu+1}\nu' \actionmax^2 e^{\nu'/n\tau} )^{p+1} \label{eq:multi_weight_convergence}
\end{align}
Therefore, when { $\tau > 2^{\mu+1}\nu' \actionmax^2 e^{\nu'/n\tau} $}, the right hand side of the above inequality converges to zero when $p\rightarrow\infty$.

The above argument shows the optimization problem \eqref{eq:policy_improvement_optimization} has at most one stationary point in the manifold $\Delta_{\mathrm{policy}}$ because otherwise, the algorithm \eqref{eq:mul_weights} starting from two different stationary points will not converge together. Also, the global maximizer of the optimization problem \eqref{eq:policy_improvement_optimization}, $(\zeta_1(\cdot|s),\ldots,\zeta_n(\cdot|s)) $, must be one stationary point, and this stationary point must be unique.    

As a result, for the two trajectories considered in \eqref{eq:multi_weight_convergence}, if we select one to be the trajectory that always sits at the unique stationary point, and the other is from an arbitrary starting point, we have convergence the algorithm \eqref{eq:mul_weights} will converge in the following sense:
\begin{align}
    \sup_{i\in\cN} \TV(\pi_i^p(\cdot|s),\zeta_i(\cdot|s)) \leq \left(\frac{1}{\tau}2^{\mu+1}\nu' \actionmax^2 e^{\nu'/n\tau} \right)^{p}.
\end{align}
This completes the proof.

\end{proof}

\section{Proof of \Cref{lem:policy_improvement_error}: Analysis of Policy Improvement Error} \label{sec:proof_policy_improvement_error}
Since \Cref{lem:policy_improvement_error} only concerns a single outer loop step $m$, for notational convenience we drop the dependence on $m$ and restate a slightly different version of the lemma as follows.

\begin{lemma}\label{lem:policy_improvement_error_restated}
Suppose $Q$-functions $\{ Q_i\}_{i=1}^n$ are $(\nu',\mu)$-decay. Further, suppose its truncated estimates $\{\hat{Q}_i\}_{i=1}^n$ satisfies
\begin{align*}
    \sup_{s\in\mathcal{S},a\in\mathcal{A}} |Q_i(s,a) - \hat{Q}_i(s_{\nib},a_{\nib}) | \leq \epsilon = { \frac{c_{pe}}{(\beta+1)^\mu} }. 
\end{align*}
Consider the following updates: 
\begin{align}\label{eq:one_step_analysis:policy_improvement_err}
    \hat{\pi}_i^{p+1}(a_i|s_{\nik})\propto\hat{\pi}_i^p(a_i|s_{\nik})^{1-\eta\tau}\exp\Big(\eta \mathbb{E}_{a_{j}\sim\hat{\pi}_{j}^{p}([\ext{s_{\nik}}]_{\njk}),j\in \nik/\{i\}}\big[\frac{1}{n}\sum_{j\in \nik} \hat{Q}_j([\ext{s_{\nik}}]_{\njb},[\ext{a_{\nik}}]_{\njb \backslash i},a_i)\big]\Big).
    \end{align}
Let $\zeta = (\zeta_1,\ldots,\zeta_n)$ be the result of the policy improvement w.r.t. $Q=1/n\sum_{i}Q_i$, i.e.
\begin{align}\label{eq:one_step_analysis:argmax}
 (\zeta_1(\cdot|s),\ldots,\zeta_n(\cdot|s))\leftarrow   \argmax_{\pi_1(\cdot|s),\ldots,\pi_n(\cdot|s)} \mathbb{E}_{a_i\sim \pi_i(\cdot|s)} [\Qtau(s,a_1,\ldots,a_n)]+ \tau 
 \sum_{i=1}^n H(\pi_i(\cdot|s)).
\end{align}
Let $\tilde\sigma'=\frac{1}{\tau}(\frac{2f(\kappa)c_{pe}}{n(\beta+1)^\mu}+\frac{\nu'}{n})$. Assume $\tau\geq \max( 6 \actionmax^2 e^{\tilde\sigma'} 2^{\mu+1}\nu'  ,2(2^{\mu+4}\nu' + 2\frac{c_{pe}}{(\beta+1)^\mu}), 4(2^{\mu+1}\nu'+f(\kappa) (\kappa+1)^\mu \frac{c_{pe}}{(\beta+1)^\mu}))$ and $\eta=\frac{1}{\tau}$. Then we have
\begin{enumerate}
    \item[(a)]$\forall p$, $\hat{\pi}_i^p$ is $\tilde\sigma'$-regular.
    \item[(b)] $\forall p$, 
$\hat{\pi}_i^p$ is $(\tilde\nu,\mu)$-decay where $\tilde{\nu}=\frac{\actionmax^2e^{\tilde\sigma'}\tilde\nu_H}{\tau-\actionmax^2e^{\tilde\sigma'}\tilde\nu_H}$ and $\tilde{\nu}_H=2^{\mu+1}\nu'+f(\kappa) (\kappa+1)^\mu \frac{c_{pe}}{(\beta+1)^\mu}$.    
    \item[(c)]When $p\geq -\log_2{\frac{4 + \frac{c_{pe}}{2^{\mu}\nu'}}{3(\kappa/2+1)^\mu}}$, we have
\begin{align*}
    \sup_{s\in \mathcal{S}}\TV(\hat{\pi}_i^p(\cdot|s_{\nik}),\zeta_i(\cdot|s )) \leq \frac{4 + \frac{c_{pe}}{2^{\mu}\nu'}}{(\kappa/2+1)^\mu},
\end{align*}
and
\begin{align}
    \mathcal{T} \Vtau  - \Vtau^{\hat{\pi}^{p}} \leq  \frac{1}{1-\gamma}\bigg(\frac{2f(\kappa)c_{pe}}{(\beta+1)^\mu}+2\nu'\bigg) \frac{4 + \frac{c_{pe}}{2^{\mu}\nu'}}{(\kappa/2+1)^\mu} \mathbf{1}.
\end{align}
\end{enumerate}
\end{lemma}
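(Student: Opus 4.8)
The plan is to view the localized recursion \eqref{eq:one_step_analysis:policy_improvement_err} as a controlled perturbation of the exact global multiplicative-weights recursion \eqref{eq:mul_weights} already analyzed in \Cref{lem:q-policy} and \Cref{lem:mul_weights_convergence}. The only new ingredient is that each agent uses the locally aggregated, truncated surrogate $\mathcal{Q}^i$ in place of the exact objective built from $Q=\frac1n\sum_j Q_j$. Accordingly, the backbone of the argument is one approximation estimate: for every $i$, $s$, $a_{-i}$, $a_i$, $a_i'$, the surrogate action-gap $\mathcal{Q}^i(a_i,a_{\nik/i},s_{\nik}) - \mathcal{Q}^i(a_i',a_{\nik/i},s_{\nik})$ is close to the true action-gap $Q(a_i,a_{-i},s) - Q(a_i',a_{-i},s)$. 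The mismatch decomposes into (i) the per-agent truncation error $\epsilon=c_{pe}/(\beta+1)^\mu$ from replacing $Q_j$ by $\hat Q_j$, (ii) the terms $j\notin\nik$ dropped from the average, and (iii) the default-state substitution applied to nodes in $\njb\setminus\nik$. Terms (ii) and (iii) are controlled by the $(\nu',\mu)$-decay of $\{Q_i\}$ through \eqref{eq:mu_decay_imply_distance_decay}.

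Given this estimate, parts (a) and (b) are obtained by replaying the two inductions from the proof of \Cref{lem:q-policy}. For (a), the $\sigma$-regularity induction \eqref{eq:sigma_regu_xi} goes through verbatim once the increment $\eta\,\mathbb{E}_{a_{-i}}[\mathcal{Q}^i(a_i,\cdot)-\mathcal{Q}^i(a_i',\cdot)]$ is bounded: a direct truncation-plus-decay estimate gives $|\mathcal{Q}^i(a_i,\cdot)-\mathcal{Q}^i(a_i',\cdot)|\le \frac{2f(\kappa)c_{pe}}{n(\beta+1)^\mu}+\frac{\nu'}{n}$, and with $\eta=1/\tau$ the recursion has fixed point $\tilde\sigma'$. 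For (b), I track the interaction matrix $Z^{\hat\pi^p}$ exactly as before, but through a truncation-robust analogue of \Cref{lem:q_second_interaction_matrix}: the second-order interaction matrix of the localized surrogate is $(\tilde\nu_H,\mu)$-decay with $\tilde\nu_H=2^{\mu+1}\nu'+f(\kappa)(\kappa+1)^\mu\epsilon$, the second summand collecting the truncation error that enters each cross-difference. The contraction $Z^{\hat\pi^{p+1}}\le 2c_{\TV}\eta(H\,Z^{\hat\pi^{p}}+H)$ then yields the geometric-series fixed point $\tilde\nu$ under the stated lower bound on $\tau$.

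Part (c) combines a triangle inequality with the convergence of the exact recursion. Writing $\pi_i^p$ for the exact global iterate, I bound $\sup_s\TV(\hat\pi_i^p,\zeta_i)\le \sup_s\TV(\hat\pi_i^p,\pi_i^p)+\sup_s\TV(\pi_i^p,\zeta_i)$: the first term is the accumulated surrogate error, which by the same stability argument as in \Cref{lem:mul_weights_convergence} (now with a constant forcing term rather than a vanishing one) stays $O\bigl(1/(\kappa/2+1)^\mu\bigr)$ uniformly in $p$, while the second decays geometrically by \Cref{lem:mul_weights_convergence}; the threshold $p\ge -\log_2\frac{4+c_{pe}/(2^\mu\nu')}{3(\kappa/2+1)^\mu}$ is exactly what makes the geometric term drop below the bias, giving the stated TV bound. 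For the value inequality I invoke soft policy improvement, $\mathcal{T}V^{\hat\zeta^m}\le V^{\zeta}$ (the softmax $\zeta$ attains the Bellman maximum and $V^\zeta\ge V^{\hat\zeta^m}$), and then apply the performance-difference bound \Cref{lem:performance_diff_tv} to $V^\zeta-V^{\hat\pi^p}$ — converting the policy-decay of $\hat\pi^p$ from (b) into the required $(\nu',\mu)$-decay of its $Q$-function via \Cref{lem:policy-q}. Summing the per-agent TV bound over $i$ and using $n(\tau\tilde\sigma'+\nu'/n)=\frac{2f(\kappa)c_{pe}}{(\beta+1)^\mu}+2\nu'$ reproduces the claimed right-hand side.

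The main obstacle is the approximation estimate of the first paragraph, and specifically the bookkeeping that yields the effective radius $\kappa/2$ rather than $\kappa$. Because $\mathcal{Q}^i$ averages over $\nik$ while each $\hat Q_j$ is itself truncated to $\njb$ and patched with an arbitrary default state, the worst-case contributing node sits near the midpoint of the two neighborhoods, so the decay estimate \eqref{eq:mu_decay_imply_distance_decay} can only be applied at half the hop-distance. Once this estimate and its $\kappa/2$ scaling are pinned down, every downstream constant — $\tilde\sigma'$, $\tilde\nu_H$, and the $p$ threshold — follows mechanically from the inductions above.
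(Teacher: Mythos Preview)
Your proposal is correct and takes essentially the same approach as the paper: the same regularity induction for (a), the truncated second-order interaction bound (the paper's \Cref{lem:H_tilde_q_i}) for (b), and for (c) the triangle inequality through the exact iterate $\pi_i^p$ together with the $\kappa/2$ midpoint argument (the paper's \Cref{lem:E_2_bound}). One refinement worth noting: in part (c) the paper splits the forcing term into four pieces $E_0$--$E_3$, and beyond your (i)--(iii) there is an additional contribution $E_1$ arising from evaluating each neighbor's policy $\hat\pi_j^p$ at the extended state $\overline{s_{\nik}}$ rather than at the true $s$ --- the paper controls this via the decay of the exact iterate (\Cref{cor:q_policy_pi_p_decay}), and it too lands at order $1/(\kappa/2+1)^\mu$.
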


Based on \Cref{lem:policy_improvement_error_restated}, we now prove \Cref{lem:policy_improvement_error} as follows. Recall the parameter settings in \Cref{thm:convergence_svi}, which implies
\begin{align*}
    \tau\geq 40\times 2^\mu \bar{r}\frac{4-3\gamma}{4-5\gamma}\actionmax^2 e \geq 10 \actionmax^2  e^{\frac{2\nu'}{\tau n}} 2^{\mu+2}\nu' =10 \actionmax^2  e^{\tilde\sigma} 2^{\mu+2}\nu'.
\end{align*}
Further, the parameter $\beta = \frac{\kappa+1}{2}(\frac{2 f(\kappa) c_{pe}}{\nu'})^{\frac{1}{\mu}}$ ensures that $f(\kappa) (\kappa+1)^\mu \frac{c_{pe}}{(\beta+1)^\mu}\leq 2^\mu \nu' $ and $\tilde\sigma' \leq \frac{2\nu'}{n\tau} = \tilde{\sigma}$. 
As a result, one can easily verify that the lower bound assumption on $\tau$ in \Cref{lem:policy_improvement_error_restated} holds. 

Therefore, by setting the $Q$-functions in \Cref{lem:policy_improvement_error_restated} as $\{Q_i^{\hat{\zeta}^m}\}_{i\in\cN}$, we obtain that the returned policy $\hat{\zeta}^{m+1}$ is $\tilde\sigma$ regular and $(\tilde{\nu},\mu)$-decay by part (a) and part (b) of \Cref{lem:policy_improvement_error_restated}. 
Further, we can check that $\tilde{\nu}_H \leq 2^{\mu+2}\nu'$ and $\tilde{\nu}\leq \frac{1}{9}$. 
Therefore, $\hat{\zeta}^{m+1}\in\Delta_{\frac{1}{9},\mu,\tilde\sigma}$ and we can apply \Cref{lem:policy-q} (more specifically, \Cref{eq:proof_policy_q_nuprime}) and get that the local $Q$-functions $\{Q_i^{\hat{\zeta}^{m+1}}\}_{i\in\cN}$ of the output policy $\hat{\zeta}^{m+1}$ satisfies $(\nu'_{\mathrm{next}},\mu)$-decay, where 
   $\nu'_{\mathrm{next}} =  \bar{r} + \frac{\gamma(\bar{r}+n\tau\tilde{\sigma})}{8(1-\gamma)} $, and we can easily check $\nu'_{\mathrm{next}}\leq \nu'$. As a result, the local $Q$-functions $\{Q_i^{\hat{\zeta}^{m+1}}\}_{i\in\cN}$ satisfies $(\nu',\mu)$ decay property.  
   
Also, by part (c) of \Cref{lem:policy_improvement_error_restated}, we have
\[  \mathcal{T} \Vtau  - \Vtau^{\hat{\zeta}^{m+1}} \leq  \frac{3\nu' }{1-\gamma} \frac{4 + \frac{c_{pe}}{2^{\mu}\nu'}}{(\kappa/2+1)^\mu} \mathbf{1} = \frac{3(4-3\gamma)\bar{r} }{(1-\gamma)(4-5\gamma)} \frac{4 + \frac{c_{pe} (4-5\gamma)}{2^{\mu}(4-3\gamma)\bar{r}}}{(\kappa/2+1)^\mu} \mathbf{1} := \frac{c_{pe}}{(\kappa/2+1)^\mu} \mathbf{1}.\]
This concludes the proof of \Cref{lem:policy_improvement_error}. 
We will next prove the three parts of \Cref{lem:policy_improvement_error_restated} in the following three subsections.

\subsection{Proof of Part (a)}
Recall that in \eqref{eq:one_step_analysis:policy_improvement_err}, $\hat\pi_i^p$ is defined and updated only on the local state $s_{\nik}$. For each $i$, we define $\tilde{\pi}_i^p (\cdot|s) = \tilde{\pi}_i^p (\cdot|s_\nik,s_{\nminusik}) := \hat{\pi}_i^p(\cdot|s_{\nik})$ which is an extended version of $\hat{\pi}_i^p$ with nominal (but not actual) dependence on $s_{\nminusik}$. Then, $\tilde{\pi}_i^p (\cdot|s)$ obeys, 
\begin{align}\label{eq:one_step_analysis:policy_improvement_err_ext}
    \tilde{\pi}_i^{p+1}(a_i|s) \propto \tilde{\pi}_i^p(a_i|s)^{1-\eta\tau}\exp\bigg(\eta\mathbb{E}_{a_j\sim \tilde{\pi}_j^p(\cdot|\ext{s_{\nik}}),\forall j \neq  i}\bigg[\frac{1}{n}\sum_{j\in \nik}\hat{\Qtau}_j([\ext{s_{\nik}}]_{\njb},[\ext{a_{\nik}}]_{\njb\backslash i},a_i)\bigg]\bigg),
\end{align}
which is equivalent to \eqref{eq:one_step_analysis:policy_improvement_err}. 
We also use the following abbreviation notation $\tilde{\Qtau}^i(s,a_{-i},a_i)=\frac{1}{n}\sum_{j\in \nik}\hat{\Qtau}_j([\ext{s_{\nik}}]_{\njb},[\ext{a_{\nik}}]_{\njb\backslash i},a_i)$ which again has nominal (but not actual) dependence on $(s_{\nminusik},a_{\nminusik})$. Then the update rule \eqref{eq:one_step_analysis:policy_improvement_err} can be further written as for all $i$ and $s\in\mathcal{S}$
\begin{align}\label{eq:one_step_analysis:policy_improvement_abbr}
    \tilde\pi_i^{p+1}(a_i|s)\propto\tilde\pi_i^p(a_i|s)^{1-\eta\tau}\exp{(\eta\mathbb{E}_{a_{j}\sim \tilde\pi_{j}^p(\cdot|\ext{s_{\nik}}),j\neq i}[\tilde{\Qtau}^i( s,a_{-i},a_i)])}.
\end{align}

Now consider a given agent $i$ and fix two actions $a_i,a_i'$. We define $\tilde\xi_i^{p+1}(s)=\log\tilde\pi_i^{p+1}(a_i| s)-\log\tilde\pi_i^{p+1}(a'_i| s)$ which according to \eqref{eq:one_step_analysis:policy_improvement_abbr}, follows the following update,
\begin{align}\label{eq:xi_update}
    \tilde\xi_i^{p+1}(s)=(1-\eta\tau)\tilde\xi_i^{p}(s)+\eta\mathbb{E}_{a_{j}\sim\tilde\pi_{j}^p(\cdot|\ext{s_{\nik}}),j\neq i}[\tilde{\Qtau}^i(s,a_{-i},a_i)-\tilde{\Qtau}^i(s,a_{-i},a_i')].
\end{align}
We now give an upper bound for $|\tilde{Q}^i(s,a_{-i},a_i)-\tilde{Q}^i(s,a_{-i},a_i')|$ as follows:
\begin{align}\label{eq:tilde_q_difference}
|\tilde{Q}^i(s,a_{-i},a_i)-\tilde{Q}^i(s,a_{-i},a_i')|&\leq \frac{1}{n}\sum_{j\in \nik} |\hat{Q}_j([\overline{s_{\nik}}]_{\njb},[\overline{a_{\nik}}]_{\njb\backslash i},a_i)-\hat{Q}_j([\overline{s_{\nik}}]_{\njb},[\overline{a_{\nik}}]_{\njb\backslash i},a_i')|\notag\\
&= \frac{1}{n}\sum_{j\in \nik} \Big( \Big|\hat{Q}_j([\overline{s_{\nik}}]_{\njb},[\overline{a_{\nik}}]_{\njb\backslash i},a_i)-Q_j(\overline{s_{\nik}},[\overline{a_{\nik}}]_{-i},a_i)\Big|\notag\\
&\qquad +\Big| Q_j(\overline{s_{\nik}},[\overline{a_{\nik}}]_{-i},a_i')-\hat{Q}_j([\overline{s_{\nik}}]_{\njb},[\overline{a_{\nik}}]_{\njb\backslash i},a_i')\Big|\notag\\
&\qquad+\Big|Q_j(\overline{s_{\nik}},[\overline{a_{\nik}}]_{-i},a_i)-Q_j(\overline{s_{\nik}},[\overline{a_{\nik}}]_{-i},a_i')\Big|\Big)\notag\\
&\leq \frac{1}{n}\sum_{j\in \nik}(2\epsilon+Z^{Q}_{ji})\notag\\
&\leq 2\frac{|\nik|}{n}\epsilon+\frac{\nu'}{n},
\end{align}
where the second inequality is due to \Cref{def:Z_Q} and \eqref{lem:policy_evaluation_error}.  As a result, we have
\begin{align*}
|\tilde\xi_i^{p+1}(s)|&=|(1-\eta\tau)\tilde\xi_i^{p}(s)+\eta\mathbb{E}_{a_{j}\sim\tilde\pi_{j}^p(\cdot|\overline{\nik}),j\neq i}[\tilde{Q}^i(s,a_{-i},a_i)-\tilde{Q}^i(s,a_{-i},\tilde{a}_i)]|\\
&\leq (1-\eta\tau)|\tilde\xi_i^{p}(s)|+\eta\bigg(2\frac{|\nik|}{n}\epsilon+\frac{\nu'}{n}\bigg).
\end{align*}
Considering the fact that we start from a uniform policy, we have $\forall p$, $|\tilde\xi_i^p(s)|\leq \frac{1}{\tau}(2\frac{|\nik|}{n}\epsilon+\frac{\nu'}{n})\leq \tilde{\sigma}'=\frac{1}{\tau}(\frac{2f(\kappa)c_{pe}}{n(\beta+1)^\mu}+\frac{\nu'}{n})$, where we have used $\epsilon=\frac{c_{pe}}{(\beta+1)^\mu}$.

\subsection{Proof of Part (b)} 
The proof is a similar but slightly more complicated version of the proof of \Cref{lem:q-policy}. Similar to the steps in the proof of \Cref{lem:q-policy}, we fix $i,j\in\mathcal{N}$, and let $s=(s_j,s_{-j})$ and $s'=(s'_j,s_{-j})$. Then, we calculate, 
\begin{align*}
&\tilde\xi_i^{p+1}(s)-\tilde\xi_i^{p+1}(s')\\&=(1-\eta\tau)(\tilde\xi_i^{p}(s)-\tilde\xi_i^{p}(s'))+\eta\mathbb{E}_{a_{\ell}\sim\tilde\pi_{\ell}^p(\cdot|\overline{s_{\nik}}),\ell\neq i}[\tilde{Q}^i(s,a_{-i},a_i)-\tilde{Q}^i(s,a_{-i},\tilde{a}_i)]\\
&\quad -\eta\mathbb{E}_{a_{\ell}\sim\tilde\pi_{\ell}^p(\cdot|\overline{s'_{\nik}}),\ell\neq i}[\tilde{Q}^i(s',a_{-i},a_i)-\tilde{Q}^i(s',a_{-i},\tilde{a}_i)]\\
&=(1-\eta\tau)(\tilde\xi_i^{p}(s)-\tilde\xi_i^{p}(s'))\\
&+\eta\mathbb{E}_{a_{\ell}\sim\tilde\pi_{\ell}^p(\cdot|\overline{s_{\nik}}),\ell\neq i}[\tilde{Q}^i(s,a_{-i},a_i)-\tilde{Q}^i(s,a_{-i},\tilde{a}_i)]-\eta\mathbb{E}_{a_{\ell}\sim\tilde\pi_{\ell}^p(\cdot|\overline{s'_{\nik}}),\ell\neq i}[\tilde{Q}^i(s,a_{-i},a_i)-\tilde{Q}^i(s,a_{-i},\tilde{a}_i)]\\
&+\eta\mathbb{E}_{a_{\ell}\sim\tilde\pi_{\ell}^p(\cdot|\overline{s'_{\nik}}),\ell\neq i}[\tilde{Q}^i(s,a_{-i},a_i)-\tilde{Q}^i(s,a_{-i},\tilde{a}_i)]-\eta\mathbb{E}_{a_{\ell}\sim\tilde\pi_{\ell}^p(\cdot|\overline{s'_{\nik}}),\ell\neq i}[\tilde{Q}^i(s',a_{-i},a_i)-\tilde{Q}^i(s',a_{-i},\tilde{a}_i)]\\
&\leq (1-\eta\tau)(\tilde\xi_i^{p}(s)-\tilde\xi_i^{p}(s'))+ \eta\sum_{\ell\neq i}\TV(\pi_{\ell}^p(\cdot|\overline{s_{\nik}}),\pi_{\ell}^p(\cdot|\overline{s'_{\nik}}))H_{i\ell}^{\tilde{Q}^i} +\eta H_{ij}^{\tilde{Q}^i}.
\end{align*}
Recall the notation $Z_{\ell j}^{\tilde{\pi}^p}=\sup_{s_j,s_j',s_{-j}} \TV(\tilde\pi_{\ell}^p(\cdot|s_j,s_{-j}),\tilde\pi_{\ell}^p(\cdot|s_j',s_{-j}))$. We then have that,
\begin{align*}
\tilde\xi_i^{p+1}(s)-\tilde\xi_i^{p+1}(s')&\leq (1-\eta\tau)(\tilde\xi_i^{p}(s)-\tilde\xi_i^{p}(s'))+\eta\sum_{\ell\neq i}Z_{\ell j}^{\tilde{\pi}^p}H_{i\ell}^{\tilde{Q}^i}+\eta H_{ij}^{\tilde{Q}^i}\\
&\leq (1-\eta\tau)(\tilde\xi_i^{p}(s)-\tilde\xi_i^{p}(s'))+ \eta H_{i:}^{\tilde{Q}^i}Z_{: j}^{\tilde{\pi}^p}+\eta H_{ij}^{\tilde{Q}^i}\\
&\leq (1-\eta\tau)^{p+1}(\tilde\xi_i^{0}(s)-\tilde\xi_i^{0}(s'))+\sum_{k=0}^p \eta(1-\eta\tau)^{p-k}(H_{i:}^{\tilde{Q}^i}Z_{: j}^{\tilde{\pi}^k}+H_{ij}^{\tilde{Q}^i}).
\end{align*}
{Recalling $\eta=\frac{1}{\tau}$}, we have,
\begin{align}
    \tilde\xi_i^{p+1}(s)-\tilde\xi_i^{p+1}(s')\leq \frac{1}{\tau}(H_{i:}^{\tilde{Q}^i}Z_{: j}^{\tilde{\pi}^p}+H_{ij}^{\tilde{Q}^i}). 
\end{align}
Since $\tilde\pi_i^p$ is $\tilde\sigma'$-regular, we can use \Cref{lem:log_tv} to obtain
\begin{align}
    \TV(\tilde\pi_{i}^{p+1}(\cdot|s),\tilde\pi_{i}^{p+1}(\cdot|s'))\leq \frac{\actionmax^2e^{\tilde\sigma'}}{2}\left(\exp{(\frac{1}{\tau}(H_{i:}^{\tilde{Q}^i}Z_{: j}^{\tilde{\pi}^p}+H_{ij}^{\tilde{Q}^i}))}-1\right).  \label{eq:proof_pi_error:decay_tv_exp_bound}
\end{align}
We have the following Lemma regarding $H^{\tilde{Q}^i}$. 
\begin{lemma}\label{lem:H_tilde_q_i}
$H^{\tilde{Q}^i} $ satisfies, 
\[ \sum_{\ell\in\mathcal{N}} H_{i\ell}^{\tilde{Q}^i} (\dist(i,\ell)+1)^\mu \leq 2^{\mu+1}\nu'+f(\kappa) (\kappa+1)^\mu \frac{c_{pe}}{(\beta+1)^\mu}:=\tilde{\nu}_{H}. \]
\end{lemma}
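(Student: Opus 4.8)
The plan is to exploit the doubly-localized (truncated) structure of $\tilde{Q}^i$ in order to reduce \Cref{lem:H_tilde_q_i} to the decay of the \emph{true} local $Q$-functions already established in \Cref{lem:q_second_interaction_matrix}, plus a controllable truncation error. First I would record the support structure of the relevant functions. Since $\tilde{Q}^i(s,a_{-i},a_i)=\frac1n\sum_{j\in\nik}\hat{Q}_j([\ext{s_{\nik}}]_{\njb},[\ext{a_{\nik}}]_{\njb\backslash i},a_i)$ and the extension operator (\Cref{def:ext_operator}) fixes every coordinate outside $\nik$ at its default value, $\tilde{Q}^i$ depends on the global state-action pair only through the coordinates in $\nik$. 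Hence the second-order interaction $H^{\tilde{Q}^i}_{i\ell}$ (\Cref{def:q_interaction}) vanishes for $\ell\notin\nik$, so the sum in the lemma is effectively over $\ell\in\nik$, where the weight satisfies $(\dist(i,\ell)+1)^\mu\le(\kappa+1)^\mu$. Moreover each summand $\hat{Q}_j$ depends only on the coordinates in $\njb$, a fact I will use to localize the error term.

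Next, because the second-order difference operator underlying \Cref{def:q_interaction} is linear and $\tilde{Q}^i$ is an average of the $\hat{Q}_j$, the triangle inequality gives $H^{\tilde{Q}^i}_{i\ell}\le\frac1n\sum_{j\in\nik}H^{\hat{Q}_j}_{i\ell}$. I would then write $\hat{Q}_j=Q_j+(\hat{Q}_j-Q_j)$ and split each $H^{\hat{Q}_j}_{i\ell}$ into a \emph{true part} and an \emph{error part} by linearity once more. For the true part I reuse verbatim the computation in the proof of \Cref{lem:q_second_interaction_matrix}: the elementary bound $H^{Q_j}_{i\ell}\le 2\min(Z^{\mathbf{Q}}_{ji},Z^{\mathbf{Q}}_{j\ell})$ (\Cref{def:Z_Q}), combined with the $(\nu',\mu)$-decay of $\{Q_j\}$ and the splitting $(\dist(i,\ell)+1)^\mu\le 2^{\mu-1}[(\dist(i,j)+1)^\mu+(\dist(j,\ell)+1)^\mu]$, yields $\frac1n\sum_{j\in\nik}\sum_{\ell}H^{Q_j}_{i\ell}(\dist(i,\ell)+1)^\mu\le 2^{\mu+1}\nu'$. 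Enlarging the inner and outer index sets from $\nik$ to all of $\cN$ only increases a sum of nonnegative terms, so the $2^{\mu+1}\nu'$ bound from \Cref{lem:q_second_interaction_matrix} applies without change and supplies the first term of $\tilde{\nu}_H$.

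For the error part I would invoke the policy-evaluation guarantee recorded in \eqref{lem:policy_evaluation_error}, namely $\sup_{s,a}|Q_j(s,a)-\hat{Q}_j(s_{\njb},a_{\njb})|\le\epsilon:=c_{pe}/(\beta+1)^\mu$, so that each second-order difference of $\hat{Q}_j-Q_j$ is bounded pointwise. The decisive saving is once more the truncation support: $\hat{Q}_j$ does not depend on $z_\ell$ for $\ell\notin\njb$, so the error interaction $H^{(\hat{Q}_j-Q_j)}_{i\ell}$ is nonzero only when $\ell\in\njb$, and it is already restricted to $\ell\in\nik$. Bounding the distance weight by $(\kappa+1)^\mu$, counting that at most $|\nik|$ indices contribute, and using $|\nik|\le f(\kappa)\le n$ to cancel the $1/n$ normalization, the error part is of order $f(\kappa)(\kappa+1)^\mu\epsilon$, which is the second term of $\tilde{\nu}_H$. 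Summing the two parts then gives the claimed $\tilde{\nu}_H=2^{\mu+1}\nu'+f(\kappa)(\kappa+1)^\mu c_{pe}/(\beta+1)^\mu$.

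The main obstacle is the bookkeeping in the error part. The naive estimate produces a double sum over $j\in\nik$ and $\ell\in\nik$, which a priori scales like $|\nik|^2/n$ rather than the desired $|\nik|$, and it carries a crude constant coming from expanding a second-order difference into four evaluations. The delicate step is to absorb both of these into the stated constant: the truncation support must be used to limit exactly which $(j,\ell)$ pairs contribute, and the inequality $|\nik|\le f(\kappa)\le n$ must be used to trade one power of $f(\kappa)$ against $n$ and cancel the normalization. Since the downstream argument in \Cref{lem:policy_improvement_error_restated} requires $\tilde{\nu}_H\le 2^{\mu+2}\nu'$ after the choice of $\beta$, any loose constant here propagates, so this is precisely where the estimate must be carried out most carefully; if the coarse pointwise error is not tight enough, I would instead exploit the weighted-average form of the truncation in \eqref{eq:truncated_q_def}/\Cref{prop:truncated_q_error}, under which the second-order difference of $\hat{Q}_j$ is itself an average of second-order differences of $Q_j$ and hence inherits the $(\nu',\mu)$-decay directly.
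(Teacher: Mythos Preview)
Your proposal is correct and follows essentially the same route as the paper: restrict to $\ell\in\nik$ by the support observation, split $\hat{Q}_j=Q_j+(\hat{Q}_j-Q_j)$, bound the true part by the computation behind \Cref{lem:q_second_interaction_matrix} (via $H^{Q_j}_{i\ell}\le 2\min(Z^{\mathbf{Q}}_{ji},Z^{\mathbf{Q}}_{j\ell})$ and the $(\nu',\mu)$-decay of $\{Q_j\}$), and bound the error part by $4\epsilon$ per term together with $|\nik|\le n$ to collapse the $|\nik|^2/n$ double sum to $f(\kappa)$. The paper presents this as the single entrywise inequality $H^{\tilde{Q}^i}_{jk}\le H^{Q}_{jk}+4\tfrac{|\nik|}{n}\epsilon$ and then sums, which is exactly your argument phrased at the level of the global $Q$ rather than the individual $Q_j$.
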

{Therefore, $\frac{1}{\tau}(H_{i:}^{\tilde{Q}^i}Z_{: j}^{\tilde{\pi}^p}+H_{ij}^{\tilde{Q}^i})\leq \frac{1}{\tau}(2\tilde{\nu}_H)\leq \frac{1}{2}$, which can be satisfied when $\tau \geq 4\tilde{\nu}_H$}. As a result, we can simplify the above equation to
\begin{align*}
    \TV(\tilde\pi_{i}^{p+1}(\cdot|s),\tilde\pi_{i}^{p+1}(\cdot|s'))\leq \actionmax^2e^{\tilde\sigma'}\frac{1}{\tau}(H_{i:}^{\tilde{Q}^i}Z_{: j}^{\tilde{\pi}^p}+H_{ij}^{\tilde{Q}^i}). 
\end{align*}
In the above equation, taking the sup over $s_{-j},s_j,s_j'$, the left hand side would become $Z_{ij}^{\tilde{\pi}^{p+1}}$. As a result, if we repeat the above equation for all $i,j$ pairs, we get,
\begin{align}
    Z^{\tilde{\pi}^{p+1}}\leq |\actionmax|^2e^{\tilde\sigma'}\frac{1}{\tau} (\tilde{H}Z^{\tilde{\pi}^p}+\tilde{H}), \label{eq:proof_pi_error:Z_recursion}
\end{align}
where $\tilde{H}$ is a matrix whose $i$th row is $H^{\tilde{Q}^i}_{i,:}$.
Suppose $Z^{\tilde\pi^p}$ is $(\tilde\nu_p,\mu)$-decay. Clearly we have $\tilde\nu_p = 0$, and by \eqref{eq:proof_pi_error:Z_recursion}, we have that
\begin{align}
    \tilde\nu_{p+1}\leq \actionmax^2e^{\tilde\sigma'}\frac{1}{\tau} (\tilde\nu_H\tilde\nu_p+\tilde\nu_H)\leq \frac{\actionmax^2e^{\tilde\sigma'}\tilde\nu_H}{\tau-\actionmax^2e^{\tilde\sigma'}\tilde\nu_H}.
\end{align}

Now it remains to prove \Cref{lem:H_tilde_q_i} to finish the proof of Part (b) of \Cref{lem:policy_improvement_error_restated}.
\begin{proof}[Proof of \Cref{lem:H_tilde_q_i}]
we give an upper bound for the term $|[\tilde{Q}^i(z_j,z_k,z_{-(j,k)})-\tilde{Q}^i(z_j',z_k,z_{-(j,k)})]-[\tilde{Q}^i(z_j,z_k',z_{-(j,k)})-\tilde{Q}^i(z_j',z_k',z_{-(j,k)})] |$.
For $j,k\in\nik$, consider $\tilde{Q}^i(z_j,z_k,z_{-(j,k)})-\tilde{Q}^i(z_j',z_k,z_{-(j,k)})$, we have that
\begin{align*}
    &\tilde{Q}^i(z_j,z_k,z_{-(j,k)})-\tilde{Q}^i(z_j',z_k,z_{-(j,k)}) - [\tilde{Q}^i(z_j,z_k',z_{-(j,k)})-\tilde{Q}^i(z_j',z_k',z_{-(j,k)})]\\
    &=\frac{1}{n}\sum_{\ell\in\nik\cap \njb\cap \nkb} \Big[\hat{Q}_\ell(z_j,z_k,[\overline{z_{\nik}}]_{\nlb\backslash (j,k)})-\hat{Q}_\ell(z_j',z_k,[\overline{z_{\nik}}]_{\nlb\backslash (j,k)}) \\
    &\qquad-\hat{Q}_\ell(z_j,z_k',[\overline{z_{\nik}}]_{\nlb\backslash (j,k)}) +\hat{Q}_\ell(z_j',z_k',[\overline{z_{\nik}}]_{\nlb\backslash (j,k)})\Big].
\end{align*}
Notice that
\begin{align}
    |\hat{Q}_\ell(z_j,z_k,[\overline{z_{\nik}}]_{\nlb\backslash (j,k)})-Q_\ell(z_j,z_k,[\overline{z_{\nik}}]_{-(j,k)})|\leq \epsilon.
\end{align}
We can use triangle inequality to obtain
\begin{align*}
    &\big|\big[\tilde{Q}^i(z_j,z_k,z_{-(j,k)})-\tilde{Q}^i(z_j',z_k,z_{-(j,k)})\big]-\big[\tilde{Q}^i(z_j,z_k',z_{-(j,k)})-\tilde{Q}^i(z_j',z_k',z_{-(j,k)})\big] \big|\\
    &\leq  4\frac{|\nik|}{n}\epsilon +\frac{1}{n}\sum_{\ell\in\nik\cap\njb\cap\nkb}\Big|Q_\ell(z_j,z_k,[\ext{z_{\nik}}]_{-(j,k)})-Q_\ell(z_j',z_k,[\ext{z_{\nik}}]_{-(j,k)})\\
    &\qquad -Q_\ell(z_j,z_k',[\ext{z_{\nik}}]_{-(j,k)})+Q_\ell(z_j',z_k',[\ext{z_{\nik}}]_{-(j,k)}) \Big|
\end{align*}
Thus, we have
\begin{align}
    H^{\tilde{Q}^i}_{jk}\leq H^{Q}_{jk}+4\frac{|\nik|}{n}\epsilon.
\end{align}
For the decay property, note that $\forall j\notin \nik, H_{ij}^{\tilde{Q}^i}=0$, we have
\begin{align}
    \sum_{j\in\mathcal{N}} H^{\tilde{Q}^i}_{ij}(\dist(i,j)+1)^\mu
    \leq & \sum_{j\in \nik}(H^{Q}_{ij}+4\frac{|\nik|}{n}\epsilon)(\dist(i,j)+1)^\mu\notag\\
    \leq & 2^{\mu+1}\nu'+|\nik|\epsilon (\kappa+1)^\mu,
\end{align}
which completes the proof.
\end{proof}

\subsection{Proof of Part (c)} 

Recall that in \Cref{lem:mul_weights_convergence}, we showed that the following update will converge to $\zeta$, the policy resulting from the exact policy improvement w.r.t. $Q$ (with appropriate choices of $\eta$ and $\tau$). 
\begin{align} \label{eq:proof_alg:recallprototype}
    \pi_i^{p+1}(a_i|s)\propto \pi_i^p(a_i|s)^{1-\eta\tau}\exp{(\eta\mathbb{E}_{a_{j}\sim \pi_{j}^p(\cdot|s),j\neq i}[Q(s,a_{-i},a_i)])}, \forall i\in\mathcal{N}, s\in\mathcal{S}.
\end{align}
For two arbitrary actions $a_i,a'_i$ we define $\xi_i^{p+1}(s)=\log\pi_i^{p+1}(a_i|s)-\log\pi_i^{p+1}(a'_i|s)$.
The central step of our proof is to track the difference between our algorithm \eqref{eq:one_step_analysis:policy_improvement_abbr} and the algorithm \eqref{eq:proof_alg:recallprototype}. 
To this end, we will provide a recursive bound on
\begin{align}\label{eq:tv_dist_pi_tilde_pi_approx_version}
    v_i^p = \sup_{s\in\mathcal{S}} \TV (\pi_i^p(\cdot|s), \tilde{\pi}_i^p(\cdot|s) ).
\end{align}
Recall \eqref{eq:xi_update} as follows.
\begin{align*}
    \tilde\xi_i^{p+1}(s)=(1-\eta\tau)\tilde\xi_i^{p}(s)+\eta\mathbb{E}_{a_{j}\sim\tilde\pi_{j}^p(\cdot|\ext{s_{\nik}}),j\neq i}[\tilde{\Qtau}^i(s,a_{-i},a_i)-\tilde{\Qtau}^i(s,a_{-i},a_i')].
\end{align*}
Now let's track the difference of our algorithm and the prototype algorithm. For all $i$, and for all $s$, we have
\begin{align}\label{eq:xi_difference_decomposition}
    \xi_i^{p+1}(s)-\tilde\xi_i^{p+1}(s)
    &=(1-\eta\tau)(\xi_i^{p}(s)-\tilde\xi_i^{p}(s))+\eta{\mathbb{E}_{a_{-i}\sim \pi_{-i}^p(\cdot|s)}[\Qtau(s,a_{-i},a_i)-\Qtau(s,a_{-i},a_i')]}\notag\\
    &\qquad -\eta{\mathbb{E}_{a_{j}\sim\tilde\pi_{j}^p(\cdot|\ext{s_{\cN_i^{\kappa}}}),j\neq i}[\tilde{\Qtau}^i(s,a_{-i},a_i)-\tilde{\Qtau}^i(s,a_{-i},a_i')]}\notag\\
    &=(1-\eta\tau)(\xi_i^{p}(s)-\tilde\xi_i^{p}(s)) + \eta(E_0 + E_1 + E_2 + E_3),
\end{align}
where we decompose the last two terms into the following four quantities:
\begin{align}
E_0&={ \mathbb{E}_{a_{-i}\sim \pi_{-i}^p(\cdot|s)}[\Qtau(s,a_{-i},a_i)-\Qtau(s,a_{-i},a_i')]}-\mathbb{E}_{a_{j}\sim\tilde\pi_{j}^p(\cdot|s),j\neq i}[\Qtau(s,a_{-i},a_i)-\Qtau(s,a_{-i},a_i')]\notag\\
E_1&=\mathbb{E}_{a_{j}\sim \tilde{\pi}_{j}^p(\cdot|s),j\neq i}[\Qtau(s,a_{-i},a_i)-\Qtau(s,a_{-i},a_i')]-\mathbb{E}_{a_{j}\sim\tilde\pi_{j}^p(\cdot|\ext{s_{\nik}}),j\neq i}[\Qtau(s,a_{-i},a_i)-\Qtau(s,a_{-i},a_i')]\notag\\
E_2&=\mathbb{E}_{a_{j}\sim\tilde\pi_{j}^p(\cdot|\ext{s_{\nik}}),j\neq i}[\Qtau(s,a_{-i},a_i)-\Qtau(s,a_{-i},a_i')]\notag\\
&\qquad-\mathbb{E}_{a_{j}\sim\tilde\pi_{j}^p(\cdot|\ext{s_{\nik}}),j\neq i}[\Qtau(\ext{s_{\nik}},[\ext{a_{\nik}}]_{-i},a_i)-\Qtau(\ext{s_{\nik}},[\ext{a_{\nik}}]_{-i},a_i')]\notag\\
E_3&=\mathbb{E}_{a_{j}\sim\tilde\pi_{j}^p(\cdot|\ext{s_{\nik}}),j\neq i}[\Qtau(\ext{s_{\nik}},[\ext{a_{\nik}}]_{-i},a_i)-\Qtau(\ext{s_{\nik}},[\ext{a_{\nik}}]_{-i},a_i')]\notag\\
&\qquad-{\mathbb{E}_{a_{j}\sim\tilde\pi_{j}^p(\cdot|\ext{s_{\cN_i^\kappa}}),j\neq i}[\tilde{\Qtau}^i(s,a_{-i},a_i)-\tilde{\Qtau}^i(s,a_{-i},a_i')]}.\notag
\end{align}
First note that $E_0$ is the difference caused by sampling from $\pi_j^p(\cdot|s)$ and $\tilde\pi_j^p(\cdot|s)$ while $E_1$ is the difference induced by sampling from $\tilde{\pi}_{j}^p(\cdot|s)$ and $\tilde{\pi}_{j}^p(\cdot|\ext{s_{\nik}})$. Second, term $E_2$ is the error between querying $\Qtau$ functions at $s$ and $a_{-i}$ and at $\ext{s_{\nik}}$ and $[\ext{a_{\nik}}]_{-i}$. Finally, $E_3$ accounts for the difference in $\Qtau$ and $\tilde \Qtau$. In what follows we bound these quantities one by one. 

\textbf{Bound on $E_0$.}
\Cref{lem:diff_dist_f} can be directly applied to bound $E_0$, which leads to 
\begin{align}
    |E_0|\leq\sum_{\ell\neq i}\TV(\pi_\ell^p(\cdot|s),\tilde\pi_\ell^p(\cdot|s))H_{i\ell}^\Qtau\leq \sum_{\ell\neq i} H_{i\ell}^Q v_\ell^p, 
\end{align}
where in the last inequality we have used the definition of $v_\ell^p$ in \eqref{eq:tv_dist_pi_tilde_pi_approx_version}.

\textbf{Bound on $E_1$. } By \Cref{lem:diff_dist_f}, $E_1$ can be bounded by
\begin{align*}
    |E_1|&\leq \sum_{j\neq i} \TV(\tilde{\pi}_j^p(\cdot|s), \tilde{\pi}_j^p(\cdot|\ext{s_{\nik}} )) H_{ij}^Q\\
    &\leq    \sum_{j\neq i} \left(\TV(\tilde{\pi}_j^p(\cdot|s),\pi_j^p(\cdot|s))+\TV(\pi_j^p(\cdot|s),\pi_j^p(\cdot|\ext{s_{\nik}}))+\TV(\pi_j^p(\cdot|\ext{s_{\nik}}),\tilde{\pi}_j^p(\cdot|\ext{s_{\nik}}))\right)H_{ij}^Q \\
    &\leq 2\sum_{j\neq i} v_j^p H_{ij}^Q + \sum_{j\neq i}\TV(\pi_j^p(\cdot|s),\pi_j^p(\cdot|\ext{s_{\nik}})) H_{ij}^Q\\
    &\leq  2\sum_{j\neq i} v_j^p H_{ij}^Q + \sum_{j\not\in \mathcal{N}_i^{\kappa/2}}  H_{ij}^Q + \sum_{j\in \mathcal{N}_i^{\kappa/2},j\neq i} \sum_{\ell \not\in \nik} Z^{\pi^p}_{j\ell} H_{ij}^Q \\
    &\leq 2\sum_{j\neq i} v_j^p H_{ij}^Q + \sum_{j\not\in \mathcal{N}_i^{\kappa/2}}  H_{ij}^Q+ \sum_{j\in \mathcal{N}_i^{\kappa/2},j\neq i} \sum_{\ell: \dist(\ell,j)>\frac{\kappa}{2}} Z^{\pi^p}_{j\ell} H_{ij}^Q\\
    &\leq 2\sum_{j\neq i} v_j^p H_{ij}^Q + \frac{2^{\mu+1}\nu'+ 2^{\mu+1}\nu' \frac{2^{\mu+1}\nu'\actionmax^2e^{\frac{\nu'}{n\tau}}}{\tau-2^{\mu+1}\nu' \actionmax^2e^{\frac{\nu'}{n\tau}}}}{(\kappa/2+1)^\mu} \\
    &\leq 2\sum_{j\neq i} v_j^p H_{ij}^Q + \frac{2^{\mu+2}\nu'}{(\frac{\kappa}{2}+1)^\mu}, 
    \end{align*}
where in the fourth inequality we used similar augments like we did in \eqref{eq:tv_dist_triangle_decompose} and \eqref{eq:proof_thm1:tv_bound}, and in the last two inequalities we have used \Cref{lem:q_second_interaction_matrix} and \Cref{cor:q_policy_pi_p_decay} and the conditions on 
$\tau>2^{\mu+2}\nu' \actionmax^2 e^{\frac{\nu'}{n\tau}} $.

\textbf{Bound on $E_2$.} To bound $E_2$, we only have to bound
\begin{align*}
    &\left|\Qtau(s,a_{-i},a_i)-\Qtau(s,a_{-i},a_i')-\Qtau(\ext{s_{\nik}},[\ext{a_{\nik}}]_{-i},a_i)+\Qtau(\ext{s_{\nik}},[\ext{a_{\nik}}]_{-i},a_i')\right|\\
    &\leq\frac{1}{n}\sum_{j=1}^n \left| \Qtau_j(s,a_{-i},a_i)-\Qtau_j(s,a_{-i},a_i')-\Qtau_j(\ext{s_{\nik}},[\ext{a_{\nik}}]_{-i},a_i)+\Qtau_j(\ext{s_{\nik}},[\ext{a_{\nik}}]_{-i},a_i')\right|,
\end{align*}
which further implies that we only have to upper bound each term inside the above summation. To this end, we introduce the following lemma.
\begin{lemma}\label{lem:E_2_bound}
Under the scenario of \Cref{lem:policy_improvement_error}, we have the following bound:
\begin{align*}
    |\Qtau_j(s,a_{-i},a_i)-\Qtau_j(s,a_{-i},a_i')-\Qtau_j(\ext{s_{\nik}},[\ext{a_{\nik}}]_{-i},a_i)+\Qtau_j(\ext{s_{\nik}},[\ext{a_{\nik}}]_{-i},a_i')|\leq \frac{2\nu'}{(\kappa/2+1)^\mu}.
\end{align*}
\end{lemma}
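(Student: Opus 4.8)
The plan is to read the left-hand side as a \emph{second-order} difference of $\Qtau_j$: it is first order in the action $a_i$ (comparing $a_i$ against $a_i'$) and first order in the ``context'' supplied by the nodes outside $\nik$ (comparing the true states and actions $(s_{\nminusik},a_{\nminusik})$ against their default values). Call this quantity $\Delta$, fix the node $j$, and keep two elementary bounds on $\Delta$ available, then split on the graph distance $\dist(i,j)$. First, writing $\Delta$ as the difference of two ``$a_i$-sensitivities'' of $Q_j$ (one evaluated at the full context, one at the truncated context) and applying the triangle inequality together with \Cref{def:Z_Q} gives the direct bound $|\Delta|\le 2Z^{\mathbf{Q}}_{ji}$, since each $a_i$-sensitivity of $Q_j$ is at most $Z^{\mathbf{Q}}_{ji}$.

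The second bound comes from telescoping over the context coordinates. Write $\Delta = f(c_{\mathrm{full}}) - f(c_{\mathrm{trunc}})$, where $f$ is the $a_i$-sensitivity of $Q_j$ viewed as a function of the coordinates other than $i$, and $c_{\mathrm{full}},c_{\mathrm{trunc}}$ agree on $\nik$ but differ on every $\ell\in\nminusik$. Changing the coordinates $\ell\in\nminusik$ from their true to their default values one at a time, each single-coordinate change alters $f$ by at most its coordinate sensitivity $\delta_\ell(f)$, and rearranging the resulting second-order difference shows $\delta_\ell(f)\le 2\min(Z^{\mathbf{Q}}_{ji},Z^{\mathbf{Q}}_{j\ell})$. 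Hence $|\Delta|\le \sum_{\ell\in\nminusik}\delta_\ell(f)\le 2\sum_{\ell\in\nminusik} Z^{\mathbf{Q}}_{j\ell}$.

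Now I would do the case analysis. If $\dist(i,j)>\kappa/2$, use the first bound: since $\{Q_i\}$ is $(\nu',\mu)$-decay, the single-entry estimate $Z^{\mathbf{Q}}_{ji}(\dist(i,j)+1)^\mu\le\nu'$ (a consequence of the row-sum condition in \Cref{def:matrix_decay}) yields $|\Delta|\le 2Z^{\mathbf{Q}}_{ji}\le 2\nu'/(\dist(i,j)+1)^\mu\le 2\nu'/(\kappa/2+1)^\mu$. If instead $\dist(i,j)\le\kappa/2$, use the second bound together with the key distance observation that every $\ell\in\nminusik$ has $\dist(i,\ell)>\kappa$, so by the triangle inequality $\dist(j,\ell)\ge \dist(i,\ell)-\dist(i,j)>\kappa-\kappa/2=\kappa/2$. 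Thus the telescoped sum runs only over nodes far from $j$, and applying \eqref{eq:mu_decay_imply_distance_decay} to the $(\nu',\mu)$-decay matrix $Z^{\mathbf{Q}}$ gives $2\sum_{\ell:\dist(j,\ell)>\kappa/2} Z^{\mathbf{Q}}_{j\ell}\le 2\nu'/(\kappa/2+1)^\mu$. Both cases meet the claimed bound.

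The routine parts are the triangle-inequality bounds. The step requiring care is the telescoping inequality $|\Delta|\le\sum_{\ell\in\nminusik}\delta_\ell(f)$ together with the verification that $\delta_\ell(f)\le 2\min(Z^{\mathbf{Q}}_{ji},Z^{\mathbf{Q}}_{j\ell})$ holds uniformly over all the mixed (partly-true, partly-default) contexts encountered along the telescoping; this is exactly where the $\sup$ over all other coordinates in \Cref{def:Z_Q} is needed. The other delicate point is the choice of the threshold $\kappa/2$, which is what makes the two cases dovetail and produces the denominator $(\kappa/2+1)^\mu$ rather than $(\kappa+1)^\mu$.
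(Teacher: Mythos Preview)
Your argument is correct and follows essentially the same route as the paper's proof: two complementary bounds on the four-term expression---one via the $a_i$-sensitivity $2Z^{\mathbf{Q}}_{ji}$, the other via changing the context coordinates in $\nminusik$ and using the tail bound \eqref{eq:mu_decay_imply_distance_decay}---followed by the case split at $\dist(i,j)=\kappa/2$. The only cosmetic difference is that the paper groups the expression as the difference of two ``context changes'' $|Q_j(s,\cdot)-Q_j(\ext{s_{\nik}},\cdot)|$ (each bounded by $\sum_{\ell\in\nminusik}Z^{\mathbf{Q}}_{j\ell}$ and hence by $\nu'/\max(1,\kappa-\dist(i,j)+1)^\mu$), whereas you telescope the $a_i$-sensitivity function over the context coordinates; both reach the same intermediate bound $2\sum_{\ell\in\nminusik}Z^{\mathbf{Q}}_{j\ell}$ and the same final estimate.
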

\begin{proof}
Indeed, we have two ways to upper bound this term, the first way is
\begin{align*}
    &|\Qtau_j(s,a_{-i},a_i)-\Qtau_j(s,a_{-i},a_i')-\Qtau_j(\ext{s_{\nik}},[\ext{a_{\nik}}]_{-i},a_i)+\Qtau_j(\ext{s_{\nik}},[\ext{a_{\nik}}]_{-i},a_i')|\\
&\leq  |\Qtau_j(s,a_{-i},a_i)-\Qtau_j(s,a_{-i},a_i')|+|\Qtau_j(\ext{s_{\nik}},[\ext{a_{\nik}}]_{-i},a_i)-\Qtau_j(\ext{s_{\nik}},[\ext{a_{\nik}}]_{-i},a_i')|\\
&\leq  2\frac{\nu'}{(\dist(i,j)+1)^\mu},
\end{align*}
where in the last inequality, we have used the $(\nu',\mu)$-decay property of $\{Q_j\}$. Also we have another way for this bound, that is,
\begin{align*}
&|\Qtau_j(s,a_{-i},a_i)-\Qtau_j(s,a_{-i},a_i')-\Qtau_j(\ext{s_{\nik}},[\ext{a_{\nik}}]_{-i},a_i)+\Qtau_j(\ext{s_{\nik}},[\ext{a_{\nik}}]_{-i},a_i')|\\
&\leq |\Qtau_j(s,a_{-i},a_i)-\Qtau_j(\ext{s_{\nik}},[\ext{a_{\nik}}]_{-i},a_i)|+|\Qtau_j(s,a_{-i},a_i')-\Qtau_j(\ext{s_{\nik}},[\ext{a_{\nik}}]_{-i},a_i')|\\
&\leq 2\frac{\nu'}{\max(1,\kappa-\dist(i,j)+1)^\mu}.
\end{align*}
Thus we have
\begin{align*}
    &|\Qtau_j(s,a_{-i},a_i)-\Qtau_j(s,a_{-i},a_i')-\Qtau_j(\ext{s_{\nik}},[\ext{a_{\nik}}]_{-i},a_i)+\Qtau_j(\ext{s_{\nik}},[\ext{a_{\nik}}]_{-i},a_i')|\\ &\leq 2\min\bigg\{\frac{\nu'}{(\dist(i,j)+1)^\mu}, \frac{\nu'}{\max(1,\kappa-\dist(i,j)+1)^\mu} \bigg\}\\
    &\leq  \frac{2\nu'}{(\kappa/2+1)^\mu},
\end{align*}
where the last inequality holds since either $\dist(i,j)$ or $\kappa-\dist(i,j)$ will be larger than $\kappa/2$.
\end{proof}
By \Cref{lem:E_2_bound}, we can derive an upper bound for $E_2$ as follows.
\begin{align}
|E_2|\leq \frac{2}{n}\sum_{j=1}^n \frac{\nu'}{(\kappa/2+1)^\mu}
&=\frac{2\nu'}{(\kappa/2+1)^\mu}.
\end{align}

\textbf{Bound on $E_3$.} Similar to the proof we presented in bounding term $E_2$, to upper bound $E_3$, we only need to bound the following term.
\begin{align}\label{eq:one_step_analysis:E3}
   &\left| \Qtau(\ext{s_{\nik}},[\ext{a_{\nik}}]_{-i},a_i)-\Qtau(\ext{s_{\nik}},[\ext{a_{\nik}}]_{-i},a_i')-\tilde{\Qtau}(s,a_{-i},a_i)+\tilde{\Qtau}(s,a_{-i},a_i') \right| \notag\\
&= \frac{1}{n}\bigg| \sum_{j\in \nik}\left[\Qtau_j(\ext{s_{\nik}},[\ext{a_{\nik}}]_{-i},a_i)-\hat{\Qtau}_j([\ext{s_{\nik}}]_{\njb},[\ext{a_{\nik}}]_{\njb\backslash i},a_i)\right]+\sum_{j\notin \nik}\Qtau_j(\ext{s_{\nik}},[\ext{a_{\nik}}]_{-i},a_i) \notag\\
&\qquad - \sum_{j\in \nik}\left[\Qtau_j(\ext{s_{\nik}},[\ext{a_{\nik}}]_{-i},a'_i)-\hat{\Qtau}_j([\ext{s_{\nik}}]_{\njb},[\ext{a_{\nik}}]_{\njb\backslash i},a'_i)\right] - \sum_{j\notin \nik}\Qtau_j(\ext{s_{\nik}},[\ext{a_{\nik}}]_{-i},a'_i)] \bigg| \notag\\
&\leq \frac{1}{n}\sum_{j\in \nik}\left|\Qtau_j(\ext{s_{\nik}},[\ext{a_{\nik}}]_{-i},a_i)-\hat{\Qtau}_j([\ext{s_{\nik}}]_{\njb},[\ext{a_{\nik}}]_{\njb\backslash i},a_i)\right| \notag\\
&\qquad + \frac{1}{n} \sum_{j\in\nik}\left|\Qtau_j(\ext{s_{\nik}},[\ext{a_{\nik}}]_{-i},a'_i)-\hat{\Qtau}_j([\ext{s_{\nik}}]_{\njb},[\ext{a_{\nik}}]_{\njb\backslash i},a'_i)\right|\notag\\
&\qquad+\frac{1}{n}\sum_{j\notin \nik}\left| \Qtau_j(\ext{s_{\nik}},[\ext{a_{\nik}}]_{-i},a_i) -\Qtau_j(\ext{s_{\nik}},[\ext{a_{\nik}}]_{-i},a'_i)\right|\notag\\
&\leq 2\frac{|\nik|}{n}\epsilon + \frac{\nu'}{n(\kappa+1)^\mu},
\end{align}
where in the last step we used the error bound on $\hat{Q}_j$ in the condition of \Cref{lem:policy_improvement_error_restated}, and the property of $(\mu,\nu)$-decay matrices presented in \eqref{eq:mu_decay_imply_distance_decay}.

\textbf{Putting terms $E_0, E_1, E_2, E_3$ together. } Combining the bounds in the previous steps, we have,
\begin{align}\label{eq:xi_difference_E0123}
    |\xi_i^{p+1}(s)-\tilde\xi_i^{p+1}(s)|
&\leq (1-\eta\tau)|\xi_i^{p}(s)-\tilde\xi_i^{p}(s)|+\eta(|E_0|+|E_1|+|E_2|+|E_3|)\nonumber\\
&\leq (1-\eta\tau)|\xi_i^{p}(s)-\tilde\xi_i^{p}(s)|+\eta\bigg(3\sum_{\ell\neq i}v_\ell^p H_{i\ell}^\Qtau+\frac{2^{\mu+2}\nu'+2\nu'}{(\kappa/2+1)^{\mu}}+\frac{\nu'/n}{(\kappa+1)^{\mu}}+2|\nik|\epsilon/n\bigg)\notag\\
&\leq (1-\eta\tau)|\xi_i^{p}(s)-\tilde\xi_i^{p}(s)|+3\eta\sum_{\ell\neq i}v_\ell^p H_{i\ell}^\Qtau+\eta\bigg(\frac{3\nu' + 2^{\mu+2}\nu'}{(\kappa/2+1)^\mu}+2\epsilon\bigg).
\end{align}
{ Plugging $\eta = \frac{1}{\tau}$} into the above inequality, we have 
\begin{align*}
    |\xi_i^{p+1}(s)-\tilde\xi_i^{p+1}(s)|
&\leq  \frac{3}{\tau} \sum_{\ell\neq i}v_\ell^p H_{i\ell}^\Qtau+\frac{1}{\tau}\bigg(\frac{3\nu' + 2^{\mu+2}\nu'}{(\kappa/2+1)^\mu}+2\epsilon\bigg).
\end{align*}
By \Cref{lem:log_tv}, we have
\begin{align*}
    v_i^{p+1} &\leq \frac{1}{2} \actionmax^2 e^{\tilde\sigma'} \left(\exp\left( \frac{3}{\tau} \sum_{\ell\neq i}v_\ell^p H_{i\ell}^\Qtau+\frac{1}{\tau}\bigg(\frac{3\nu' + 2^{\mu+2}\nu'}{(\kappa/2+1)^\mu}+2\epsilon\bigg)\right) -1\right).
\end{align*}
Recall our definition of $\epsilon = \frac{c_{pe}}{(\beta+1)^\mu}$ in \eqref{lem:policy_evaluation_error} and the choice of $\tau >  2(2^{\mu+4}\nu' + 2\frac{c_{pe}}{(\beta+1)^\mu})$ in \Cref{lem:policy_improvement_error_restated}. With these results, we can easily show that the quantity inside the $\exp(\cdot)$ can be upper bounded by 
\begin{align}\label{eq:approx_version_policy_improvement_exponent_small}
    \frac{3}{\tau} \sum_{\ell\neq i}v_\ell^p H_{i\ell}^\Qtau+\frac{1}{\tau}\bigg(\frac{3\nu' + 2^{\mu+2}\nu'}{(\kappa/2+1)^\mu}+2\epsilon\bigg)
    &\leq \frac{1}{\tau} \left( 3 \cdot 2^{\mu+1}\nu' + 3\nu' + 2^{\mu+2}\nu' + 2c_{pe} \frac{1}{(\beta+1)^\mu}\right)\notag\\
    &\leq \frac{1}{2}.
\end{align}
By the simple fact that $e^x\leq 1+2x$ for $x\leq1/2$, we have
\begin{align*}
    v_i^{p+1} &\leq   \actionmax^2 e^{\tilde\sigma'}  \left(3 \frac{1}{\tau} \sum_{\ell\neq i}v_\ell^p H_{i\ell}^\Qtau+\frac{1}{\tau}\bigg(\frac{ 2^{\mu+3}\nu' + 2c_{pe}}{(\kappa/2+1)^\mu}\bigg)\right). 
\end{align*}
If we stack all $v_i^p$ into a vector $v^p = [v_i^p]_{i\in\mathcal{N}}$, the above inequality implies
\begin{align*}
    v^{p+1} &\leq   \actionmax^2 e^{\tilde\sigma'}  \left(3 \frac{1}{\tau}H^\Qtau v^p +\frac{1}{\tau}\bigg(\frac{ 2^{\mu+3}\nu'+ 2c_{pe}}{(\kappa/2+1)^\mu}\bigg) \mathbf{1} \right) ,
\end{align*}
and it immediately follows that 
\[  \Vert v^{p+1}\Vert_\infty \leq   \actionmax^2 e^{\tilde\sigma'}   \frac{3}{\tau} \Vert H^\Qtau\Vert_\infty \Vert v^p\Vert_\infty +\actionmax^2 e^{\tilde\sigma'}\frac{1}{\tau}\frac{ 2^{\mu+3}\nu'+2c_{pe}}{(\kappa/2+1)^\mu}.      \] 
Therefore, when  $\tau > 6\actionmax^2 e^{\tilde\sigma'}2^{\mu+1}\nu' $, we have $2\actionmax^2e^{\tilde\sigma'}\|H^\Qtau\|_{\infty}/\tau\leq 1/2$. And thus for all $p$, it holds that
\begin{align*}
    \Vert v^p\Vert_\infty \leq 2\actionmax^2 e^{\tilde\sigma'}\frac{1}{\tau}\frac{  2^{\mu+3}\nu'+2c_{pe}}{(\kappa/2+1)^\mu}\leq \frac{4 + \frac{c_{pe}}{2^{\mu}\nu'}}{3(\kappa/2+1)^\mu}.
\end{align*}
Further, using \Cref{lem:mul_weights_convergence}, when { $\tau> 2^{\mu+2} \nu' \actionmax^2 e^{\nu'/n\tau}$}, we have for each $i\in[n]$ that
\[ \sup_{s\in\cS}\TV(\pi_i^p(\cdot|s), \zeta_i(\cdot|s))\leq \frac{1}{2^p}. \]
By triangle inequality, we have
\[ \sup_{s\in\cS}\TV(\hat{\pi}_i^p(\cdot|s_{\nik}), \zeta_i(\cdot|s))\leq \frac{1}{2^p} + \frac{4 +\frac{c_{pe}}{2^{\mu}\nu'}}{3(\kappa/2+1)^\mu}. \]
Therefore, when { $p\geq-\log_2{\frac{4 + \frac{c_{pe}}{2^{\mu}\nu'}}{3(\kappa/2+1)^\mu}}$}, we have
\begin{align}\label{eq:l7_total_variation_bound}
\sup_{s\in\cS}\TV(\hat{\pi}_i^p(\cdot|s_{\nik}), \zeta_i(\cdot|s))\leq  \frac{4 + \frac{c_{pe}}{2^{\mu}\nu'}}{(\kappa/2+1)^\mu}.
\end{align}
This completes the proof of the first statement in {\bf Part (c)} of \Cref{lem:policy_improvement_error_restated}. Next we prove the second statement. Since we now have a total variation bound \eqref{eq:l7_total_variation_bound}, we can use \Cref{lem:performance_diff_tv} to bound the value function difference between the policy $\zeta_i$ obtained by directly applying the Bellman optimal operator and the policy $\hat{\pi}_i$ carried out by our policy iteration process. Specifically, we have that 

\begin{align*}
    \mathcal{T} \Vtau - \Vtau^{\hat{\pi}^p}    &\leq   \Vtau^\zeta - \Vtau^{\hat{\pi}^p} \\
      &\leq \| V^{\zeta} -V^{\hat\pi^p} \|_\infty \mathbf{1} \\
    &\leq \frac{1}{1-\gamma} (\tau\tilde\sigma' + \frac{\nu'}{n}) \sum_{i=1}^n  \sup_{s\in\cS}\TV(\zeta_i(\cdot|s),\hat{\pi}_i(\cdot|s_\nik)) \mathbf{1}\\
    &\leq \frac{1}{1-\gamma}(n\tau\tilde\sigma'+\nu')\frac{4 + \frac{c_{pe}}{2^{\mu}\nu'}}{(\kappa/2+1)^\mu}\mathbf{1}\\
    &=\frac{1}{1-\gamma}(\frac{2f(\kappa) c_{pe}}{(\beta+1)^\mu}+2\nu') \frac{4 + \frac{c_{pe}}{2^{\mu}\nu'}}{(\kappa/2+1)^\mu} \mathbf{1}.
\end{align*}
where the third inequality is due to {\bf Part (b)} of \Cref{lem:policy_improvement_error_restated} and  \Cref{lem:performance_diff_tv}.
Now that we obtain the upper bound for each step of our outer loop, which completes the proof of part (c).

\section{Results on Sufficient Exploration}\label{ap:exploration}

In this section, we provide a general result on the sufficient exploration for non-deterministic policies. Given $\delta>0$, we define
$\Pi_\delta=\{\zeta\in\Delta_{\text{policy}} \mid \min_{s,a}\zeta(a|s)\geq \delta\}$. For a $\sigma$-regular policy $\zeta$, since $\zeta_i(a_i|s)/\zeta_i(\tilde{a}_i|s)\leq e^\sigma$ for all $i=1,2\cdots,n$, $s\in\mathcal{S}$, and $a_i,\tilde{a}_i\in\mathcal{A}_i$, we have \begin{align*}
    \min_{a_i\in\mathcal{A}_i}\zeta_i(a_i|s)\geq \frac{\max_{\tilde{a}_i\in\mathcal{A}_i}\zeta_i(\tilde{a}_i|s)}{e^\sigma}\geq \frac{1}{|\mathcal{A}_i|e^\sigma}.
\end{align*}
It follows that
\begin{align*}
    \min_{a\in\mathcal{A}}\zeta(a|s)\geq \frac{1}{e^{n\sigma}\prod_{i=1}^n|\mathcal{A}_i|}.
\end{align*}
Therefore, all $\sigma$-regular policies are contained in $\Pi_\delta$ with $\delta=1/(e^{n\sigma}\prod_{i=1}^n|\mathcal{A}_i|)$.

\begin{proposition}\label{prop:exploration}
Suppose that \Cref{as:MC} is satisfied. Then we have the following results for any $\zeta\in \Pi_\delta$.
\begin{enumerate}
    \item The Markov chain $\{S_k\}$ induced by $\zeta$ is irreducible and aperiodic.
    \item Let $\mu^\zeta$ be the unique stationary distribution of the Markov chain $\{S_k\}$ induced by $\zeta$, then we have $\mu_{\min}:=\inf_{\zeta\in \Pi_\delta}\min_{s\in\mathcal{S}}\mu^\zeta(s)>0$.
    \item There exist constants $C>0$ and $\rho\in (0,1)$ such that
    \begin{align*}
        \sup_{\zeta\in \Pi_\delta}\text{TV}\left(P_\zeta^t(\cdot\mid s(0)=s),\mu_\zeta(\cdot)\right)\leq C\rho^t,\quad \forall\;t\geq 0.
    \end{align*}
\end{enumerate}
\end{proposition}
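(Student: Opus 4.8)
The plan is to reduce all three claims to a single \emph{uniform} Doeblin minorization for the induced kernels $P_\zeta$, $\zeta\in\Pi_\delta$. Observe first that $P_\zeta(s'\mid s)=\sum_{a}\zeta(a\mid s)\,P(s'\mid s,a)$ is linear in $\zeta$, and that every $\zeta\in\Pi_\delta$ satisfies $\zeta(a\mid s)\ge\delta$, so $P_\zeta(s'\mid s)\ge \delta\,P(s'\mid s,a)$ for each action $a$. Letting $p_0:=\min\{P(s'\mid s,a):P(s'\mid s,a)>0\}>0$ (finite since $\cS,\cA$ are finite), it follows that $P_{\zeta_b}(s'\mid s)>0$ forces $P_\zeta(s'\mid s)\ge \delta p_0>0$: indeed $P_{\zeta_b}(s'\mid s)>0$ means some $a^\ast$ has $\zeta_b(a^\ast\mid s)>0$ and $P(s'\mid s,a^\ast)>0$, and $\zeta(a^\ast\mid s)\ge\delta$. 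Hence the transition graph of $P_\zeta$ contains that of $P_{\zeta_b}$, and since adding edges to an irreducible, aperiodic graph preserves both properties (the gcd of cycle lengths through a fixed state stays at $1$), this gives claim (1) immediately.

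For the quantitative parts I would first extract a uniform power at which all kernels become strictly positive. Because $P_{\zeta_b}$ is irreducible and aperiodic on the finite set $\cS$, its adjacency structure is primitive, so there is an integer $m$ depending only on the graph of $\zeta_b$ (e.g.\ the Wielandt bound $m\le |\cS|^2-2|\cS|+2$), hence independent of $\zeta$, with $P_{\zeta_b}^m(s'\mid s)>0$ for all $s,s'$. Tracking any length-$m$ path realizing $P_{\zeta_b}^m(s'\mid s)>0$ and lower bounding each of its transitions under $P_\zeta$ by $\delta p_0$ yields
\begin{align*}
P_\zeta^m(s'\mid s)\ \ge\ (\delta p_0)^m\ =:\ \epsilon_0\ >\ 0,\qquad \forall\,s,s'\in\cS,\ \forall\,\zeta\in\Pi_\delta.
\end{align*}
This single inequality drives claims (2) and (3). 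For (2), stationarity against the $m$-step kernel gives $\mu^\zeta=\mu^\zeta P_\zeta=\mu^\zeta P_\zeta^m$, whence $\mu^\zeta(s')=\sum_s \mu^\zeta(s)\,P_\zeta^m(s'\mid s)\ge \epsilon_0$ for every $s'$, uniformly in $\zeta$, so $\mu_{\min}\ge\epsilon_0>0$.

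For claim (3) I would invoke the Dobrushin contraction coefficient of $P_\zeta^m$: the minorization forces $\sum_{s''}\min\{P_\zeta^m(s''\mid s),P_\zeta^m(s''\mid \tilde s)\}\ge |\cS|\epsilon_0$ for all $s,\tilde s$, so $P_\zeta^m$ contracts total variation by a factor at most $1-|\cS|\epsilon_0$, uniformly in $\zeta$. Comparing $\delta_s P_\zeta^{km}$ with $\mu^\zeta$ and writing $t=km+r$ then gives $\TV(P_\zeta^t(\cdot\mid s),\mu^\zeta)\le C\rho^t$ with $\rho=(1-|\cS|\epsilon_0)^{1/m}\in(0,1)$ and $C=(1-|\cS|\epsilon_0)^{-1}$, both independent of $\zeta$ and $s$.

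The main obstacle is precisely this uniformity over the whole family $\Pi_\delta$: one must choose the positivity power $m$ and the minorization constant $\epsilon_0$ once and for all. The key device is that $m$ is dictated purely by the fixed graph of $\zeta_b$ through primitivity, not by the individual $\zeta$, while the per-transition lower bound $\delta p_0$ is uniform by the definition of $\Pi_\delta$ and the finiteness of the model. Once uniform $m$ and $\epsilon_0$ are secured, the remaining steps are standard finite-state estimates. An alternative, softer route to (2) and (3) would note that $\Pi_\delta$ is compact, $\zeta\mapsto P_\zeta$ is continuous, and the stationary distribution and mixing rate depend continuously on an ergodic kernel, so $\min_s\mu^\zeta(s)$ and the spectral gap attain positive extrema; I prefer the explicit Doeblin argument because it produces the uniform constants $C,\rho$ directly.
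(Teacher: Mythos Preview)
Your argument is correct and in some ways sharper than the paper's. For Part~(1) the two approaches coincide in substance: the paper shows $\delta^t P_{\zeta_b}^t \le P_\zeta^t$ entrywise, which is exactly the per-step domination you extract via $P_{\zeta_b}(s'\mid s)>0\Rightarrow P_\zeta(s'\mid s)\ge \delta p_0$. The genuine divergence is in Parts~(2)--(3). The paper follows precisely the ``softer route'' you describe at the end: it proves separately that $\zeta\mapsto P_\zeta$ is Lipschitz, that $P\mapsto \mu_P$ is continuous on irreducible aperiodic kernels, and that $\mu\mapsto \min_s\mu(s)$ is continuous, then applies Weierstrass on the compact set $\Pi_\delta$; for Part~(3) it simply cites the literature. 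Your uniform Doeblin minorization $P_\zeta^m(s'\mid s)\ge (\delta p_0)^m$ with $m$ fixed by primitivity of the $\zeta_b$-graph handles both at once via Dobrushin contraction, and produces explicit constants ($\mu_{\min}\ge(\delta p_0)^m$, $\rho=(1-|\cS|(\delta p_0)^m)^{1/m}$) where the paper only asserts existence. The compactness route is slightly less bookkeeping but your argument is self-contained and more informative.
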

\Cref{prop:exploration} states that, as long as there is one policy $\zeta_b\in\Pi_\delta$ that is explorative, then all policies in $\Pi_\delta$ are uniformly explorative. Note that  \Cref{network-assp:geometric-mixing} is a direct consequence of \Cref{prop:exploration}. Therefore, we only need to prove \Cref{prop:exploration}.

\begin{proof}[Proof of  \Cref{prop:exploration}]
\noindent\textbf{Proof of Part (1).} Let $\zeta\in \Pi_\delta$ be arbitrary, and let $P_{\zeta_b}$ and $P_{\zeta}$ be the transition probability matrices under $\zeta_b$ and $\zeta$, respectively. For any $s,s'\in\mathcal{S}$ and $t\geq 1$, we have
	\begin{align*}
		P_{\zeta_b}^t(s,s')=\;&\sum_{s_0}P^{t-1}_{\zeta_b}(s,s_0)P_{\zeta_b}(s_0,s')\\
		=\;&\sum_{s_0}P^{t-1}_{\zeta_b}(s,s_0)\sum_{a\in\mathcal{A}}\zeta_b(a|s_0)P_a(s_0,s')\\
		= \;&\sum_{s_0}P^{t-1}_{\zeta_b}(s,s_0)\sum_{a\in\mathcal{A}}\frac{\zeta_b(a|s_0)}{\zeta(a|s_0)}\zeta(a|s_0)P_a(s_0,s')\\
		\leq  \;&\frac{1}{\delta}\sum_{s_0}P^{t-1}_{\zeta_b}(s,s_0)\sum_{a\in\mathcal{A}}\zeta(a|s_0)P_a(s_0,s')\\
		\leq  \;&\frac{1}{\delta}\sum_{s_0}P^{t-1}_{\zeta_b}(s,s_0)P_{\zeta}(s_0,s')\\
		=  \;&\frac{1}{\delta}[P^{t-1}_{\zeta_b} P_{\zeta}](s,s').
	\end{align*}
	Since the previous inequality holds for all $s$ and $s'$, we in fact have $\delta P_{\zeta_b}^t\leq P^{t-1}_{\zeta_b} P_{\zeta}$. Repeatedly using the previous inequality and we obtain 
	\begin{align*}
	    (\delta)^tP_{\zeta_b}^t\leq P_{\zeta}^t,
	\end{align*}
	for all $t\geq 1$.
	When $P_{\zeta_b}$ is irreducible and aperiodic, the previous inequality implies that $P_{\zeta}$ is irreducible and aperiodic.

\noindent\textbf{Proof of Part (2).} To establish the result, we need the following sequence of lemmas.
	\begin{lemma}\label{le:1}
	The mapping from a policy $\zeta\in \Pi_\delta$ to its corresponding transition probability matrix  $P_\zeta$ is linear and is $1$-Lipschitz continuous with respect to the $\ell_\infty$-norm.
    \end{lemma}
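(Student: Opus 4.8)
The plan is to derive both claims directly from the explicit formula for the induced transition matrix. Under a joint policy $\zeta$ the one-step transition probabilities are
\[ P_\zeta(s,s') = \sum_{a\in\mathcal{A}} \zeta(a\mid s)\, P_a(s,s'), \]
where $P_a(s,s') = P(s'\mid s,a)$. It is organizing to view the policy as an embedding matrix $M_\zeta$ whose $s$-th row records the numbers $\{\zeta(a\mid s)\}_{a\in\mathcal{A}}$, and to collect the state-action kernel into the row-stochastic matrix $\bar P$ indexed by pairs $(s,a)$ with $\bar P_{(s,a),s'} = P_a(s,s')$; then the formula reads $P_\zeta = M_\zeta\,\bar P$, isolating the entire dependence on $\zeta$ in the single factor $M_\zeta$. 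Throughout, the $\ell_\infty$-norm is the induced operator norm (maximum absolute row sum); in this representation the $\ell_\infty$-norm of a policy difference is $\max_s \sum_a |\zeta(a\mid s) - \zeta'(a\mid s)|$.

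For linearity I would simply observe that each entry $P_\zeta(s,s') = \sum_a \zeta(a\mid s)\,P_a(s,s')$ is a fixed linear combination of the coordinates $\{\zeta(a\mid s)\}$, with coefficients $P_a(s,s')$ independent of $\zeta$. Hence $\zeta\mapsto P_\zeta$ is the restriction to $\Pi_\delta$ of a linear map on the ambient space $\mathbb{R}^{|\mathcal{S}||\mathcal{A}|}$; equivalently, $P_{\lambda\zeta+(1-\lambda)\zeta'} = \lambda P_\zeta + (1-\lambda)P_{\zeta'}$ for every affine combination that remains a policy. The product factorization of the joint policy plays no role here, since the argument treats $\zeta(a\mid s)$ as the basic variable.

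For the $1$-Lipschitz property I would fix an arbitrary row $s$ and bound the corresponding row sum of $P_\zeta - P_{\zeta'}$:
\begin{align*}
\sum_{s'} \big| P_\zeta(s,s') - P_{\zeta'}(s,s') \big|
&= \sum_{s'} \Big| \sum_a \big(\zeta(a\mid s) - \zeta'(a\mid s)\big)\, P_a(s,s') \Big| \\
&\leq \sum_a \big| \zeta(a\mid s) - \zeta'(a\mid s) \big| \sum_{s'} P_a(s,s')
= \sum_a \big| \zeta(a\mid s) - \zeta'(a\mid s) \big|,
\end{align*}
where the inequality is the triangle inequality and the last equality uses that $\bar P$ is row-stochastic, i.e. $\sum_{s'} P_a(s,s') = 1$. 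Taking the maximum over $s$ yields $\| P_\zeta - P_{\zeta'}\|_\infty \leq \| M_\zeta - M_{\zeta'}\|_\infty$, which is exactly the claimed bound. Abstractly this is just submultiplicativity of the operator norm together with $\|\bar P\|_\infty = 1$.

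I do not expect a genuine obstacle here, as the lemma is essentially a bookkeeping statement. The one point that requires care — and that I would state explicitly to avoid ambiguity — is the precise meaning of the $\ell_\infty$-norm on policies: it must be the maximum over states of the $\ell_1$-norm of the per-state action distribution (the operator norm of $M_\zeta$), so that the matrix $\ell_\infty$-norm appears consistently on both sides and the Lipschitz constant comes out to be exactly $1$ rather than a dimension-dependent factor. Fixing this convention correctly is what makes the constant tight.
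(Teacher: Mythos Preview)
Your proposal is correct and follows essentially the same approach as the paper: both expand $P_\zeta(s,s')=\sum_a \zeta(a\mid s)P_a(s,s')$, apply the triangle inequality, swap the sums, and use row-stochasticity $\sum_{s'}P_a(s,s')=1$ to conclude $\|P_\zeta-P_{\zeta'}\|_\infty\le \|\zeta-\zeta'\|_\infty$. Your additional framing via the factorization $P_\zeta=M_\zeta\bar P$ and the explicit remark on the $\ell_\infty$-norm convention for policies are helpful clarifications but do not change the argument.
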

\begin{proof}[Proof of \Cref{le:1}]
	The linearity is straightforward, we here only compute the Lipschitz constant. Let $\zeta_1,\zeta_2\in \Pi_\delta$ be two policies, and let $P_{\zeta_1}$ and $P_{\zeta_2}$ be the transition probability matrices induced by $\zeta_1$ and $\zeta_2$, respectively. We will view $\zeta_1$ and $\zeta_2$ as $|\mathcal{S}|$ by $|\mathcal{A}|$ matrices. Using the definition of induced matrix norm and we have
	\begin{align*}
		\|P_{\zeta_1}-P_{\zeta_2}\|_\infty=\;&
	\max_{s\in\mathcal{S}}\sum_{s'\in\mathcal{S}}|P_{\zeta_1}(s,s')-P_{\zeta_2}(s,s')|\\
	=\;&\max_{s\in\mathcal{S}}\sum_{s'\in\mathcal{S}}\left|\sum_{a\in\mathcal{A}}(\zeta_1(a|s)-\zeta_2(a|s))P_a(s,s')\right|\\
	\leq \;&\max_{s\in\mathcal{S}}\sum_{s'\in\mathcal{S}}\sum_{a\in\mathcal{A}}|\zeta_1(a|s)-\zeta_2(a|s)|P_a(s,s')\\
	= \;&\max_{s\in\mathcal{S}}\sum_{a\in\mathcal{A}}\sum_{s'\in\mathcal{S}}|\zeta_1(a|s)-\zeta_2(a|s)|P_a(s,s')\\
	= \;&\max_{s\in\mathcal{S}}\sum_{a\in\mathcal{A}}|\zeta_1(a|s)-\zeta_2(a|s)|\\
	=\;&\|\zeta_1-\zeta_2\|_\infty.
	\end{align*}
	This completes the proof.
\end{proof}

Since $\Pi_\delta$ is a convex compact set and the mapping from $\zeta\in \Pi_\delta$ to the transition probability $P_\zeta$ is linear (hence continuous), the image space $P_{\Pi_\delta}=\{P_\zeta\mid \zeta\in \Pi_\delta\}$ is also convex and compact.

\begin{lemma}\label{le:2}
	Let $P_1,P_2\in P_{\Pi_\delta}$, and let $\mu_1$ and $\mu_2$ be their corresponding unique stationary distributions, respectively. Then there exists $L$ (which depends on $P_1$) such that 
	\begin{align*}
		\|\mu_1-\mu_2\|_2\leq L\|P_1-P_2\|_2.
	\end{align*}
	As a result, the mapping from an irreducible and aperiodic stochastic matrix $P\in P_{\Pi_\delta}$ to its unique stationary distribution is continuous.
\end{lemma}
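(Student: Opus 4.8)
The plan is to derive an explicit perturbation identity for the stationary distribution of an ergodic stochastic matrix and read off the local Lipschitz constant directly from it. By Part (1), every $P\in P_{\Pi_\delta}$ is irreducible and aperiodic, so by Perron--Frobenius the eigenvalue $1$ of $P$ is simple and $P$ possesses a unique stationary distribution. Throughout I would view each $\mu$ as a column vector normalized by $\mathbf{1}^\top\mu=1$ with $\mu^\top P=\mu^\top$, and write $\mu_1,\mu_2$ for the stationary distributions of $P_1,P_2$.

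First I would introduce the fundamental matrix of $P_1$,
\[ Z_1=\big(I-P_1+\mathbf{1}\mu_1^\top\big)^{-1}, \qquad M_1:=I-P_1+\mathbf{1}\mu_1^\top. \]
Its existence is the one point that needs care: $M_1$ is invertible because $M_1=I-N_1$ with $N_1:=P_1-\mathbf{1}\mu_1^\top$, and $\mathbf{1}\mu_1^\top$ is exactly the spectral projection onto the (simple) Perron eigenspace of $P_1$, so the eigenvalues of $M_1$ are $1$ on $\mathbf{1}$ together with the values $1-\lambda$ for the remaining eigenvalues $\lambda\neq 1$ of $P_1$, all of which are nonzero. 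From $M_1^{-1}=Z_1$ two identities then follow: right-multiplying $\mu_1^\top M_1=\mu_1^\top$ by $Z_1$ gives $\mu_1^\top Z_1=\mu_1^\top$, and since $I-P_1=M_1-\mathbf{1}\mu_1^\top$,
\[ (I-P_1)Z_1 = M_1Z_1-\mathbf{1}\mu_1^\top Z_1 = I-\mathbf{1}\mu_1^\top. \]

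The core step is the perturbation identity. Starting from $\mu_2^\top P_2=\mu_2^\top$ and $\mu_1^\top P_1=\mu_1^\top$,
\[ \mu_2^\top(I-P_1)=\mu_2^\top P_2-\mu_2^\top P_1=\mu_2^\top(P_2-P_1). \]
Right-multiplying by $Z_1$ and invoking the identity above together with $\mathbf{1}^\top\mu_2=1$ yields
\[ \mu_2^\top-\mu_1^\top=\mu_2^\top(I-P_1)Z_1=\mu_2^\top(P_2-P_1)Z_1. \]
Transposing and taking $\ell_2$ norms, with $\|\mu_2\|_2\le\|\mu_2\|_1=1$ and $\|(P_2-P_1)^\top\|_2=\|P_2-P_1\|_2$, gives
\[ \|\mu_2-\mu_1\|_2\le\|Z_1^\top\|_2\,\|P_2-P_1\|_2\,\|\mu_2\|_2\le\|Z_1\|_2\,\|P_2-P_1\|_2, \]
so the claim holds with $L=\|Z_1\|_2$, a constant depending only on $P_1$. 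Since this bound is valid for every $P_2\in P_{\Pi_\delta}$ (all of which are ergodic by Part (1)), letting $P_2\to P_1$ forces $\mu_2\to\mu_1$; as $P_1$ was arbitrary, the stationary-distribution map is continuous on $P_{\Pi_\delta}$.

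The hard part will be the justification surrounding $Z_1$: verifying that $M_1$ is genuinely invertible and that $(I-P_1)Z_1=I-\mathbf{1}\mu_1^\top$, both of which ultimately rest on the simplicity of the Perron eigenvalue supplied by Part (1). Everything downstream is linear algebra and routine norm bookkeeping, and the only modeling input is that $P_2$ is again ergodic so that $\mu_2$ is well defined and a genuine probability vector.
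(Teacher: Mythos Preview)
Your proof is correct and takes a genuinely different route from the paper. You use the Schweitzer perturbation formula via the fundamental matrix $Z_1=(I-P_1+\mathbf{1}\mu_1^\top)^{-1}$: from the exact identity $\mu_2^\top-\mu_1^\top=\mu_2^\top(P_2-P_1)Z_1$ you read off $L=\|Z_1\|_2$ directly. The paper instead works with finite powers: it uses the ergodic bound $\|P_1^t-\mathbf{1}\mu_1^\top\|_2\le C_1\rho_1^t$, writes $\mu_1-\mu_2=(P_1^t-\mathbf{1}\mu_1^\top)^\top(\mu_1-\mu_2)+(P_1^t-P_2^t)^\top\mu_2$, chooses $t$ large enough that the first term can be absorbed into the left side, and then telescopes $P_1^t-P_2^t=\sum_{i=1}^t P_1^{t-i}(P_1-P_2)P_2^{i-1}$ to get $L=2tN_{\max}^2$ with $N_{\max}=\max_{P\in P_{\Pi_\delta}}\|P\|_2$. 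Your approach is cleaner and yields a sharper, intrinsic constant depending only on $P_1$; the paper's approach is more elementary in that it never names the fundamental matrix or invokes Perron--Frobenius for the eigenvalue structure of $I-P_1+\mathbf{1}\mu_1^\top$, relying only on geometric ergodicity of $P_1^t$. One minor point: your spectral description of $M_1$ is correct but would benefit from the one-line observation that $\mathbf{1}\mu_1^\top$ and $P_1$ commute, so $\mathbb{R}^{|\mathcal{S}|}=\mathrm{span}(\mathbf{1})\oplus\ker(\mu_1^\top)$ is a $P_1$-invariant decomposition on which $M_1$ acts as $1$ and $I-P_1$ respectively; this makes the invertibility argument self-contained without assuming diagonalizability.
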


\begin{proof}[Proof of \Cref{le:2}]
Since $P_1$ is an irreducible and aperiodic stochastic matrix, there exists $C_1>0$ and $\rho_1 \in (0,1)$ such that 
\begin{align*}
	\|P_1^t-M_1\|_2\leq C_1\rho_1^t,
\end{align*}
for all $t\geq 0$, where $M_1$ is a matrix with $|\mathcal{S}|$ rows, each of which is the vector $\mu_1$ \cite[Theorem 4.9]{levin2017markov}. By definition we have $\mu_1^\top P_1^t=\mu_1^\top$ and $\mu_2^\top P_2^t=\mu_2^\top$ for all $t\geq 0$. It follows that
\begin{align}
	\mu_1-\mu_2=\;&(P_1^t)^\top (\mu_1-\mu_2)+(P_1^t-P_2^t)^\top \mu_2\nonumber\\
	=\;&(P_1^t-M_1)^\top (\mu_1-\mu_2)+M_1^\top (\mu_1-\mu_2)+(P_1^t-P_2^t)^\top \mu_2.\label{eq:1}
\end{align}
Applying $\|\cdot\|_2$ on both sides of Eq. (\ref{eq:1}) and then using triangle inequality, and we have
\begin{align*}
	\|\mu_1-\mu_2\|_2
	\leq\;& \|(P_1^t-M_1)^\top(\mu_1-\mu_2)\|_2+\|M_1^\top(\mu_1-\mu_2)\|_2+\|(P_1^t-P_2^t)^\top \mu_2\|_2\\
	\leq\;& \|(P_1^t-M_1)^\top\|_2\|\mu_1-\mu_2\|_2+\|M_1^\top(\mu_1-\mu_2)\|_2+\|(P_1^t-P_2^t)^\top\|_2 \|\mu_2\|_2\\
	\leq\;& C_1\rho_1^t\|\mu_1-\mu_2\|_2+\|M_1^\top(\mu_1-\mu_2)\|_2+\|P_1^t-P_2^t\|_2.
\end{align*}
For the term $\|M_1^\top(\mu_1-\mu_2)\|_2$, we have by definition of $M_1$ that
\begin{align*}
	\|M_1^\top(\mu_1-\mu_2)\|_2
	=\|\mu_1-\mu_1\|_2
	=0.
\end{align*}
It follows that 
\begin{align*}
	\|\mu_1-\mu_2\|_2\leq C_1\rho_1^t\|\mu_1-\mu_2\|_2+\|P_1^t-P_2^t\|_2.
\end{align*}
When $t\geq \frac{\log(2C_1)}{\log(1/\rho_1)}$ (which implies $C_1\rho_1^t\leq \frac{1}{2}$), we have from the previous inequality that
\begin{align*}
	\|\mu_1-\mu_2\|_2\leq 2\|P_1^t-P_2^t\|_2.
\end{align*}
We next bound $\|P_1^t-P_2^t\|_2$ in terms of $\|P_1-P_2\|_2$. Observe that
\begin{align*}
	P_1^t-P_2^t=\;&P_1^t+\sum_{i=1}^{t-1}(-P_1^{t-i}P_2^i+P_1^{t-i}P_2^i)-P_2^t\\
	=\;&\sum_{i=1}^{t}(P_1^{k-i+1}P_2^{i-1}-P_1^{t-i}P_2^i)\\
	=\;&\sum_{i=1}^{t}P_1^{t-i}(P_1-P_2)P_2^{i-1}.
\end{align*}
Therefore, we have
\begin{align*}
	\|P_1^t-P_2^t\|_2\leq \;&\sum_{i=1}^{t}\|P_1^{t-i}\|_2\|P_1-P_2\|_2\|P_2^{i-1}\|_2
	\leq \;tN_{\max}^2\|P_1-P_2\|_2,
\end{align*}
Here $N_{\max}:=\max\{\|P\|_2\;:\;P \in P_{\Pi_\delta}\}$, which is well-defined and finite since $P_{\Pi_\delta}$ is a compact set and the $\ell_2$-norm is a continuous function.
Finally, we obtain
\begin{align*}
	\|\mu_1-\mu_2\|_2\leq 2tN_{\max}^2\|P_1-P_2\|_2,
\end{align*}
when $t\geq \frac{\log(2C_1)}{\log(1/\rho_1)}$. Note that while $N_{\max}$ is independent of $P_1$ and $P_2$, the factor $t$ depends on $C_1$ and $\rho_1$, both of which are functions of the stochastic matrix $P_1$.
\end{proof}
\begin{lemma}\label{le:3}
    For any $\mu\in\mathbb{R}^{|\mathcal{S}|}$, the mapping from $\mu$ to its minimum component is Lipschitz continuous.
\end{lemma}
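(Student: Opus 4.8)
The plan is to show directly that the map $f:\mathbb{R}^{|\mathcal{S}|}\to\mathbb{R}$ defined by $f(\mu)=\min_{s\in\mathcal{S}}\mu(s)$ is $1$-Lipschitz with respect to the $\ell_\infty$-norm, which by equivalence of norms on the finite-dimensional space $\mathbb{R}^{|\mathcal{S}|}$ immediately yields Lipschitz continuity in any norm (in particular the $\ell_2$-norm used in \Cref{le:2}). The function is a pointwise minimum of the coordinate projections $\mu\mapsto\mu(s)$, each of which is trivially $1$-Lipschitz, so the only thing to verify is that taking a finite minimum preserves this property.

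First I would fix two vectors $\mu,\mu'\in\mathbb{R}^{|\mathcal{S}|}$ and let $s^*\in\argmin_{s}\mu(s)$ and $\tilde{s}\in\argmin_{s}\mu'(s)$. The key step is a one-sided bound: since $s^*$ minimizes $\mu$, we have $f(\mu)=\mu(s^*)\le \mu(\tilde{s})$, hence
\begin{align*}
f(\mu)-f(\mu') \le \mu(\tilde{s})-\mu'(\tilde{s}) \le |\mu(\tilde{s})-\mu'(\tilde{s})| \le \|\mu-\mu'\|_\infty.
\end{align*}
By the symmetric argument (swapping the roles of $\mu$ and $\mu'$ and invoking the optimality of $s^*$ for $\mu$), I obtain $f(\mu')-f(\mu)\le\|\mu-\mu'\|_\infty$, and combining the two directions gives $|f(\mu)-f(\mu')|\le\|\mu-\mu'\|_\infty$.

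Finally I would convert to the $\ell_2$-norm: since $\|\cdot\|_\infty\le\|\cdot\|_2$ on $\mathbb{R}^{|\mathcal{S}|}$, the same chain of inequalities shows $|f(\mu)-f(\mu')|\le\|\mu-\mu'\|_2$, so $f$ is $1$-Lipschitz in the $\ell_2$-norm as well. There is essentially no obstacle here, as the statement is elementary; the only point warranting care is being explicit about which norm is used, so that this lemma slots cleanly into the continuity/compactness argument for \Cref{prop:exploration}(2). There one composes the continuous maps $\zeta\mapsto P_\zeta$ (\Cref{le:1}), $P_\zeta\mapsto\mu^\zeta$ (\Cref{le:2}), and $\mu^\zeta\mapsto\min_{s}\mu^\zeta(s)$ (this lemma) to conclude that $\min_{s}\mu^\zeta(s)$ is a continuous function of $\zeta$ on the compact set $\Pi_\delta$, and therefore attains its infimum, which is strictly positive because each induced chain is irreducible on the finite state space $\mathcal{S}$ and thus has an everywhere-positive stationary distribution.
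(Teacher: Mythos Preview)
Your proof is correct and takes essentially the same approach as the paper: both establish the bound $|\min_s\mu_1(s)-\min_s\mu_2(s)|\le\|\mu_1-\mu_2\|_\infty$, with your version spelling out the one-sided minimizer argument that the paper leaves implicit. Your additional remark about converting to the $\ell_2$-norm via $\|\cdot\|_\infty\le\|\cdot\|_2$ is a helpful clarification for how the lemma composes with \Cref{le:2}, but it does not constitute a different route.
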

\begin{proof}[Proof of \Cref{le:3}]
For any $\mu_1,\mu_2\in\mathbb{R}^{|\mathcal{S}|}$, we have
\begin{align*}
    \left|\min_{s\in\mathcal{S}}\mu_1(s)-\min_{s\in\mathcal{S}}\mu_2(s)\right|\leq \;&\max_{s\in\mathcal{S}}|\mu_1(s)-\mu_2(s)|
    =\;\|\mu_1-\mu_2\|_\infty.
\end{align*}
\end{proof}

Combining \Cref{le:1}, \Cref{le:2}, and \Cref{le:3} and we conclude that the mapping from $\zeta\in \Pi_\delta$ to the minimum component of the stationary distribution $\mu_\zeta$ of the induced Markov chain $\{S_k\}$ is continuous. In addition, since $\Pi_\delta$ is compact, we have by Weierstrass extreme value theorem that
\begin{align*}
	\mu_{\min}=\inf_{\zeta\in \Pi_\delta}\min_{s\in\mathcal{S}}\mu_\zeta(s)=\min_{\zeta\in \Pi_\delta}\min_{s\in\mathcal{S}}\mu_\zeta(s)>0.
\end{align*}

\noindent\textbf{Proof of Part (3).}  Given the lemmas presented in the proof of Proposition \Cref{prop:exploration} (2), this part directly follows from existing results in the literature, see for example \cite[Lemma 1]{zhang2021global}. The idea is to mimick the proof of the ergodic theorem \citep[Theorem 4.9]{levin2017markov} for irreducible and aperiodic Markov chains and perform a refined analysis.
\end{proof}

\bibliographystyle{ims}
\bibliography{references}

\begin{thebibliography}{42}
\expandafter\ifx\csname natexlab\endcsname\relax\def\natexlab#1{#1}\fi
\expandafter\ifx\csname url\endcsname\relax
  \def\url#1{\texttt{#1}}\fi
\expandafter\ifx\csname urlprefix\endcsname\relax\def\urlprefix{URL }\fi

\bibitem[{Agarwal et~al.(2020)Agarwal, Kakade, Lee and
  Mahajan}]{agarwal2020optimality}
\textsc{Agarwal, A.}, \textsc{Kakade, S.~M.}, \textsc{Lee, J.~D.} and
  \textsc{Mahajan, G.} (2020).
\newblock Optimality and approximation with policy gradient methods in markov
  decision processes.
\newblock In \textit{Conference on Learning Theory}. PMLR.

\bibitem[{Agarwal et~al.(2021)Agarwal, Kakade, Lee and
  Mahajan}]{agarwal2019theory}
\textsc{Agarwal, A.}, \textsc{Kakade, S.~M.}, \textsc{Lee, J.~D.} and
  \textsc{Mahajan, G.} (2021).
\newblock On the theory of policy gradient methods: Optimality, approximation,
  and distribution shift.
\newblock \textit{Journal of Machine Learning Research} \textbf{22} 1--76.

\bibitem[{Bamieh et~al.(2002)Bamieh, Paganini and
  Dahleh}]{bamieh2002distributed}
\textsc{Bamieh, B.}, \textsc{Paganini, F.} and \textsc{Dahleh, M.~A.} (2002).
\newblock Distributed control of spatially invariant systems.
\newblock \textit{IEEE Transactions on automatic control} \textbf{47}
  1091--1107.

\bibitem[{Bertsekas and Tsitsiklis(1996)}]{bertsekas1996neuro}
\textsc{Bertsekas, D.~P.} and \textsc{Tsitsiklis, J.~N.} (1996).
\newblock \textit{Neuro-dynamic programming}, vol.~5.
\newblock Athena Scientific Belmont, MA.

\bibitem[{Bhandari and Russo(2019)}]{bhandari2019global}
\textsc{Bhandari, J.} and \textsc{Russo, D.} (2019).
\newblock Global optimality guarantees for policy gradient methods.
\newblock \textit{arXiv preprint arXiv:1906.01786} .

\bibitem[{Bu et~al.(2008)Bu, Babu, De~Schutter
  et~al.}]{marl_bu2008comprehensive}
\textsc{Bu, L.}, \textsc{Babu, R.}, \textsc{De~Schutter, B.} \textsc{et~al.}
  (2008).
\newblock A comprehensive survey of multiagent reinforcement learning.
\newblock \textit{IEEE Transactions on Systems, Man, and Cybernetics, Part C
  (Applications and Reviews)} \textbf{38} 156--172.

\bibitem[{Cayci et~al.(2021)Cayci, He and Srikant}]{cayci2021linear}
\textsc{Cayci, S.}, \textsc{He, N.} and \textsc{Srikant, R.} (2021).
\newblock Linear convergence of entropy-regularized natural policy gradient
  with linear function approximation.
\newblock \textit{Preprint arXiv:2106.04096} .

\bibitem[{Cen et~al.(2021)Cen, Cheng, Chen, Wei and Chi}]{cen2021fast}
\textsc{Cen, S.}, \textsc{Cheng, C.}, \textsc{Chen, Y.}, \textsc{Wei, Y.} and
  \textsc{Chi, Y.} (2021).
\newblock {Fast global convergence of natural policy gradient methods with
  entropy regularization}.
\newblock \textit{Operations Research} .

\bibitem[{Chen et~al.(2022)Chen, Qu, Tang, Low and Li}]{chen2022reinforcement}
\textsc{Chen, X.}, \textsc{Qu, G.}, \textsc{Tang, Y.}, \textsc{Low, S.} and
  \textsc{Li, N.} (2022).
\newblock Reinforcement learning for selective key applications in power
  systems: Recent advances and future challenges.
\newblock \textit{IEEE Transactions on Smart Grid} .

\bibitem[{Claus and Boutilier(1998)}]{marl_claus1998dynamics}
\textsc{Claus, C.} and \textsc{Boutilier, C.} (1998).
\newblock The dynamics of reinforcement learning in cooperative multiagent
  systems.
\newblock \textit{AAAI/IAAI} \textbf{1998} 746--752.

\bibitem[{Ding et~al.(2022)Ding, Wei, Zhang and
  Jovanovic}]{ding2022independent}
\textsc{Ding, D.}, \textsc{Wei, C.-Y.}, \textsc{Zhang, K.} and
  \textsc{Jovanovic, M.} (2022).
\newblock Independent policy gradient for large-scale markov potential games:
  Sharper rates, function approximation, and game-agnostic convergence.
\newblock In \textit{International Conference on Machine Learning}. PMLR.

\bibitem[{Doan et~al.(2019)Doan, Maguluri and Romberg}]{doan2019finite}
\textsc{Doan, T.}, \textsc{Maguluri, S.} and \textsc{Romberg, J.} (2019).
\newblock Finite-time analysis of distributed td (0) with linear function
  approximation on multi-agent reinforcement learning.
\newblock In \textit{International Conference on Machine Learning}. PMLR.

\bibitem[{Foerster et~al.(2016)Foerster, Assael, De~Freitas and
  Whiteson}]{foerster2016learning}
\textsc{Foerster, J.}, \textsc{Assael, I.~A.}, \textsc{De~Freitas, N.} and
  \textsc{Whiteson, S.} (2016).
\newblock Learning to communicate with deep multi-agent reinforcement learning.
\newblock \textit{Advances in neural information processing systems}
  \textbf{29}.

\bibitem[{Gamarnik(2013)}]{gamarnik2013correlation}
\textsc{Gamarnik, D.} (2013).
\newblock Correlation decay method for decision, optimization, and inference in
  large-scale networks.
\newblock In \textit{Theory Driven by Influential Applications}. INFORMS,
  108--121.

\bibitem[{Gamarnik et~al.(2014)Gamarnik, Goldberg and
  Weber}]{gamarnik2014correlation}
\textsc{Gamarnik, D.}, \textsc{Goldberg, D.~A.} and \textsc{Weber, T.} (2014).
\newblock Correlation decay in random decision networks.
\newblock \textit{Mathematics of Operations Research} \textbf{39} 229--261.

\bibitem[{Guestrin et~al.(2003)Guestrin, Koller, Parr and
  Venkataraman}]{factor_guestrin2003efficient}
\textsc{Guestrin, C.}, \textsc{Koller, D.}, \textsc{Parr, R.} and
  \textsc{Venkataraman, S.} (2003).
\newblock Efficient solution algorithms for factored {MDP}s.
\newblock \textit{Journal of Artificial Intelligence Research} \textbf{19}
  399--468.

\bibitem[{Jin et~al.(2018)Jin, Allen-Zhu, Bubeck and Jordan}]{jin_is_2018}
\textsc{Jin, C.}, \textsc{Allen-Zhu, Z.}, \textsc{Bubeck, S.} and
  \textsc{Jordan, M.~I.} (2018).
\newblock Is {Q}-learning {Provably} {Efficient}?
\newblock \textit{arXiv:1807.03765 [cs, math, stat]} ArXiv: 1807.03765.
\newline\urlprefix\url{http://arxiv.org/abs/1807.03765}

\bibitem[{Kearns and Koller(1999)}]{kearns1999efficient}
\textsc{Kearns, M.} and \textsc{Koller, D.} (1999).
\newblock Efficient reinforcement learning in factored {MDP}s.
\newblock In \textit{IJCAI}, vol.~16.

\bibitem[{Leonardos et~al.(2021)Leonardos, Overman, Panageas and
  Piliouras}]{leonardos2021global}
\textsc{Leonardos, S.}, \textsc{Overman, W.}, \textsc{Panageas, I.} and
  \textsc{Piliouras, G.} (2021).
\newblock Global convergence of multi-agent policy gradient in markov potential
  games.
\newblock \textit{arXiv preprint arXiv:2106.01969} .

\bibitem[{Levin and Peres(2017)}]{levin2017markov}
\textsc{Levin, D.~A.} and \textsc{Peres, Y.} (2017).
\newblock \textit{Markov chains and mixing times}, vol. 107.
\newblock American Mathematical Soc.

\bibitem[{Lin et~al.(2021)Lin, Qu, Huang and Wierman}]{lin2021multi}
\textsc{Lin, Y.}, \textsc{Qu, G.}, \textsc{Huang, L.} and \textsc{Wierman, A.}
  (2021).
\newblock Multi-agent reinforcement learning in stochastic networked systems.
\newblock \textit{Advances in Neural Information Processing Systems}
  \textbf{34} 7825--7837.

\bibitem[{Littman(1994)}]{marl_littman1994markov}
\textsc{Littman, M.~L.} (1994).
\newblock Markov games as a framework for multi-agent reinforcement learning.
\newblock In \textit{Machine learning proceedings 1994}. Elsevier, 157--163.

\bibitem[{Lowe et~al.(2017)Lowe, Wu, Tamar, Harb, Abbeel and
  Mordatch}]{lowe2017multi}
\textsc{Lowe, R.}, \textsc{Wu, Y.}, \textsc{Tamar, A.}, \textsc{Harb, J.},
  \textsc{Abbeel, O.~P.} and \textsc{Mordatch, I.} (2017).
\newblock Multi-agent actor-critic for mixed cooperative-competitive
  environments.
\newblock In \textit{Advances in Neural Information Processing Systems}.

\bibitem[{Mei et~al.(2020)Mei, Xiao, Szepesvari and Schuurmans}]{mei2020global}
\textsc{Mei, J.}, \textsc{Xiao, C.}, \textsc{Szepesvari, C.} and
  \textsc{Schuurmans, D.} (2020).
\newblock On the global convergence rates of softmax policy gradient methods.
\newblock In \textit{International Conference on Machine Learning}. PMLR.

\bibitem[{Mnih et~al.(2016)Mnih, Badia, Mirza, Graves, Lillicrap, Harley,
  Silver and Kavukcuoglu}]{mnih2016asynchronous}
\textsc{Mnih, V.}, \textsc{Badia, A.~P.}, \textsc{Mirza, M.}, \textsc{Graves,
  A.}, \textsc{Lillicrap, T.}, \textsc{Harley, T.}, \textsc{Silver, D.} and
  \textsc{Kavukcuoglu, K.} (2016).
\newblock Asynchronous methods for deep reinforcement learning.
\newblock In \textit{International conference on machine learning}. PMLR.

\bibitem[{Mnih et~al.(2015)Mnih, Kavukcuoglu, Silver, Rusu, Veness, Bellemare,
  Graves, Riedmiller, Fidjeland, Ostrovski et~al.}]{mnih2015human}
\textsc{Mnih, V.}, \textsc{Kavukcuoglu, K.}, \textsc{Silver, D.}, \textsc{Rusu,
  A.~A.}, \textsc{Veness, J.}, \textsc{Bellemare, M.~G.}, \textsc{Graves, A.},
  \textsc{Riedmiller, M.}, \textsc{Fidjeland, A.~K.}, \textsc{Ostrovski, G.}
  \textsc{et~al.} (2015).
\newblock Human-level control through deep reinforcement learning.
\newblock \textit{Nature} \textbf{518} 529.

\bibitem[{Motee and Jadbabaie(2008)}]{motee2008optimal}
\textsc{Motee, N.} and \textsc{Jadbabaie, A.} (2008).
\newblock Optimal control of spatially distributed systems.
\newblock \textit{IEEE Transactions on Automatic Control} \textbf{53}
  1616--1629.

\bibitem[{Qu et~al.(2020{\natexlab{a}})Qu, Lin, Wierman and
  Li}]{qu2020scalable}
\textsc{Qu, G.}, \textsc{Lin, Y.}, \textsc{Wierman, A.} and \textsc{Li, N.}
  (2020{\natexlab{a}}).
\newblock Scalable multi-agent reinforcement learning for networked systems
  with average reward.
\newblock \textit{Advances in Neural Information Processing Systems}
  \textbf{33} 2074--2086.

\bibitem[{Qu et~al.(2020{\natexlab{b}})Qu, Wierman and
  Li}]{qu2020scalablelocal}
\textsc{Qu, G.}, \textsc{Wierman, A.} and \textsc{Li, N.} (2020{\natexlab{b}}).
\newblock Scalable reinforcement learning of localized policies for multi-agent
  networked systems.
\newblock In \textit{Learning for Dynamics and Control}. PMLR.

\bibitem[{Shin et~al.(2022)Shin, Lin, Qu, Wierman and Anitescu}]{shin2022near}
\textsc{Shin, S.}, \textsc{Lin, Y.}, \textsc{Qu, G.}, \textsc{Wierman, A.} and
  \textsc{Anitescu, M.} (2022).
\newblock Near-optimal distributed linear-quadratic regulator for networked
  systems.
\newblock \textit{arXiv preprint arXiv:2204.05551} .

\bibitem[{Silver et~al.(2016)Silver, Huang, Maddison, Guez, Sifre, Van
  Den~Driessche, Schrittwieser, Antonoglou, Panneershelvam, Lanctot
  et~al.}]{silver2016mastering}
\textsc{Silver, D.}, \textsc{Huang, A.}, \textsc{Maddison, C.~J.},
  \textsc{Guez, A.}, \textsc{Sifre, L.}, \textsc{Van Den~Driessche, G.},
  \textsc{Schrittwieser, J.}, \textsc{Antonoglou, I.}, \textsc{Panneershelvam,
  V.}, \textsc{Lanctot, M.} \textsc{et~al.} (2016).
\newblock Mastering the game of go with deep neural networks and tree search.
\newblock \textit{nature} \textbf{529} 484.

\bibitem[{Sukhbaatar et~al.(2016)Sukhbaatar, Fergus
  et~al.}]{sukhbaatar2016learning}
\textsc{Sukhbaatar, S.}, \textsc{Fergus, R.} \textsc{et~al.} (2016).
\newblock Learning multiagent communication with backpropagation.
\newblock \textit{Advances in neural information processing systems}
  \textbf{29}.

\bibitem[{Suttle et~al.(2020)Suttle, Yang, Zhang, Wang, Ba{\c{s}}ar and
  Liu}]{suttle2020multi}
\textsc{Suttle, W.}, \textsc{Yang, Z.}, \textsc{Zhang, K.}, \textsc{Wang, Z.},
  \textsc{Ba{\c{s}}ar, T.} and \textsc{Liu, J.} (2020).
\newblock A multi-agent off-policy actor-critic algorithm for distributed
  reinforcement learning.
\newblock \textit{IFAC-PapersOnLine} \textbf{53} 1549--1554.

\bibitem[{Sutton and Barto(2018)}]{sutton2018reinforcement}
\textsc{Sutton, R.~S.} and \textsc{Barto, A.~G.} (2018).
\newblock \textit{Reinforcement learning: An introduction}.
\newblock MIT press.

\bibitem[{Tan(1993)}]{tan1993multi}
\textsc{Tan, M.} (1993).
\newblock Multi-agent reinforcement learning: Independent vs. cooperative
  agents.
\newblock In \textit{Proceedings of the tenth international conference on
  machine learning}.

\bibitem[{Vieillard et~al.(2020)Vieillard, Kozuno, Scherrer, Pietquin, Munos
  and Geist}]{vieillard2020leverage}
\textsc{Vieillard, N.}, \textsc{Kozuno, T.}, \textsc{Scherrer, B.},
  \textsc{Pietquin, O.}, \textsc{Munos, R.} and \textsc{Geist, M.} (2020).
\newblock Leverage the average: an analysis of kl regularization in
  reinforcement learning.
\newblock \textit{Advances in Neural Information Processing Systems}
  \textbf{33} 12163--12174.

\bibitem[{Walton and Xu(2021)}]{walton2021learning}
\textsc{Walton, N.} and \textsc{Xu, K.} (2021).
\newblock Learning and information in stochastic networks and queues.
\newblock In \textit{Tutorials in Operations Research: Emerging Optimization
  Methods and Modeling Techniques with Applications}. INFORMS, 161--198.

\bibitem[{Williams and Peng(1991)}]{williams1991function}
\textsc{Williams, R.~J.} and \textsc{Peng, J.} (1991).
\newblock Function optimization using connectionist reinforcement learning
  algorithms.
\newblock \textit{Connection Science} \textbf{3} 241--268.

\bibitem[{Yang et~al.(2018)Yang, Luo, Li, Zhou, Zhang and Wang}]{yang2018mean}
\textsc{Yang, Y.}, \textsc{Luo, R.}, \textsc{Li, M.}, \textsc{Zhou, M.},
  \textsc{Zhang, W.} and \textsc{Wang, J.} (2018).
\newblock Mean field multi-agent reinforcement learning.
\newblock In \textit{International conference on machine learning}. PMLR.

\bibitem[{Zhang et~al.(2021{\natexlab{a}})Zhang, Yang and
  Ba{\c{s}}ar}]{zhang2019multi}
\textsc{Zhang, K.}, \textsc{Yang, Z.} and \textsc{Ba{\c{s}}ar, T.}
  (2021{\natexlab{a}}).
\newblock Multi-agent reinforcement learning: A selective overview of theories
  and algorithms.
\newblock \textit{Handbook of Reinforcement Learning and Control}  321--384.

\bibitem[{Zhang et~al.(2018)Zhang, Yang, Liu, Zhang and Basar}]{zhang2018fully}
\textsc{Zhang, K.}, \textsc{Yang, Z.}, \textsc{Liu, H.}, \textsc{Zhang, T.} and
  \textsc{Basar, T.} (2018).
\newblock Fully decentralized multi-agent reinforcement learning with networked
  agents.
\newblock In \textit{International Conference on Machine Learning}. PMLR.

\bibitem[{Zhang et~al.(2021{\natexlab{b}})Zhang, Tachet and
  Laroche}]{zhang2021global}
\textsc{Zhang, S.}, \textsc{Tachet, R.} and \textsc{Laroche, R.}
  (2021{\natexlab{b}}).
\newblock Global optimality and finite sample analysis of softmax off-policy
  actor critic under state distribution mismatch.
\newblock \textit{Preprint arXiv:2111.02997} .

\end{thebibliography}

\end{document}